\newtheorem{proposition}{Proposition}
\newtheorem{remark}{Remark}
\newtheorem{corollary}{Corollary}
\newtheorem{definition}{Definition}
\newtheorem{assumption}{Assumption}
\crefname{appendix}{App.}{Apps.}
\crefname{subsubsubappendix}{App.}{Apps.}
\crefname{supplement}{Supp.}{Supps.}
\crefname{subsupplement}{Supp.}{Supps.}
\crefname{subsubsupplement}{Supp.}{Supps.}
\crefname{subsubsubsupplement}{Supp.}{Supps.}
\crefname{equation}{}{}
\crefname{lemma}{Lem.}{Lems.}
\crefname{theorem}{Thm.}{Thms.}
\crefname{corollary}{Cor.}{Cors.}
\crefname{algorithm}{Alg.}{Algs.}
\crefname{section}{Sec.}{Secs.}
\crefname{table}{Tab.}{Tabs.}
\crefname{remark}{Rem.}{Rems.}
\crefname{definition}{Def.}{Defs.}
\crefname{proposition}{Prop.}{Props.}
\crefname{assumption}{Assump.}{Assumps.}
\crefname{myremark}{Rem.}{Rems.}
\crefname{mylemma}{Lem.}{Lems.}
\crefname{mydefinition}{Def.}{Defs.}
\crefname{myproposition}{Prop.}{Props.}
\crefname{mycorollary}{Cor.}{Cors.}
\crefname{enumi}{}{}
\crefname{name}{}{} %
\DeclareMathAlphabet{\mathbsf}{OT1}{bch}{bx}{ui}%
\DeclareMathAlphabet{\mathssf}{OT1}{bch}{m}{sl}%
\newcommand{\rminnew}[1][]{\rmin_{#1}} 
\newcommand{\wendmatern}[1][\matone]{\Phi_{#1, \mattwo}}
\newcommand{\matscale}[1][\matone]{\phi_{d, #1, \mattwo}}
\newcommand{\matrtconst}{A_{\matone,\mattwo, d}}
\newcommand{\subgauss}{\textsc{SubGauss}\xspace}
\newcommand{\subexp}{\textsc{SubExp}\xspace}
\newcommand{\heavytailnorho}{\textsc{HeavyTail}\xspace}
\newcommand{\heavytail}[1][\rho]{\ensuremath{\textsc{HeavyTail}(#1)}\xspace}
\newcommand{\subpoly}{\ensuremath{\textsc{SubPoly}}\xspace}
\newcommand{\htailparam}{\rho}
\newcommand{\order}{\mc{O}}
\newcommand{\bndcase}{\textsc{Compact}\xspace}
\DeclareMathOperator{\sech}{sech}
\newcommand{\mfk}{\mathfrak}
\def\set#1{\mc{#1}}
\newcommand{\ball}{\set{B}}
\newcommand{\X}{\set{X}}
\newcommand{\indicator}{\mbb I}
\newcommand{\dims}{d}
\newcommand{\Rd}{\R^{\dims}}
\newcommand{\snorm}[1]{\Vert #1 \Vert}
\newcommand{\sinfnorm}[1]{\snorm{#1}_\infty}
\newcommand{\m}{m}
\newcommand{\lone}{{L}^1}
\newcommand{\ltwo}{{L}^2}
\newcommand{\ltwoinf}{{L}^{2,\infty}}
\newcommand{\lp}{{L}^p}
\newcommand{\fun}{f}
\newcommand{\sumn}[1][i]{\sum_{#1=1}^n}
\newcommand{\sless}[1]{\stackrel{#1}{\leq}}
\newcommand{\sgrt}[1]{\stackrel{#1}{\geq}}
\newcommand{\seq}[1]{\stackrel{#1}{=}}
\newcommand{\x}{x}
\newcommand{\y}{y}
\newcommand{\z}{z}
\newcommand{\axi}[1][i]{\x_{#1}}
\newcommand{\dirac}{\mbi{\delta}}
\newcommand{\kernel}{{\mbf{k}_{\star}}}
\newcommand{\gkernel}{\mbf k}
\newcommand{\rkhs}{\mc{H}}
\newcommand{\krkhs}{\mc{H}_{\kernel}}
\newcommand{\hnorm}[1]{\Vert{#1}\Vert_{\rkhs}}
\newcommand{\knorm}[1]{\Vert{#1}\Vert_{\kernel}}
\newcommand{\dotrkhs}[2]{\angles{#1}_{#2}}
\newcommand{\doth}[1]{\dotrkhs{#1}{\rkhs}}
\newcommand{\dotk}[1]{\dotrkhs{#1}{\kernel}}
\DeclareMathOperator{\mmd}{MMD}
\renewcommand{\l}{\ell}
\def\j{j}
\newcommand{\eps}{\epsilon}
\def\Matern{Mat\'ern\xspace}
\newcommand{\vareps}{\varepsilon}
\newcommand{\Linf}{L^\infty}
\newcommand{\cover}{\mc C}
\newcommand{\pcover}[1][n]{\cover^{#1}}
\newcommand{\tail}[1][\gkernel]{\tau_{#1}}
\newcommand{\klip}[1][\gkernel]{L_{#1}}
\newcommand{\rttag}{\mrm{rt}}
\newcommand{\ksqrt}[1][\gkernel]{{#1}_{\rttag}}
\newcommand{\ksqrtdom}[1][\gkernel]{\widetilde{#1}_{\rttag}}
\newcommand{\kdom}[1][\gkernel]{\widetilde{#1}}
\newcommand{\kappasqrt}[1][\kappa]{{#1}_{\rttag}}
\newcommand{\hatkappasqrt}[1][\widehat{\kappa}]{{#1}_{\rttag}}
\newcommand{\hatkappa}{\widehat{\kappa}}
\newcommand{\kappasqrtdom}[1][\kappa]{\widetilde{#1}_{\rttag}}
\newcommand{\funtwo}{g}
\newcommand{\radius}{r}
\newcommand{\term}{T}
\newcommand{\ltwonorm}[1]{\norm{#1}_{L^2}}
\newcommand{\lpnorm}[1]{\norm{#1}_{L^p}}
\newcommand{\ltwoinfnorm}[1]{\norm{#1}_{L^{2,\infty}}}
\newcommand{\sgparam}[1][i]{\sigma_{#1}}
\newcommand{\vmax}[1][i]{\mathfrak{b}_{#1}}
\newcommand{\invec}[1][i]{f_{#1}}
\newcommand{\outvec}[1][i]{\psi_{#1}}
\newcommand{\eventnotag}{\mc{E}}
\newcommand{\event}[1][]{\eventnotag_{#1}}
\newcommand{\genvec}{u}
\newcommand{\wtil}[1]{\widetilde{#1}}
\renewcommand{\natural}{\mbb{N}}
\newcommand{\gaussparam}{\sigma}
\newcommand{\matone}{\nu}
\newcommand{\mattwo}{\gamma}
\newcommand{\splineparam}{\beta}
\newcommand{\bessel}[1][\matone]{{K}_{#1}}
\newcommand{\cnew}[1][i]{\mfk{a}_{#1}}
\newcommand{\esuccess}[1][n]{\mc{E}_{#1}}
\newcommand{\wvprod}[1][i]{\alpha_{#1}}
\newcommand{\epssmall}{\varepsilon}
\newcommand{\pseqxn}[1][n]{(\axi[i])_{i\geq 1}} %
\newcommand{\pseqxnn}[1][n]{(\axi[i])_{i=1}^n} %
\newcommand{\fourier}{\mc F}
\newcommand{\sechparam}{a}
\newcommand{\coreset}[1][j]{\mathcal{S}^{(#1)}}
\newcommand{\ksplitcoresets}{(\coreset[\m, \l])_{\l=1}^{2^{\m}}}
\newcommand{\brackets}[1]{\left[ #1 \right]}
\newcommand{\bigbrackets}[1]{\big[ #1 \big]}
\newcommand{\parenth}[1]{\left( #1 \right)}
\newcommand{\sparenth}[1]{( #1 )}
\newcommand{\bigparenth}[1]{\big( #1 \big)}
\newcommand{\braces}[1]{\left\{ #1 \right \}}
\newcommand{\sbraces}[1]{\{ #1  \}}
\newcommand{\abss}[1]{\left| #1 \right |}
\newcommand{\sabss}[1]{| #1 |}
\newcommand{\angles}[1]{\left\langle #1 \right \rangle}
\newcommand{\tp}{^\top}
\newcommand{\inv}{^{-1}}
\newcommand{\real}{\ensuremath{\mathbb{R}}}
\newcommand{\Exs}{\ensuremath{{\mathbb{E}}}}
\newcommand{\Prob}{\ensuremath{\mrm{Pr}}}
\renewcommand{\Pr}{\Prob}
\def\balign#1\ealign{\begin{align}#1\end{align}}
\def\baligns#1\ealigns{\begin{align*}#1\end{align*}}
\def\balignat#1\ealign{\begin{alignat}#1\end{alignat}}
\def\balignats#1\ealigns{\begin{alignat*}#1\end{alignat*}}
\def\bitemize#1\eitemize{\begin{itemize}#1\end{itemize}}
\def\benumerate#1\eenumerate{\begin{enumerate}#1\end{enumerate}}
\newenvironment{talign*}
 {\let\displaystyle\textstyle\csname align*\endcsname}
 {\endalign}
\newenvironment{talign}
 {\let\displaystyle\textstyle\csname align\endcsname}
 {\endalign}
\def\balignst#1\ealignst{\begin{talign*}#1\end{talign*}}
\def\balignt#1\ealignt{\begin{talign}#1\end{talign}}
\newcommand{\qtext}[1]{\quad\text{#1}\quad} 
\newcommand{\stext}[1]{\ \text{#1}\ } 
\let\originalleft\left
\let\originalright\right
\renewcommand{\left}{\mathopen{}\mathclose\bgroup\originalleft}
\renewcommand{\right}{\aftergroup\egroup\originalright}
\def\Holder{H\"older\xspace}
\def\Matern{Mat\'ern\xspace}
\def\tinycitep*#1{{\tiny\citep*{#1}}}
\def\tinycitealt*#1{{\tiny\citealt*{#1}}}
\def\tinycite*#1{{\tiny\cite*{#1}}}
\def\smallcitep*#1{{\scriptsize\citep*{#1}}}
\def\smallcitealt*#1{{\scriptsize\citealt*{#1}}}
\def\smallcite*#1{{\scriptsize\cite*{#1}}}
\def\mbi#1{\boldsymbol{#1}} %
\def\mbf#1{\mathbf{#1}}
\def\mbb#1{\mathbb{#1}}
\def\mc#1{\mathcal{#1}}
\def\mrm#1{\mathrm{#1}}
\def\trm#1{\textrm{#1}}
\def\tbf#1{\textbf{#1}}
\newcommand{\boldzero}{{\boldsymbol{0}}}
\def\textsum{{\textstyle\sum}} %
\def\reals{\mathbb{R}} %
\def\R{\mathbb{R}}
\def\Q{\mathbb{Q}}
\def\naturals{\mathbb{N}} %
\def\N{\mathbb{N}}
\def\complex{\mathbb{C}} %
\def\<{\left\langle} %
\def\>{\right\rangle}
\def\iff{\Leftrightarrow}
\def\implies{\quad\Longrightarrow\quad}
\def\defeq{\triangleq} %
\def\half{\frac{1}{2}}
\def\quarter{\frac{1}{4}}
\newcommand{\textfrac}[2]{{\textstyle\frac{#1}{#2}}}
\newcommand{\floor}[1]{\lfloor{#1}\rfloor}
\newcommand{\ceil}[1]{\lceil{#1}\rceil}
\def\norm#1{\left\|{#1}\right\|} %
\newcommand{\twonorm}[1]{\norm{#1}_2} %
\newcommand{\infnorm}[1]{\norm{#1}_{\infty}} %
\def\staticnorm#1{\|{#1}\|} %
\newcommand{\inner}[2]{\langle{#1},{#2}\rangle} %
\def\what#1{\widehat{#1}}
\def\indic#1{\indicator\left[{#1}\right]} %
\def\E{\mbb{E}} %
\def\P{\mbb{P}} %
\def\Pstar{\P} %
\def\Var{\mrm{Var}} %
\newcommand{\iid}{\textrm{i.i.d.}\@\xspace}
\providecommand{\esssup}{\mathop\mathrm{ess\,sup}} 
\providecommand{\argmin}{\mathop\mathrm{arg min}}
\newenvironment{proof-sketch}{\noindent\textbf{Proof Sketch}
  \hspace*{1em}}{\qed\bigskip\\}
\newenvironment{proof-idea}{\noindent\textbf{Proof Idea}
  \hspace*{1em}}{\qed\bigskip\\}
\newenvironment{proof-of-lemma}[1][{}]{\noindent\textbf{Proof of Lemma {#1}}
  \hspace*{1em}}{\qed\\}
\newenvironment{proof-of-theorem}[1][{}]{\noindent\textbf{Proof of Theorem {#1}}
  \hspace*{1em}}{\qed\\}
\newenvironment{proof-attempt}{\noindent\textbf{Proof Attempt}
  \hspace*{1em}}{\qed\bigskip\\}
\newcommand{\cset}{\mc{S}}
\newcommand{\inputcoreset}{\cset_{n}}
\newcommand{\sn}{\inputcoreset}
\newcommand{\inputcoresetfull}{\cset_{\infty}}
\newcommand{\outputcoreset}{\cset_{\mrm{out}}}
\newcommand{\nout}{n_{\mrm{out}}}
\newcommand{\nin}{n_{\mrm{in}}}
\newcommand{\ktcoreset}{\cset_{\mrm{KT}}}
\newcommand{\khcoreset}{\cset_{\mrm{KH}}}
\newcommand{\basecoreset}{\cset_{\mrm{base}}}
\newcommand{\rmin}{R}
\newcommand{\rminpn}[1][\inputcoreset]{\rmin_{#1}}
\newcommand{\err}{\mathfrak{M}}%
\newcommand{\splineconst}[1][\splineparam+1]{S_{#1}}
\newcommand{\ktsplit}{\hyperref[algo:ktsplit]{\color{black}{\textsc{kt-split}}}\xspace}
\newcommand{\ktsplitlink}{\hyperref[algo:ktsplit]{\textsc{kt-split}}\xspace}
\newcommand{\ktswap}{\hyperref[algo:ktswap]{\color{black}{\textsc{kt-swap}}}\xspace}
\newcommand{\ktswaplink}{\hyperref[algo:ktswap]{\textsc{kt-swap}}\xspace}
\newcommand{\Wendland}[1][\wendparam]{{\bf Wendland$(#1)$}\xspace}
\newcommand{\ncref}[1]{\cref{#1}: \nameref*{#1}} %
\newcommand{\pcref}[1]{Proof of \ncref{#1}} %
\newcommand{\pnew}{\mu}
\newcommand{\qnew}{\nu}
\newcommand{\wendparam}{s}
\newcommand{\temprv}[1][i]{\vareps_{#1}}
\newcommand{\gknorm}[1]{\norm{#1}_{\gkernel}}
\newcommand{\gkrkhs}{\rkhs_{\gkernel}}
\newcommand{\abstracttext}{
We introduce kernel thinning, a new procedure for compressing a distribution $\mathbb{P}$ more effectively than i.i.d.\ sampling or standard thinning. Given a suitable reproducing kernel $\mathbf{k}_{\star}$ and $\mathcal{O}(n^2)$ time, kernel thinning compresses an $n$-point approximation to $\mathbb{P}$ into a $\sqrt{n}$-point approximation with comparable worst-case integration error across the associated reproducing kernel Hilbert space. 
The maximum discrepancy in integration error is $\mathcal{O}_d(n^{-1/2}\sqrt{\log n})$ 
in probability  for compactly supported $\mathbb{P}$ and $\mathcal{O}_d(n^{-\frac{1}{2}} (\log n)^{(d+1)/2}\sqrt{\log\log n})$ 
for sub-exponential $\mathbb{P}$ on $\mathbb{R}^d$. In contrast, an equal-sized i.i.d.\ sample from $\mathbb{P}$ suffers  $\Omega(n^{-1/4})$ integration error. Our sub-exponential guarantees resemble the classical quasi-Monte Carlo error rates for uniform $\mathbb{P}$ on $[0,1]^d$ but apply to general distributions on $\mathbb{R}^d$ and a wide range of common kernels. Moreover, the same construction delivers near-optimal $L^\infty$ coresets in $\mathcal O(n^2)$ time.  We use our results to derive explicit non-asymptotic maximum mean discrepancy bounds for Gaussian, Mat\'ern, and B-spline kernels and present two vignettes illustrating the practical benefits of kernel thinning over i.i.d.\ sampling and standard Markov chain Monte Carlo thinning, in dimensions $d=2$ through $100$.
}
\newcommand{\keywordslist}{coresets, distribution compression, Markov chain Monte Carlo, maximum mean discrepancy, reproducing kernel Hilbert space, thinning}
\begin{document}

\etoctocstyle{1}{Table of contents}
\etocdepthtag.toc{mtchapter}
\etocsettagdepth{mtchapter}{section}

\makeatletter
\patchcmd{\@algocf@start}%
  {-1.5em}%
  {0pt}%
  {}{}%
\makeatother

\title{Kernel Thinning}

\author{\name Raaz Dwivedi \email dwivedi@cornell.edu \\
       \addr Cornell Tech
       \AND
       \name Lester Mackey \email lmackey@microsoft.com \\
       \addr Microsoft Research New England}

\editor{Ingo Steinwart}

\maketitle
 \begin{abstract}%
    \abstracttext
    \end{abstract}
    
    \begin{keywords}%
    \keywordslist\opt{arxiv}{\footnote{Accepted for presentation as an extended abstract at the Conference on Learning Theory (COLT) 2021.}}
    \end{keywords}

\section{Introduction}
\label{sec:introduction}
Monte Carlo and Markov chain Monte Carlo (MCMC) methods \citep{brooks2011handbook} are commonly used to approximate intractable target expectations $\Pstar\fun\defeq\Exs_{X\sim\Pstar}[\fun(X)]$ of $\P$-integrable functions $f$ with asymptotically exact averages $\P_n\fun
\!\defeq\!\frac{1}{n}\sumn \fun(\axi)$
based on points $(x_i)_{i=1}^n$ generated from a Markov chain. 
A standard practice, to minimize the expense of downstream function evaluation, is to \emph{thin} the Markov chain output down to a smaller size $\nout$ by keeping only  every $({n/}{\nout})$-th sample point \citep{owen2017statistically}. %
We call this approach \emph{standard thinning}, and such sample compression is critical in fields like computational cardiology in which each function evaluation triggers an organ or tissue simulation consuming thousands of CPU hours \citep{niederer2011simulating,augustin2016anatomically,strocchi2020simulating}.
Unfortunately, 
standard thinning also leads to a significant reduction in accuracy. For example, thinning one's chain down to $\nout = \sqrt n$ sample points increases integration error from 
$\order(n^{-\frac12})$ in probability to $\Omega(n^{-\frac14})$ by the Markov chain central limit theorem \citep[Prop.~29]{roberts2004general}. 
Our primary contribution is a more effective thinning strategy, which provides $o_p(n^{-\quarter})$-integration error when $n^\half$ points are returned.

\subsection{Thinned MMD coresets}%
We focus on integration error in a reproducing kernel Hilbert space \citep[RKHS,][Def.~4.18]{steinwart2008support} 
of bounded, measurable functions with a target kernel $\kernel:\real^d\times\real^d \to \real$ for $d\in\naturals$ and RKHS norm $\knorm{\cdot}$. 
\begin{assumption}[RKHS of bounded, measurable functions]%
\label{asmp:bounded_measurable}
The RKHS $\rkhs_{\gkernel}$ of a kernel $\gkernel:\real^d\times\real^d \to \real$ contains only bounded measurable functions.
Equivalently, $\gkernel$ is bounded with $\gkernel(x,\cdot)$ measurable for all $x\in\Rd$ \citep[Lems.~4.23, 4.24]{steinwart2008support}.\footnote{Throughout, we use $\gkernel$ for statements involving a generic kernel that is potentially distinct from the target kernel $\kernel$.}
\end{assumption}
The worst-case integration error 
over the RKHS unit ball is given by the kernel \emph{maximum mean discrepancy} \citep[MMD,][]{JMLR:v13:gretton12a}. 
\begin{definition}[{Maximum mean discrepancy} \citep{JMLR:v13:gretton12a}]
For a kernel $\gkernel$ satisfying \cref{asmp:bounded_measurable}, 
we define the kernel \emph{maximum mean discrepancy},
\begin{align}
	\mmd_{\gkernel}(\pnew,\qnew)&\defeq \sup_{f \in \gkrkhs : \gknorm{\fun}\leq 1}\abss{\pnew\fun-\qnew\fun}
    \qtext{for all probability measures $\pnew, \qnew$ on $\reals^d$.}
\label{eq:kernel_mmd_distance}
\end{align}
For sequences of points $\mc{S}$ and $\mc{S}'$ in $\Rd$ with empirical distributions $\Q$ and $\Q'$, we  overload this notation to write $\mmd_{\gkernel}(\P, \mc{S}) \defeq \mmd_{\gkernel}(\P, \Q)$ and  $\mmd_{\gkernel}(\mc{S}, \mc{S}') \defeq \mmd_{\gkernel}(\Q, \Q')$. 
\end{definition}
Given $\kernel$ satisfying \cref{asmp:bounded_measurable}, a target distribution $\Pstar$ on $\reals^d$, and %
a sequence of 
$\reals^d$-valued points $\pseqxnn$
generated to approximate $\P$, our aim is to identify a \emph{thinned MMD coreset}, a shorter subsequence that continues to approximate $\P$ well in $\mmd_\kernel$.
\begin{definition}[MMD coreset]
\label{def:mmd}
We call a sequence of $\nout$ points in $\Rd$ with empirical measure $\Q$ an \emph{$(\nout, \epssmall)$-MMD coreset} for $(\kernel, \Pstar)$ if $\mmd_{\kernel}(\Pstar, \Q) \leq \epssmall$. 
\end{definition}
Notably, when the initial sequence is drawn \iid or from a fast-mixing Markov chain targeting $\P$, standard thinning down to size $\nout=n^{\frac{1}{2}}$ yields an order $(n^{\frac{1}{2}}, n^{-\frac{1}{4}})$-MMD coreset in probability (see \cref{mcmc_mmd}).
A benchmark for improvement is provided by the online Haar strategy of \citet{dwivedi2019power}, which generates an $(n^{\frac12}, \order_d(n^{-\frac{1}{2}}\log^{2d} n))$-MMD coreset in probability from $2n^{\frac12}$ \iid sample points when $\Pstar$ is specifically the uniform distribution on the unit cube $[0,1]^d$.\footnote{\citet{dwivedi2019power} specifically control the \emph{star discrepancy}, a quantity which in turn upper bounds a Sobolev space MMD called the \emph{$L^2$ discrepancy} \citep{hickernell1998generalized,novak2010tractability}.}
Our goal is to develop thinned coresets of improved quality for any target $\P$ with sufficiently fast tail decay. 

\subsection{Our contributions}
To this end, we introduce \emph{kernel thinning} (\cref{algo:kernel_thinning}), a new, practical solution to the thinned MMD coreset problem that takes as input an $(n, \order_p(n^{-\half}))$-MMD coreset and outputs an $(n^{\frac12}, o_p(n^{-\quarter}))$-MMD coreset for a wide-range of $(\kernel, \Pstar)$. 
Kernel thinning uses non-uniform randomness and evaluations of a less smooth \emph{square-root kernel} $\ksqrt$ (see \cref{def:square_root_kernel}) to partition the input into subsets of comparable quality and then greedily refines the best of these subsets using $\kernel$. 
Our primary contributions include:

\begin{enumerate}[leftmargin=*]
\item \tbf{Better-than-\iid MMD coresets:} 
Given $n$ input points sampled \iid or from a fast-mixing Markov chain, 
kernel thinning yields, 
in probability, 
an
$(n^{\frac12}, \order_d(n^{-\frac12}\sqrt{\log n }))$-MMD coreset for $\Pstar$ and $\ksqrt$ with bounded support, an $(n^{\frac12}, \order_d(n^{-\frac12}\sqrt{\log^{d+1} n \log\log n}))$-MMD coreset for $\Pstar$ and $\ksqrt$ with light tails, and an $(n^{\frac12}, \order_d(n^{-\frac12+\frac{d}{2\htailparam}}\sqrt{\log n \log \log n}))$-MMD coreset for $\Pstar$ and $\ksqrt^2$ with $\htailparam > 2d$ moments (\cref{theorem:main_result_all_in_one,table:mmd_rates}).
For compactly supported or light-tailed $\Pstar$ and $\ksqrt$, these results compare favorably with 
known $\Omega_d(n^{-\frac12})$ lower bounds (see \cref{sec:related}).
Our guarantees extend to more general input point sequences, including deterministic sequences based on quadrature or kernel herding \citep{chen2012super}, and give rise to explicit, non-asymptotic error bounds for a wide variety of popular kernels including Gaussian, \Matern, and B-spline kernels. 
While $(n^\half, \order_d(n^{-\frac{1}{2}}\log^{\frac{d-1}{2}} n))$-MMD coresets have been developed for specific $(\kernel, \Pstar)$ pairings like the uniform distribution on $[0, 1]^d$ and an $L^2$ discrepancy kernel $\kernel$ (see \cref{sec:related}), to the best of our knowledge, no prior $(n^\half, o_p(n^{-\quarter}))$-MMD coreset constructions were known for the range of $\Pstar$ and $\kernel$ studied in this work. 

\item \textbf{MMD error from square-root $\Linf$ error:} 
To derive our MMD guarantees for kernel thinning, we first establish an important link between MMD coresets for $\kernel$ and \emph{$\Linf$ coresets} for $\ksqrt$. 
\begin{definition}[$\Linf$ coreset]
\label{def:Linf_error}
For any kernel $\gkernel$ satisfying \cref{asmp:bounded_measurable}, probability measure $\mu$ on $\Rd$, and $z\in\reals^d$, let $\mu\gkernel(z) \defeq \E_{X\sim\mu}[\gkernel(X,z)]$. 
We call a sequence of 
$\nout$ points in $\Rd$ with empirical measure $\Q$ 
an \emph{$(\nout, \epssmall)$-$\Linf$ coreset} for  $(\gkernel, \Pstar)$ if $ \sinfnorm{\Pstar\gkernel-\Q\gkernel} \leq \epssmall$. 
\end{definition}
\cref{theorem:coreset_to_mmd,cor:mmdlinf} %
show that \emph{any} $\Linf$ coreset for $(\ksqrt,\Pstar)$ is also an MMD coreset for $(\kernel, \Pstar)$ with quality depending on the tail decay of $\ksqrt$ and $\Pstar$.
\item \textbf{Online vector balancing in Hilbert spaces:} As a building block for constructing high-quality coresets, we introduce and analyze a Hilbert space generalization of the self-balancing walk of \citet{alweiss2021discrepancy} to partition a sequence of functions (like $(\ksqrt(\x_i,\cdot))_{i=1}^n$) into nearly equal halves. Our analysis of this \emph{self-balancing Hilbert walk} (SBHW, \cref{algo:self_balancing_walk}) in \cref{sbhw_properties} may be of independent interest for solving the online vector balancing problem of \citet{spencer1977balancing} in Hilbert spaces (\cref{cor:kernel_balancing}).
\item \tbf{Efficient, near-optimal $\Linf$ coresets:} 
We then design a symmetrized version of SBHW for RKHSes---kernel halving---that delivers $2$-thinned coresets with small $\Linf$ error (\cref{algo:kernel_halving,kernel_halving_results}). 
The first stage of kernel thinning, \ktsplitlink,  recursively applies kernel halving to $\ksqrt$ to obtain near-minimax-optimal $\Linf$ coresets in $\order(n^2)$ time with $\order(n\min(d,n))$ space (\cref{corollary:kernel_thinning_coreset_bound,corollary:ktsplit_rates}).
\end{enumerate}

After describing our kernel and input point requirements in \cref{sec:setup}, we detail the kernel thinning and kernel halving algorithms in \cref{sec:kernel_thinning}.
\cref{sec:mmd} houses our main MMD guarantees, both for kernel thinning and for generic $\Linf$ square-root kernel coresets.
We introduce and analyze the self-balancing Hilbert walk in \cref{sec:from_self_balancing_walk_to_kernel_thinning} and present our main $\Linf$ guarantees for kernel halving and \ktsplit in  \cref{sub:kernel_halving}. 
\cref{sec:vignettes} complements our theoretical contributions with two vignettes illustrating the practical benefits of kernel thinning over (a) \iid sampling in dimensions $d=2$ through $100$ and (b) standard MCMC thinning across twelve  experiments targeting challenging differential equation posterior distributions. 
We conclude with a discussion of our results, related work, and future directions in \cref{sec:discussion} and defer all proofs to the appendices.

\paragraph*{Notation}
We define the shorthand $[n] \defeq \{1,\dots,n\}$ for $n\in\naturals$, $a\wedge b \defeq \min(a, b)$ for $a,b\in\reals$,  $\real_{+} \defeq \braces{x\in \real: x\geq 0}$, and
$\ball(\x; \radius)\defeq \braces{\y \in\Rd \mid \twonorm{\x-\y}< \radius}$ for $\radius\in\reals$. We use $\mrm{Vol}(\mathcal B)$ to denote the volume of a compact set $\mathcal B \subset \real^d$.
We let $\set{A}^c$ denote the complement of a set $\set{A} \subset \Rd$ and $\indicator_{\set{A}}(\x) = 1$ if $\x \in \set{A}$ and 0 otherwise. 
We use $\Pr(\mc E)$ to denote the probability of an event $\mc E$.
For real-valued kernels $\gkernel$ and functions $f$ on $\Rd$, we make frequent use of the norms 
$\infnorm{\gkernel} = \sup_{x,y\in\Rd} |\gkernel(x,y)|$ and $\infnorm{f} = \sup_{x\in\Rd} |f(x)|$. For $x>0$, we use $\Gamma(x) = \int_{0}^{\infty }t^{x-1}e^{-t}\,dt$ to denote the Gamma function (with $\Gamma(n) = (n-1)!$ for $n\in\N$).
 For two sequences of real numbers  $(a_n)_{n\in\N}$ and $(b_n)_{n\in\N}$, we say that $a_n$ is of order $b_n$ and write $a_n = \order(b_n)$ or $a_n\precsim b_n$ to denote that $a_n \leq cb_n$ for all $n\in\N$ and some constant $c > 0$. We write $a_n =\Omega(b_n)$ if $b_n = \order(a_n)$ and $a_n = \Theta(b_n)$ when $a=\Omega(b_n)$ and $a=\order(b_n)$. Moreover, we use $a_n = \order_d(b_n), a_n\precsim_d b_n, a_n= \Omega_d(b_n),  a_n\succsim_d b_n$ to indicate dependency of underlying universal constant on $d$. We say $a_n = o(b_n)$ if $\lim_{n\to\infty}a_n/b_n=0$.  
For a sequence of real-valued random variables $(X_n)_{n\in\naturals}$, we write $X_n=\order_{P}(a_n)$ or $X_n = \order(a_n)$ in probability, when $\frac{X_n}{a_n}$ is stochastically bounded, i.e., for all $\delta>0$, there exists finite $c_{\delta}$ and $n_{\delta}$ such that $\Pr(|\frac{X_n}{a_n}|>c_{\delta})<\delta$, for all $n>n_{\delta}$. We write $X_n=\Omega_{P}(a_n)$ if $1/X_n = \order_P(1/b_n)$ and $X_{n} = o_p(a_{n})$ when $\frac{X_{n}}{a_{n}}\to 0$ in probability, i.e., for all $\vareps>0$, $\Pr(|\frac{X_{n}}{a_{n}}| \geq \vareps) \to 0$.
We write \emph{order $(n, \epssmall)$-MMD (or $\Linf$) coreset} to mean an $(n, \order(\eps))$-MMD (or $\Linf$) coreset and append \emph{in probability} to mean an $(n, \order_p(\eps))$-MMD (or $\Linf$) coreset.  

\newcommand{\kup}[1][\ksqrt]{\underline{\tau}_{#1}}
\newcommand{\ktail}[1][\ksqrt]{\overline{\tau}_{#1}}
\newcommand{\ktailinv}[1][\ksqrt]{\overline{\lambda}_{#1}}
\newcommand{\klipinv}[1][\ksqrt]{\underline{\lambda}_{#1}}
\newcommand{\kcond}[1][\ksqrt]{\lambda_{#1}}
\newcommand{\rk}[1][\gkernel]{\rmin_{#1,n}}
\newcommand{\rktau}[1][\gkernel]{\rmin_{#1,n}'}
\newcommand{\rkmax}[1][\gkernel]{\rmin_{#1,n}^\dagger}
\newcommand{\rktaugen}[1][\gkernel,n]{\rmin_{#1}'}
\section{Input Point and Kernel Requirements}
\label{sec:setup}

Given a target distribution $\Pstar$ on $\Rd$, a kernel $\kernel$ satisfying \cref{asmp:bounded_measurable}, %
and a sequence of $\Rd$-valued input points $\inputcoreset \!=\! \pseqxnn$ generated either randomly or deterministically, our goal is to identify a better-than-\iid thinned MMD coreset, that is, a subsequence $\outputcoreset$ of size $n^{\half}$ satisfying $\mmd_\kernel(\Pstar,\outputcoreset) \!=\! o_p(n^{-\quarter})$. 
When drawing asymptotic conclusions, we will view $d$ as fixed and $\inputcoreset$ as a prefix of an infinite sequence of points $\inputcoresetfull \defeq (\x_i)_{i=1}^\infty$. 
\subsection{Input point requirements}
Our algorithms are designed to return high quality MMD coresets for the \emph{input} $\inputcoreset$.
To translate these into high quality coresets for the target $\P$, it suffices, by the triangle inequality, for the input points to have quality $\mmd_{\kernel}(\Pstar, \inputcoreset) = \order_p(n^{-\half})$. 
As we discuss in \cref{sec:related}, input sequences generated by \iid sampling, kernel herding \citep{chen2012super}, Stein Point MCMC \citep{chen2019stein}, and greedy sign selection \citep{karnin2019discrepancy} all satisfy this property. Moreover, we prove in \cref{proof_of_mcmc_mmd} that an analogous guarantee holds for the iterates of a fast-mixing Markov chain.

\newcommand{\mcmcmmdresultname}{MMD guarantee for MCMC}
\begin{proposition}[\mcmcmmdresultname]
\label{mcmc_mmd}
Consider a homogeneous $\phi$-irreducible geometrically ergodic Markov chain 
\citep[Thm.~1xi]{gallegosherrada2023equivalences} %
with initial state $\x_0$, subsequent iterates $\inputcoresetfull$, and stationary distribution $\P$.
If $\kernel$ satisfies \cref{asmp:bounded_measurable}, then 
there exists a $\P$-almost everywhere finite function $c : \reals^d \to (0,\infty]$ such that, for any given $n \in \naturals$ and $\delta \in (0,1)$, 
$\mmd_{\kernel}(\P, \inputcoreset) 
    \leq \sqrt{\frac{c(\x_0)\infnorm{\kernel}\log(e/\delta)}{n}}
    $ 
with probability $1-\delta$ given $\x_0$.
\end{proposition}

The \emph{input radius}, %
\begin{talign}
\label{def:inputradii}
\rminpn[\inputcoreset]\! \defeq\! \max_{\x\in \inputcoreset}\twonorm{\x},
\end{talign}
 will also play an important role in our results. In particular, the growth rate of this radius as a function of $n$ impacts the growth rate of our MMD bounds. Our next definition assigns familiar names to the most commonly encountered growth rates. 
 
\begin{definition}[Input radius growth rates]
\label{def:tailinputrate}
  We say the point sequence $\inputcoresetfull$ with prefixes $\inputcoreset$ for $n\in\N$ is 
  \bndcase if $\rminpn[\inputcoreset] = \order_d(1)$, \subgauss if $\rminpn[\inputcoreset] = \order_d(\sqrt{\log n})$, \subexp if $\rminpn[\inputcoreset] = \order_d(\log n)$, and \heavytail with $\rho>0$ if $\rminpn[\inputcoreset] = \order_d(n^{1/\rho})$.
\end{definition}
These growth rates are exactly those which arise with  probability $1$ when an input sequence is generated identically  from $\P$ with corresponding tail behavior or from a fast-mixing Markov chain targeting $\P$.
Our proof of this result is given in \cref{proof_of_prop:radii_growth}.

\begin{proposition}[Almost sure radius growth]
\label{prop:radii_growth}
Consider either 
(i) points $\inputcoresetfull$ sampled identically (but not necessarily independently) from $\P$ with $\x_0$ independent 
or 
(ii) a homogeneous $\phi$-irreducible geometrically ergodic Markov chain 
with initial state $\x_0$, subsequent iterates $\inputcoresetfull$, and stationary distribution $\P$.
Then the following statements hold true for any nonnegative $c$ and $\rho$ and $\P$-almost every $\x_0$.
\begin{enumerate}[label=(\alph*),leftmargin=*]
\itemsep0em
    \item\label{item:compactiid} %
    If $\P$ is compactly supported, then, with probability $1$ conditional on $\x_0$, $\inputcoresetfull$ is \bndcase.
    \item\label{item:sgiid} %
    If $\E_{X\sim\P}[e^{c\twonorm{X}^2}] < \infty$, then, with probability $1$ conditional on $\x_0$, $\inputcoresetfull$ is \subgauss.
    \item\label{item:seiid} %
    If $\E_{X\sim\P}[e^{c\twonorm{X}}] < \infty$, then, with probability $1$ conditional on $\x_0$, $\inputcoresetfull$ is \subexp.
    \item\label{item:htiid} %
    If $\E_{X\sim\P}[\twonorm{X}^{\rho}] < \infty$, then, with probability $1$ conditional on $\x_0$, $\inputcoresetfull$ is \heavytail.
\end{enumerate}
\end{proposition}

Finally, we will also require $\inputcoresetfull$ to be  \emph{oblivious}, that is, generated independently of any randomness in the thinning algorithm. To capture this assumption, we treat $\inputcoresetfull$ as fixed and deterministic hereafter. This treatment is without loss of generality since our results hold conditional on the observed values of $(\x_i)_{i=1}^{\infty}$  when the points are random and oblivious.

\subsection{Kernel requirements} 
We use the terms \emph{reproducing kernel} and \emph{kernel} interchangeably to indicate that $\kernel$ 
is symmetric and positive definite, i.e., that the kernel matrix $(\kernel(\z_i,\z_j))_{i,j=1}^l$ is symmetric and positive semidefinite for any evaluation points $(z_i)_{i=1}^l$ in $\reals^d$. 
In addition to $\kernel$, our algorithm takes as input a \emph{square-root kernel} for $\kernel$.
\begin{definition}[Square-root kernel]
\label{def:square_root_kernel}
	We say a kernel $\ksqrt:\Rd\times \Rd\to\R$ is a \emph{square-root
	kernel} for $\kernel:\Rd \times \Rd\to\R$ if $\ksqrt(x,\cdot)$ is square integrable for all $x\in\Rd$ with  
	\begin{talign}
		\label{eq:kernelsqrt}
		\kernel(\x, \y) = \int_{\Rd}\ksqrt(\x, \z)\ksqrt(\y, \z) d\z \qtext{for all} x, y \in \real^d.
	\end{talign}
\end{definition}
We highlight that a square-root kernel need not be unique and that its existence is an indication of a certain degree of smoothness in the target kernel $\kernel$. One convenient tool for deriving square-root kernels is the notion of a \emph{spectral density}.

\begin{definition}[Shift invariance and spectral density]
\label{def:spectral_density}
We call a kernel of the form $\gkernel(x, y) = \kappa(x-y)$ for $\kappa: \Rd \to \reals$ 
\emph{shift-invariant} and say $\gkernel$ has \emph{spectral density} $\what{\kappa}$ if $\kappa$ is the Fourier transform of 
a finite measure with Lebesgue density $\what{\kappa}$, i.e., $\kappa(z) =  \frac{1}{(2\pi)^{d/2}}\int e^{-i\inner{\omega}{z}} \what{\kappa}(\omega) d\omega$.
\end{definition}
\newcommand{\sqrttablename}{Square-root kernels $\ksqrt$ for common target kernels $\kernel$}
\newcommand{\sqrttablecaption}{
\noindent\caption{
    \tbf{\sqrttablename}.
    Each $\kernel$ satisfies $\infnorm{\kernel} = 1$,
    and the parameter range ensures the existence of $\ksqrt$.
    Above, $\circledast^{\l}$ denotes recursive convolution with $\l$ function copies, $\bessel[a]$ denotes the modified Bessel function of the third kind %
    \citep[Def.~5.10]{wendland2004scattered},
    $c_{b} \defeq \frac{2^{1-b}}{\Gamma(b)}$,
     $\matscale[\matone]=\frac{c_{\matone-d/2}}{c_{\matone}} \mattwo^{2\matone-d}$,
     $A_{\matone,\mattwo,d}\defeq\parenth{\frac{1}{4\pi}\mattwo^2}^{d/4} \sqrt{\frac{\Gamma(\matone)}{\Gamma(\matone-d/2)}} \cdot \frac{\Gamma((\matone-d)/2)}{\Gamma(\matone/2)}$,
     $\splineconst[2\splineparam+2, d]'\defeq\splineconst[2\splineparam+2, d] \cdot 
     (\frac{4^{\splineparam+1}}{\sqrt{2\pi}})^d$, and
     $\wtil{S}_{\splineparam, d}\defeq\frac{\sqrt{\splineconst[2\splineparam+2, d]}}{\splineconst[\splineparam+1, d]}$
     where $\splineconst[\splineparam,d]$ is defined in \cref{eq:spline_all_constants}.
	 See \cref{sec:proof_for_tables} for our derivation.
    } }

\begin{table}[t]
    \centering
  \resizebox{\textwidth}{!}
  {
    {
    \renewcommand{\arraystretch}{1.5}
    \begin{tabular}{cccc}
        \toprule
        \Centerstack{\bf Name of kernel\\ \bf  $\kernel(\x,\y)=\kappa(\x\!-\!\y)$ } &
        \Centerstack{\bf Expression for \\ $\kappa(\z)$
        } 

        & \Centerstack{
        \bf Fourier transform
        \\ $\widehat{\kappa}(\omega)$ %
        } 
        
        & \Centerstack{\bf Square-root kernel \\ $\ksqrt$
        } 
        
        \\[2mm]
        \midrule 
        \Centerstack{
          \textbf{Gaussian}$(\gaussparam):$\\$\gaussparam>0$}
          & $\exp\parenth{-\frac{\twonorm{\z}^2}{2\gaussparam^2}}$

      	& \Centerstack{$\gaussparam^d\exp\parenth{-\frac{\gaussparam^2
          \twonorm{\omega}^2}{2}}$
          }
      	& 
          \Centerstack{
      		$\parenth{\frac{2}{\pi\gaussparam^2}}^
      		{\frac{d}{4}}\textbf{Gaussian}\parenth{\frac{\gaussparam}
      		{\sqrt 2}}$
      		}	

          \\[4mm]

         \Centerstack{
         \textbf{Mat\'ern}$(\matone, \mattwo) :$\\ $\matone>d, \mattwo>0$} & $
         c_{\matone-\frac{d}{2}}(\mattwo\twonorm{\z})^{\matone-\frac{d}{2}}
         \bessel
         [\matone-\frac{d}{2}](\mattwo\twonorm{\z})$ 
        & \Centerstack{${\matscale[\matone]\,}{(\mattwo^2+\twonorm{\omega}^2)^{-\matone}}$}
         & \Centerstack{$A_{\matone,\mattwo,d}$\textbf{Mat\'ern}$(\frac{\matone}{2}, \mattwo)$}
        
         \\[4mm]

         \Centerstack{
         $\textbf{B-spline}(2\splineparam+1):$\\$ \splineparam\in 2\natural+1$}
         
         &$\splineconst[2\splineparam+2, d]\displaystyle\prod_{\j=1}^d\circledast^{2\splineparam+2}\indicator_
         {[-\frac12, \frac12]}(\z_{\j}) $
         & \Centerstack{$\splineconst[2\splineparam+2, d]'
         \displaystyle\prod_{\j=1}^d\textfrac{\sin^{2\splineparam+2}(\frac{\omega_{\j}}{2})}{\omega_{\j}^{2\splineparam+2}}$}
         &  \Centerstack{
         $\wtil{S}_{\splineparam, d}\textbf{B-spline}(\splineparam)$}

        \\[1ex] \bottomrule \hline
    \end{tabular}
    }
    }
 \opt{arxiv}{\sqrttablecaption}
 \opt{jmlr}{\sqrttablecaption}
 \label{table:kernel_sqrt_pair}
\end{table}

As we show in \cref{sec:proof_for_tables,sub:guarantees_with_approximate_square_root_kernels}, many familiar kernels admit spectral densities, including Gaussian, \Matern, B-spline, inverse multiquadric, sech, and  Wendland’s compactly supported kernels.
Moreover, by Bochner's theorem \citep[Thm.~6.6]{bochner1933monotone,wendland2004scattered} and the Fourier inversion theorem \citep[Cor.~5.24]{wendland2004scattered}, any continuous $\kernel(x, y) = \kappa(x-y)$ with absolutely integrable $\kappa$ has a spectral density equal to the Fourier transform of $\kappa$.
Our next result (proved in \cref{sec:proof_of_sqrt_translation_invariant}) derives a square-root kernel for any shift-invariant~$\kernel$ with a square-root integrable spectral density.
\newcommand{\sqrttranslationinvariantname}{Shift-invariant square-root kernels}
\begin{proposition}[\sqrttranslationinvariantname]
\label{sqrt_translation_invariant}
If a kernel $\kernel(\x,\y)\! =\! \kappa(\x\!-\!\y)$ admits a spectral density (\cref{def:spectral_density})  $\widehat{\kappa}$ 
with $\int \sqrt{\widehat{\kappa}(\omega)}d\omega \!<\! \infty$, then $\ksqrt(x,y)\! =\! \frac{\kappasqrt(\x-\y)}{(2\pi)^{d/4}}$
is a square-root kernel of $\kernel$ for $\kappasqrt$ the \mbox{Fourier transform of $\sqrt{\widehat{\kappa}}$.}
\end{proposition}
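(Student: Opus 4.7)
The plan is to verify the defining identity $\int_{\Rd}\ksqrt(\x,\z)\ksqrt(\y,\z)\,d\z = \kernel(\x,\y)$ by first reducing to the scalar statement $\int_{\Rd}\kappasqrt(\x-\z)\kappasqrt(\y-\z)\,d\z = (2\pi)^{d/2}\kappa(\x-\y)$---so that the two $(2\pi)^{-d/4}$ prefactors in $\ksqrt$ exactly absorb the leading $(2\pi)^{d/2}$---and then establishing this scalar identity via Parseval's theorem under Wendland's symmetric Fourier convention.

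Before applying Parseval I would collect the regularity needed on $\kappasqrt$. Since $\kernel$ is a reproducing kernel, $\kappa$ is continuous and positive definite, so Bochner's theorem guarantees $\widehat{\kappa}\geq 0$ pointwise and $\int\widehat{\kappa} = (2\pi)^{d/2}\kappa(\boldzero)<\infty$; together with the hypothesis $\int\sqrt{\widehat{\kappa}}<\infty$, this places $g\defeq\sqrt{\widehat{\kappa}}$ in $L^1\cap L^2$, so $\kappasqrt = \widehat{g}$ is continuous, bounded, and in $L^2(\Rd)$. Realness and evenness of $\kappa$ (from symmetry of $\kernel$) carry to $\widehat{\kappa}$, to $g$, and hence to $\kappasqrt$. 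Invoking Bochner in the reverse direction, the nonnegativity of $g\in L^1$ shows that $\kappasqrt$ is itself continuous and positive definite, so $\ksqrt$ is a bona fide reproducing kernel as required by \cref{def:square_root_kernel}.

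For the main identity I would fix $\x,\y\in\Rd$, set $f_\x(\z) \defeq \kappasqrt(\x-\z)$, $f_\y(\z) \defeq \kappasqrt(\y-\z)$, and combine the shift rule with the inversion identity $\widehat{\widehat{g}}(\omega) = g(-\omega)$, evaluated using evenness of $g$, to obtain $\widehat{f_\x}(\omega) = e^{-i\x^T\omega}\sqrt{\widehat{\kappa}(\omega)}$ and the analogous expression for $f_\y$. Parseval, together with realness of $f_\x,f_\y$, then gives
\begin{talign*}
\int f_\x(\z)\,f_\y(\z)\,d\z
&= \int\widehat{f_\x}(\omega)\overline{\widehat{f_\y}(\omega)}\,d\omega \\
&= \int\widehat{\kappa}(\omega)\, e^{i(\y-\x)^T\omega}\,d\omega = (2\pi)^{d/2}\kappa(\y-\x),
\end{talign*}
the last equality being Fourier inversion applied to $\widehat{\kappa}\in L^1$, after which evenness of $\kappa$ closes the identity. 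The principal obstacle I anticipate is not the computation but the bookkeeping: Wendland's symmetric convention introduces $(2\pi)^{\pm d/2}$ factors at every transform, convolution, and inversion identity, and one must carefully argue that the generalized Fourier transform of $\kappa$ is genuinely represented by the $L^1$ function $\widehat{\kappa}$---rather than a tempered distribution with a singular component---before invoking Bochner, Parseval, and pointwise Fourier inversion.
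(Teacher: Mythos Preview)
Your proposal is correct and follows essentially the same approach as the paper: both invoke Bochner's theorem to secure nonnegativity and integrability of $\widehat{\kappa}$ (hence $\sqrt{\widehat{\kappa}}\in L^1\cap L^2$), then apply the Plancherel--Parseval identity to the shifted functions $z\mapsto\kappasqrt(x-z)$ to collapse the convolution integral to $(2\pi)^{-d/2}\int e^{-i\langle\omega,x-y\rangle}\widehat{\kappa}(\omega)\,d\omega=\kappa(x-y)$. Your write-up is somewhat more careful than the paper's in tracking evenness, realness, and the passage from the generalized to the classical Fourier transform, but the argument is the same.
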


\cref{table:kernel_sqrt_pair} gives several examples of common kernels satisfying the conditions of \cref{sqrt_translation_invariant} along with their associated square-root kernels. %
For example, 
if $\kernel$ is Gaussian with bandwidth $\gaussparam$, then a rescaled Gaussian kernel with bandwidth $\frac{\gaussparam}{\sqrt{2}}$ is a valid choice for $\ksqrt$.
For simplicity, our results in the sequel assume the use of an exact square-root kernel $\ksqrt$, but, as we detail in \cref{sub:guarantees_with_approximate_square_root_kernels}, it suffices to use the square-root of any kernel that dominates $\kernel$ in the positive-definite order (see \cref{def:square_root_dom_kernel}).
For example, we show in \cref{matern_sqrt_dom} of \cref{sub:guarantees_with_approximate_square_root_kernels} that a standard \Matern kernel is a suitable {square-root dominating kernel} for any sufficiently-smooth shift-invariant $\kernel$ with absolutely integrable $\kappa$. 
In \cref{table:sqrt_dom_pair} of \cref{sub:guarantees_with_approximate_square_root_kernels}, we also derive convenient tailored {square-root dominating kernels} for inverse multiquadric, sech, and Wendland's compactly supported kernels.

Finally, we define several kernel growth and decay properties that will be explicitly assumed in some of our results.

\begin{assumption}[Lipschitz kernel]
\label{assum:lipkernel}
The kernel $\gkernel : \reals^d \times \reals^d \to \reals$ admits a  Lipschitz constant
\begin{talign}
  \klip[\gkernel] \defeq \sup_{\x,\y, \z} \frac{\abss{\gkernel(\x, \y)-\gkernel(\x, \z)}}{\twonorm{\y-\z}} <\infty.
  \label{eq:klip}
\end{talign}
\end{assumption}

\begin{assumption}[Kernel tail decay]
\label{assum:tailkernel}
The kernel $\gkernel$ satisfies \cref{asmp:bounded_measurable} and, for each $\epssmall>0$, 
\begin{talign}
\begin{split}
  &\max\sbraces{\ \inf\{r:\sup_{\substack{\x, \y: \\ \twonorm{\x-\y}\geq r}} \abss{\gkernel(\x,\y)}\leq \epssmall\},  \quad 
  \inf\{r:\tail[\gkernel](r) \leq \epssmall\} \ } < \infty, \\ 
  &\qtext{where}
  \tail[\gkernel](r) \defeq (\sup_{x} \int_{\twonorm{y}\geq r} \gkernel^2(\x, \x-\y)d\y)^{\frac{1}{2}}
  \qtext{for} r\geq0.
  \label{eq:tail_k_p}
  \end{split}
\end{talign}
\end{assumption}

The following definition gives  
familiar names to commonly encountered tail decay rates.
\begin{definition}[Kernel tail decay rate]
\label{def:tailkernelrate}
 For a kernel $\gkernel$ satisfying \cref{assum:tailkernel}, define 
\begin{talign}
\begin{split}
  \rk   &\defeq \inf\{r:\sup_{\substack{\x, \y: \\ \twonorm{\x-\y}\geq r}} \sabss{\gkernel(\x,\y)}\leq \frac{\infnorm{\gkernel}}{n}\},  
    \quad
     \rktau \defeq \inf\{r:\tail[\gkernel](r) \leq \frac{\infnorm{\gkernel}}{\sqrt{n}}\},\\ 
      \qtext{and}
     \rkmax&\defeq \max\sbraces{\rk, \rktau}
     \label{eq:rmin_k}
\end{split}
\end{talign}
for $\tail[\gkernel]$ defined in \cref{eq:tail_k_p}. 
We say $\gkernel$ is (a) \bndcase if $\rkmax = \order_d(1)$, (b) \subgauss if $\rkmax = \order_d(\sqrt{\log n})$, (c) \subexp if $\rkmax = \order_d(\log n)$, (d) \heavytail for $\rho>0$ if $\rkmax = \order_d(n^{1/\rho})$, and (e) \subpoly if $\log \rkmax = \order_d(\log n)$.
Notably, any \bndcase, \subgauss, \subexp, or \heavytail $\gkernel$ is also \subpoly. 
\end{definition}
\begin{remark}[$\ksqrt$ tail decay implies $\kernel$ boundedness]
If $\ksqrt$ satisfying \cref{assum:tailkernel} is a square-root kernel of $\kernel$, 
then there exists a finite $r$ for which $\kernel(x, x) = \int \ksqrt^2(x, x-y)dy \leq  \sinfnorm{\ksqrt}^2 \mrm{Vol}(\ball(0; r)) + \tail[\ksqrt]^2(r) < \infty$.
\end{remark}
Popular examples of \bndcase, \subgauss, and \subexp $\gkernel$ are B-spline, Gaussian, and \Matern kernels respectively (see \cref{table:sqrtk_details}).
Moreover, one can directly verify that an inverse multiquadric $\gkernel(\x,\y) = (\gamma^2+\twonorm{x\!-\!y}^2)^{-\nu}$ with $\gamma>0$ and $\nu>\frac{d}{4}$ is \heavytailnorho\!($2\nu \wedge (4\nu\!-\!d)$).

\newcommand{\rksmingen}[1]{\rminnew[#1]}
\newcommand{\rksmaxgen}[1]{\rminnew[#1]'}

\newcommand{\rksmin}[1][\gkernel]{\rksmingen{\inputcoreset, #1, n}}
\newcommand{\rksmax}[1][\gkernel]{\rksmaxgen{\inputcoreset, #1, n}}
\SetKwFunction{proctwo}{\texttt{get\_swap\_params}}
\section{Kernel Thinning} %
\label{sec:kernel_thinning}
Our solution to the thinned coreset problem is \emph{kernel thinning}, described in \cref{algo:kernel_thinning}.
Given a thinning parameter $m \in \natural$, kernel thinning proceeds in two stages: \ktsplitlink and \ktswaplink.
\begin{algorithm2e}[H]
\caption{Kernel Thinning\ --\ Return coreset of size $\floor{n/2^m}$ with small $\mmd_{\kernel}$} 
  \label{algo:kernel_thinning}
  \SetAlgoLined
  \DontPrintSemicolon
  \SetKwFunction{ksplit}{\textsc{KT-SPLIT}}
  \SetKwFunction{kswap}{{KT-SWAP}}
  \small
  {
\KwIn{\textup{kernels ($\kernel$,$\ksqrt$), input points $\inputcoreset\!=\!(\axi[i])_{i = 1}^n$, thinning parameter $\m \in \natural$, probabilities $(\delta_i)_{i = 1}^{\floor{\frac{n}{2}}}$}}
  \BlankLine
    $\ksplitcoresets \gets$ {\normalsize\ktsplitlink}\,$(\ksqrt, \inputcoreset, \m, (\delta_i)_{i = 1}^{\floor{\frac{n}{2}}})$ \ \,\,/\!/ \textup{Split $\inputcoreset$ into $2^\m$ candidate coresets of size $\floor{\frac{n}{2^\m}}$}\\[2pt]
    \BlankLine
    $\ \ktcoreset \quad\quad\ \ \gets$ {\normalsize\ktswaplink}\,$(\kernel, \inputcoreset, \ksplitcoresets)$  \ /\!/ \textup{Select best coreset and iteratively refine} \\
    \BlankLine
  \KwRet{\textup{coreset $\ktcoreset$ of size $\floor{n/2^\m}$}}
}
\end{algorithm2e}

\begin{figure}
\resizebox{\textwidth}{!}{
    \centering
    \begin{tabular}{cc}
         \includegraphics[width=0.5\linewidth]{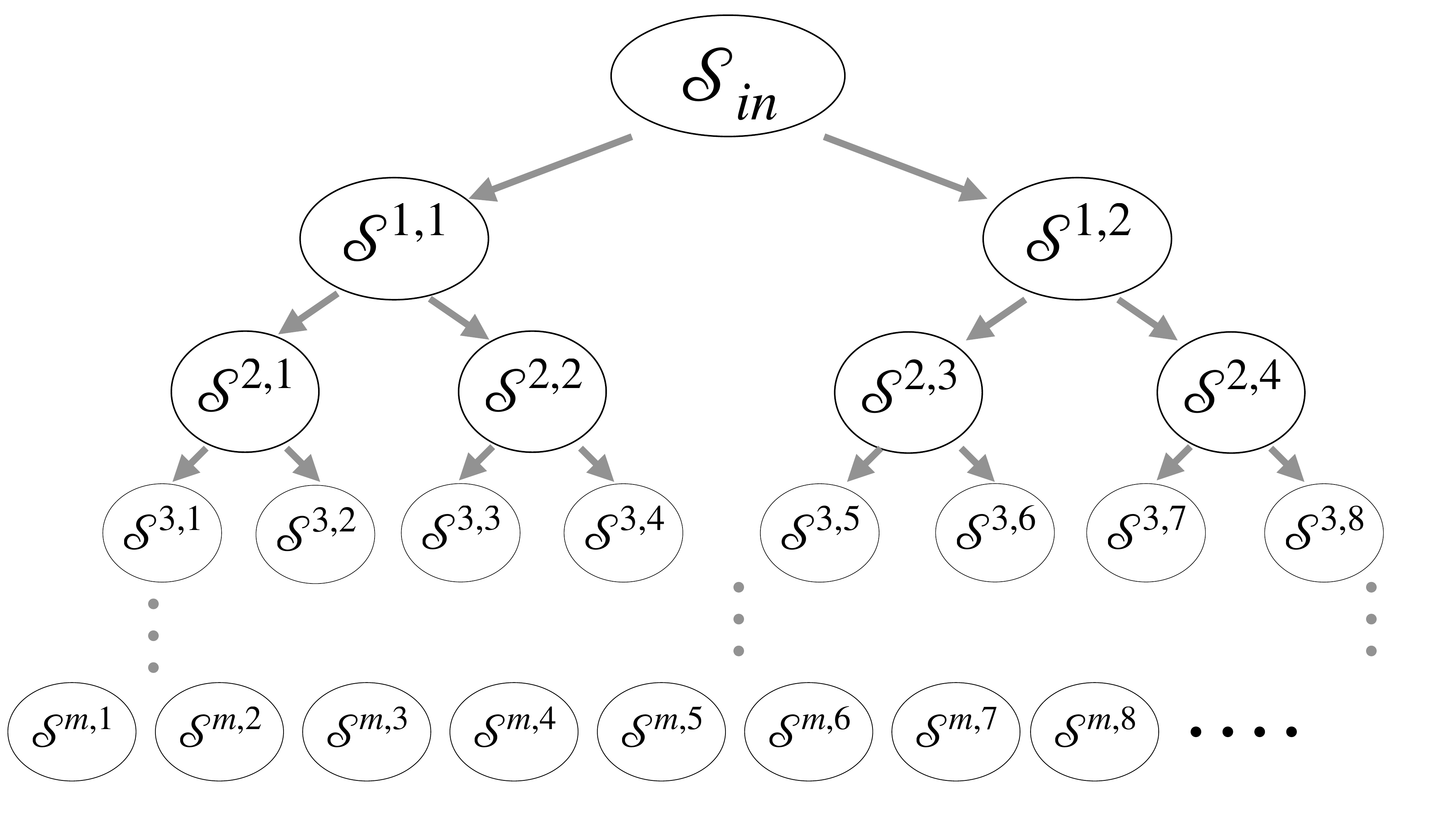} 
         &\includegraphics[width=0.5\linewidth]{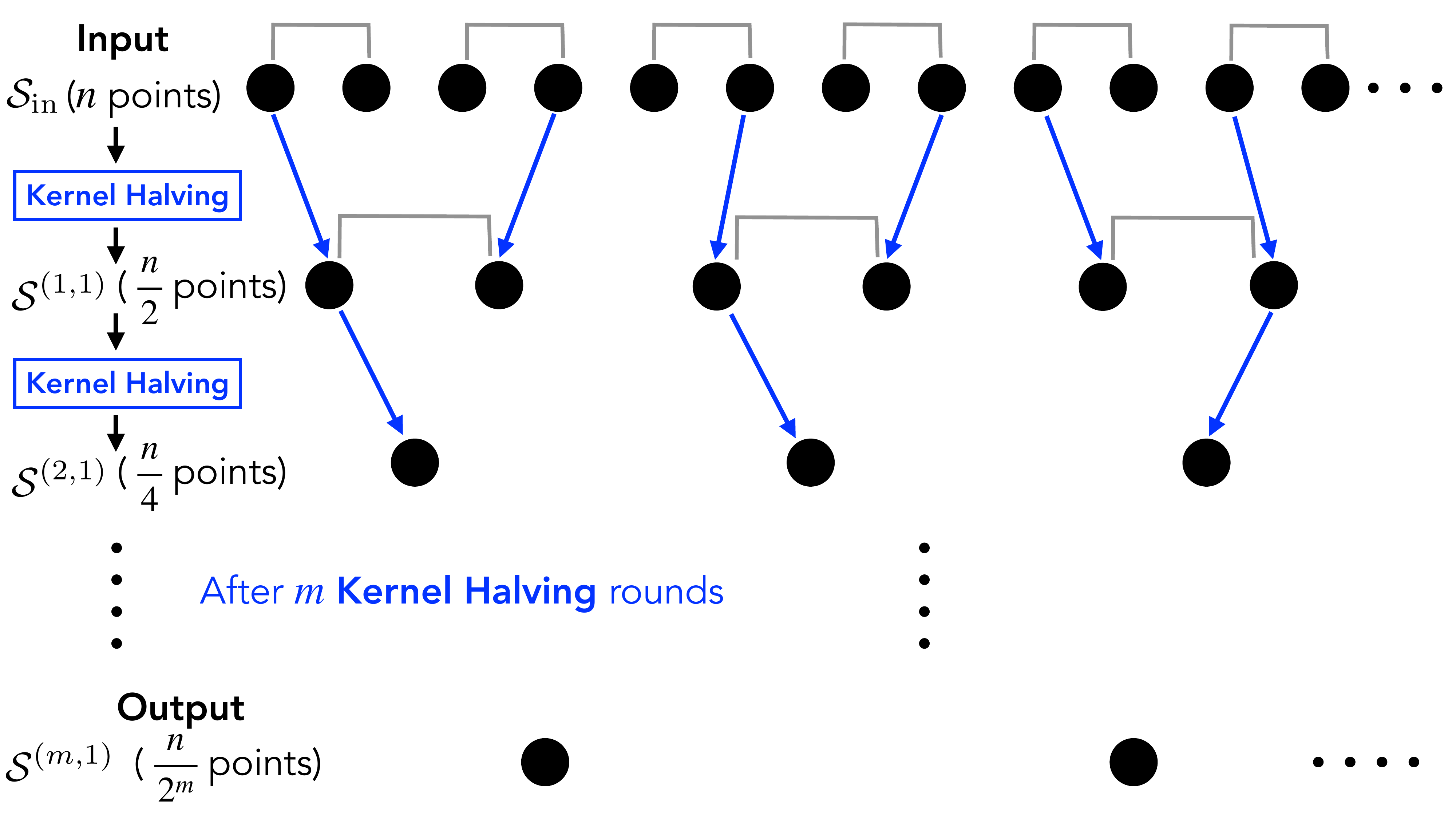} \\ 
    \end{tabular}
    }
    \caption{
    \tbf{Overview of \ktsplit}. (Left) \ktsplit recursively partitions its input $\inputcoreset$ into $2^m$ balanced coresets $\coreset[m, \l]$ of size $\floor{\frac{n}{2^m}}$.
    (Right) In \cref{sub:kernel_halving}, we interpret 
    each coreset $\coreset[m, \l]$ as the output of repeated \emph{kernel halving}: on each halving round, remaining points are paired, and one point from each pair is selected using non-uniform randomness.
    }
    \label{fig:kt_algo_diagram}
\end{figure}
\setcounter{algocf}{0}
\renewcommand{\thealgocf}{\arabic{algocf}a}
\begin{algorithm2e}[ht!]
\caption{{\large\textsc{kt-split}\ --\ } Divide points into candidate coresets of size $\floor{n/2^\m}$} 
  \label{algo:ktsplit}
  \SetAlgoLined\DontPrintSemicolon
  \small
  {
  \KwIn{kernel $\ksqrt$, point sequence $\inputcoreset = (\axi[i])_{i = 1}^n$, thinning parameter $\m \in \natural$, probabilities $(\delta_{i})_{i=1}^{ \floor{n/2}}$ 
  }
  \BlankLine
  {$\coreset[j,\ell] \gets \braces{}$ 
  for $0\leq j\leq \m$ and $1\leq \ell \leq 2^j$} 
  // Empty coresets: $\coreset[j,\ell]$ 
   has size $\floor{\frac{i}{2^{j-1}}}$ 
  after $i$ rounds\\ 
  \BlankLine
  {$\sgparam[j,\ell] \gets 0$ 
  for $1\leq j\leq \m$ and $1\leq \ell \leq 2^{j-1}$}   
  // Swapping parameters
\\
  \BlankLine
  \For{$i=1, \ldots, \floor{n/2}$}
      {
        $\coreset[0,1]\texttt{.append}(\x_{2i-1}); \coreset[0,1]\texttt{.append}(\x_{2i})$ \\[2pt]
        // Every $2^{j-1}$ rounds add point from parent coreset
$\coreset[j\!-\!1,\ell]$
            to each child $\coreset[j,2\ell\!-\!1]$,  $\coreset[j,2\ell]$\\[1pt]
        \For{\textup{($j = 1;
        \ j \leq m\ \textbf{and}\ i / 2^{j-1} \in \natural;
        \ j = j + 1$)}}
            {
            \For{$\ell=1, \ldots, 2^{j-1}$}
            {
            $(\mc{S},\mc{S}') \gets (\coreset[j-1,\ell], \coreset[j,2\ell-1])$;
            \quad
            $(\x, \x') 
                \gets
               \texttt{get\_last\_two\_points}(\mc{S})$\\[2pt]
            // Compute swapping threshold $\cnew[]$\\[1pt] %
            $ \cnew[], \sigma_{j, \l} \gets $\proctwo{$\sigma_{j, \l}, \vmax[], \delta_{|{\mc{S}}|/2} \cdot \frac{2^{j-1}}{\m}$}
     \!for $\vmax[]^2 
      \!=\! \ksqrt(\x,\x)\!+\!\ksqrt(\x',\x')\!-\!2\ksqrt(x,x')$\\[2pt]
            // Assign one point to each child after probabilistic swapping\\[1pt]
$\wvprod[]\gets \ksqrt(\x',\x')\!-\!\ksqrt(\x, \x)
          +\Sigma_{\y\in\mc{S}}(\ksqrt
                (\y, \x)-\ksqrt(\y,\x')) 
                 \!-\! 2\Sigma_{\z\in\mc{S}'}(\ksqrt(\z, \x)\!-\!\ksqrt(\z,\x'))$
             \\[2pt]
            $(x, x') \gets (x', x)$ \textit{ with probability }
            $\min(1, \half (1-\frac{\wvprod[]}{\cnew[]})_+)$
            \\[2pt]
          $\coreset[j,2\ell-1]\texttt{.append}(\x); 
                \quad \coreset[j,2\ell]\texttt{.append}(\x')$
            }
            }
      }
    \KwRet{$\ksplitcoresets$\textup{, candidate coresets of size $\floor{n/2^\m}$}}\\
    \hrulefill\\
    \SetKwProg{myproc}{function}{}{}
     \myproc{\proctwo{$\sigma, \vmax[], \delta$}:}{
     $
            \cnew[] 
                \gets \max(\vmax[] \sigma\sqrt{\smash[b]{2\log(2/\delta)}}, \vmax[]^2)$ \\
     $\sigma^2 \gets \sigma^2
            \!+\! \vmax[]^2(1 \!+\! ({\vmax[]^2}{}\! - \!2\cnew[]){\sigma^2}{/\cnew[]^2})_+$\\
     }
     \KwRet{$(\cnew[], \sigma)$}\;
  }
\end{algorithm2e} 

\paragraph{KT-SPLIT}
The first stage, \ktsplitlink, is an initialization stage that partitions the input sequence  $\inputcoreset = \pseqxnn$ into $2^m$ balanced candidate coresets, each of size $\floor{\frac{n}{2^m}}$.\footnote{\label{footnote:outputsize}When $2^m$ does not evenly divide $n$, the final $n - 2^m\floor{\frac{n}{2^m}}$ points are discarded.}
As depicted in \cref{fig:kt_algo_diagram}, this partitioning is carried out recursively in $m$ rounds, first dividing the input sequence in half, then halving those halves into quarters, and so on until coresets of size $\floor{\frac{n}{2^m}}$ are produced. %
The details of \ktsplitlink can appear a bit complicated as, in practice, all $m$ halving rounds are carried out concurrently in an online manner.
However, under the hood, each candidate coreset is generated by recursively applying 
a new simple subroutine called \emph{kernel halving}.

\paragraph{Kernel halving}
Kernel halving (KH, \cref{algo:kernel_halving}) is a simple randomized procedure for dividing an input sequence $\inputcoreset$ into two balanced, equal-sized coresets $\coreset[1]$ and $\coreset[2]$ using a kernel $\gkernel$.
KH begins with two empty coresets $\coreset[1]$ and $\coreset[2]$ 
and adds points $(x, x')$ from the input sequence two at a time, assigning one point from each pair to each coreset.
To encourage balance between the coresets during generation, KH effectively computes which assignment of $(x, x')$ leads to a smaller $\mmd_{\gkernel}(\coreset[1], \coreset[2])$ and then favors that assignment using non-uniform randomness.
More precisely, on the $i$-th step with $(x, x') = (x_{2i-1}, x_{2i})$, KH computes 
the imbalance contrast %
\[
\wvprod[i] = i^2 \big(\mmd_{\gkernel}^2(\coreset[1] \cup \{x\}, \coreset[2] \cup \{x'\}) - \mmd_{\gkernel}^2(\coreset[1] \cup \{x'\}, \coreset[2] \cup \{x\})\big)
\]
and then adds $x$ to $\coreset[1]$ and $x'$ to $\coreset[2]$ or $x'$ to $\coreset[1]$ and $x$ to $\coreset[2]$ with probability 
biased toward the more balanced outcome.
We refer to this step as \emph{probabilistic swapping} in the algorithm statements. 
The exact value of this probability depends on a \emph{swapping threshold} $\cnew$ that is produced automatically each round based on the user-supplied inputs $(\delta_{i})_{i=1}^{ \floor{n/2}}$.
In \cref{sub:mmd_bounds}, we will learn 
how to set these inputs to achieve better balance than standard thinning or uniform subsampling, and in \cref{sub:kh} we will discuss the role of $\cnew$ and its generation procedure \proctwo in achieving this balance. 

Notably, in the context of \ktsplitlink, KH is run specifically with the square-root kernel $\ksqrt$ rather than the target kernel $\kernel$.
This choice enables us to take advantage of the strong $\Linf$ balance properties established for KH in \cref{sub:kernel_halving} and the close connection between square-root $\Linf$ error and target MMD error revealed in \cref{sec:a_general_recipe_for_bounding_kernel_mmd}.

\renewcommand{\thealgocf}{\arabic{algocf}}
\begin{algorithm2e}[ht!]
\caption{Kernel Halving}
\label{algo:kernel_halving}
\small{
  \KwIn{kernel $\gkernel$, point sequence $\inputcoreset=(\axi[i])_{i = 1}^n$, probability sequence $(\delta_i)_{i = 1}^{\floor{n/2}}$ }
  \BlankLine
  {$\coreset[1], \coreset[2] \gets \braces{}$};\quad $\outvec[0]\gets \boldzero \in \rkhs$\quad /\!/ Initialize empty coresets: $\coreset[1],\coreset[2]$ have size $i$ after round $i$ \\ 
  {$\sgparam[0] \gets 0$}\qquad\qquad\qquad\qquad\qquad\quad /\!/ Swapping parameter \\
  \For{$i=1, 2, \ldots, \floor{n/2}$}
    {%
    /\!/ Construct kernel difference function using next two points \\
    $(\x, \x') \gets (\x_{2i-1}, \x_{2i})$;\quad
    $\invec[i] \gets \gkernel(\x_{2i-1}, \cdot)-\gkernel(\x_{2i}, \cdot)$; \quad $\eta_i \gets -1$ \\
	 \BlankLine
     /\!/ Compute swapping threshold $\cnew[i]$ \\ %
     
     $ \cnew[i], \sgparam[i] \gets $\proctwo{$\sgparam[i-1], \vmax[], \delta_i$}
     with\ \ $\vmax[]^2 \!=\! \norm{\invec[i]}_{\gkernel}^2 
      \!=\! \gkernel(\x,\x)\!+\!\gkernel(\x',\x')\!-\!2\gkernel(x,x')$
    \BlankLine
    /\!/ Compute RKHS inner product $\angles{\outvec[i-1], \invec[i]}_{\gkernel}$, which has a simple form \\
    $\wvprod[i]\gets  \sum_{j=1}^{2i-2}(\gkernel
	 (\x_j, \x)-\gkernel(\x_j,\x')) 
	 - 2\sum_{\z\in\coreset[1]}(\gkernel(\z, \x)-\gkernel(\z,\x'))$ \\
    \BlankLine
			 /\!/ Assign one point to each coreset after probabilistic swapping \\[2pt]
		     $(x, x') \gets (x', x)$ \text{ and } $\eta_i \gets 1$ \qtext{\textit{with probability}} $\min(1, \half (1-\frac{\wvprod[i]}{\cnew[i]})_+)$ \\ 
           $\coreset[1]\texttt{.append}(\x); 
		        \ \ \  \coreset[2]\texttt{.append}(\x'); \ \ \ 
		     \outvec[i]\gets \outvec[i-1] + \eta_i \invec[i] $ \ /\!/ $ \outvec[i]=\sum_{x'\in\coreset[2]}\!\gkernel(x', \cdot)\!-\!\sum_{x\in\coreset[1]}\!\gkernel(x, \cdot)$
  }
  \KwRet{\textup{$\coreset[1]$, coreset of size} $\floor{n/2}$}{} 
  } 
\end{algorithm2e}

\paragraph{KT-SWAP} The second stage, \ktswaplink, refines the candidate coresets produced by \ktsplit in three steps. First, \ktswap adds a baseline coreset of size $\floor{\frac{n}{2^m}}$ to the candidate list (for example, one produced by standard thinning or uniform subsampling) to ensure that the \ktswap output is never worse than that of the baseline. Next, it selects the candidate coreset closest to $\inputcoreset$ in terms of $\mmd_\kernel$. Finally, it refines the selected coreset by replacing each coreset point in turn with the best alternative in $\inputcoreset$,  as measured by $\mmd_\kernel(\inputcoreset, \cdot)$. 
This stage serves to greedily improve upon the MMD of the initial \ktsplit candidates, and, when computable, $\mmd_\kernel$ to the target distribution $\Pstar$ can be substituted for the surrogate  $\mmd_\kernel(\inputcoreset, \cdot)$ throughout. 

\setcounter{algocf}{0}
\renewcommand{\thealgocf}{\arabic{algocf}b}
\begin{algorithm2e}[ht!]
\caption{{\large\textsc{kt-swap}\ --\ } Identify and refine the best candidate coreset} 
  \label{algo:ktswap}
 \SetAlgoLined\DontPrintSemicolon
\small
{
    \KwIn{kernel $\kernel$, point sequence $\inputcoreset = (\axi[i])_{i = 1}^n$, candidate coresets $\ksplitcoresets$}
        \BlankLine
    $\coreset[\m,0] 
        \!\gets\! \texttt{baseline\_coreset}(\inputcoreset, \texttt{size}\!=\!\floor{n/2^\m})
     $ /\!/ Compare to baseline (e.g., standard thinning)
     \BlankLine
     $\ktcoreset \!\gets\! \coreset[\m, \ell^\star]
     \text{ for }
     \ell^\star 
        \!\gets\! \argmin_{\ell \in \braces{0, 1, \ldots, 2^\m}} \mmd_{\kernel}(\inputcoreset, \coreset[\m,\ell])$ 
        \ \ /\!/ \textup{Select best coreset} \\
        \BlankLine
        /\!/ Swap out each point in $\ktcoreset$ for best alternative in $\inputcoreset$ \\[1pt]
       \For{$i=1, \ldots, \floor{n/2^\m}$}{
       \BlankLine
        $\ktcoreset[i] \gets \argmin_{z\in\inputcoreset}\mmd_{\kernel}(\inputcoreset, \ktcoreset \text{ with } \ktcoreset[i] = z)$
       }
    \KwRet{
     $\ktcoreset$\textup{, refined coreset of size $\floor{n/2^\m}$}
    }
}
\end{algorithm2e}

\paragraph{Complexity}
For any $\m$, the time complexity of kernel thinning is dominated by $\order(n^2)$ kernel evaluations, 
while the space complexity is $\order(n\min(d,n))$, achieved by storing the smaller of the input sequence $(\axi)_{i=1}^n$ and the kernel matrix $(\ksqrt(\x_i,\x_j))_{i,j=1}^n$. %
In addition, scaling either $\kernel$ or $\ksqrt$ by a positive multiplier has no impact on  \cref{algo:kernel_thinning}, so the kernels need only be specified up to arbitrary rescalings.

\section{MMD Guarantees}\label{sec:mmd}
We are now prepared to present our main MMD guarantees.

\subsection{MMD guarantees for kernel thinning} %
\label{sub:mmd_bounds}
Our first main result, proved in \cref{sub:proof_of_theorem:main_result_all_in_one}, bounds the MMD of a kernel thinning coreset in terms of the input \cref{def:inputradii} and kernel \cref{eq:rmin_k} radii, the combined radii
\begin{talign}
     \rksmin \defeq \min\big(\rminpn[\inputcoreset], n^{1+\frac1d}\rmin_{\gkernel, n}+ n^{\frac1d} \frac{\sinfnorm{\gkernel}}{\klip[\gkernel]} \big)
     \qtext{and}
     \rksmax \defeq \max\big(\rminpn[\inputcoreset],\rktau\big),
     \label{eq:rmin_P}
\end{talign}
and the kernel thinning inflation factor 
\begin{talign}
 \label{eq:err_simple_defn}  
  \!\err_{\gkernel}(n,\! m,\! d,\! \delta,\! \delta'\!,\! R) 
     \!\defeq\!
     2\,\mathbb{I}(\frac{n}{2^m}\!\not\in\!\N)\!+\!
     37\sqrt{\! \log(\frac{6m}{2^m\delta})} \!\brackets{ \!\sqrt{\!\log({\frac{4}{\delta'}})} \!+\! 5 \sqrt{{\!d\log \!\big(2\!+\!\!\frac{2\klip[\gkernel]}{\sinfnorm{\gkernel}}\!(\rmin_{\gkernel, n} \!+\! R) \big) }} \!},
\end{talign}
defined for any kernel $\gkernel$ satisfying \cref{assum:lipkernel,assum:tailkernel}, $n, m,d \in\N$,  $\delta \in (0, \frac{6m}{2^m}]$, $\delta' \in(0, 1]$, and $R\geq 0$.
\newcommand{\mainresultallinonename}{MMD guarantee for kernel thinning}
\begin{theorem}[\mainresultallinonename]%
\label{theorem:main_result_all_in_one}
Consider kernel thinning (\cref{algo:kernel_thinning}) with $\kernel$ satisfying \cref{asmp:bounded_measurable}, 
$\ksqrt$ a square-root kernel of $\kernel$,  
$\delta^\star \defeq \min_{i} \delta_{i}$, and  
$\nout\defeq\floor{\textfrac{n}{2^{\m}}}$ for $m\leq \floor{\log_2 n}$. If $\ksqrt$ satisfies \cref{assum:lipkernel,assum:tailkernel}, then, for any fixed $\delta'\! \in\! (0, 1)$, we have
 \begin{talign} 
	\!\!\!\!\mmd_{\kernel}(\inputcoreset, \ktcoreset)
	\!\leq\!
	\frac{\sinfnorm{\ksqrt}}{n_{\mrm{out}}}
	\!\brackets{2
\!+ \!
\sqrt{\frac{(4\pi)^{d/2}}{\Gamma(\frac{d}{2}\!+\!1)}}
 (\rksmaxgen{\inputcoreset, \ksqrt, n_{\mrm{out}}})^{\frac{d}{2}}   \err_{\ksqrt}(n,m,\! d,\! \delta^\star, \!\delta',\!\rminnew[\inputcoreset, \ksqrt, n])}, 
\label{eq:mmd_thinning_bound_finite}
\end{talign}
with probability at least 
$1\!-\!\delta'\!-\!\sum_{j=1}^{\m} \frac{2^{j-1}}{\m} \sum_{i=1}^{2^{m-j}\nout}\delta_{i}$.
\end{theorem}
\begin{remark}[Guarantee for target $\Pstar$]
A guarantee for any target distribution $\Pstar$  follows directly from the triangle inequality, $\mmd_{\kernel}(\Pstar,\ktcoreset) \leq \mmd_{\kernel}(\Pstar, \inputcoreset) + \mmd_{\kernel}(\inputcoreset,\ktcoreset)$.
\end{remark}

\begin{remark}[Comparison with baseline thinning]
\label{rem:baseline}
The \ktswap step ensures that, deterministically,  $\mmd_{\kernel}(\inputcoreset, \ktcoreset)
	\leq 
	\mmd_{\kernel}(\inputcoreset, \basecoreset)$
for $\basecoreset$ a baseline thinned coreset of size $n_{\mrm{out}}$. %
Therefore, we additionally have
$%
\mmd_{\kernel}(\Pstar, \ktcoreset)
	\leq 
	2\mmd_{\kernel}(\Pstar, \inputcoreset)
	+ \mmd_{\kernel}(\Pstar, \basecoreset).
$%
\end{remark}

\newcommand{\failureremarkname}{Finite-time and anytime guarantees}
\begin{remark}[\failureremarkname]\label{finite_anytimee}
To obtain a success probability of at least $1-\delta$ with $\delta'=\frac{\delta}{2}$, it suffices to choose $\delta_i = \frac{\delta}{n}$ when the input size $n$ is known in advance and $\delta_i = \frac{m\delta}{2^{m+2}(i+1)\log^2(i+1)}$ when the input size $n$ is not known in advance (but is chosen independently of the randomness used in kernel thinning). %
In either case, $\delta^\star \leq \frac{6m}{2^m}$ is a valid argument to $\err_{\ksqrt}$  \cref{eq:err_simple_defn}. 
See \cref{sec:failure_prob_proof} for our proof. 
\end{remark}

\renewcommand{\thealgocf}{\arabic{algocf}}
Our next corollary, proved in \cref{sec:derivation_of_mmd_rates}, translates \cref{theorem:main_result_all_in_one} 
into specific rates of MMD decay depending on the radius growth of $\inputcoresetfull$ and the tail decay of $\ksqrt$.
\newcommand{\mmdcorollaryname}{MMD rates for kernel thinning} %
\begin{corollary}[\mmdcorollaryname]
\label{table:mmd_rates}
Under the notation and assumptions of \cref{theorem:main_result_all_in_one}, consider a sequence of kernel thinning runs (\cref{algo:kernel_thinning}), indexed by $n\in\naturals$, with $m = \floor{\half\log_2n}$, $\log(1/\delta^\star) = \order(\log n)$, and $\sum_{j=1}^{\m} \frac{2^{j-1}}{\m} \sum_{i=1}^{\floor{n/2^j}}\delta_{i} = o(1)$ as $n\to\infty$.
If $\inputcoresetfull$ and $\ksqrt$ respectively satisfy one of the radius growth (\cref{def:tailinputrate}) and tail decay (\cref{def:tailkernelrate}) conditions in the table below, then $\mmd_{\kernel}(\inputcoreset, \ktcoreset)=\order_d(\vareps_{\mmd, n})$ in probability where $\vareps_{\mmd, n}$ is the corresponding table entry.
\begin{table}[H]
    \centering
  \resizebox{\textwidth}{!}{
  {
    {
    \renewcommand{\arraystretch}{1}
    \begin{tabular}{ccccc}
        \toprule
        \Centerstack{ $\mbi{\vareps_{\mmd, n}}$ %
        }
        
        & \Centerstack{\bndcase\ $\inputcoresetfull$\\ $\rminpn[\inputcoreset] \precsim_d1$ 
        } 
        
        & \Centerstack{ \subgauss\ $\inputcoresetfull$ \\ $\rminpn[\inputcoreset] \precsim_d\sqrt{\log n}$ 
        } 
        
        & \Centerstack{ \subexp\ $\inputcoresetfull$ \\ $\rminpn[\inputcoreset] \precsim_d\log n$
        } 
       
        & \Centerstack{ \heavytail[\rho]\ $\inputcoresetfull$ \\ $\rminpn[\inputcoreset] \precsim_dn^{1/\htailparam}$ 
        }   
        \\[2mm]
        \midrule 
        
         \Centerstack{ \bndcase\ $\ksqrt$
         \\
        $\rkmax[\ksqrt] \precsim_d 1$}
          & $\sqrt{\frac{\log n}{n}}$
          & $(\log n)^{\frac{d+2}{4}}\sqrt{\frac{\log \log n}{n}}$
          & $(\log n)^{\frac{d+1}{2}}\sqrt{\frac{\log \log n}{n}}$
          & $\frac{\log n}{\sqrt{n^{1-d/\htailparam}}}$
          \\[6mm]

        \Centerstack{\subgauss\ $\ksqrt$ \\
        $\rkmax[\ksqrt] \precsim_d \sqrt{\log n}$
        }
          &  $(\log n)^{\frac{d+2}{4}}\sqrt{\frac{\log \log n}{n}}$
          &  $(\log n)^{\frac{d+2}{4}}\sqrt{\frac{\log \log n}{n}}$
          & $(\log n)^{\frac{d+1}{2}}\sqrt{\frac{\log \log n}{n}}$
          & $\frac{\log n}{\sqrt{n^{1-d/\htailparam}}}$
          \\[6mm]

         \Centerstack{ \subexp\ $\ksqrt$
         \\
        $\rkmax[\ksqrt] \precsim_d \log n$}
          & $(\log n)^{\frac{d+1}{2}}\sqrt{\frac{\log \log n}{n}}$
          & $(\log n)^{\frac{d+1}{2}}\sqrt{\frac{\log \log n}{n}}$
          & $(\log n)^{\frac{d+1}{2}}\sqrt{\frac{\log \log n}{n}}$
          & $\frac{\log n}{\sqrt{n^{1-d/\htailparam}}}$
          \\[6mm]

          \Centerstack{\heavytail[\rho']\ $\ksqrt$
         \\
        $\rkmax[\ksqrt] \precsim_d n^{1/\htailparam'}$}
          & $\frac{\log n}{\sqrt{n^{1-d/\htailparam'}}}$
          & $\frac{\log n}{\sqrt{n^{1-d/\htailparam'}}}$
          & $\frac{\log n}{\sqrt{n^{1-d/\htailparam'}}}$
          & $\frac{\log n}{\sqrt{n^{1-d/(\htailparam\wedge\htailparam')}}}$

        \\[1ex] \bottomrule
    \end{tabular}
    }
    }
    }
\end{table}
\end{corollary}
\begin{remark}[Probability parameters]
\label{rem:delta_sufficient}
    The condition $(\star) \ \sum_{j=1}^{\m} \frac{2^{j-1}}{\m}\! \sum_{i=1}^{\floor{n/2^j}}\delta_{i} \!=\! o(1)$ is satisfied when $\delta_1\!=\!\cdots \!=\! \delta_{\floor{\frac{n}{2}}} \!=\! o(\frac1n)$.
    Hence, both $\log(1/\delta^\star) \!=\! \order(\log n)$ and $(\star)$ are satisfied when, for example,  $\delta_1\!=\!\cdots \!=\!\delta_{\floor{\frac{n}{2}}} = \frac{1}{n\log\log n}$.
\end{remark}

\cref{table:mmd_rates} shows that kernel thinning returns an $(n^\half, \order_d(n^{-\half}\sqrt{\log n}))$-MMD coreset in probability
when $\inputcoresetfull$ and $\ksqrt$ are compactly supported. 
For fixed $d$, this guarantee significantly improves upon the baseline $\Omega_p(n^{-\quarter})$  rates of \iid sampling and standard MCMC thinning and matches the minimax lower bounds of \cref{sec:related} up to a $\sqrt{\log n}$ term and constants depending on $d$. 
For example, when $\inputcoresetfull$ is drawn \iid from $\Pstar$, kernel thinning is nearly minimax optimal amongst \emph{all} distributional approximations (even weighted coresets and non-coreset approximations) that depend on $\P$ only through $n$ \iid input points \citep[Thms.~1 and 6]{tolstikhin2017minimax}.

More generally, when 
$\inputcoresetfull$ and $\ksqrt$ are \subgauss, \subexp, or 
$\heavytail$ with $\htailparam >2d$, 
\cref{table:mmd_rates} shows that the kernel thinning provides an MMD error of $\order_d(n^{-\half} \sqrt{(\log n)^{d/2+1} \log \log n})$, $\order_d(n^{-\half}\sqrt{(\log n)^{d+1} \log \log n}))$, and $\order_d(n^{-\half} n^{\frac{d}{2\htailparam}}\log n)$ in probability with output coresets of size $\sqrt{n}$. In each case, we find that kernel thinning significantly improves upon an $\Omega_p(n^{-\quarter})$ baseline when $n$ is sufficiently large relative to $d$ and, by \cref{rem:baseline}, is never significantly worse than the baseline when $n$ is small.
Our \subexp guarantees also resemble the classical quasi-Monte Carlo guarantees for the uniform distribution on $[0,1]^d$ (see \cref{sec:related}) but allow for non-uniform and unbounded target distributions $\P$.

\newcommand{\mmdtablename}{Kernel thinning MMD guarantee under $\P$ and $\ksqrt$ tail decay}

\newcommand{\mmdtablecaption}{
\caption{
    \tbf{\mmdtablename.} 
    For $n$ input points and $\sqrt{n}$ thinned points, we report the $\mmd_{\kernel}(\inputcoreset,\ktcoreset)$ bound of \cref{theorem:main_result_all_in_one} up to constants depending on $d$, $\delta$, $\delta'$, $\sinfnorm{\ksqrt}$, and ${\klip[\ksqrt]}/{\sinfnorm{\ksqrt}}$.
    Here, $\rkmax[\ksqrt]\defeq \max(\rk[\ksqrt], \rktaugen[\ksqrt, \sqrt{n}])$ and the radii $(\rminpn[\inputcoreset], \rk[\ksqrt], \rktaugen[\ksqrt, \sqrt{n}])$ are defined in \cref{eq:rmin_P,,eq:rmin_k}.
    See \cref{sec:derivation_of_mmd_rates} for our derivation.
    } 
}

\cref{theorem:main_result_all_in_one} also allows us to derive more precise, explicit error bounds for specific kernels. 
For example, for the popular Gaussian, \Matern, and B-spline kernels, \cref{table:sqrtk_details} provides explicit bounds on each kernel-dependent quantity in \cref{theorem:main_result_all_in_one}: $\sinfnorm{\ksqrt}$, the kernel radii $(\rk[\ksqrt], \rktaugen[\ksqrt, \sqrt{n}])$, and the inflation factor $\err_{\ksqrt}$. 

\newcommand{\sqrtdetailstablename}[1]{Explicit bounds on {#1} quantities for common kernels}
\newcommand{\sqrtdetailstablecaption}{
{\noindent\caption{
    \tbf{\sqrtdetailstablename{\cref{theorem:main_result_all_in_one}}.}
    Here, $A_{\matone,\mattwo, d},$ and $\wtil{S}_{\splineparam, d}$ are as in \cref{table:kernel_sqrt_pair}, $ a\defeq\frac12(\matone\!-\!d) (>1),B\!\defeq\! a\log(1\!+a), E \defeq d\log(\frac{\sqrt{2e\pi}}{\mattwo}) \!+\!  \log(\frac{(\matone-2)^{\matone-\frac{3}{2}}}{
    (2(a-1))^{2a-1}d^{\frac{d}{2}+1}})$, $c_1 = \frac{2}{\sqrt 3}$, and $c_\splineparam<1$ for $\splineparam>1$ (see~\cref{eq:spline_krt_inf}).
    See \cref{sec:proof_of_table_sqrtk_details} for our derivation.\label{table:sqrtk_details}}}}
\begin{table}[t!]
    \centering
  \resizebox{\textwidth}{!}{
\small
  {
    \renewcommand{\arraystretch}{1}
    \begin{tabular}{ccccc}
        \toprule
        \Centerstack{\bf Square-root  \\  \bf kernel $\ksqrt$} 
        
        & \Centerstack{$\sinfnorm{\ksqrt}$ 
        } 
        
        & \Centerstack{ $(\rk[\ksqrt],\rktau[\ksqrt])$ \\
        $\precsim$
        } 
        
        & \Centerstack{
        $\err_{\ksqrt}(n, \half \log_2 n, d, \frac{\delta}{n},  \delta', R)$ \\
        $\precsim$
        } 
        
        \\[2mm]
        \midrule 
        
         \Centerstack{
      		$\parenth{\frac{2}{\pi\gaussparam^2}}^
      		{\frac{d}{4}}\textbf{Gaussian}\parenth{\frac{\gaussparam}
      		{\sqrt 2}}$
      		}
      	& $\parenth{\frac{2}{\pi\gaussparam^2}}^{\frac{d}{4}}$
      	&$(\gaussparam\sqrt{\log n}, \gaussparam\sqrt{d+\log n})$
    	& \Centerstack{
             $\sqrt{\log(\frac{n}{\delta}) [\log(\frac{1}{\delta'})\! +\! d \log \! \big(
            \!\sqrt{\log n}\! +\! \frac{R}{\sigma} \big)] \!  }$
      		}	
        
          \\[6mm]

        \Centerstack{$A_{\matone,\mattwo,d}$\textbf{Mat\'ern}$(\frac{\matone}{2}, \mattwo)$} 
        & $ \matone (\frac{\mattwo^2}{2\pi(a-1)})^{\frac{d}{4}}$

        &\Centerstack{$\big(\mattwo\inv({\log n\!+\!a\log(1\!+\!a)}),$
        \\
        $\mattwo\inv(a\!+\!\log n\!+\!E))$
        }
          & \Centerstack{
            $\sqrt{\log(\frac{n}{\delta}) [\log(\frac{1}{\delta'}) \! + \! d  \log \! \big(\!\log n\! + \! B\! + \! \mattwo R\big)]\!   }$
      		}
    
          \\[6mm]

          \Centerstack{
         $\wtil{S}_{\splineparam,d}\textbf{B-spline}(\splineparam)$}
         & $c_\splineparam^d$

       & $(\frac12{\sqrt{d}(\splineparam\!+\!1)}, \frac12{\sqrt{d}(\splineparam\!+\!1)})$
       
         & \Centerstack{
      		$\sqrt{ \log(\frac{n}{\delta}) [ \log(\frac{1}{\delta'})
            \! +\!  d \log(d\splineparam\! +\! \sqrt{d}R) ] \! }$
      		}

        \\[1ex] \bottomrule \hline
    \end{tabular}
    }
    }
    \opt{arxiv}{\sqrtdetailstablecaption}
    \opt{jmlr}{\sqrtdetailstablecaption}
\end{table}

\newcommand{\cdim}{e_d}
\newcommand{\linfmmdcoresetresultname}{%
MMD error from square-root $\Linf$ error}
\newcommand{\linfcoresetresultname}{MMD guarantee for square-root $\Linf$ approximations}%
\subsection{%
MMD coresets from square-root $\Linf$ coresets} %
\label{sec:a_general_recipe_for_bounding_kernel_mmd}
\cref{theorem:main_result_all_in_one} builds on a second key result, proved in \cref{sub:proof_of_theorem:coreset_to_mmd}, that bounds MMD error for $\kernel$ in terms of $\Linf$ error for the \emph{square-root} kernel $\ksqrt$. 
\begin{theorem}[\linfcoresetresultname]
\label{theorem:coreset_to_mmd}
Suppose $\kernel$ satisfying \cref{asmp:bounded_measurable} has a square-root kernel $\ksqrt$ satisfying \cref{assum:tailkernel}. 
Then for any
	 distributions $\pnew$ and $\qnew$ on $\reals^d$ and scalars $r, a,b \geq 0$ with
	$a+b=1$, %
	\begin{align}
	\label{eq:coreset_to_mmd_bound}
		\mmd_{\kernel}(\pnew,\qnew)
		\!\leq \! \cdim  r^{\frac{d}{2}}\!\cdot\! \infnorm{\pnew\ksqrt\!-\!\qnew\ksqrt}\!
		+ 2\tail[\ksqrt](a r)\!
		+ 2\infnorm{\kernel}^{\frac12}\! \cdot\! \max\braces{\tail[\pnew](br),\! \tail
		[\qnew]
		(br)},
	\end{align}
	where $\cdim \defeq (\pi^{d/2}/\Gamma(d/2+1))^{\frac12}$ decreases super-exponentially in $d$ and $\tail[\pnew](r) \defeq \pnew(\ball^c(0,r))$.
\end{theorem}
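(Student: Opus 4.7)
The plan is to reduce the RKHS quantity $\mmd_\kernel$ to an $L^2$ norm via the square-root identity and then split the integration domain into a compact ball (where an $L^\infty$-times-volume bound is sharp) and its complement (where tail decay of $\ksqrt$ and of $\P, \Q$ provides control). Concretely, expanding $\kernel$ via \cref{eq:kernelsqrt} inside the double integral $\iint \kernel\,d(\P-\Q)\,d(\P-\Q)$ and applying Fubini yields the isometry
\begin{talign*}
\mmd_\kernel^2(\P,\Q) \;=\; \|(\P-\Q)\ksqrt\|_{L^2(\Rd)}^2.
\end{talign*}
Minkowski's inequality over the split $\Rd = \ball(0,R) \cup \ball^c(0,R)$ then gives $\mmd_\kernel(\P,\Q) \leq \|(\P-\Q)\ksqrt\|_{L^2(\ball(0,R))} + \|(\P-\Q)\ksqrt\|_{L^2(\ball^c(0,R))}$, and the interior term is immediate: the integrand is pointwise at most $\infnorm{\P\ksqrt-\Q\ksqrt}^2$, so multiplying by the Lebesgue volume $v_d^2 R^d$ of $\ball(0,R)$ and taking square roots produces the first term $v_d R^{d/2}\infnorm{\P\ksqrt-\Q\ksqrt}$ of the claim.

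The exterior term is the crux. I first apply Minkowski once more to separate $\P$ and $\Q$ and then, for each (say $\P$), decompose the \emph{measure} as $\P = \P_1 + \P_2$ with $\P_1 \defeq \P|_{\ball(0,bR)}$ and $\P_2 \defeq \P|_{\ball^c(0,bR)}$, so that $\P_2(\Rd) = \tail[\P](bR)$. The $\P_1$ piece uses the geometric fact that $\|x-z\| \geq R - bR = aR$ whenever $x \in \ball(0,bR)$ and $z \in \ball^c(0,R)$; pushing the square inside, swapping the order of integration, and invoking \cref{eq:tail_k_p} yields
\begin{talign*}
\|\P_1\ksqrt\|_{L^2(\ball^c(0,R))}^2
\;\leq\; \int_{\ball(0,bR)} \int_{\|z-x\|\geq aR} \ksqrt(x,z)^2 \,dz\,d\P(x)
\;\leq\; \tail[\ksqrt](aR)^2.
\end{talign*}
For the $\P_2$ piece I enlarge the $L^2$ integration back to $\Rd$ and reuse the square-root identity (now with $\P_2$ in place of $\P-\Q$) to rewrite the result as an RKHS norm, giving
\begin{talign*}
\|\P_2\ksqrt\|_{L^2(\ball^c(0,R))}
\;\leq\; \|\P_2\ksqrt\|_{L^2(\Rd)}
\;=\; \bigl\|\textstyle\int \kernel(x,\cdot)\,d\P_2(x)\bigr\|_{\rkhs}
\;\leq\; \int \sqrt{\kernel(x,x)}\,d\P_2(x)
\;\leq\; \sqrt{\infnorm{\kernel}}\,\tail[\P](bR),
\end{talign*}
where the penultimate inequality is the RKHS triangle inequality for Bochner integrals. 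Summing the $\P$ and $\Q$ contributions and using $\tail[\P]+\tail[\Q] \leq 2\max(\tail[\P],\tail[\Q])$ delivers the final two terms of the claim.

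The main obstacle is preserving the \emph{linear} scaling of the tail probabilities: a direct Cauchy--Schwarz on $(\P-\Q)\ksqrt$ over $\ball^c(0,R)$ would only produce a $\sqrt{\tail[\P](bR)}$ factor, which (since the tails are $\leq 1$) is strictly weaker than the advertised $\tail[\P](bR)$. The linear rate is recovered precisely by splitting the measure into $\P_1 + \P_2$ and invoking the RKHS triangle inequality on the tail piece \emph{before} any further estimates. Once this ordering is in place, the remainder is routine Minkowski and Fubini manipulation.
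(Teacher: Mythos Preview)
Your proof is correct and rests on the same two ideas as the paper's: the $L^2$ isometry $\mmd_\kernel(\P,\Q)=\|(\P-\Q)\ksqrt\|_{L^2(\Rd)}$ (the paper states this in dual form as \cref{lemma:saitoh_equivalence}) and the near/far geometric split. The execution differs in packaging. The paper works in the dual form and bounds $\int g\,(\P\ksqrt-\Q\ksqrt)$ for arbitrary $g$ with $\|g\|_{L^2}\le 1$: it splits the \emph{$y$-domain} at radius $R$, and on the exterior uses the pointwise dichotomy ``either $\|x-y\|\ge aR$ or $\|x\|\ge bR$'' to produce two terms, each handled by Cauchy--Schwarz against $g$. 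You instead work in the primal $L^2$ norm, apply Minkowski, and on the exterior split the \emph{measure} $\P=\P_1+\P_2$; your $\P_2$ bound then routes back through the RKHS via the square-root identity and the Bochner triangle inequality. Your route is slightly more direct (no test function to carry around) and your careful ordering---Minkowski on the measure before any Cauchy--Schwarz---is exactly what preserves the linear dependence on $\tail[\P](bR)$; the paper achieves the same linear rate via the total-variation bound $\int|d\P-d\Q|\le 2$ in its $T_2$ step. The paper's detour through a general test function $g$ (\cref{lemma:mmd_coreset_decomposition}) buys a lemma reusable for any kernel, not just $\ksqrt$, but for the theorem at hand the two arguments are equivalent.
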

Importantly, \cref{theorem:coreset_to_mmd} implies that \emph{any} $\Linf$ coreset for the square-root kernel $\ksqrt$, even one not produced by kernel thinning, is also an MMD coreset for the target kernel $\kernel$ with MMD error depending on the tail decay of $(\ksqrt, \pnew, \qnew)$. 
\cref{cor:mmdlinf} summarizes the implications of \cref{theorem:coreset_to_mmd} for common classes of tail decay. See \cref{proof_of_cor:mmdlinf} for the proof with explicit constants. 
\begin{corollary}[\linfmmdcoresetresultname]
	\label{cor:mmdlinf}
    Under the setting and assumptions of \cref{theorem:coreset_to_mmd}, 
    define the $\Linf$ error $\vareps \defeq \infnorm{\pnew\ksqrt-\qnew\ksqrt}$ and the tail decay function $\wtil{\tail[]}(r)\!\defeq\! \tail[\ksqrt](r) \!+\! \infnorm{\kernel}^{1/2} \max\sbraces{\tail[\mu](r), \tail[\nu](r)}$ for $r\!\geq\! 0$. Then the following implications hold for any nonnegative $c$ and $\rho$.
	\begin{table}[H]
	\centering
	\begin{tabular}{ccccc}
	\toprule
		\textsc{Tail Decay} & 
            \bndcase &  \subgauss & \subexp & \heavytail\\
		$\wtil{\tail[]}(r) \precsim $ &
		$\indicator(r\leq c)$
		& $e^{-cr^2}$
		& $e^{-cr}$
		& $r^{-\rho}$
		\\
		\midrule
		 $\Rightarrow \mmd_{\kernel}(\pnew,\qnew)\precsim_{d}$ &
		$\vareps$
		& $ \vareps (\log\frac{1}{\vareps})^{\frac d4}$ 
		& $ \vareps (\log\frac{1}{\vareps})^{\frac d2}$ 
		& $\vareps^{\frac{2\rho}{d+2\rho} }$ 
		\\ 
		\bottomrule 
	\end{tabular}	
	\end{table}

	\end{corollary}
    \begin{remark}[Tail decay from finite moments]
    \label{tail_moments}
	By Markov's inequality \citep[Thm.~1.6.4]{durrett2019probability}, $\wtil{\tail[]}$ has (i) \bndcase decay when $\ksqrt$ is \bndcase and $\pnew$ and $\qnew$ have compact support; (ii) \subgauss decay when $\ksqrt$ is \subgauss and $\E_{X\sim\pnew}[e^{c\twonorm{X}^2}], \E_{X\sim\qnew}[e^{c\twonorm{X}^2}]<\infty$; (iii) \subexp decay when $\ksqrt$ is \subexp and $\E_{X\sim\pnew}[e^{c\twonorm{X}}], \E_{X\sim\qnew}[e^{c\twonorm{X}}]<\infty$; and (iv) \heavytail decay when $\ksqrt$ is \heavytail and $\E_{X\sim\pnew}[\twonorm{X}^\rho], \E_{X\sim\qnew}[\twonorm{X}^\rho]<\infty$.
    \end{remark}
    \cref{cor:mmdlinf} highlights that MMD quality for $(\kernel, \pnew)$ is of the same order as $\Linf$ quality for $(\ksqrt,\pnew)$ when $\ksqrt, \pnew,$ and the approximation $\qnew$ have compact support. MMD quality then degrades naturally as the tail behavior worsens.
	In \cref{sec:from_self_balancing_walk_to_kernel_thinning,sub:kernel_halving}, we show that, with high probability, \ktsplit provides a high-quality $\Linf$ coreset for $(\ksqrt, \inputcoreset)$ and hence, by \cref{cor:mmdlinf}, also provides a high-quality MMD coreset for $(\kernel, \inputcoreset)$. 

\newcommand{\vseq}[1][n]{\mc F_{#1}}
\newcommand{\vseqnum}{\invec[1], \invec[2], \ldots}
\newcommand{\vseqnumn}[1][n]{\invec[1], \invec[2], \ldots, \invec[#1]}
\newcommand{\pvseqnum}{(\invec[i])_{i\geq1}}
\newcommand{\pvseqnumn}[1][n]{(\invec[i])_{i=1}^{#1}}
\section{Self-balancing Hilbert Walk}
\label{sec:from_self_balancing_walk_to_kernel_thinning}
To exploit the $\Linf$-MMD connection revealed in \cref{theorem:coreset_to_mmd}, we now turn our attention to constructing high-quality thinned $\Linf$ coresets.
Our strategy relies on a new Hilbert space generalization of the self-balancing walk of \citet{alweiss2021discrepancy}.  
We dedicate this section to defining and analyzing this \emph{self-balancing Hilbert walk}, and we detail its connection to kernel thinning in \cref{sub:kernel_halving}.
\setcounter{algocf}{2}
\begin{algorithm2e}[ht!]
\small{
  \KwIn{sequence of functions $(\invec[i])_{i=1}^n$ in a  Hilbert space $\rkhs$, threshold sequence $(\cnew[i])_{i=1}^n$}
  {$\outvec[0]\gets \boldzero \in \rkhs$}\\
  \For{$i=1, 2, \ldots, n$}
    {%
   $\wvprod[i] \gets \dotrkhs{\outvec[i-1], \invec[i]}{\rkhs}$ \quad /\!/ Compute Hilbert space inner product \\
    \textbf{if} $\abss{\wvprod[i]} > \cnew[i]\tbf{:}$ \\
    \qquad
    $\outvec[i] \gets \outvec[i-1]-\invec[i]  \cdot \wvprod[i]/\cnew[i]$
	\\
    \textbf{else:} \\
     \qquad 
     $\eta_i \gets 1 \qtext{with probability}
     	 \half \parenth{1-{\wvprod[i]}/{\cnew[i]}}
     	 \qtext{and} \eta_i \gets -1 \qtext{otherwise}$ \\
     \qquad 
     $\displaystyle \outvec[i] \gets
     \outvec[i-1] + \eta_i \invec[i]$
  }{}
\KwRet{$\outvec[n]$, \textup{combination of signed input functions}}{}
  }
  \caption{Self-balancing Hilbert Walk}
  \label{algo:self_balancing_walk}
\end{algorithm2e}

\citet[Thm.~1.2]{alweiss2021discrepancy} introduced
a randomized algorithm called the \emph{self-balancing walk} that takes as
input a streaming sequence of Euclidean vectors $\axi\in
\Rd$ with $\twonorm{\axi}\leq 1$ and outputs a online sequence of random  assignments $\eta_i \in \braces{-1, 1}$ satisfying
\begin{talign}
\label{eq:alweiss_vector_balancing}
	\infnorm{\sumn \eta_i\axi} \precsim \sqrt{\log(d/\delta)\log(n/\delta)}
	\qtext{with probability at least $1-\delta$.}
\end{talign}
Since our ultimate aim is to combine kernel functions, we define a suitable Hilbert space generalization in \cref{algo:self_balancing_walk}.

Given a streaming sequence of functions $\invec[i]$ in an arbitrary Hilbert space $\rkhs$ with a norm $\hnorm{\cdot}$, this \emph{self-balancing Hilbert walk} (SBHW) outputs a streaming sequence of signed function combinations $\outvec[i]$ satisfying the following desirable properties established in \cref{sec:proof_of_sbhw_properties}.

\newcommand{\indices}{\mc{I}}
\newcommand{\sbhwpropname}{Self-balancing Hilbert walk properties}
\begin{theorem}[\sbhwpropname]
\label{sbhw_properties}
    Consider the self-balancing Hilbert walk (\cref{algo:self_balancing_walk})
	with each 
     $\invec[i] \in \rkhs$ and $\cnew[i]>0$
    and define the 
    sub-Gaussian constants
	\begin{talign} \label{eq:subgauss_const_def}
    \sgparam[0]^2
        \defeq 0
    \qtext{and}
    \sgparam[i]^2
        \defeq \sgparam[i-1]^2 + \hnorm{\invec[i]}^2\Big(1 + \frac{\sgparam[i-1]^2}{\cnew[i]^2}(\hnorm{\invec[i]}^2 - 2\cnew[i])\Big)_+
    \quad \forall i \geq 1.
    \end{talign}
    The following properties hold true.
	\begin{enumerate}[label=(\roman*),leftmargin=*]
	\itemsep0em
	\item\label{item:fun_subgauss} \tbf{Functional sub-Gaussianity:}
    For each $i \in [n]$,
	$\outvec[i]$ is $\sgparam[i]$ \emph{sub-Gaussian}:
		\begin{talign}
		\label{eq:subgaussian_bound_on_w}
			\E[\exp(\doth{\outvec[i], \genvec})]
			\leq
		\exp\parenth{\frac{\sgparam^2\norm{\genvec}^2_{\rkhs}}{2}}
			\qtext{for all}\genvec \in \rkhs.
		\end{talign}
	\item\label{item:signed_sum} \tbf{Signed sum representation:} 
	If 
	$\cnew[i]
	    \geq \sgparam[i-1]\hnorm{\invec[i]}\sqrt{2\log(2/\delta_i)}$ 
	for $\delta_i \in (0,1]$, then, with probability at least $1-\sum_{i=1}^n \delta_i$, 
	\begin{talign}\label{eq:signed_sum}
	    \abss{\wvprod[i]} \leq \cnew[i], \forall i \in [n],
	    \qtext{and}
	    \outvec[n] 
	        = \sum_{i=1}^n \eta_i \invec[i].
	\end{talign}
	\item\label{item:exact_two_thin} \tbf{Exact halving via symmetrization:} 
	If 
	$\cnew[i]
	    \geq \sgparam[i-1]\hnorm{\invec[i]}\sqrt{2\log(2/\delta_i)}$ 
	for $\delta_i \in (0,1]$ and each $\invec[i] = g_{2i-1} - g_{2i}$ for $g_1, \dots, g_{2n} \in \rkhs$, then, with prob.\ at least $1-\sum_{i=1}^n \delta_i$,
	\begin{talign}
	\abss{\wvprod[i]} \leq \cnew[i], \forall i \in [n], \ \text{ and }\ 
	\frac{1}{2n}\outvec[n] 
	    = \frac{1}{2n}\sum_{i = 1}^{2n} g_i - \frac{1}{n}\sum_{i \in \indices} g_i
	    \ \text{ for } \ %
	    \indices 
	    = \{2i+\frac{\eta_i-1}{2} : i\in [n]\}.
	\end{talign}
	\item\label{item:pointwise_subgauss} \tbf{Pointwise sub-Gaussianity in RKHS:} If $\rkhs$ is the RKHS of a kernel $\gkernel : \X \times \X \to \reals$, then, for each $i \in [n]$ and $\x\in\X$, $\outvec[i](x)$ is $\sgparam\sqrt{\gkernel(\x,\x)}$ sub-Gaussian:
		\begin{talign}
			\label{eq:subgaussian_bound_on_function_value}
			\E[\exp(\outvec[i](\x))] %
			\leq
			\exp\parenth{\frac{\sigma_{i}^2\gkernel(\x,\x)}{2}}.
		\end{talign}
	\item\label{item:subgauss_bound}
	\tbf{Sub-Gaussian constant bound:} 
	Fix any $q \in [0,1)$, and suppose $\half\hnorm{\invec[i]}^2 \leq \cnew$ for all $i\in[n]$.
	If $\frac{\hnorm{\invec[i]}^2}{1+q} 
    \leq \cnew \leq \frac{\hnorm{\invec[i]}^2}{1-q}$  
	whenever both
	$\sgparam[i-1]^2 < \frac{\cnew^2}{2\cnew-\hnorm{\invec[i]}^2}$
	and
	$\hnorm{\invec[i]} > 0$, then
    \begin{talign}
    \label{eq:sbhw_subgauss_bound}
    \sgparam[i]^2 
        \leq \frac{\max_{j\in[i]} \hnorm{\invec[j]}^2}{1-q^2}
    \qtext{for all}
    i \in [n].
    \end{talign} 
    \item\label{item:adaptive_thresh}
	\tbf{Adaptive thresholding:}
	If $\cnew = \max(c_i \sgparam[i-1] \hnorm{\invec[i]}, \hnorm{\invec[i]}^2)$ for $c_i \geq 0$,
	then 
	\begin{talign}
	\sgparam[n]^2 
        \leq \frac{\max_{i\in[n]} \hnorm{\invec[i]}^2}{4}(c^\star+1/c^\star)^2
    \qtext{for}
    c^\star \defeq \max_{i \in [n]} c_i.
	\end{talign}
	\end{enumerate}
\end{theorem}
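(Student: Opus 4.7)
The plan is to prove the six parts in order. Part (i) is established by induction on $i$ and forms the foundation for parts (ii)-(iv); parts (v)-(vi) then amount to a careful analysis of the scalar recurrence defining $\sgparam[i]$ in \eqref{eq:subgauss_const_def}. For the inductive step in (i), the base case $\outvec[0] = \boldzero$ is trivial, and for $i \geq 1$ I would condition first on $\outvec[i-1]$ in addition to $\vseq[i]$ and compute the conditional MGF of $\doth{\outvec[i], \genvec}$ in both update branches, setting $v \defeq \doth{\invec[i], \genvec}$. The deterministic branch ($|\wvprod[i]| > \cnew[i]$) yields the deterministic value $\exp(\doth{\outvec[i-1], \genvec - (v/\cnew[i])\invec[i]})$, while the randomized branch applies Hoeffding's lemma to the bounded Bernoulli $\eta_i v \in [-|v|,|v|]$ (with conditional mean $-\wvprod[i] v/\cnew[i]$) to pick up an extra multiplicative factor $\exp(v^2/2)$. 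Both cases thus admit the common upper bound $\exp(\doth{\outvec[i-1], \genvec'} + v^2/2)$ with $\genvec' \defeq \genvec - (v/\cnew[i])\invec[i]$; taking expectation over $\outvec[i-1]$ via the inductive hypothesis (valid because the obliviousness of $\vseq[n]$ preserves conditional sub-Gaussianity after further conditioning on $(\invec[i], \cnew[i])$) and expanding $\hnorm{\genvec'}^2 = \hnorm{\genvec}^2 - 2v^2/\cnew[i] + v^2\hnorm{\invec[i]}^2/\cnew[i]^2$ reaches the bound $\exp(\sgparam[i-1]^2\hnorm{\genvec}^2/2 + (v^2/2)[1 + (\sgparam[i-1]^2/\cnew[i]^2)(\hnorm{\invec[i]}^2 - 2\cnew[i])])$. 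Applying Cauchy-Schwarz ($v^2 \leq \hnorm{\invec[i]}^2\hnorm{\genvec}^2$) when the bracket is positive, and dropping the $v^2/2$ term otherwise, exactly reproduces $\exp(\sgparam[i]^2 \hnorm{\genvec}^2/2)$, closing the induction.

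Parts (ii)-(iv) are short consequences. For (ii), part (i) gives that $\wvprod[i] = \doth{\outvec[i-1], \invec[i]}$ is $\sgparam[i-1]\hnorm{\invec[i]}$ sub-Gaussian conditional on $\vseq[n]$, so the assumed lower bound on $\cnew[i]$ yields $\Prob(|\wvprod[i]| > \cnew[i] \mid \vseq[n]) \leq \delta_i$, and a union bound forces the algorithm into the randomized branch at every step, whence $\outvec[n] = \sum_i \eta_i \invec[i]$. For (iii), I would substitute $\invec[i] = g_{2i-1} - g_{2i}$ into this signed sum, write $\outvec[n] = \sum_{j=1}^{2n} s_j g_j$ with $s_{2i-1} = \eta_i$ and $s_{2i} = -\eta_i$, and use the identity $\outvec[n] = \sum_{j=1}^{2n} g_j - 2\sum_{j : s_j = -1} g_j$ to identify $\indices$ as the index set with negative sign; a case analysis on $\eta_i$ then recovers the stated indexing formula. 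For (iv), applying part (i) with $\genvec = \kernel(\x,\cdot)$ and invoking the reproducing property $\outvec[i](\x) = \doth{\outvec[i], \kernel(\x,\cdot)}$ together with $\hnorm{\kernel(\x,\cdot)}^2 = \kernel(\x,\x)$ immediately yields the pointwise bound.

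For parts (v) and (vi), the key preliminary step is to recast the recurrence: substituting $\cnew[i] = t_i \hnorm{\invec[i]}^2$ into \eqref{eq:subgauss_const_def} and simplifying yields, in the nontrivial-increment case, the compact form $\sgparam[i]^2 = \sgparam[i-1]^2 (1 - 1/t_i)^2 + \hnorm{\invec[i]}^2$. For (v), the hypothesis places $t_i \in [1/(1+q), 1/(1-q)]$ whenever the increment is nontrivial, so $(1-1/t_i)^2 \leq q^2$; an induction on $i$ then establishes $\sgparam[i]^2 \leq \max_{j \leq i}\hnorm{\invec[j]}^2/(1-q^2)$ via a two-case split on whether this upper bound tightens at step $i$. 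For (vi), the adaptive threshold gives $\sgparam[i]^2 = (\sgparam[i-1] - \hnorm{\invec[i]}/c_i)^2 + \hnorm{\invec[i]}^2$ whenever $\cnew[i] = c_i \sgparam[i-1]\hnorm{\invec[i]}$, and a monotonicity argument in $c_i$ reduces to the worst case $c_i = c^\star$; I would then identify $s^\star \defeq (\max_j\hnorm{\invec[j]})(c^\star + 1/c^\star)/2$ as the relevant fixed point and verify by a short optimization that the map $\rho \mapsto (s^\star - \rho/c^\star)^2 + \rho^2$ attains its maximum over $\rho \in [0, \max_j\hnorm{\invec[j]}]$ at both endpoints with common value $s^{\star 2}$, preserving the invariant $\sgparam[i] \leq s^\star$.

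The principal obstacle is the measurability and conditioning bookkeeping in part (i): one must unify the deterministic and randomized branches of the update rule under a single MGF bound, and correctly invoke the inductive hypothesis after changing the test direction from the fixed $\genvec$ to the (random but $\vseq[i]$-measurable) $\genvec' = \genvec - (v/\cnew[i])\invec[i]$. This last step relies crucially on the obliviousness of $\vseq[n]$, which ensures that conditioning on $(\invec[i], \cnew[i])$ does not perturb the inductive sub-Gaussian control of $\outvec[i-1]$. Once part (i) is cleanly established, the remaining parts reduce to either direct corollaries or elementary one-dimensional recurrence estimates.
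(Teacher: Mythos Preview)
Your proposal is correct. Parts (i)--(v) follow essentially the same route as the paper: the paper packages your two-branch analysis for (i) into a single lemma by introducing an auxiliary variable $\varepsilon_i$ (equal to $0$ in the deterministic branch and $\eta_i+\alpha_i/\mathfrak a_i$ in the randomized branch) that is mean-zero and supported on an interval of length at most $2$, then applies Hoeffding's lemma once to $\varepsilon_i\langle f_i,u\rangle_{\mathcal H}$; your direct case split is the same argument without the abstraction. Parts (ii)--(v) then match the paper essentially line for line.

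The one genuine difference is part (vi). The paper does not run your fixed-point invariant; instead it reduces (vi) to (v) by setting $c^\star_1=\max(c^\star,1)$ and $q=\frac{(c^\star_1)^2-1}{(c^\star_1)^2+1}$, so that $\frac{1}{1-q^2}=\frac{(c^\star_1+1/c^\star_1)^2}{4}\le\frac{(c^\star+1/c^\star)^2}{4}$, and then verifies the two-sided hypothesis of (v) for the adaptive $\mathfrak a_i=\max(c_i\sigma_{i-1}\|f_i\|_{\mathcal H},\|f_i\|_{\mathcal H}^2)$: the lower bound $\mathfrak a_i\ge\|f_i\|_{\mathcal H}^2/(1+q)$ is immediate, and the upper bound $\mathfrak a_i\le\|f_i\|_{\mathcal H}^2/(1-q)$ is obtained by substituting $\mathfrak a_i=c_i\sigma_{i-1}\|f_i\|_{\mathcal H}$ into the nontriviality condition $\sigma_{i-1}^2<\mathfrak a_i^2/(2\mathfrak a_i-\|f_i\|_{\mathcal H}^2)$ and simplifying to $\mathfrak a_i<\tfrac12(1+c_i^2)\|f_i\|_{\mathcal H}^2$. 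Your direct invariant argument (monotonicity in $c_i$, then replacing $\sigma_{i-1}$ by $s^\star$, then convexity in $\rho$) is equally valid and more self-contained, but the paper's reduction is shorter since it reuses the work already done in (v).
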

\begin{remark}
The kernel $\gkernel$ in Property~\ref{item:pointwise_subgauss} can be arbitrary and need not be bounded.
\end{remark}

Property~\ref{item:fun_subgauss}  
ensures that the functions $\outvec[i]$ produced by \cref{algo:self_balancing_walk} are mean zero and unlikely to be large in any particular direction $\genvec$.
Property~\ref{item:signed_sum} 
builds on this functional sub-Gaussianity to ensure that $\outvec[n]$ is precisely a sum of the signed input functions $\pm \invec$ with high probability. The two properties together imply that, with high probability and an appropriate setting of $\cnew$, \cref{algo:self_balancing_walk} partitions the input functions $\invec$ into two groups such that the function sums are nearly balanced across the two groups. 
Property~\ref{item:exact_two_thin} 
uses the signed sum representation to construct a two-thinned coreset for any input function sequence $(g_i)_{i=1}^{2n}$.  This is achieved by offering the consecutive function differences $g_{2i-1} - g_{2i}$ as the inputs $\invec[i]$ to \cref{algo:self_balancing_walk}.
Property~\ref{item:pointwise_subgauss} 
highlights that functional sub-Gaussianity also implies  sub-Gaussianity of the function values $\outvec[i](x)$ whenever the Hilbert space $\rkhs$ is an RKHS. 
Finally, Properties~\ref{item:subgauss_bound} and \ref{item:adaptive_thresh} 
provide explicit bounds on the sub-Gaussian constants  $\sgparam[i]$ when adaptive settings of the thresholds $\cnew[i]$ are employed. 
In \cref{sub:kernel_halving}, we will connect the SBHW to kernel halving and use Properties~\cref{item:exact_two_thin,item:pointwise_subgauss,item:adaptive_thresh} together to show that kernel halving and hence also \ktsplit coresets have provably small $\Linf$ kernel error. 
Specifically, we will boost the pointwise sub-Gaussianity (Property~\ref{item:pointwise_subgauss}) of the output function $\outvec[n]$ into a high probability bound for $\infnorm{\outvec[n]} = \sup_x |\outvec[n](x)| = \sup_x |\inner{\outvec[n]}{\gkernel(x,\cdot)}|$ by constructing a finite cover for $(\outvec[n](x))_{x\in\Rd}$ based on the decay and smoothness of the kernel $\gkernel$.

\paragraph{Comparison with \iid signs} 
A simple alternative to \cref{algo:self_balancing_walk} is to assign signs uniformly at random to each vector, that is, to output $\outvec[n]'= \sum_{i=1}^n \eta_i' \invec[i]$ with independent Rademacher $\eta_i' \in \sbraces{\pm 1}$.
Since the minimal squared sub-Gaussian constant of a sum of independent weighted Rademachers is equal to its variance \citep[Lem.~5 \& Ex.~1]{buldygin1980subgaussian}, 
the minimal squared sub-Gaussian constant of $\outvec[n]'$ satisfies 
\begin{talign}
\sigma_{\mrm{IID}}^2 
    = \sup_{u\in\rkhs}\Var(\doth{\outvec[n]', \genvec})/\hnorm{\genvec}^2
    = \sup_{u\in\rkhs}\sum_{i=1}^n \doth{\invec[i], \genvec}^2/\hnorm{\genvec}^2.
\end{talign}
In the best case, all $\invec[i]$ are orthogonal and bounded and $\sigma_{\mrm{IID}}^2 = \max_{i\in[n]}\hnorm{\invec}^2$ does not grow with $n$; the reader can check that \cref{algo:self_balancing_walk} also reduces to \iid signing in this case.
However, in the worst case, all $\invec[i]$ are equal, and $\sigma_{\mrm{IID}}^2 = n \max_{i\in[n]}\hnorm{\invec}^2$.
In contrast, 
if we choose $\cnew[i]$ as in Property \cref{item:adaptive_thresh}  with $c_i = \sqrt{2\log(2n/\delta)}$, then the SBHW output 
$\outvec[n]= \sum_{i=1}^n \eta_i \invec[i]$  with probability $1-\delta$ by Property \cref{item:signed_sum} and has squared sub-Gaussian constant $\sigma_{\mrm{SBHW}}^2 = \order(\log(2n/\delta)\max_{i\in[n]}\hnorm{\invec[i]}^2)$ in every case by Property \cref{item:adaptive_thresh}. 
This drop from $\Omega(n)$ to $\order(\log n)$ represents an exponential improvement in worst-case balance over employing \iid signs.

We can attribute this gain to the carefully chosen updates of \cref{algo:self_balancing_walk}. 
Notice that, on round $i$, the function $\outvec[i-1]$ is updated only in the $\invec$ direction, so it suffices to examine the evolution of $\dotrkhs{\outvec[i-1],\invec}{\rkhs}$.
We show in \cref{sec:fun_subgauss} that this evolution takes the form
\begin{align}\label{eq:targeted_shrinkage}
\dotrkhs{\outvec,\invec}{\rkhs} 
    = 
(1-\hnorm{\invec[i]}^2/\cnew)\dotrkhs{\outvec[i-1],\invec}{\rkhs} 
    +
\temprv \hnorm{\invec[i]}^2
\end{align}
where $\temprv \defeq \indic{\abss{\wvprod[i]} \leq \cnew[i]}(\eta_i + \wvprod[i]/\cnew)$ is mean-zero and $1$-sub-Gaussian given $\outvec[i-1]$.
In other words, whenever $\cnew \geq \hnorm{\invec[i]}^2$ (as recommended in Property~\ref{item:adaptive_thresh}), \cref{algo:self_balancing_walk} first shrinks the magnitude of $\outvec[i-1]$ in the $\invec$ direction before adding a sub-Gaussian variable in this direction. 
This targeted shrinkage is absent in the \iid signing update, 
\begin{align}
\dotrkhs{\outvec',\invec}{\rkhs} 
    = 
\dotrkhs{\outvec[i-1]',\invec}{\rkhs} 
    +
\eta_i' \hnorm{\invec[i]}^2,
\end{align}
which simply adds a sub-Gaussian variable in the $\invec$ direction, and allows the SBHW to maintain a substantially smaller sub-Gaussian constant.

\paragraph{General recipe for exact halving}
The symmetrization construction introduced in Property~\ref{item:exact_two_thin} can be used to convert any vector balancing algorithm (i.e., any algorithm which assigns $\pm 1$ signs to a sequence of vectors) into an exact halving algorithm (i.e., one which assigns $-1$ to exactly half of the points) simply by running the algorithm on paired vector differences.
We will use this property in the sequel to painlessly construct coresets of an exact target size.

\paragraph*{Comparison with the self-balancing walk of \citet{alweiss2021discrepancy}} In the Euclidean setting with $\rkhs = \Rd$, constant thresholds $\cnew = 30\log(n/\delta)$, and $\dotrkhs{\outvec[i-1],\invec}{\rkhs}$ the usual Euclidean dot product, \cref{algo:self_balancing_walk} recovers a slight variant of the Euclidean self-balancing walk of \citet[Proof of Thm.~1.2]{alweiss2021discrepancy}. 
The original algorithm differs only superficially by terminating with failure whenever $\abss{\wvprod[i]} > \cnew[i]$.
We allow the walk to continue with the update $\outvec[i] \gets \outvec[i-1]-\invec[i]  \cdot \wvprod[i]/\cnew[i]$, as it streamlines our sub-Gaussianity analysis and avoids the reliance on distributional symmetry present in Sec.~2.1 of \citet{alweiss2021discrepancy}.
We show in \cref{proof_of_euclidean_vector_balancing} that \cref{sbhw_properties} recovers the guarantee \cref{eq:alweiss_vector_balancing} of \citet[Thm.~1.2]{alweiss2021discrepancy} with improved constants and a less conservative setting of $\cnew$.

 \section{$\Linf$ Guarantees}
 \label{sub:kernel_halving}
 In this section, we derive near-optimal $\Linf$ coreset guarantees for kernel halving (\cref{algo:kernel_halving}) and \ktsplitlink by relating the two algorithms to the self-balancing Hilbert walk (\cref{algo:self_balancing_walk}).
\subsection{$\Linf$ guarantees for kernel halving}
\label{sub:kh}
To make the connection between KH and SBHW more apparent, we have translated each line of 
\cref{algo:kernel_halving} into the notation of \cref{algo:self_balancing_walk}.
In this notation, we see that \cref{algo:kernel_halving} forms signed combinations $\outvec$ of paired kernel differences $\invec[i]=\gkernel(\axi[2i-1], \cdot)-\gkernel(\axi[2i], \cdot)$; that 
the inner product $\wvprod[i] = \angles{\outvec[i-1], \invec[i]}_{\gkernel}$ %
has a simple explicit form in terms of kernel evaluations; and 
that, under the event  $\event=\sbraces{\abss{\wvprod[i]} \leq \cnew[i] \stext{for all} i=1, \ldots, \floor{n/2}}$, the function $\outvec[\floor{n/2}]$ of \cref{algo:kernel_halving} exactly matches the output of SBHW. 
Indeed, the function \proctwo serves to compute the sub-Gaussian constants $\sigma_i$ exactly as defined in \cref{sbhw_properties} and to adaptively select the thresholds $\cnew$ exactly as recommended in \cref{sbhw_properties}\cref{item:exact_two_thin,item:adaptive_thresh}: $\cnew = \max(\hnorm{\invec[i]}\sgparam[i-1]\sqrt{2\log(2/\delta_i)}, \hnorm{\invec[i]}^2)$.

This choice of $\cnew$ simultaneously ensures that the KH-SBHW equivalence event $\event$ occurs with high probability by \cref{sbhw_properties}\cref{item:exact_two_thin} and that the SBHW sub-Gaussian constant remains small by \cref{sbhw_properties}\cref{item:adaptive_thresh}.
Hence, we can 
invoke the pointwise SBHW sub-Gaussianity revealed in  \cref{sbhw_properties}\cref{item:pointwise_subgauss}
to control the KH $\Linf$ coreset error %
with high probability.

\newcommand{\kernelhalvingresultsname}{$\Linf$ guarantees for kernel halving}
\newcommand{\kernelhalvingoneroundname}{Kernel halving yields a  $2$-thinned $\Linf$ coreset}
\newcommand{\kernelhalvingmultiroundname}{Repeated kernel halving yields a $2^\m$-thinned $\Linf$ coreset}
\begin{theorem}[\kernelhalvingresultsname]
\label{kernel_halving_results}
Let $\khcoreset(\gkernel, \cset, \Delta)$ denote the output of kernel halving (\cref{algo:kernel_halving}) with kernel $\gkernel$ satisfying \cref{assum:lipkernel,assum:tailkernel}, input point sequence $\cset$, and probability sequence $\Delta$. 
Let $\P_n \defeq \textfrac{1}{n}\textsum_{i=1}^n\dirac_{x_i}$, and
recall the definitions of $\err_{\gkernel}$ \cref{eq:err_simple_defn}
 and $\rminnew[\inputcoreset, \gkernel, n]$ \cref{eq:rmin_P}.
The following statements hold for any $m\in\naturals$ with $m \le \log_2 n$ and $\delta'\in(0,1)$.
\begin{enumerate}[label=(\alph*),leftmargin=*]
\item\label{item:kernel_halving_one_round} \emph{\textbf{\kernelhalvingoneroundname:}} The output $\coreset[1] \defeq \khcoreset(\gkernel, \inputcoreset, (\delta_i)_{i=1}^{\floor{\frac{n}{2}}})$ has size $\nout = \floor{\frac{n}{2}}$ with ${\Q}_{\mrm{KH}}^{(1)}\!\defeq\!\textfrac{1}{{\nout}}\textsum_{x\in\coreset[1]}\dirac_x$ 
satisfying
    \begin{talign}
        \label{eq:kernel_halving_bound_finite}
            \sinfnorm{\P_{n}\gkernel-{\Q}_{\mrm{KH}}^{(1)}\gkernel}
            \leq
             \frac{\sinfnorm{\gkernel}}{\nout} \cdot \err_{\gkernel}(n, 1, d, \delta^\star,  \delta', \rksmin[\gkernel])
    \end{talign}
    with probability at least $1\!-\!\delta'\!-\!\sum_{i=1}^{\nout} \delta_i$
    for $\delta^\star\defeq\min_{i\leq \nout} \delta_i$.
    \item\label{item:kernel_halving_multiple_rounds}
    \emph{\textbf{\kernelhalvingmultiroundname:}} 
    For each $j > 1$, let $\coreset[j] \defeq \khcoreset(\gkernel, \coreset[j-1], (\frac{2^{j-1}}{m}\delta_i)_{i=1}^{\floor{n/2^{j}}})$
    be the output of kernel halving recursively applied for $j$ rounds. Then 
    $\coreset[\m]$ has size $\nout = \floor{\frac{n}{2^{\m}}}$ with ${\Q}_{\mrm{KH}}^{(\m)}\!\defeq\!\textfrac{1}{\nout}\textsum_{x\in\coreset[\m]}\dirac_x$ satisfying
    \begin{talign}
        \sinfnorm{\P_{n}\gkernel\!-\!\Q^{(\m)}_{\mrm{KH}}\gkernel}
        &\leq \frac{\sinfnorm{\gkernel}}{\nout}\cdot\err_{\gkernel}(n, m, d, \delta^\star, \delta', \rksmin[\gkernel])
        \label{eq:kernel_halving_bound_m_rounds}
    \end{talign}
    with probability at least $1\!-\!\delta'\!-\!\sum_{j=1}^{\m} \frac{2^{j-1}}{m} \sum_{i=1}^{2^{m-j}\nout}\delta_i$ for $\delta^\star\defeq\min_{i\leq 2^m\nout} \delta_i$.
\end{enumerate}
\end{theorem}

\cref{kernel_halving_results}, proved in \cref{sec:proof_of_kernel_halving_results}, shows that $\Linf$ error for KH scales simply as the kernel thinning inflation factor $\err_{\gkernel}$~\cref{eq:err_simple_defn} divided by the size of the output.
Our next corollary, an immediate consequence of 
 \cref{kernel_halving_results}\cref{item:kernel_halving_multiple_rounds} and the definition of $\err_{\gkernel}$~\cref{eq:err_simple_defn}, translates these bounds into rates of decay depending on the radius growth of $\inputcoresetfull$ and the tail decay of $\gkernel$. 
\newcommand{\khcorollaryresultname}{$\Linf$ rates for kernel halving}
\begin{corollary}[\khcorollaryresultname]
	\label{cor:optimallinf}
	Under the notation and assumptions of \cref{kernel_halving_results}, 
    consider a sequence of repeated kernel halving runs (\cref{algo:kernel_halving}), indexed by $n\in\naturals$, with $m = \floor{\half\log_2n}$ rounds,
    $\log(1/\delta^\star) = \order(\log n)$, and $\sum_{j=1}^{\m} \frac{2^{j-1}}{\m} \sum_{i=1}^{\floor{n/2^j}}\delta_{i} = o(1)$ as $n\to\infty$.
    If $\inputcoresetfull$ and $\gkernel$ respectively satisfy one of the radius growth (\cref{def:tailinputrate}) and tail decay (\cref{def:tailkernelrate}) conditions in the table below, 
 then $\sinfnorm{\P_n\gkernel\!-\!\Q^{(\m)}_{\mrm{KH}}\gkernel}=\order(\vareps_{\infty, n})$ in probability where $\vareps_{\infty, n}$ is the corresponding table entry and all hidden constants are independent of the dimension $d$.
	\begin{table}[H]
    \centering
  \resizebox{\textwidth}{!}{
  {
    {
    \begin{tabular}{ccc}
        \toprule
        \Centerstack{ \textsc{Type of $\inputcoresetfull$} \\  \textsc{Type of $\gkernel$ }}
        
        & \Centerstack{\bndcase with $\rminpn[\inputcoreset] \!=\! \order(\sqrt{d})$ \\ 
        \bndcase with $\frac{\klip(\rk\wedge\rminpn[\inputcoreset]) }{\sinfnorm{\gkernel}}  \!=\! \order(1)$
        } 
        
        & \Centerstack{ \textsc{Arbitrary} \\  
        \subpoly with $\log (\frac{\klip\rk}{\sinfnorm{\gkernel}})  \!=\! \order(\log n)$
        }

        \\[2mm]
        \midrule 
        
         \Centerstack{ \Large $\mbi{\vareps_{\infty, n}}$ }

          & {\Large $\sqrt{\frac{d\log n}{n}}$}
          & {\Large $\sqrt{\frac{d}{n}}$}\! $\log n$
        \\[1ex] \bottomrule
    \end{tabular}
    }
    }
    }
\end{table}
\end{corollary}
\newcommand{\linfrateexamplename}{Example settings for KH $\Linf$ rates}
\begin{remark}[\linfrateexamplename]\label{rem:linf-rate-examples}
\cref{rem:delta_sufficient} specifies settings of $(\delta_i)_{i=1}^{\floor{n/2}}$ that meet the conditions of \cref{cor:optimallinf}. 
For any radial kernel $\gkernel(\x,\y) = \kappa(\twonorm{x-y}/\sigma)$ with $L$-Lipschitz $\kappa: \reals\to\reals$, bandwidth $\sigma > 0$, and $\sinfnorm{\gkernel} = 1$, we have $\frac{\klip}{\sinfnorm{\gkernel}} \leq \frac{L}{\sigma}$, $\rk \leq \sigma\kappa^{\dagger}({1}{/n})$, and therefore $\frac{\klip\rk}{\sinfnorm{\gkernel}} \leq L\, \kappa^{\dagger}({1}{/n})$ where $L$ and $\kappa^{\dagger}(u) \defeq \sup\{ r : \kappa(r) \geq u \}$ are independent of $d$ and $\sigma$.
Hence Gaussian, \Matern (with $\matone > \frac{d}{2}+1$),
and IMQ kernels satisfy the \subpoly requirements of \cref{cor:optimallinf} with any choice of bandwidth and satisfy the \bndcase requirements when restricted to a compact domain with $\sigma = \sqrt{d}$. See \cref{sec:proof_of_rem:linf-rate-examples} for our proof.
\end{remark}

\paragraph{Near-optimal $\Linf$ coresets}
For any bounded, {radial} $\gkernel$  satisfying mild decay and smoothness conditions,  \citet[Thm.~3.1]{phillips2020near} proved that any procedure outputting a coreset of size $n^{\half}$ must suffer $\Omega(\min(\sqrt{d}n^{-\half},n^{-\quarter}))$ $\Linf$ error with constant probability for some $\P_n$. Hence, the repeated KH quality guarantees from \cref{cor:optimallinf} are within a $\log n$ factor of minimax optimality for this kernel family which includes Gaussian, \Matern, Wendland, and IMQ kernels.

\paragraph{Online vector balancing in an RKHS}
In the online vector balancing problem of  \citet{spencer1977balancing}, one must assign signs $\eta_i$ to Euclidean vectors $\invec[i]$ in an online fashion while keeping the norm of the signed sum $\infnorm{\sum_{i=1}^{n} \eta_i \invec[i]}$ as small as possible. 
As an immediate consequence of \cref{kernel_halving_results}\cref{item:kernel_halving_one_round}, we find that kernel halving solves an RKHS generalization of the online vector balancing problem.

\begin{corollary}[Online vector balancing in an RKHS]
\label{cor:kernel_balancing}
Consider a sequence of kernel halving runs (\cref{algo:kernel_halving}), indexed by $n\in\naturals$, with 
    $\log(1/\delta^\star) = \order(\log n)$ and $\sum_{i=1}^{\floor{n/2^j}}\delta_{i} = o(1)$ as $n\to\infty$. If the kernel $\gkernel$ is \subpoly (\cref{def:tailkernelrate}) with $\log (\frac{\klip\rk}{\sinfnorm{\gkernel}})  = \order(\log n)$, 
    then $\staticnorm{\sum_{i=1}^{2n} \eps_{i}\gkernel(\x_{i},\cdot)}_\infty =  \order(\sqrt{d}\log n)$ in probability for the generated signs $\eps_i \defeq \eta_{\ceil{i/2}}(-1)^i$.
\end{corollary}
\begin{proof}
By \cref{kernel_halving_results}\cref{item:kernel_halving_one_round} and the definition of $\err_{\gkernel}$~\cref{eq:err_simple_defn}, $\staticnorm{\sum_{i=1}^{n} \eta_{i}\invec}_\infty \!=\!  \order(\sqrt{d}\log n)$ in probability. Noting that $-\eta_{i}\invec \!=\! \eta_i (\gkernel(\x_{2i},\cdot) \!-\! \gkernel(\x_{2i-1},\cdot)) \!=\! \eps_{2i-1}\gkernel(\x_{2i-1},\cdot)+\eps_{2i}\gkernel(\x_{2i},\cdot)$, the claim follows.
\end{proof}
\cref{cor:kernel_balancing} applies to \emph{any} fixed input point sequence $\inputcoresetfull$ and to a broad range of kernels including, by \cref{rem:linf-rate-examples},  Gaussian, \Matern, and IMQ kernels, as well as B-spline kernel since \cref{eq:spline_klip,eq:rmin_rdag_spline} imply that $\log (\frac{\klip\rk}{\sinfnorm{\gkernel}}) = \order(\log d) =  \order(\log n)$ for B-spline $\gkernel$.
\newcommand{\mccoreset}[1][\frac{n}{2}]{\cset_{\mrm{iid},#1}}
\subsection{$\Linf$ and MMD guarantees for KT-SPLIT}%
\label{sub:kernel_thinning_is_recursive_kernel_halving}
We finally extend our $\Linf$ and MMD guarantees to the \ktsplitlink step of kernel thinning by observing that each candidate coreset generated by \ktsplit is the product of repeated kernel halving (\cref{algo:kernel_halving}) with $\ksqrt$ as the chosen kernel $\gkernel$.
Hence, \cref{kernel_halving_results}\cref{item:kernel_halving_one_round} applies equally to the coreset $\coreset[1,1]$ returned by \ktsplit with $\m=1$, and, when $\m > 1$, \cref{kernel_halving_results}\cref{item:kernel_halving_multiple_rounds} applies to the coreset $\coreset[m,1]$ produced by \ktsplit.
Combining these $\Linf$ bounds with our $\Linf$ to MMD conversion theorem (\cref{theorem:coreset_to_mmd}) yields the following corollary proved in \cref{sec:proof_of_corollary:kernel_thinning_coreset_bound}. 

\newcommand{\kernelthinningrecursiveresultsname}{$\Linf$ and MMD guarantees for \ktsplitlink}
\begin{corollary}[\kernelthinningrecursiveresultsname]
\label{corollary:kernel_thinning_coreset_bound}
Consider \ktsplitlink with $\ksqrt$ satisfying \cref{assum:tailkernel,assum:lipkernel}, 
$\delta^\star \defeq \min_i\delta_i$, 
and $\P_n \defeq \textfrac{1}{n}\textsum_{i=1}^n\dirac_{x_i}$.
The guarantees of \cref{kernel_halving_results} with $\gkernel=\ksqrt$ hold if $\Q^{(\m, 1)}_{\mrm{KT}}\!\defeq\!\textfrac{1}{\nout}\textsum_{x\in\coreset[\m, 1]}\dirac_x$ is substituted for $\Q^{(\m)}_{\mrm{KH}}$, and the guarantees of 
\cref{theorem:main_result_all_in_one}
 hold if the output coreset $\coreset[\m, 1]$ is substituted for $\ktcoreset$, 
\end{corollary}

\cref{corollary:kernel_thinning_coreset_bound} ensures that, with high probability, at least one \ktsplit candidate is a high-quality $\Linf$ coreset for $(\ksqrt, \P_n)$ and hence also a high-quality MMD coreset for $(\kernel, \P_n)$.
The proof of \cref{theorem:main_result_all_in_one} in \cref{sub:proof_of_theorem:main_result_all_in_one} establishes the same MMD guarantee for kernel thinning by noting that the subsequent \ktswaplink step directly minimizes the $\mmd_\kernel$ to $\P_n$ and hence can only improve or maintain this MMD quality. 
Finally, exactly as in the proofs of \cref{cor:optimallinf,table:mmd_rates} we can deduce both $\Linf$ and MMD rate bounds for \ktsplit coresets.

\begin{corollary}[$\Linf$ and MMD rates for \ktsplitlink]\label{corollary:ktsplit_rates}
Under the notation and assumptions of \cref{corollary:kernel_thinning_coreset_bound}, consider a sequence of \ktsplitlink runs, indexed by $n\in\naturals$, with $m = \floor{\half\log_2n}$ rounds,
    $\log(1/\delta^\star) = \order(\log n)$, and $\sum_{j=1}^{\m} \frac{2^{j-1}}{\m} \sum_{i=1}^{\floor{n/2^j}}\delta_{i} = o(1)$ as $n\to\infty$.
The guarantees of \cref{cor:optimallinf} with $\gkernel=\ksqrt$ hold  if $\Q^{(\m, 1)}_{\mrm{KT}}$ is substituted for $\Q^{(\m)}_{\mrm{KH}}$, and the guarantees of \cref{table:mmd_rates} hold if $\coreset[\m, 1]$ is substituted for $\ktcoreset$.
\end{corollary}

\section{Vignettes}%
\label{sec:vignettes}
We complement our primary methodological and theoretical development with
two vignettes 
illustrating 
the promise of kernel thinning for improving upon (a) \iid sampling in dimensions $d=2$ through $100$ and (b) standard MCMC thinning when targeting challenging differential equation posteriors.
See \cref{sec:vignettes_supplement} for 
supplementary details and 
\begin{center}
\url{https://github.com/microsoft/goodpoints}
\end{center}
for a Python implementation of kernel thinning and 
code replicating each vignette.

\subsection{Common settings}
Throughout, we adopt a $\textbf{Gaussian}(\sigma)$ target kernel $\kernel(\x, \y) =  \exp(-\frac{1}{2\sigma^2}\twonorm{\x-\y}^2)$ 
and the corresponding square-root kernel $\ksqrt$ from \cref{table:kernel_sqrt_pair}. %
To output a coreset of size $n^{\half}$ with $n$ input points, we (a) take every $n^{\half}$-th point for standard thinning and (b) run kernel thinning (KT) with $\m=\half \log_2 n$ using a standard thinning coreset as the base coreset in \ktswap. For each input sample size $n \in \braces{2^{4}, 2^{6}, \ldots, 2^{14}, 2^{16}}$ with  $\delta_i = \frac{1}{2n}$, we report the mean coreset error $\mmd_\kernel(\Pstar, \cset)$ 
  $\pm 1$ standard error across $10$ independent replications of the experiment (the standard errors are too small to be visible in all experiments). 
We additionally regress the log mean MMD onto the log input size using ordinary least squares and display both the best linear fit and an empirical decay rate based on the slope of that fit, e.g., for a slope of $-0.25$, we report an empirical decay rate of $n^{-0.25}$ for the mean~MMD.

\begin{figure}[ht!]
    \centering
    \resizebox{\linewidth}{!}{
    \begin{tabular}{cc}
    \includegraphics[width=0.5\linewidth]{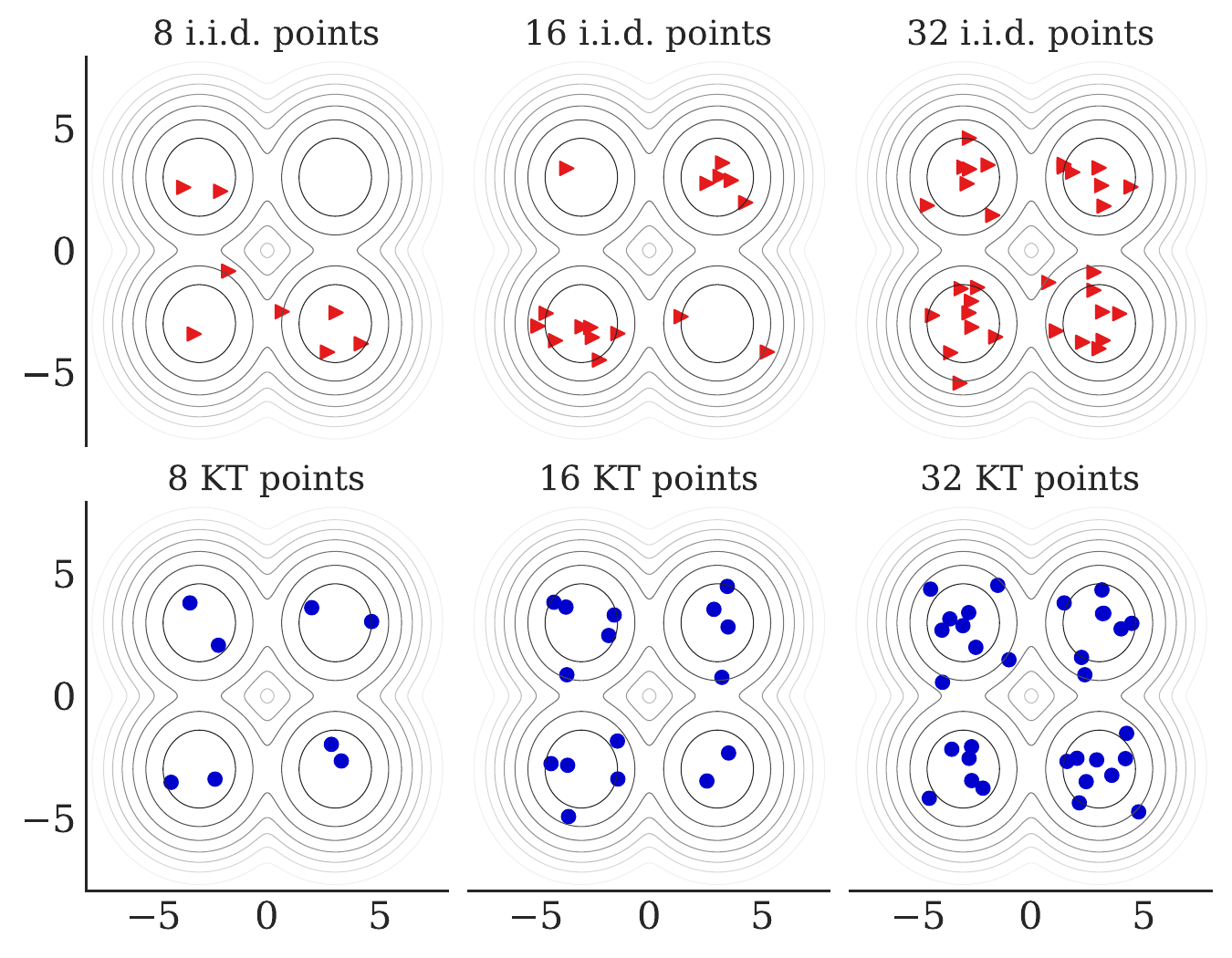} &
    \includegraphics[width=0.5\linewidth]{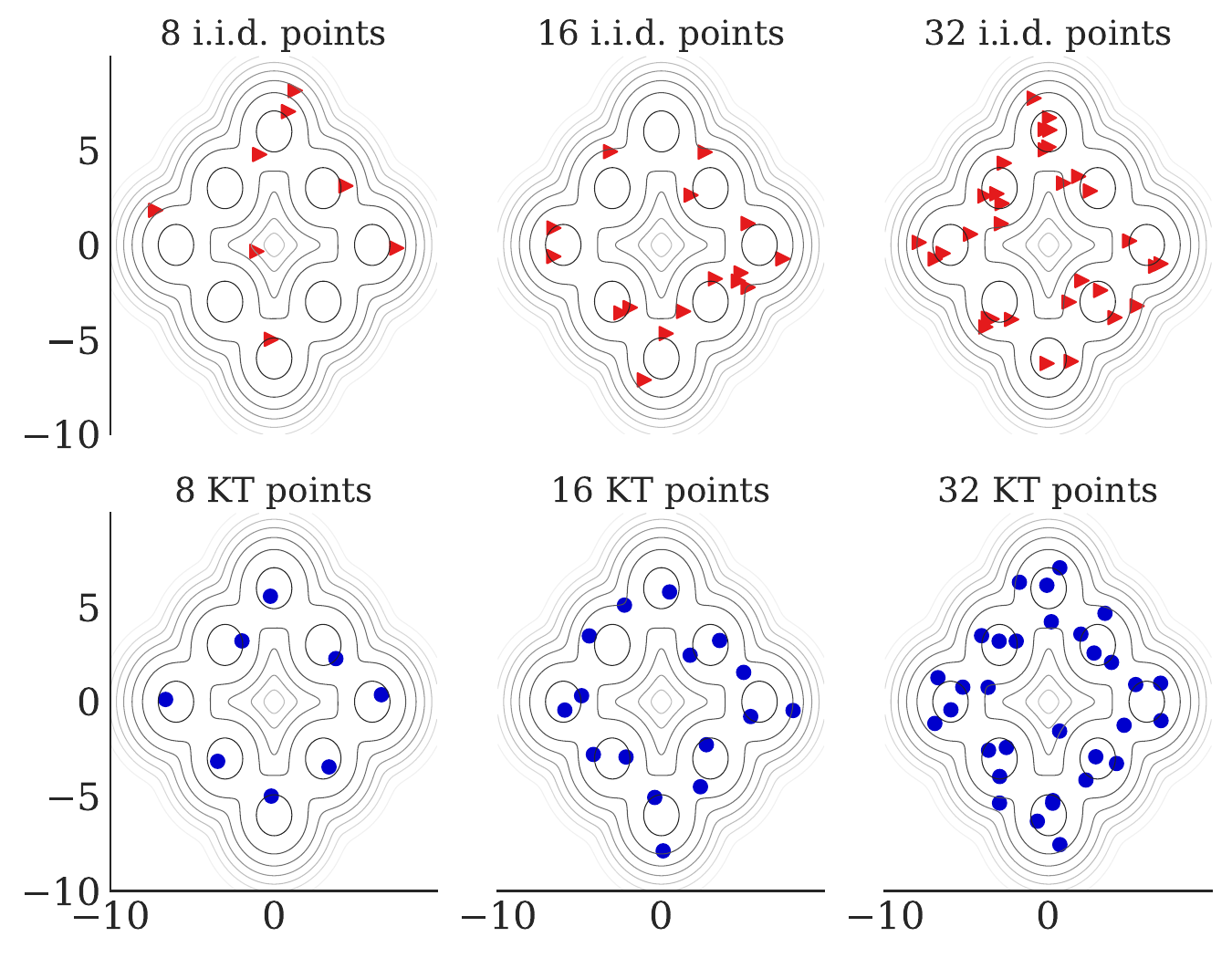}
    \end{tabular}
    }
    \caption{Kernel thinning (KT) and \iid coresets for 4- and 8-component mixture of Gaussian targets with equidensity contours of the target underlaid.
    See \cref{sec:iid_vignette} for more details.}
    \label{fig:mog_scatter}
\end{figure}
\subsection{Kernel thinning versus \iid sampling}
\label{sec:iid_vignette}
We first illustrate the benefits of kernel thinning over \iid sampling from a target $\Pstar$.  
We generate each input sequence $\inputcoreset$ \iid from $\P$, use squared kernel bandwidth $\sigma^2=2d$, and consider both Gaussian targets $\Pstar = \mc N(0, \mathbf{I}_d)$ for $d \in \braces{2, 4, 10, 100}$ and
mixture of Gaussians (MoG) targets $\Pstar = \frac{1}{M}\sum_{j=1}^{M}\mc{N}(\mu_j, \mbf{I}_2)$ with $M \in \braces{4, 6, 8}$ component locations $\mu_j\in\reals^2$ defined in \cref{sec:mog_supplement}. 

 \cref{fig:mog_scatter} highlights the visible differences between the KT and \iid coresets for the MoG targets.
Even for small sample sizes, the KT coresets achieves better stratification across components with less clumping and fewer gaps within components, suggestive of a better approximation to $\Pstar$.
Indeed, when we examine MMD error as a function of coreset size in 
\cref{fig:mmd_gaussp_plots}, we observe that kernel thinning provides a significant improvement across all settings.
For the Gaussian $d=2$ target and each MoG target, the KT MMD error scales as $n^{-\half}$, a quadratic improvement over the $\Omega_p(n^{-\frac14})$ MMD error of \iid sampling.
Notably, we would not expect to see an exact empirical rate of $n^{-\half}$ for larger $d$ and small $n$ due to the logarithmic factors in our MMD bounds. 
However, even for small sample sizes and high dimensions, we observe in \cref{fig:gauss_mmd} that KT significantly improves both the magnitude and the decay rate %
of MMD relative to \iid sampling.

\begin{figure}[th!]
    \centering
    \subfigure[Mixture of Gaussians $\mbb{P}$]{\label{fig:mog_mmd}%
    \resizebox{\linewidth}{!}{
    \begin{tabular}{c}
    \includegraphics[width=\linewidth]{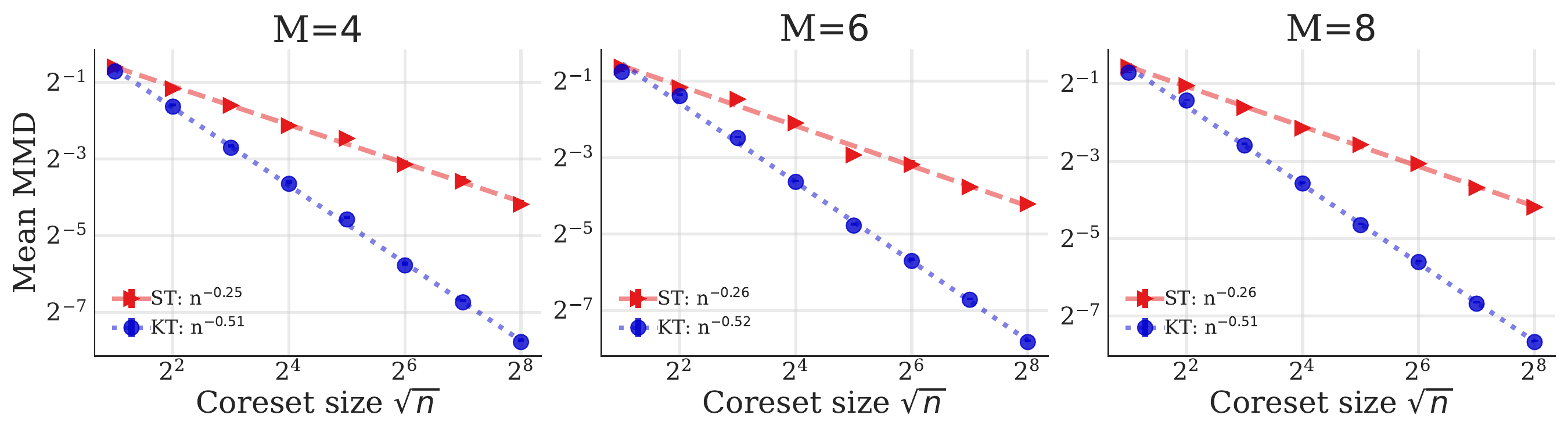}
    \end{tabular}
    }
    }
    \vspace{1mm}
    \hrule
    \vspace{3mm}
    \subfigure[ Gaussian $\P$ ]{\label{fig:gauss_mmd}%
      \resizebox{\linewidth}{!}{
    \begin{tabular}{c}
    \includegraphics[width=\textwidth]{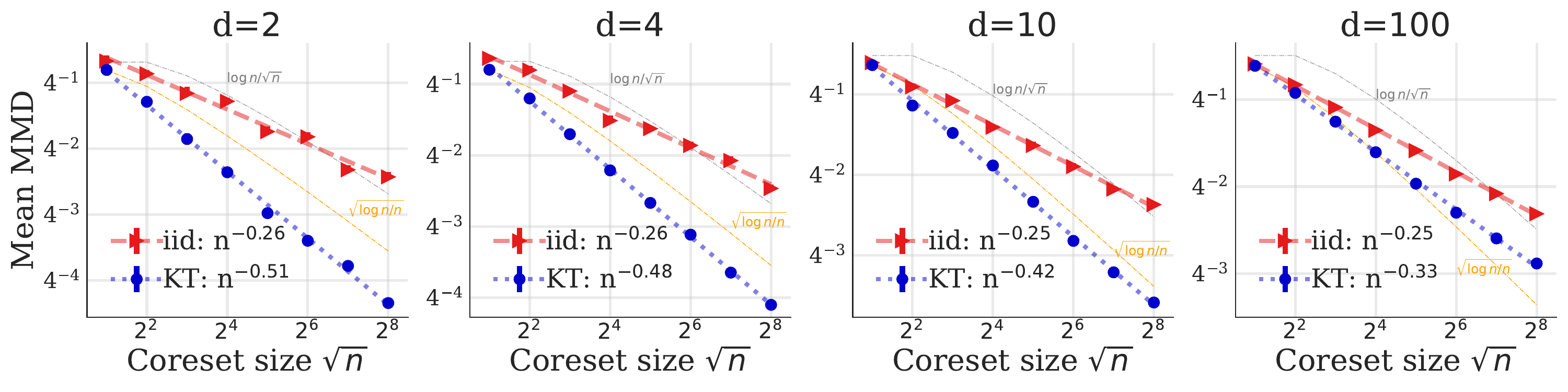}
    \end{tabular}
    }
    }
     \caption{\tbf{Kernel thinning versus \iid sampling.}
    For (a) mixture of Gaussian targets with $M \in\braces{4, 6, 8}$ components and (b) standard multivariate Gaussian targets in dimension $d \in \braces{2, 4, 10, 100}$,  kernel thinning (KT) reduces $\mmd_{\kernel}(\Pstar, \mc{S})$ significantly more quickly than the standard $n^{-\frac14}$ rate for $n^{\half}$ \iid points, even in high dimensions. For reference, decay rates of $\sqrt{\log n/n}$  and $\log n/\sqrt{n}$ are plotted in each of the figures in panel (b).
    }
    \label{fig:mmd_gaussp_plots}
\end{figure}

\subsection{Kernel thinning versus standard MCMC thinning}
\label{sub:mcmc_vignettes}
Next, we illustrate the benefits of kernel thinning over standard Markov chain Monte Carlo (MCMC) thinning on twelve %
posterior inference experiments conducted by \citet{riabiz2021optimal}. 
We briefly describe each experiment here and refer the reader to \citet[Sec.~4]{riabiz2021optimal} for more details.

\paragraph*{Goodwin and Lotka-Volterra experiments} From \citet{DVN/MDKNWM_2020}, we obtain the output of four distinct MCMC procedures targeting each of two $d=4$-dimensional posterior distributions $\Pstar$: (1) a posterior over the parameters of the \emph{Goodwin model} of oscillatory enzymatic control \citep{goodwin1965oscillatory} and (2) a posterior over the parameters of the \emph{Lotka-Volterra model} of oscillatory predator-prey evolution \citep{lotka1925elements,volterra1926variazioni}.
For each target, \citet{DVN/MDKNWM_2020} provide $2\times 10^6$ sample points from each of four MCMC algorithms: Gaussian random walk (RW), adaptive Gaussian random walk \citep[adaRW,][]{haario1999adaptive}, Metropolis-adjusted Langevin algorithm \citep[MALA,][]{roberts1996exponential}, and pre-conditioned MALA \citep[pMALA,][]{girolami2011riemann}. 

\paragraph*{Hinch experiments}
From \citet{DVN/MDKNWM_2020}, we also obtain the output of two independent Gaussian random walk MCMC chains for each of two $d=38$-dimensional posterior distributions $\P$: (1) a posterior over the parameters of the \emph{Hinch model} of calcium signalling in cardiac cells \citep{hinch2004simplified} and (2) a tempered version of the same posterior, as defined by \citet[App.~S5.4]{riabiz2021optimal}.
In computational cardiology, the calcium signalling model represents one component of a heart simulator, and one aims to propagate uncertainty in the signalling model through the whole heart simulation, an operation which requires thousands of CPU hours per sample point \citep{riabiz2021optimal}.
In this setting, the costs of running kernel thinning are dwarfed by the time required to generate the input sample (two weeks) and more than offset by the cost savings in the downstream uncertainty propagation task.

\paragraph*{Preprocessing and kernel settings}
We discard the initial points of each chain as burn-in using the maximum burn-in period reported in \citet[Tabs. S4 \& S6, App.~S5.4]{riabiz2021optimal}. 
and normalize each Hinch chain by subtracting the post-burn-in sample mean and dividing each coordinate by its post-burn-in sample standard deviation.
To form an input sequence $\inputcoreset$ of length $n$ for coreset construction, we downsample the remaining points using standard thinning.
Since exact computation of $\mmd_\kernel(\Pstar, \mc{S})$ is intractable for these posterior targets, we report $\mmd_\kernel(\inputcoreset, \mc{S})$---the error that is controlled directly in our theoretical results---for these experiments.
We select the kernel bandwidth $\gaussparam$ using the popular median heuristic \citep[see, e.g.,][]{garreau2017large}.  %
Additional details can be found in \cref{sec:mcmc_supplement}.

\begin{figure}[th!]
    \centering
    \includegraphics[width=\linewidth]{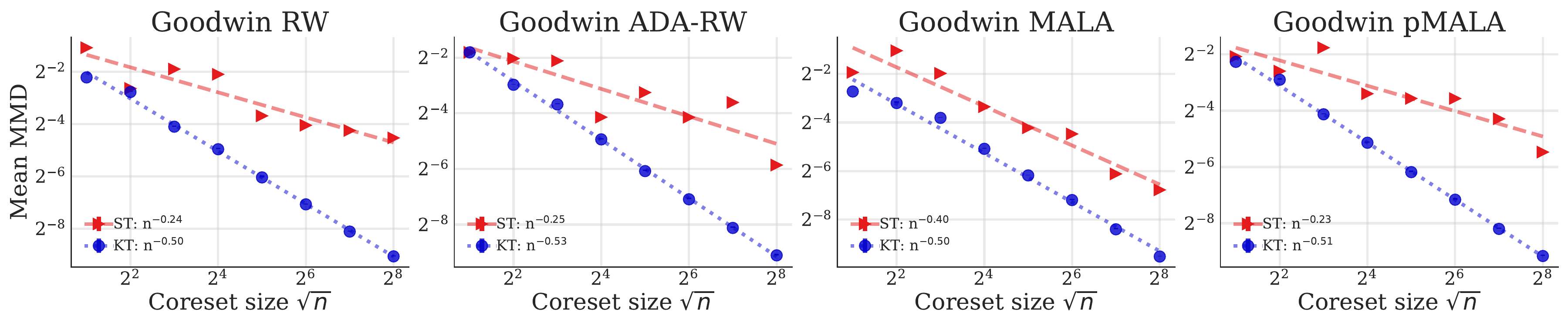}
    \hrule
    \vspace{4mm}
    \includegraphics[width=\linewidth]{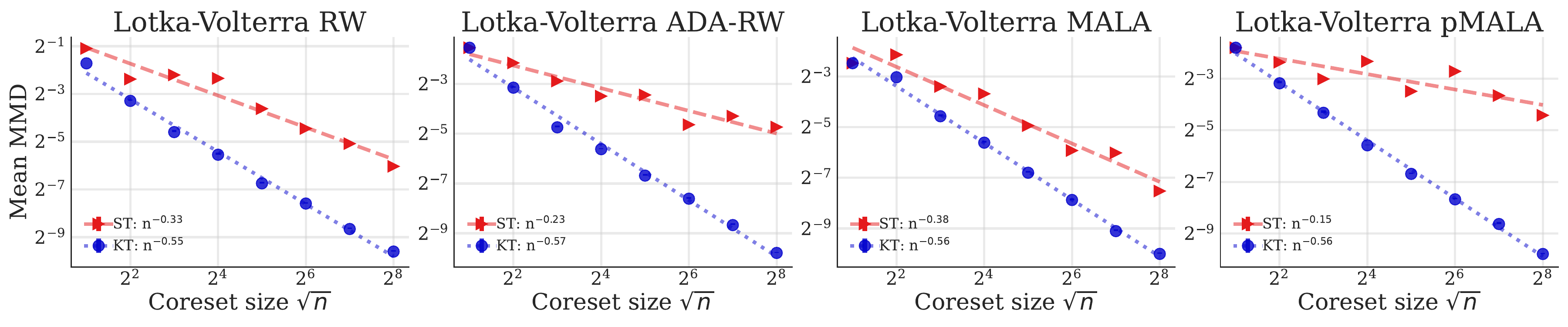} 
    \hrule
    \vspace{4mm}
    \includegraphics[width=\linewidth]{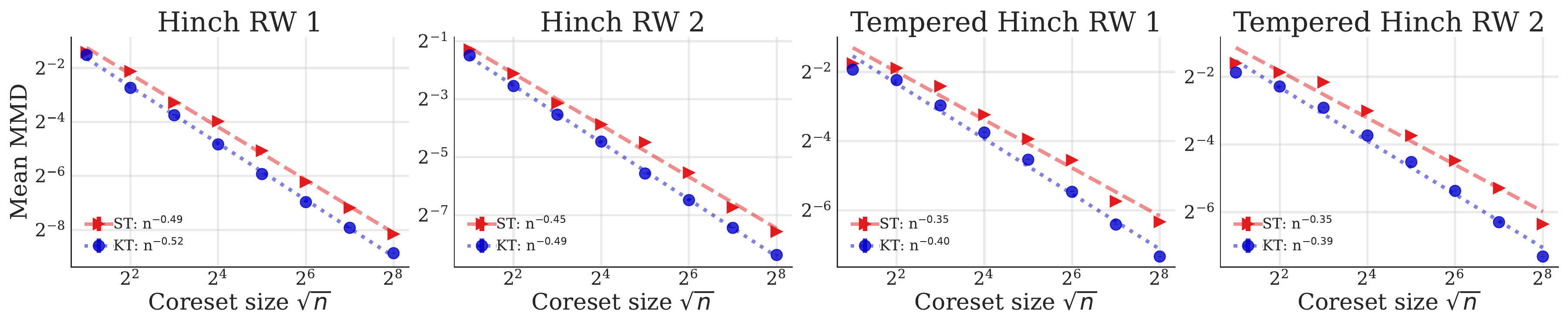} 
    \caption{\tbf{Kernel thinning versus standard MCMC thinning.}
    Kernel thinning (KT) significantly improves both the rate of decay and the order of magnitude of mean $\mmd_{\kernel}(\inputcoreset, \mc{S})$ in each posterior inference task, including eight tasks with 4-dimensional targets (Goodwin and Lotka-Volterra) and four tasks with 38-dimensional targets (Hinch). 
    See \cref{sub:mcmc_vignettes} for more details.
    }
    \label{fig:mmd_mcmc_plot}
\end{figure}

\paragraph*{Results}
\cref{fig:mmd_mcmc_plot} compares the mean  $\mmd_\kernel(\inputcoreset, \mc{S})$ error of the generated kernel thinning and standard thinning coresets.
In each of the twelve experiments, KT significantly improves both the rate of decay and the order of magnitude of mean MMD, in line with the guarantees of \cref{theorem:main_result_all_in_one}. 
Notably, in the $d=38$-dimensional Hinch experiments, standard thinning already improves upon the $n^{-\frac14}$ rate of \iid subsampling but is outpaced by KT which consistently provides further improvements.

\section{Discussion}\label{sec:discussion}
We introduced kernel thinning (\cref{algo:kernel_thinning}), a new, practical solution to the thinned MMD coreset problem that, given $\order(n^2)$ time and $\order(n\min(d,n))$ storage, improves upon the integration error of \iid sampling and standard MCMC thinning. 
To achieve this we first showed that any $\Linf$ coreset for a square-root kernel $\ksqrt$ also provides an MMD coreset for its associated target kernel $\kernel$ (\cref{theorem:coreset_to_mmd}).
We next introduced and analyzed a self-balancing Hilbert walk for solving the online vector balancing problem in Hilbert spaces (\cref{algo:self_balancing_walk,sbhw_properties}). 
We then designed a symmetrized version of SBHW for RKHSes---kernel halving---that delivers $2$-thinned coresets with small $\Linf$ error (\cref{algo:kernel_halving,kernel_halving_results}). 
Our online algorithm, \ktsplitlink,  recursively applies kernel halving to a square-root kernel to obtain near-optimal $\Linf$ coresets in $\order(n^2)$ time with $\order(n\min(d,n))$ space (\cref{cor:optimallinf,corollary:kernel_thinning_coreset_bound}).
Kernel thinning then combines \ktsplit with a greedy refinement step (\ktswaplink) to yield coresets with better-than-\iid MMD for a broad range of kernels and target distributions (\cref{theorem:main_result_all_in_one,table:mmd_rates}).
While our analysis is restricted to kernels that admit square-root dominating kernels, \cite{dwivedi2022generalized} recently generalized the KT algorithm and analysis to support arbitrary kernels. 
Separately, \cite{shetty2022distribution} have developed a  distribution compression meta-algorithm, Compress++, which reduces the runtime of KT to near-linear $\order(n \log^{3}n)$ time with MMD error that is worse by at most a factor of $4$. Hence, KT-Compress++ can be practically deployed even for very large input sizes. 
\subsection{Related work on MMD coresets}\label{sec:related} 
While $(n^\half, o_p(n^{-\quarter}))$-MMD coresets have been developed for specific $(\Pstar,\kernel)$ pairings like the uniform distribution on the unit cube paired with a Sobolev kernel $\kernel$, to the best of our knowledge, no prior $(n^\half, o_p(n^{-\quarter}))$-MMD coreset constructions were known for the range of $\Pstar$ and $\kernel$ studied in this work. 
For comparison, we review here both lower bounds and prior strategies for generating 
coresets %
with small MMD.

\paragraph*{Lower bounds}
For any bounded and {radial} (i.e., $\kernel(x,y) \!=\! \kappa(\twonorm{\x\!-\!\y}^2)$) kernel
satisfying mild decay and smoothness conditions, 
\citet[Thm.~3.1]{phillips2020near} showed that any procedure outputting coresets of size $n^{\half}$ must suffer $\Omega(\min(\sqrt{d}n^{-\half},n^{-\quarter}))$ $\mmd_{\kernel}$ for some (discrete) target distribution $\P$.
This lower bound applies, for example, to \Matern kernels and to infinitely smooth Gaussian kernels.
For any continuous and shift-invariant (i.e., $\kernel(x,y) \!=\! \kappa(\x\!-\!\y)$) kernel taking on at least two values, 
\citet[Thm.~1]{tolstikhin2017minimax} 
showed that \emph{any} estimator (including non-coreset estimators) based only on $n$ \iid draws from $\Pstar$ must suffer $C_{\kernel} n^{-\half}$ $\mmd_{\kernel}$ with probability at least $1/4$ for some discrete target $\Pstar$ and a constant $C_{\kernel}$ depending only $\kernel$.
If, in addition, $\kernel$ is {characteristic} (i.e., $\mmd_\kernel(\pnew, \qnew) \neq 0$ when $\pnew\!\neq\! \qnew$), then \citet[Thm.~6]{tolstikhin2017minimax} establish the same lower bound for some  continuous target $\Pstar$ with infinitely differentiable density.
These last two lower bounds hold, for example, for Gaussian, \Matern, and B-spline kernels and apply in particular to any thinning algorithm that compresses $n$ \iid sample points without additional knowledge of $\Pstar$.
For light-tailed $\Pstar$ and $\ksqrt$, the kernel thinning guarantees of \cref{theorem:main_result_all_in_one} match each of these lower bounds up to factors of $\sqrt{\log n}$ and constants depending on $d$.

\paragraph*{Order $(n^\half, n^{-\quarter})$-MMD coresets for general target $\Pstar$}
By Prop.~A.1 of \citet{tolstikhin2017minimax}, an \iid sample from $\Pstar$ yields an order $(n^\half, n^{-\quarter})$-MMD coreset in probability. %
\citet{chen2012super} showed that \emph{kernel herding} with a finite-dimensional kernel (like the linear $\kernel(\x,\y)=\inner{\x}{\y}$) finds an  $(n^\half, (C_{\Pstar,\kernel,d}n)^{-\half})$-MMD coreset for an inexplicit parameter $C_{\Pstar,\kernel,d}$. However, \citet{bach2012equivalence} showed that their analysis does not apply to any infinite-dimensional kernel (like the Gaussian, \Matern, and B-spline kernels studied in this work), as $C_{\Pstar,\kernel,d}$ would necessarily equal $0$. The best known rate for kernel herding with bounded infinite-dimensional kernels \citep[Thm. G.1]{lacoste2015sequential} guarantees an order $(n^{\frac12}, n^{-\frac14})$-MMD coreset, matching the \iid guarantee. 
For bounded kernels, the same guarantee is available for \emph{Stein Point MCMC} \citep[Thm.~1]{chen2019stein} which greedily minimizes MMD\footnote{To bound $\mmd_{\kernel}$ using \citet[Thm.~1]{chen2019stein}, choose $\kernel_0(\x,\y) = \kernel(\x,\y) - \P\kernel(\x) - \P\kernel(\y) + \P\P\kernel$.} over random draws from $\P$ and for a variant of the \emph{greedy sign selection} algorithm described in \citet[Sec.~3.1]{karnin2019discrepancy}.\footnote{The statement of \citet[Thm.~24]{karnin2019discrepancy} bounds $\infnorm{\cdot}$, but the proof bounds $\mmd_{\kernel}$.}
Slightly inferior guarantees were established for Stein points~\citep[Thm.~1]{Chen2018SteinPoints} and Stein thinning~\citep[Thm.~1]{riabiz2021optimal}, both of which accommodate unbounded kernels as well.

\paragraph*{Finite-dimensional kernels}
\citet{harvey2014near} construct $(n^\half, \sqrt{d}n^{-\half}\log^{2.5}n)$-MMD coresets for finite-dimensional linear kernels on $\R^d$ but do not address infinite-dimensional kernels.

\paragraph*{Uniform distribution on $[0,1]^d$}
The explicit low discrepancy \emph{quasi-Monte Carlo (QMC)} construction of 
 \citet{chen2002explicit}   
provides a $(n^{\frac12}, \order_d(n^{-\frac{1}{2}}\log^\frac{d-1}{2} n))$-MMD coreset for an $L^2$ discrepancy kernel when $\Pstar$ is the uniform distribution on the unit cube $[0,1]^d$. %
For the same target, the \emph{online Haar strategy} of \citet{dwivedi2019power} yields an
$(n^{\frac12}, \order_d(n^{-\frac{1}{2}}\log^{2d} n))$-MMD coreset in probability. \citet{dwivedi2019power} also conjecture that a greedy variant of their Haar strategy would provide an improved $(n^{\frac12}, \order_d(n^{-\frac{1}{2}}\log^{d} n))$-MMD coreset.
These constructions satisfy our quality criteria but are tailored specifically to the uniform distribution on the unit cube.
\paragraph*{Unknown coreset quality}
On compact manifolds, optimal coresets of size $n^{\half}$ minimize the weighted Riesz energy (a form of relative MMD with a weighted Riesz kernel) at known rates \citep{borodachov2014low}; however,   practical \emph{minimum Riesz energy} \citep{borodachov2014low} and \emph{minimum energy design} \citep{joseph2015sequential,joseph2019deterministic} constructions have not been analyzed. 
When $\kernel$ is nonnegative and the kernel matrix $(\kernel(x_i,x_j))_{i,j=1}^n$ satisfies a strong diagonal dominance condition, \citet[Cor.~3, Thm.~6]{kim2016examples} show that greedy optimization of $\mmd_{\kernel}$ yields an \emph{MMD-critic} coreset $\hat{\cset}$ of size $n^{\half}$ satisfying \begin{talign}
\mmd_{\kernel}^2(\P_n, \hat{\cset}) \leq (1-\frac1e)  \mmd_\star^2+ \frac1e\P_n\P_n\kernel
\qtext{for} \mmd_\star = \min_{|\cset| = \sqrt{n}} \mmd_{\kernel}(\P_n, \cset).
\end{talign}
In the usual case when $\P_n\P_n\kernel = \Omega(1)$, this error bound does not decay to $0$ with $n$.
\citet{paige2016super} analyze the impact of approximating a kernel in \emph{super-sampling with a reservoir} but do not analyze the quality of the constructed MMD coreset.
For the conditionally positive definite energy distance kernel, \citet{mak2018support} establish that an optimal  coreset of size $n^{\frac12}$ has $o(n^{-\frac14})$ MMD but do not provide a construction; in addition, \citet{mak2018support} propose two \emph{support points convex-concave procedures} for constructing MMD coresets but do not establish their optimality and do not analyze their quality.

\subsection{Related work on weighted MMD coresets}\label{sec:related_weighted} 
While coresets satisfy a number of valuable constraints that are critical for some downstream applications---exact approximation of constants, automatic preservation of convex integrand constraints, compatibility with unweighted downstream tasks, easy visualization, straightforward sampling, and increased numerical stability against errors in integral evaluations \citep{karvonen2019positivity}---some applications also support  \emph{weighted coreset} approximations of $\Pstar$ of the form $\sum_{i=1}^{\sqrt{n}} w_i \dirac_{\x_i}$ for weights $w_i\in\reals$ that need not be equal, need not be nonnegative, or need not sum to $1$. 
Notably, weighted coresets that depend on $\P$ only through an \iid sample of size $n$ are subject to the same $\Omega(n^{-\half})$ MMD lower bounds of \citet{tolstikhin2017minimax} described in \cref{sec:related}.
Any constructions that violate these bounds do so only by exploiting additional information about $\Pstar$ (for example, exact knowledge of $\Pstar \kernel$) that is not generally available and not required for our kernel thinning guarantees.
Moreover, while weighted coresets need not provide satisfactory solutions to the unweighted coreset problem studied in this work, kernel thinning coreset points can be converted into an optimally weighted coreset of no worse quality by explicitly minimizing $\mmd_{\kernel}(\P_n, \sum_{i=1}^{\sqrt{n}} w_i \dirac_{\x_i})$ or, if computable, $\mmd_{\kernel}(\Pstar, \sum_{i=1}^{\sqrt{n}} w_i \dirac_{\x_i})$ over the weights $w_i$ in $\order(n^{3/2})$ time.

With this context, we now review known weighted MMD coreset guarantees.
We highlight that 
only one of the weighted $(n^\half, o(n^{-\quarter}))$-MMD guarantees covers the unbounded distributions addressed in this work and that the single unbounded guarantee relies on a restrictive uniformly bounded eigenfunction assumption that is typically not satisfied.
In other words, our analysis establishes MMD improvements for practical $(\kernel, \Pstar)$ pairings not covered by prior weighted analyses.

\paragraph*{$\Pstar$ with bounded support}
If the target $\Pstar$ has bounded density and bounded, regular support and $\kernel$ is a Gaussian or \Matern kernel, then \emph{Bayesian quadrature}~\citep{o1991bayes} and \emph{Bayes-Sard cubature}~\citep{karvonen2018bayes} with quasi-uniform unisolvent point sets yield weighted $(n^\half, o(n^{-\quarter}))$-MMD coresets 
by \citet[Thm.~11.22 and Cor.~11.33]{wendland2004scattered}.
If $\Pstar$ has bounded support, and $\kernel$ has more than $d$ continuous derivatives, then the \emph{P-greedy} algorithm~\citep{de2005near} also yields weighted $(n^\half, o(n^{-\quarter}))$-MMD coresets by \citet[Thm.~4.1]{santin2017convergence}.
For $(\kernel, \Pstar)$ pairs with compact support and sufficiently rapid eigenvalue decay, approximate \emph{continuous volume sampling kernel quadrature} \citep{belhadji2020kernel} using the Gibbs sampler of \citet{rezaei2019polynomial} yields weighted coresets with $o(n^{-\quarter})$ root mean squared MMD. %

\paragraph*{Finite-dimensional kernels with compactly supported $\Pstar$} 
For compactly supported $\Pstar$, 
\citet[Thm.~1]{briol2015frank} and
\citet[Prop.~1]{bach2012equivalence} 
 show that \emph{Frank-Wolfe Bayesian quadrature} and weighted variants of kernel herding respectively yield weighted $(n^{\half}, o(n^{-\quarter}))$-MMD coresets for continuous finite-dimensional kernels,  %
but, by \citet[Prop. 2]{bach2012equivalence}, these analyses do not extend to infinite-dimensional kernels, like the Gaussian, \Matern, and B-spline kernels studied in this work.

\paragraph*{Eigenfunction restrictions}
For $(\kernel, \Pstar)$ pairs with known Mercer eigenfunctions, \citet{belhadji2019kernel} bound the expected squared MMD of \emph{determinantal point process (DPP) kernel quadrature} in terms of kernel eigenvalue decay and provide explicit rates for univariate Gaussian $\Pstar$ and uniform $\Pstar$ on $[0,1]$.  
Their construction makes explicit use of the kernel eigenfunctions which are not available for most $(\kernel, \Pstar)$ pairings.
For $(\kernel, \Pstar)$ pairs with $\Pstar \kernel = \boldzero$, %
uniformly bounded eigenfunctions, and rapidly decaying eigenvalues, \citet[App.~B.2]{liu2016black} prove that \emph{black-box importance sampling} generates probability-weighted coresets with $o(n^{-\quarter})$ root mean squared MMD 
but do not provide any examples verifying their assumptions. 
The uniformly bounded eigenfunction condition is considered particularly difficult to check \citep{steinwart2012mercer}, does not hold for Gaussian kernels with Gaussian $\P$ \citep[Thm.~1]{minh2010some}, and need not hold even for infinitely univariate smooth kernels on $[0,1]$ \citep[Ex.~1]{zhou2002covering}.

\paragraph*{Unknown coreset quality}
\citet[Prop.~2]{Huszr2012OptimallyWeightedHI} bound the MMD error of weighted \emph{sequential Bayesian quadrature} coresets using weak submodularity, but this bound does not decay to zero with $n$. 
\citet[Thm.~2]{khanna2019linear} prove that \emph{weighted kernel herding} yields a weighted $(n^\half, \exp(-n^\half/\kappa_n ))$-MMD coreset. 
However, the $\kappa_n$ term in \citet[Thm.3, Assum. 2]{khanna2019linear} is at least as large as the condition number of an $\sqrt{n} \times\sqrt{n}$ kernel matrix, which for typical kernels (including the Gaussian and \Matern kernels) is $\Omega(\sqrt{n})$~\citep{koltchinskii2000random,el2010spectrum}; the resulting MMD error bound therefore does not decay with $n$. The \emph{ProtoGreedy} and \emph{ProtoDash} algorithms of \citet[Thm.~IV.3, IV.5]{gurumoorthy2019efficient} yield nonnegative weighted coresets $\hat\cset$ of size $n^{\half}$ satisfying 
$
\mmd_{\kernel}^2(\P_n, \hat{\cset}) \leq \mmd_\star^2 + (\P_n\P_n\kernel-\mmd_\star^2) e^{-\lambda_{\sqrt{n}}}
$
where $\mmd_\star$ is the optimal MMD error to $\P_n$ for a nonnegatively weighted coreset of size $n^\half$.
However, careful inspection reveals that $\lambda_{\sqrt n}\leq 1$ for any kernel and any $n$.
Hence, in the usual case in which $\P_n\P_n\kernel = \Omega(1)$, this error bound does not decay to $0$ with $n$.
\citet[Thm.~4.4]{campbell2019automated} prove that \emph{Hilbert coresets via Frank-Wolfe} with $n$ input points yield weighted order $(n^\half, \nu_n^{\sqrt{n}})$-MMD coresets for some $\nu_n < 1$ but do not analyze the dependence of $\nu_n$ on $n$. 

\paragraph*{Non-MMD guarantees}
For $\Pstar$ with continuously differentiable Lebesgue density and $\kernel$ a bounded Langevin Stein kernel with $\Pstar\kernel = \boldzero$, Thm.~2 of \citet{oates2017control} does not bound MMD but does prove that a randomized \emph{control functionals} weighted coreset 
satisfies $\sqrt{\E[(\E f - \sum_{i=1}^{\sqrt{n}} w_i f(x_i))^2]} \leq {C_{\Pstar,\kernel,d,f}}{/n^{\frac{7}{24}}}$ for each $f$ in the RKHS of $\kernel$ and an unspecified $C_{\Pstar,\kernel,d,f}$.
This bound is asymptotically better than the $\Omega(n^{-\quarter})$ guarantee for unweighted \iid coresets  but worse than the unweighted kernel thinning guarantees of \cref{theorem:main_result_all_in_one}.
On compact domains, Thm.~1 of \citet{oates2019convergence} 
establishes improved rates for the same weighted coreset when both $\Pstar$ and $\kernel$ are sufficiently smooth.
\citet{bardenet2020monte} establish an $n^{-\quarter - \frac{1}{4d}}$ asymptotic decay of $\E f - \sum_{i=1}^{\sqrt{n}} w_i f(x_i)$ for DPP kernel quadrature with $\Pstar$ on $[-1,1]^d$ and each $f$ in the RKHS of a particular kernel. %
\subsection{Related work on $\Linf$ coresets} %
\label{sub:prior_work_on_linf_coresets}
A number of alternative strategies are available for constructing coresets with $\Linf$ guarantees. 
For example, for any bounded $\ksqrt$, Cauchy-Schwarz and the reproducing property imply that
\begin{talign}
\label{eq:linf_mmd}
   \infnorm{(\P-\P_n)\kernel} 
   = \sup_{\z\in\Rd}\abss{\dotk{\kernel(z, \cdot),\P\kernel-\P_n\kernel}}
	\leq \mmd_{\kernel}(\P, \P_n) \cdot \infnorm{\kernel}^{\frac12},
\end{talign}
so that all of the order $(n^\half, n^{-\quarter})$-MMD coreset constructions discussed in \cref{sec:related} also yield order $(n^\half, n^{-\quarter})$-$\Linf$ coresets.
However, none of those constructions is known to provide a  $(n^\half, o(n^{-\quarter}))$-$\Linf$ coreset.

A series of breakthroughs due to \citet{joshi2011comparing,phillips2013varepsilon,phillips2018improved,phillips2020near,tai2020new} has led to a sequence of increasingly compressed $(n^\half, o(n^{-\quarter}))$-$\Linf$ coreset constructions, with the best known guarantees currently due to \citet{phillips2020near} and \citet{tai2020new}.
Given $n$ input points, 
\citet{phillips2020near} developed an offline, polynomial-time construction to find an $(n^{\half}, \order_p(\sqrt{d}n^{-\half}\sqrt{\log n}))$-$\Linf$ coreset 
for Lipschitz kernels exhibiting suitable decay, while \citet{tai2020new} developed an offline construction for Gaussian kernels that runs in $\Omega(d^{5d})$ time and yields an %
$(n^{\half}, \order_p(2^d n^{-\frac12}\sqrt{\log(d\log n)}))$-$\Linf$ coreset.  %
More details on these constructions based on the Gram-Schmidt walk of \citet{bansal2018gram} can be found in \cref{sub:pt_coresets}.
Notably, the Phillips and Tai (hereafter, PT) guarantee is tighter than that of \cref{kernel_halving_results} by a factor of $\sqrt{\log \log n}$ for sub-Gaussian kernels and input points and $\sqrt{\log n}$ for heavy-tailed kernels and input points.
Similarly, the Tai guarantee provides an improvement when $n$ is doubly-exponential in the dimension, that is, when $\sqrt{d \log n} = \Omega(2^d)$.

Moreover, by \cref{theorem:coreset_to_mmd}, we may apply the PT and Tai constructions to a square-root kernel $\ksqrt$ to obtain comparable MMD guarantees for the target kernel $\kernel$ with high probability.
However, kernel thinning has a number of practical advantages that lead us to recommend it.
First with $n$ input points, using standard matrix multiplication, the PT and Tai constructions have $\Omega(n^4)$ computational complexity and $\Omega(n^2)$ storage costs, a substantial increase over the $\order(n^2)$ running time and $\order(n\min(d,n))$ storage of kernel thinning.
Second, \ktsplit is an online algorithm while the PT and Tai constructions require the entire set of input points to be available a priori.
Finally, each halving round of \ktsplit splits the sample size exactly in half, allowing the user to run all $m$ halving rounds simultaneously; the PT and Tai constructions require a rebalancing step after each round forcing the halving rounds to be conducted sequentially.
\subsection{Future directions}
Several other opportunities for future development recommend themselves.
First, since our results cover any target $\P$ with at least $2d$ moments---even discrete and other non-smooth targets---a natural question is whether tighter error bounds with better sample complexities are available when $\P$ is also known to have a smooth Lebesgue density. 
Second, the MMD to $\Linf$ reduction in \cref{theorem:coreset_to_mmd} applies also to weighted $\Linf$ coresets, and, in applications in which weighted point sets are supported, we would expect either quality or compression improvements from employing non-uniform weights \citep[see, e.g.][]{turner2021statistical}.
\newcommand{\acknowledgments}{%
RD acknowledges support by National Science Foundation under Grant No. DMS2023528 for the Foundations of Data Science Institute (FODSI). The authors thank Fran\c{c}ois-Xavier Briol, Lucas Janson, Lingxiao Li, Chris Oates, Art Owen, and the anonymous reviewers for their valuable feedback on this work and Jeffrey Rosenthal for helpful discussions surrounding geometric ergodicity. 
Part of this work was done when RD was interning at Microsoft Research New England. 
}
\begin{appendix}

\vspace{5mm}\noindent \tbf{\large {Appendix}}
    {\small\tableofcontents}
    \section{Appendix Notation}
    For each $p\geq 1$, we define $\lp$ as the set of measurable $g : \Rd \to \reals$ with  $\lpnorm{g}\defeq (\int |g(x)^p| dx)^{1/p} < \infty$ and $C^p$ as the set of $g : \Rd \to \reals$ for which all partial derivatives of order $p$ exist and are continuous.
    For a kernel $\gkernel : \Rd\times \Rd \to \reals$, we also write 
    $\gkernel\in\ltwoinf$ to indicate that $\gkernel$ is measurable with finite
    \begin{talign}
    \label{eq:kernel_ltwo_norm}
        \ltwoinfnorm{\gkernel} \defeq \sup_{\x\in\Rd}\parenth{\int\gkernel^2(\x, y)d\y}^{\frac{1}{2}}
        = \sup_{\x\in\Rd} \ltwonorm{\gkernel(\x, \cdot)}.
    \end{talign}
    Throughout, we follow
the unitary angular frequency convention of \citet[Def.~5.15]{wendland2004scattered} and define the Fourier transform $\fourier(\fun)$ of an integrable complex function $\fun: \reals^d \to \complex$ via
\begin{talign}
    \label{defn:fourier_transform}
	\fourier({\fun})(\omega) \defeq \frac{1}{(2\pi)^{{\dims}/{2}}}
	\int_{\Rd} \fun(\x) e^{-i\inner{\x}{\omega}} d\x
    \qtext{for all}
    \omega\in\Rd.
\end{talign}
\section{Proof of \lowercase{\Cref{mcmc_mmd}}: \mcmcmmdresultname}
\label{proof_of_mcmc_mmd}
By \citet[Lem.~9.3.9, Cor.~9.2.16]{douc2018markov}, a homogeneous 
 $\phi$-irreducible geometrically ergodic Markov chain with stationary distribution $\P$ is also aperiodic with a unique stationary distribution.\footnote{In \citet[Def.~9.2.1]{havet2020quantitative,douc2018markov} the term \emph{irreducible} is synonymous with $\phi$-irreducible as  defined by \citet[Sec.~2]{gallegosherrada2023equivalences}.}
Since $(\x_i)_{i=0}^\infty$ are the iterates of such a chain, 
there exist, 
by \citet[Thm.~1xi]{gallegosherrada2023equivalences},
constants $\rho \in (0,1)$ and $\tau < \infty$ and a measurable $\P$-almost everywhere finite function $V : \reals^d \to [1,\infty]$ 
satisfying $\P V < \infty$ and 
\begin{align}
\label{eq:vgeom_ergo_condition}
\!\sup_{\text{measurable } h : \frac{|h(x)|}{V(x)} \leq 1, \forall x\in\reals^d}
|\E[h(\x_{i})\mid \x_0 = \x] - \P h| \leq \tau V(\x) \rho^{i}, \text{ for all } \x \in \reals^d \stext{and} i \in\naturals.
\end{align}
Since $V$ is finite $\P$-almost everywhere, we will choose $c(\x) = \infty \iff V(\x) = \infty$ to ensure that our claim is (vacuously) true whenever $V(\x_0) = \infty$. 

Hereafter, suppose $V(\x_0) < \infty$. 
Since the Markov chain is irreducible and aperiodic with a unique stationary distribution $\P$, Assump.~H1 of \citet{havet2020quantitative} is satisfied. 
Hence, by an application of  \citet[Prop.~2.1]{havet2020quantitative} with 
$V' = V \rho^g$ and $\zeta = 2/\rho^g$ for sufficiently large $g\in\naturals$, 
there exists a 
set $C(\x_0)\subseteq \Rd$ that contains $\x_0$ and satisfies Assumps.~H2 and H3 of \citet{havet2020quantitative}.

Now fix any $y_1,\dots, y_n,z_1,\dots,z_n\in\reals^d$. 
We invoke the definition of MMD \cref{eq:kernel_mmd_distance}, the triangle inequality, the reproducing property of an RKHS \citep[Def.~4.18]{steinwart2008support}, and Cauchy-Schwarz in turn to deduce a bounded differences property for MMD:
\begin{talign}
&\mmd_{\kernel}(\P, \frac{1}{n}\sum_{i=1}^n\dirac_{y_i})
-
\mmd_{\kernel}(\P, \frac{1}{n}\sum_{i=1}^n\dirac_{z_i}) \\
    &=
\sup_{\knorm{f} \leq 1} |\P f -\frac{1}{n}\sum_{i=1}^n f(y_i)|
-
\sup_{\knorm{f} \leq 1} |\P f -\frac{1}{n}\sum_{i=1}^n f(z_i)| \\
    &\leq
\sup_{\knorm{f} \leq 1} |\frac{1}{n}\sum_{i=1}^n f(y_i) - f(z_i)|
    =
\sup_{\knorm{f} \leq 1}\frac{1}{n}\sum_{i=1}^n |\dotk{\kernel(y_i,\cdot) - \kernel(z_i,\cdot),f}| \\
    &\leq
\sup_{\knorm{f} \leq 1} \frac{1}{n}\sum_{i=1}^n \knorm{\kernel(y_i,\cdot) - \kernel(z_i,\cdot)}\knorm{f} \\
    &= 
\sup_{\knorm{f} \leq 1} \frac{1}{n}\sum_{i=1}^n \sqrt{\kernel(y_i,y_i)+\kernel(z_i,z_i)-2\kernel(y_i,z_i)}\knorm{f}
    \leq
\frac{2}{n}\infnorm{\kernel}^{\half}\sum_{i=1}^n \indic{y_i\neq z_i}.
\end{talign}
Since $\x_0$ belongs to a set $C(\x_0)$ satisfying Assumps.~H2 and H3 of \citet{havet2020quantitative}, McDiarmid's inequality for geometrically ergodic Markov chains
\citep[Thm.~3.1]{havet2020quantitative} 
implies that, 
with probability at least 
$1-\delta$ conditional on $\x_0$,
\newcommand{\tmix}{t_{\textup{mix}}}
\begin{talign}
\mmd_{\kernel}(\P, \P_n)
    \leq 
        \E[\mmd_{\kernel}(\P, \P_n)\mid \x_0] + \sqrt{c_1(\x_0)\infnorm{\kernel}\log(1/\delta)/n}
\end{talign}
where $c_1(\x_0)$ is a finite value depending only on the transition probabilities of the chain and the set $C(\x_0)$. 

Now, define the $\P$ centered kernel $\kernel_{\P}(\x,\y) = \kernel(\x,\y) - \P\kernel(\x) -\P\kernel(\y) + \P\P\kernel$.
To bound the expectation, we will use a slight modification of Lem. 3 of \citet{riabiz2021optimal}.  
The original lemma used the assumption of $V$-uniform ergodicity \citep[Defn. (16.0.1)]{meyn2012markov} and the assumption $V(\x) \geq \sqrt{\kernel_{\P}(\x,\x)}$ solely to argue that, for some $R > 0$, 
\begin{talign}
|\E[f(x_{i})\mid \x_0 = \x] - \P f|
    \leq
R V(x) \rho^{i}
    \qtext{for all} \x \in \Rd
    \qtext{and}
    f \in\rkhs_{\kernel_{\P}}
    \stext{with} \norm{f}_{\kernel_{\P}} = 1.
\end{talign}
In our case, since $\kernel_{\P}$ is bounded and any $f\in\rkhs_{\kernel_{\P}}$ with $\norm{f}_{\kernel_{\P}} = 1$ satisfies
\begin{talign}
|f(\x)| 
    = 
|\inner{\kernel_{\P}(\x,\cdot)}{f}_{\kernel_{\P}}|
    \leq
\norm{\kernel_{\P}(\x,\cdot)}_{\kernel_{\P}} \norm{f}_{\kernel_{\P}}
    = 
\sqrt{\kernel_{\P}(\x,\x)}
    \leq 
\sqrt{\infnorm{\kernel_{\P}}}
    \stext{for all} \x \in \Rd
\end{talign}
by the reproducing property and Cauchy-Schwarz, the geometric ergodicity property \cref{eq:vgeom_ergo_condition} implies the analogous bound
\begin{talign}
|\E[f(\x_i)\mid \x_0 = x] - \P f|
    \leq
\tau \sqrt{\infnorm{\kernel_{\P}}} V(\x) \rho^{i}
    \stext{for all} \x \in \Rd
    \stext{and}
    f \in\rkhs_{\kernel_{\P}} 
    \stext{with} \norm{f}_{\kernel_{\P}} = 1.
\end{talign}
Hence, the conclusions of \citet[Lem.~3]{riabiz2021optimal} with $R = \tau\sqrt{\infnorm{\kernel_{\P}}}$ hold under our assumptions.
Jensen's inequality and the conclusion of Lem. 3 of \citet{riabiz2021optimal} now yield the sure bound
\begin{talign}
\E[&\mmd_{\kernel}(\P, \P_n)\mid \x_0]^2
    \leq
\E[\mmd_{\kernel}(\P, \P_n)^2 \mid \x_0] \\
    &= 
\E[\frac{1}{n^2}\sum_{i=1}^n \kernel_{\P}(\x_i,\x_i)
+ \frac{1}{n^2}\sum_{i=1}^n \sum_{j \neq i} \kernel_{\P}(\x_i,\x_j)\mid \x_0] \\
    &\leq
\frac{1}{n} \infnorm{\kernel_{\P}}
(1+\frac{2\tau\rho}{1-\rho} \frac{1}{n}\sum_{i=1}^{n-1}\E[V(\x_i)\mid \x_0])
    \leq
\frac{4}{n} \infnorm{\kernel}
(1+\frac{2\tau\rho}{1-\rho} \frac{1}{n}\sum_{i=1}^{n-1}\E[V(\x_i)\mid \x_0]).
\end{talign}
Now, define $c_2(\x_0) \defeq \sup_{n\in\naturals}\frac{1}{n}\sum_{i=1}^{n-1}\E[V(\x_i)\mid \x_0]$.
Since $V(\x_0) < \infty$, the geometric ergodicity property \cref{eq:vgeom_ergo_condition} and the fact that $\P V < \infty$ imply
\begin{talign}
c_2(\x_0)
\leq 
    \P V + V(\x_0) \sup_{n\in\naturals} \textfrac{1}{n}\textsum_{i=1}^{n-1}\rho^i
\leq
    \P V + V(\x_0)\rho
< \infty.
\end{talign}
Taking $c(x_0) = \sqrt{2} \max(c_1(x_0), 4 c_2(x_0) (1+\frac{2\tau\rho}{1-\rho}))$ completes the proof.
\section{\pcref{prop:radii_growth}}
\label{proof_of_prop:radii_growth}
We prove this result for the identically distributed case in \cref{proof_of_prop:radii_growth_identical} and for the Markov chain case in \cref{proof_of_prop:radii_growth_mcmc}.

\subsection{Radius growth for identically distributed sequences}
\label{proof_of_prop:radii_growth_identical}
Suppose $\x_0$ and $\inputcoresetfull$ are drawn identically from $\P$.
Claim~\cref{item:compactiid} is true by definition.
To establish the remaining claims, 
we use the following more general result proved in \cref{proof_of_lemma:identical_growth_rate}.
\begin{lemma}[Growth rate for identically distributed sequence]
 \label{lemma:identical_growth_rate}
   Consider a sequence of identically distributed random variables $(Y_i)_{i=1}^\infty$ on $\R$ and a measurable function $\psi:\real \to \real$ with an increasing inverse function $\psi\inv$. If $\psi(Y_1)\geq 0$ almost surely, then
   \begin{talign}
   \label{eq:max_prob}
     \Pr(\max_{i\leq n}Y_i > \psi\inv(n) \stext{for some} n \in \naturals) \leq \E(\psi(Y_1)).
   \end{talign}
   Consequently, if $\E(\psi(Y_1))<\infty$, then, for any $\delta\in (0,1]$, 
   \begin{talign}
   \label{eq:max_prob_2}
     \Pr(\max_{i\leq n} Y_i \leq \psi\inv(\frac{n\E[\psi(Y_1)]}{\delta}) \stext{for all} n \in \naturals) \geq 1-\delta. %
   \end{talign}
 \end{lemma}

Fix any $\delta \in (0,1]$.  
\cref{lemma:identical_growth_rate}  with $Y_i = \twonorm{x_i}$
implies that, with probability $1-\delta$,
\begin{talign}
\label{eq:tailinput_generic}
    \rminpn[\sn] \leq \psi\inv(\frac{n\E[\psi(\twonorm{\x_1})]}{\delta}) \stext{for all} n \in \mbb N
\end{talign}
for $\psi\inv(r) = \frac{\sqrt{\log r}}{\sqrt{c}}$ in case~\cref{item:sgiid}, $\psi\inv(r) = \frac{\log r}{c}$ in case~\cref{item:seiid}, and $\psi\inv(r) = r^{1/\rho}$ in case~\cref{item:htiid}.
As a result we have, with probability $1-\delta$, $\rminpn[\sn] = \order_d(\sqrt{\log n})$ in case~\cref{item:sgiid}, $\rminpn[\sn] = \order_d(\log n)$ in case~\cref{item:seiid}, and $\rminpn[\sn] =  \order_d(n^{1/\rho})$ in case~\cref{item:htiid}.
Since $\delta$ is arbitrary, these orders hold with probability $1$ as claimed.

\subsubsection{\pcref{lemma:identical_growth_rate}}
\label{proof_of_lemma:identical_growth_rate}
We make use of three lemmas.
The first rewrites maximum exceedance events in terms of individual variable exceedance events when thresholds are nondecreasing.
\begin{lemma}[Exceedance equivalence]
 \label{lemma:max_equivalence}
  For any real-valued $(a_i)_{i=1}^{\infty}$ and nondecreasing $(b_i)_{i=1}^{\infty}$,  
 \begin{talign}\label{eq:max_equivalence}
   \max_{i \leq n} a_i > b_n \stext{for some} n \in \naturals \quad \Longleftrightarrow \quad a_i > b_i \stext{for some} i \in \naturals.
 \end{talign}
 \end{lemma}
 \begin{proof}
    The $\Leftarrow$ part follows immediately. To prove the $\Rightarrow$ part, suppose $\max_{i \leq n^\star} a_i>b_{n^\star}$ for some $n^\star$. Then there exists an $i \leq n^\star$ with $a_i > b_{n^\star}\geq b_i$ since $(b_i)_{i=1}^{\infty}$ is nondecreasing.
 \end{proof}

 The second bounds the probability of growth rate violation for \emph{any} sequence of random variables in terms of a sum of exceedance probabilities.

\begin{lemma}[Growth rate for arbitrary sequence]
 \label{lemma:arbitrary_growth_rate}
   For any sequence of random variables $(Y_i)_{i=1}^\infty$ on $\R$ and 
   a nondecreasing real-valued sequence $(b_i)_{i=1}^{\infty}$, we have
   \begin{talign}
    \Pr(\max_{i\leq n}Y_i > b_n \stext{for some} n \in \naturals) 
        = 
    \Pr(Y_i > b_i \stext{for some} i \in \naturals) 
        \leq 
    \sum_{i=1}^{\infty} \Pr(Y_i > b_i).
   \end{talign}
 \end{lemma}
 \begin{proof}
 The result follows from immediately from \cref{lemma:max_equivalence} and the union bound.
 \end{proof}

 The third lemma bounds a sum of exceedance probabilities whenever the random variables are identically distributed and nonnegative.

 \begin{lemma}[Bounding exceedances with expectations]
 \label{lemma:tailsum_bound_expectation}
   If the random variables $(Z_i)_{i=1}^\infty$ are identically distributed and almost surely nonnegative, then
   \begin{talign}
   \label{eq:tailsum_bound_expectation}
     \sum_{i=1}^{\infty} \Pr(Z_i > i)
     \leq 
     \E(Z_1).
   \end{talign}
 \end{lemma}
 \begin{proof}
   Since $Z_1$ is almost surely nonnegative, we have $Z_1 = \int_0^{Z_1} dt = \int_{0}^{\infty} \indicator(Z_1>t) dt$ almost surely.
   Tonelli's theorem~\citep[Thm.~1]{mukherjea1972remark} therefore implies that 
   \begin{talign}
     \E(Z_1) 
     &= \E(\int_{0}^{\infty} \indicator(Z_1>t) dt)
     = \int_{0}^{\infty} \Pr(Z_1> t) dt \\
     &\geq  \int_{0}^{\infty} \Pr(Z_1> \ceil{t}) dt 
     = \sum_{i=1}^{\infty} \Pr(Z_1>i)
     = \sum_{i=1}^{\infty} \Pr(Z_i>i)
   \end{talign}
 where the final inequality uses the identically distributed assumption.
 \end{proof}

 Since $\psi(Y_1) \geq 0$ almost surely and $\psi\inv$  
 is increasing, we invoke  
 \cref{lemma:tailsum_bound_expectation} with $Z_i = \psi(Y_i)$,
 the invertibility of $\psi$, and 
 \cref{lemma:arbitrary_growth_rate} with $b_i =\psi\inv(i)$ in turn to conclude
 \begin{talign}
    \E(\psi(Y_1)) &\sgrt{\cref{eq:tailsum_bound_expectation}} \sum_{i=1}^{\infty} \Pr(\psi(Y_i)>i)
    \seq{} \sum_{i=1}^{\infty} \Pr(Y_i>\psi\inv(i)) \\ 
    &\sgrt{\cref{eq:max_equivalence}} \Pr( \max_{i\leq n}Y_i>\psi\inv(n) \stext{for some} n \in \N).
 \end{talign}

\subsection{Radius growth for MCMC}
\label{proof_of_prop:radii_growth_mcmc}
Now suppose 
$\x_0$ and $\inputcoresetfull$ are the iterates of a homogeneous $\phi$-irreducible geometrically ergodic Markov chain 
with initial state $\x_0$, subsequent iterates $\inputcoresetfull$, and stationary distribution $\P$.
Our claims will follow from the following more detailed result proved in \cref{proof_of_lemma:mcmc_growth_rate}.

\begin{lemma}[Growth rate for MCMC]
 \label{lemma:mcmc_growth_rate}
 Consider a homogeneous $\phi$-irreducible geometrically ergodic Markov chain 
with initial state $\x_0$, subsequent iterates $(\x_i)_{i=1}^\infty$, and stationary distribution $\P$.
There exist constants $\rho \in (0,1)$ and $\tau < \infty$ and a measurable $\P$-almost everywhere finite function $V : \reals^d \to [1,\infty]$ such that, for any index $j\in\naturals$, 
measurable function $g: \reals^d \to \reals$, measurable nonnegative function $\psi$ on $\reals$ with increasing inverse function $\psi\inv$,
and $X\sim\P$, 
   \begin{talign}
   \label{eq:mcmc_max_prob}
     &\Pr(\max_{i\leq n}g(x_i) > \psi\inv(n) \stext{for some} n \in \naturals \mid x_0) \leq \\ &\E(\psi(g(X))) 
     + \textfrac{\tau\rho^{j+1}}{1-\rho} V(\x_0)
     + \sum_{i=1}^j \Pr(g(x_i) > \psi\inv(i) \mid x_0).
   \end{talign}
   Now suppose $V(x_0)<\infty$, and fix any measurable nonnegative 
 $\psi$ on $\reals$ with increasing $\psi\inv$ and any measurable $g: \reals^d \to \reals$.
   If $\E(\psi(g(X)))<\infty$, then, for any $\delta\in(0,1]$, there exists a constant $c_{\delta,\psi\circ g}(x_0) \in (0,\infty)$ such that
   \begin{talign}
   \label{eq:mcmc_max_prob_2}
     \Pr(\max_{i\leq n} g(x_i) \leq \psi\inv(c_{\delta,\psi\circ g}(x_0) n) \stext{for all} n \in \naturals \mid x_0) \geq 1-\delta.
   \end{talign}
   Moreover, if $\P$ is compactly supported, then for any $\delta\in(0,1]$, there exists a constant $c_\delta(x_0) \in (0,\infty)$ such that 
   \begin{talign}
   \label{eq:mcmc_max_prob_3}
     \Pr(\sup_{i \in \N} \twonorm{x_i} \leq c_{\delta}(x_0) \mid x_0) \geq 1-\delta.
   \end{talign}
 \end{lemma}
Instantiate the function $V$ from \cref{lemma:mcmc_growth_rate}, and suppose that $V(x_0) < \infty$, an event that holds for $\P$-almost every $x_0$.
Claims \cref{item:sgiid,item:seiid,item:htiid} then follow by invoking the time-uniform tail bound \cref{eq:mcmc_max_prob_2} with $g = \twonorm{\cdot}$ and proceeding as in \cref{proof_of_prop:radii_growth_identical}. Finally, claim~\cref{item:compactiid} follows from the bound \cref{eq:mcmc_max_prob_3}, which establishes $\twonorm{x_i} = \order_d(1)$ with probability $1$ conditional on $\x_0$.

\subsubsection{\pcref{lemma:mcmc_growth_rate}}
\label{proof_of_lemma:mcmc_growth_rate}
The proof closely parallels that of \cref{lemma:identical_growth_rate} except that we substitute the following estimate for \cref{lemma:tailsum_bound_expectation}.
\begin{lemma}[Bounding MCMC exceedances with expectations]
 \label{lemma:tailsum_bound_expectation_mcmc}
Consider a homogeneous $\phi$-irreducible geometrically ergodic Markov chain 
with initial state $\x_0$, subsequent iterates $(\x_i)_{i=1}^\infty$, and stationary distribution $\P$.
There exist constants $\rho \in (0,1)$ and $\tau < \infty$ and a measurable $\P$-almost everywhere finite function $V : \reals^d \to [1,\infty]$ such that, for any index $j\in\naturals$, measurable nonnegative function $f$ on $\Rd$, and $X\sim\P$, 
   \begin{talign}
   \label{eq:tailsum_bound_expectation}
     \sum_{i=j}^{\infty} \P(f(\x_i) > i \mid \x_0)
     \leq 
     \E(f(X)) + \frac{\tau\rho^j}{1-\rho} V(\x_0).
   \end{talign}
 \end{lemma}
 \begin{proof}
 By \citet[Lem.~9.3.9]{douc2018markov}, a homogeneous 
 $\phi$-irreducible geometrically ergodic Markov chain with stationary distribution $\P$ is also aperiodic. 
 By \citet[Thm.~1xi]{gallegosherrada2023equivalences}, there exist constants $\rho \in (0,1)$ and $\tau < \infty$  and a measurable $\P$-almost everywhere finite function $V : \reals^d \to [1,\infty]$ satisfying  
\begin{align}
\label{eq:vgeom_ergo_condition}
\sup_{h : |h(x)| \leq V(x), \forall x\in\reals^d}
|\E[h(\x_{i})\mid \x_0 = \x] - \P h| \leq \tau V(\x) \rho^{i}
\qtext{for all} \x \in \reals^d.
\end{align}
Applying this result to the functions $h_i(x) \defeq \indic{f(x) > i}$, we find that
\begin{talign}
\sum_{i=j}^{\infty} \P(f(\x_i) > i \mid \x_0)
    \leq
\sum_{i=j}^{\infty} \P(f(X) > i)
    +
\sum_{i=j}^{\infty} \tau V(\x_0) \rho^i
    \leq 
\E[f(X)] + \frac{\tau\rho^j}{1-\rho} V(\x_0)
\end{talign}
where the final inequality uses \cref{lemma:tailsum_bound_expectation}.
 \end{proof}

Fix any $V$, $\rho$, and $\tau$ satisfying the conclusions of  \cref{lemma:tailsum_bound_expectation_mcmc}, any measurable $g: \reals^d \to \reals$, and any measurable nonnegative 
 $\psi$ on $\reals$ with increasing $\psi\inv$.
The first claim \cref{eq:mcmc_max_prob} follows by applying \cref{lemma:arbitrary_growth_rate} with $b_i =\psi\inv(i)$, the assumed invertiblity and strict monotonicity of $\psi^{-1}$, and  \cref{lemma:tailsum_bound_expectation_mcmc} with $f = \psi \circ g$ in turn to find that
 \begin{talign}
 &\P(\max_{i\leq n}g(x_i) > \psi\inv(n) \stext{for some} n \mid x_0) 
    \leq
 \sum_{i=1}^\infty \P(g(x_i) > \psi\inv(i) \mid x_0) \\
    &= 
    \sum_{i=1}^\infty \P(\psi(g(x_i)) > i \mid x_0)   
    \leq 
\sum_{i=1}^j \P(\psi(g(x_i)) > i \mid x_0)+ \E(\psi(g(X))) + \textfrac{\tau \rho^{j+1}}{1-\rho} V(\x_0).
\end{talign}

Now suppose $V(x_0)<\infty$, fix any $\delta\in(0,1]$, and let $j$ be the smallest positive index with $\textfrac{\tau \rho^{j+1}}{1-\rho} V(\x_0) < \frac{\delta}{3}$.
Since each $\psi(g(x_i))$ is a tight random variable given $x_0$, we can additionally choose a constant $c_{\delta,\psi\circ g}(x_0) \in (0, \infty)$ satisfying  $\sum_{i=1}^{j} \P(\psi(g(x_i))/c_{\delta,\psi\circ g}(x_0) > i \mid x_0) < \frac{\delta}{3}$ and $\E(\psi(g(X)))/c_{\delta,\psi\circ g}(x_0) < \frac{\delta}{3}$.  The claim \cref{eq:mcmc_max_prob_2} now follows by applying the initial result \cref{eq:mcmc_max_prob} to the function $\psi / c_{\delta,\psi\circ g}(x_0)$.

To establish the final claim \cref{eq:mcmc_max_prob_3}, 
suppose that $\P$ is compactly supported. 
Since $\P$ has compact support and each $\twonorm{x_i}$ is a tight random variable, there exists a constant $c_{\delta}\in (0,\infty)$ satisfying $\Pr_{X\sim \P}(\twonorm{X}>c_{\delta})=0$ and $\sum_{i=1}^{j} \P(\psi(\twonorm{x_i}) > c_{\delta}  \mid x_0) < \frac{\delta}{2}$.
The union bound and the geometric ergodicity property \cref{eq:vgeom_ergo_condition} applied to the function $h(x) = \indicator(\twonorm{x} > c_{\delta})$ with $\P h = 0$ now imply
\begin{talign}
\Pr(\sup_{i\in\N} \twonorm{x_i} > c_{\delta}\mid x_0)
&\leq \sum_{i=1}^{\infty} \Pr(\twonorm{x_i} > c_{\delta} \mid x_0)\\
&\leq \sum_{i=1}^{j} \Pr(\twonorm{x_i} > c_{\delta} \mid x_0) + \tau V(x_0) \sum_{i=j+1}\rho^i\\
&= \sum_{i=1}^{j} \Pr(\twonorm{x_i} > c_{\delta} \mid x_0) + \frac{\tau\rho^{j+1}}{1-\rho} V(x_0)
< \frac{\delta}{2} + \frac{\delta}{3} < \delta.
\end{talign}

\section{Proof of \lowercase{\Cref{sqrt_translation_invariant}}: \sqrttranslationinvariantname} 
\label{sec:proof_of_sqrt_translation_invariant}

Bochner's theorem \citep[Thm.~6.6]{bochner1933monotone,wendland2004scattered} implies that
$\ksqrt$ is a kernel since $\kappasqrt$ is the Fourier transform of a finite Borel measure with Lebesgue density $\sqrt{\hatkappa}$.
Moreover, as  
$\ksqrt(\x, \cdot) = \frac{1}{(2\pi)^{d/4}}\fourier(e^{-i\inner{\cdot}{x}} \sqrt{\hatkappa})$ and $e^{-i\inner{\cdot}{x}}\sqrt{\hatkappa}$ is  integrable and square integrable, 
the Plancherel-Parseval identity \citep[Proof of Thm.~5.23]{wendland2004scattered} implies that
\balignt
\int_{\Rd}\ksqrt(\x, \z)\ksqrt(\y, \z) d\z
    &= 
\int_{\Rd} 
    \frac{1}{(2\pi)^{d/4}}e^{-i\inner{\omega}{\x}} \sqrt{\hatkappa(\omega)}
    \ \frac{1}{(2\pi)^{d/4}}e^{i\inner{\omega}{\y}} \sqrt{\hatkappa(\omega)} d\omega \\
    &= 
\frac{1}{(2\pi)^{d/2}}\int_{\Rd} 
    e^{-i\inner{\omega}{\x-\y}} \hatkappa(\omega) d\omega
    =
\kernel(x,y)
\ealignt
confirming that $\ksqrt$ is a square-root kernel of $\kernel$.

\section{Proof of \lowercase{\Cref{theorem:main_result_all_in_one}}: \mainresultallinonename} %
\label{sub:proof_of_theorem:main_result_all_in_one}
By design, \ktswap ensures
\begin{talign}
     \mmd_{\kernel}(\inputcoreset, \ktcoreset) \leq 
     \mmd_{\kernel}(\inputcoreset, \coreset[\m, 1]),
     \label{eq:thm1_proof_step1}
\end{talign}
where $\coreset[\m, 1]$ denotes the first coreset returned by \ktsplit. Next, applying \cref{corollary:kernel_thinning_coreset_bound}, in particular, the bound \cref{eq:mmd_ktsplit_bound} yields the desired claim.

    \section{Proof of \cref{finite_anytimee}: \failureremarkname}
\label{sec:failure_prob_proof}
We prove the three claims one by one.

\paragraph{Finite time guarantee}
For the case with known $n$, the claim follows simply by noting that
\begin{talign}
\sum_{j=1}^{m} \frac{2^{j-1}}{m}\sum_{i=1}^{2^{m-j}\floor{n/2^{m}}} \frac{\delta}{n} =\sum_{j=1}^{m}\frac{2^m}{m} \floor{\frac{n}{2^m}} \frac{\delta}{2n}
\leq \sum_{j=1}^{m}\frac{2^m}{m} \frac{n}{2^m} \frac{\delta}{2n}= \frac{\delta}{2}.
\end{talign}

\paragraph{Any time guarantee}
When the input size $n$ is not known in advance but is chosen independently of the randomness in kernel thinning, 
we first note that
\begin{talign}
\sum_{i= 1}^\infty\frac{1}{(i+1)\log^2(i+1)} \sless{(i)} 2,
\qtext{and}
\sum_{j=1}^{m} 2^{j} = 2^{m+1}-2 \leq 2^{m+1}.
\label{log_sum}
\end{talign}
 where step~(i) can be verified using  mathematical programming software.
Therefore, for any $n\in \natural$, with $\delta_i = \frac{m\delta}{ 2^{m+2}(i+1)\log^2(i+1)}$, we have
\begin{talign}
    \sum_{j=1}^m \frac{2^{j-1}}{m} \sum_{i=1}^{2^{m-j}\floor{n/2^m}} \delta_i 
    \leq \sum_{j=1}^m \frac{2^{j-1}}{m} \sum_{i=1}^{\infty} \delta_i 
    &=\sum_{j=1}^m \frac{2^{j-1}}{m} \sum_{i=1}^{\infty} \frac{m\delta}{ 2^{m+2}(i+1)\log^2(i+1)}
    \\
    &= \frac{\delta}{2^{m+3}} \parenth{\sum_{j=1}^m 2^{j}} \parenth{\sum_{i=1}^{\infty} \frac{1}{(i+1)\log^2(i+1)}}\\
    &\sless{\cref{log_sum}} \frac{\delta}{2^{m+3}}
    \cdot 
    2^{m+1} \cdot 2
    \leq \frac{\delta}{2}.
\end{talign}

\paragraph{Upper bound on $\delta^\star$} The probability lower bound in \cref{theorem:main_result_all_in_one} is
$1\!-\!\delta'\!-\!\sum_{j=1}^{\m} \frac{2^{j-1}}{\m} \sum_{i=1}^{2^{m-j}\floor{n/2^m}}\delta_{i}$, which is non-negative only if
\begin{talign}
1\geq \sum_{j=1}^{\m} \frac{2^{j-1}}{\m} \sum_{i=1}^{2^{m-j}\floor{n/2^m}}\delta_{i}
\geq \frac{2^m}{2} \floor{\frac{n}{2^m}} \delta^\star, 
\label{eq:delta_star_condn}
\end{talign}
which holds only if $\delta^\star \leq \frac{2}{2^m\floor{n/2^m}} \leq \frac{6m}{2^m} $ since $m\in [1, \log_2 n]$. The claim follows.

\section{Proof of \lowercase{\Cref{table:mmd_rates}}: \mmdcorollaryname} %
\label{sec:derivation_of_mmd_rates}
 Repeating arguments similar to those deriving  \cref{eq:simplified_mmd_bound_general} in  \cref{sec:proof_for_tables}, we find that for the advertised choices of $m$ and for any fixed $\delta$ such that $\delta'=\frac{\delta}{2}$ and $\log(1/\delta^\star) = \order(\log(n/\delta))$, the RHS of the bound~\cref{eq:mmd_thinning_bound_finite} on MMD from \cref{theorem:main_result_all_in_one} can be simplified as follows:
\begin{talign}
&\mmd_{\kernel}(\inputcoreset, \ktcoreset)\\
&\leq c \sinfnorm{\ksqrt} \parenth{c'\frac{\max(\rktau[\ksqrt], \rminpn[\inputcoreset])^2}{d}}^{\frac{d}{4}} d^{\frac14}  \sqrt{ \frac{\log (n/\delta)}{n} \brackets{ \log(\frac8\delta)+ \log \parenth{2+\frac{\klip[\ksqrt](\rk[\ksqrt]+ \rminpn[\inputcoreset])}{\sinfnorm{\ksqrt}}}  }}, \\
&= c_{\delta, d} \sinfnorm{\ksqrt} \parenth{\max(\rktau[\ksqrt], \rminpn[\inputcoreset])}^{\frac{d}{2}} \sqrt{\frac{\log n}{n}} 
 \sqrt{\log(1+\max(\rk[\ksqrt], \rminpn[\inputcoreset])) + 
\log(1+\frac{\klip[\ksqrt]}{\sinfnorm{\ksqrt}})
},
\label{eq:mmd_bound_generic_table_1}
\end{talign}
for some universal constants $c, c'$ where to simplify the expressions, we have used the fact that $\rminnew[\inputcoreset,\ksqrt,n] \leq \rminpn[\inputcoreset]$~\cref{eq:rmin_P}. 
 Noting that \cref{eq:mmd_bound_generic_table_1} holds with probability at least $1-\delta$, \cref{table:mmd_rates} now follows from plugging the assumed growth rate bounds into the estimate~\cref{eq:mmd_bound_generic_table_1}, and treating $\sinfnorm{\ksqrt}$ and $\frac{\klip[\ksqrt]}{\sinfnorm{\ksqrt}}$ as some constant while $n$ grows.

\newcommand{\saitohresultname}{Square-root representation of MMD}
\newcommand{\errordecompresultname}{$\Linf$ bound on $\ltwo$ kernel error}
\section{Proof of \lowercase{\Cref{theorem:coreset_to_mmd}}: \linfcoresetresultname} %
\label{sub:proof_of_theorem:coreset_to_mmd}
Our proof will use the following two lemmas proved in \cref{ssub:proof_of_lemma:saitoh_equivalence,ssub:proof_of_lemma:mmd_coreset_decomposition} respectively.
\begin{lemma}[\saitohresultname]
	\label{lemma:saitoh_equivalence}
	For $\kernel$ satisfying \cref{asmp:bounded_measurable} with square-root kernel $\ksqrt \in \ltwoinf$
 we have, for any  distributions $\pnew$ and $\qnew$ on $\Rd$, 
	\begin{talign}
	\label{eq:saitoh_equivalence}
		\mmd_{\kernel}(\pnew, \qnew)
		= \sup_{\funtwo\in\ltwo:\ltwonorm{\funtwo} \leq 1}\abss{\int\funtwo(\y)(\pnew\ksqrt(\y)-\qnew\ksqrt(\y))d\y}.
	\end{talign}
\end{lemma}
\begin{lemma}[\errordecompresultname]
	\label{lemma:mmd_coreset_decomposition}
	Consider any kernel $\gkernel \in \ltwoinf$ satisfying \cref{asmp:bounded_measurable}, distributions $\pnew, \qnew$ on $\reals^d$, and function $g\in\ltwo$ with $\ltwonorm{g}\leq1$.
    For any $r, a, b \geq 0$ with $a+b=1$, 
	\begin{talign}
	\label{eq:mmd_coreset_decomposition}
	\abss{\int\funtwo(\y)(\pnew\gkernel(\y)-\qnew\gkernel(\y))d\y}
	&\leq
	 \infnorm{\pnew\gkernel-\qnew\gkernel}\mrm{Vol}^{\half}(r)
		 + 2\tail[\gkernel](ar)  
		+ 2\ltwoinfnorm{\gkernel} \max\{\tail[\pnew](br),\tail[\qnew](br)\},
	\end{talign}
	where $\mrm{Vol}(r) \defeq \pi^d/\Gamma(d/2+1) r^d$ denotes the volume of the Euclidean ball $\ball(0; r)$.
\end{lemma}
We first note that, by the square-root kernel definition (\cref{def:square_root_kernel}), $\ltwonorm{\ksqrt(\x, \cdot)} = \sqrt{\kernel(\x, \x)}$ for each $x \in \real^d$.
Since $\kernel$ is bounded, we therefore have $\ksqrt\in\ltwoinf$ with $\ltwoinfnorm{\ksqrt} = \sqrt{\infnorm{\kernel}}$.
The result now follows by invoking \cref{lemma:saitoh_equivalence,lemma:mmd_coreset_decomposition} with $\gkernel = \ksqrt$.

\subsection{Proof of \lowercase{\Cref{lemma:saitoh_equivalence}}: \saitohresultname} %
\label{ssub:proof_of_lemma:saitoh_equivalence}
    Let $\krkhs$ represent the RKHS of $\kernel$.
	By \citet[Thms.~1 and 2]{saitoh1999applications} 
	and the definition~\eqref{eq:kernelsqrt} of $\ksqrt$, for any $\fun\in\krkhs$, there exists
	a function $\funtwo \in \ltwo$ such that
	\begin{talign}
	\label{eq:saitoh_equivalence_two}
 \norm{\fun}_{\kernel} = \ltwonorm{\funtwo}
 \qtext{and}
		\fun(\x) = \int \funtwo(\y) \ksqrt(\x, \y)d\y
		\qtext{for all} x\in\Rd
		,
	\end{talign}
 and, for any $g \in \ltwo$, there exists an $\fun \in \krkhs$
	such that \cref{eq:saitoh_equivalence_two} holds.
Note that the integral in  \cref{eq:saitoh_equivalence_two} is well defined for each $x$ since $g\in \ltwo$ and $\ksqrt \in \ltwoinf$.
	Hence, we have
	\begin{talign}
		\mmd_{\kernel}(\pnew, \qnew)
		&= \sup_{\fun\in\krkhs:\norm{\fun}_{\kernel} \leq 1}\abss{\pnew \fun - \qnew \fun}
		\\
		&\seq{(i)} \sup_{\funtwo\in\ltwo:\norm{\funtwo}_{\ltwo} \leq 1}\abss{\int\int\funtwo(\y)\ksqrt(\y, \x)d\y d\pnew(\x)
		- \int\int\funtwo(\y)\ksqrt(\y, \x)d\y d\qnew
		(\x)}\\
		&\seq{(ii)} \sup_{\funtwo\in\ltwo:\norm{\funtwo}_{\ltwo} \leq 1}\abss{\int\funtwo(\y)\pnew\ksqrt(\y)d\y
		- \int\funtwo(\y)\qnew\ksqrt(\y)d\y}.
		\label{eq:proof_coreset_step_1}
	\end{talign}
where step~(i) follows from \cref{eq:saitoh_equivalence_two}, and  we can swap the order of integration to obtain step~(ii) using Fubini's
theorem along with the following fact justified by \Holder's inequality:
\begin{talign}
\label{eq:g_ksqrt_integral}
	\int\int\abss{\funtwo(\y)\ksqrt(\y, \x)}d\y d\tilde{\pnew}(\x)
\leq \ltwonorm{\funtwo} \norm{\ksqrt}_{\ltwoinf} \int d\tilde{\pnew}(\x) <\infty
\stext{for any distribution.} \tilde{\pnew} 
\end{talign}

\subsection{Proof of \lowercase{\Cref{lemma:mmd_coreset_decomposition}}: \errordecompresultname} 
\label{ssub:proof_of_lemma:mmd_coreset_decomposition}
Fix any $r\ge0$, introduce the shorthand $\ball(r) = \ball(0; r)$, and define the restrictions
\begin{talign}
	\funtwo_{\radius}(\x)= \funtwo(\x) \indicator_{\ball(r)}(\x), \quad
	\gkernel_{\radius}(\x, \z) \defeq \gkernel(\x, \z) \cdot \indicator_{\ball
	(\radius)} (\z),
	\qtext{ and }
	\gkernel^{(c)}_{\radius} \defeq \gkernel - \gkernel_{\radius}, 
\end{talign}
so that
$\pnew\gkernel =\pnew\gkernel_{\radius} + \pnew\gkernel^{(c)}_{\radius}.$ 
We first note that, by Cauchy-Schwarz, $g_r\in\lone\cap\ltwo$ with 
\begin{talign}
\norm{\funtwo_{\radius}}_{\lone} \leq \norm{\funtwo_\radius}_{\ltwo}
	\cdot \sqrt{\trm{Vol}(\radius)} \leq \norm{\funtwo}_{\ltwo}
	\cdot \sqrt{\trm{Vol}(\radius)} \leq \sqrt{\trm{Vol}(\radius)}
	\label{eq:proof_coreset_claim_1_step_1a}
\end{talign}
and that, exactly as in \cref{eq:g_ksqrt_integral}, 
$\int\int\abss{\funtwo(\y)\gkernel(\y, \x)}d\y d\tilde{\pnew}(\x) < \infty$ for any distribution $\tilde{\pnew}$ so that each of the integrals to follow is well defined.
We now apply the triangle inequality and \Holder's inequality to obtain
\begin{talign}
	\abss{\int \funtwo(\y) (\pnew\gkernel(\y)-\qnew\gkernel(\y)) d\y}
	&= \abss{\int \funtwo(\y) (\pnew\gkernel_{\radius}(\y)-\qnew\gkernel_{\radius}
	(\y))
	d\y
	+  \int \funtwo(\y) (\pnew\gkernel^{(c)}_{\radius}(\y)-\qnew\gkernel^
	{(c)}_{\radius}(\y))d\y} \\
	&\le \abss{\int \funtwo_{\radius}(\y) (\pnew\gkernel_{\radius}(\y)-\qnew\gkernel_
	{\radius}(\y))
	d\y}+  
	\abss{\int_{\twonorm{\y}\geq \radius} \funtwo(\y) (\pnew\gkernel^{(c)}_{\radius}
	(\y)-\qnew\gkernel^{(c)}_{\radius}(\y))d\y} \\
	&\leq \norm{\funtwo_{\radius}}_{\lone} \cdot \infnorm{\pnew\gkernel\!-\!\qnew\gkernel} 
	\!+\! \abss{\int_{\twonorm{\y}\geq \radius} \funtwo(\y) (\pnew\gkernel(\y)\!-\!\qnew\gkernel
	(\y))d\y}.\label{eq:proof_coreset_claim_1_step_1}
\end{talign}
Next, we bound the second term in \cref{eq:proof_coreset_claim_1_step_1}.
For any $\x,\y\in\Rd$ with $\twonorm{\y} \geq \radius$ and scalars
$a, b \in [0, 1]$ such that $a+b=1$, 
either $\twonorm{\x-\y}\geq a \radius$ or $\twonorm{\x} \geq b \radius$.
Hence,
\begin{talign}
	\abss{\int_{\twonorm{\y}\geq \radius} \funtwo(\y) (\pnew\gkernel(\y)-\qnew\gkernel
	(\y))d\y}
	&=\abss{\int_{\twonorm{\y}\geq \radius} \funtwo(\y) \int_{\x\in\Rd}\gkernel
	(\x,
	\y)(d\pnew(\x)-d\qnew(\x))d\y}\\
	&\leq\abss{\int_{\twonorm{\y}\geq \radius}\int_{\twonorm{\x-\y}\geq a\radius}
	\funtwo
	(\y) \gkernel(\x, \y) (d\pnew(\x)-d\qnew(\x))d\y} \\
	&\qquad+\abss{\int_{\twonorm{\y}\geq \radius}\int_{\twonorm{\x}\geq b\radius}
	\funtwo (\y) \gkernel(\x, \y) (d\pnew(\x)-d\qnew(\x))d\y}\\
	&=:\term_1 + \term_2.
	\label{eq:proof_coreset_claim_1_step_1b}
\end{talign}
Note that both $T_1, T_2 <\infty$ since $g \in \ltwo$, $\ksqrt \in \ltwoinf$ and $\mu,\nu$ are probability measures.
We now bound the terms $\term_1$ and $\term_2$ separately in \cref{eq:proof_coreset_claim_1_step_1ba,eq:proof_coreset_claim_1_step_1bb}
below. These bounds, together with the estimates  \cref{eq:proof_coreset_claim_1_step_1,eq:proof_coreset_claim_1_step_1a}, 
yield our claim.

\begin{subequations}
\paragraph*{Bounding $\term_1$}
Substituting $\x-\y=\z$, we have
\begin{talign}
	\term_1 
	&\leq \int_{\twonorm{\x-\z}\geq \radius}\int_{\twonorm{\z}\geq a\radius}
	\abss{\funtwo(\x-\z) \gkernel(\x, \x-\z)} \abss{d\pnew(\x)-d\qnew(\x)  d\z}\\
	&\leq \int\int_{\twonorm{\z}\geq a\radius}
	\abss{\funtwo(\x-\z) \gkernel(\x, \x-\z)} d\z \abss{d\pnew(\x)-d\qnew(\x)}\\
	&\sless{(i)} \int
	\ltwonorm{\funtwo(\x-\cdot)} \sup_{\x'}\parenth{\int_{\twonorm{\z}\geq
	a\radius}
	\gkernel^2(\x',\x'-\z)d\z}^{1/2} \abss{d\pnew(\x)-d\qnew(\x)} \\
	&\seq{(ii)}\int \ltwonorm{\funtwo} \tail[\gkernel](a \radius) \abss{d\pnew(\x)-d\qnew(\x)}
	\sless{(iii)} 2\ltwonorm{\funtwo} \tail[\gkernel](a \radius),
	\label{eq:proof_coreset_claim_1_step_1ba}
\end{talign}
where step~(i) follows from Cauchy-Schwarz, step~(ii) from
the definition~\eqref{eq:tail_k_p} of $\tail[\gkernel]$, and step~(iii)
from the fact \mbox{$\int\abss{d\pnew(\x)-d\qnew(\x)} \leq 2$}.

\paragraph*{Bounding $\term_2$}
We have
\begin{talign}
	\term_2 
	&\leq \int_{\twonorm{\x}\geq b\radius}\parenth{\int_{\twonorm{\y}\geq \radius}
		\abss{\funtwo (\y) \gkernel(\x, \y)} d\y} \abss{d\pnew(\x)-d\qnew
	(\x)} \\
	&\leq \int_{\twonorm{\x}\geq b\radius}\ltwonorm{\funtwo} \sup_{\x'}\ltwonorm{\gkernel(\x', \cdot)}
	\abss{d\pnew(\x)-d\qnew(\x)} \\
	&\sless{(iv)}2\ltwonorm{\funtwo}\cdot \ltwoinfnorm{\gkernel} \max\braces{\tail[\pnew](b\radius),\tail[\qnew](b\radius)}.
	\label{eq:proof_coreset_claim_1_step_1bb}
\end{talign}
where step~(iv) follows from the definitions~\cref{eq:tail_k_p,eq:kernel_ltwo_norm} of $(\tail[\pnew],\tail[\qnew])$ and $\ltwoinfnorm{\gkernel}$.
\end{subequations}
\section{Proof of \cref{cor:mmdlinf}: \linfmmdcoresetresultname}
\label{proof_of_cor:mmdlinf}
Let $\cdim'\defeq2^{\frac{d}{2}}\cdim$ for $\cdim$ defined in \cref{theorem:coreset_to_mmd}. Notice that the bound~\cref{eq:coreset_to_mmd_bound} for the choice of $a=b=\half$ can be rewritten as
\begin{talign}
\label{eq:mmd_linf_simple}
	\mmd_{\kernel}(\P,\Q) \leq \inf_{r} \sparenth{ \cdim'  r^{\frac{d}{2}} \vareps + 2\wtil{\tail[]}(r)}.
\end{talign}
Then the claims of \cref{cor:mmdlinf} follow by optimizing the RHS of \cref{eq:mmd_linf_simple} over the choice of $r$ depending on the tail decay of $\wtil{\tail[]}$. Throughout the proofs $c, c', \rho$ denote the (exactly same) constants underlying the assumed tail decay of $\wtil{\tail[]}$.

\paragraph{Proof for \bndcase part}
Choosing $r\downarrow c'$, we obtain that
\begin{talign}
	\mmd_{\kernel}(\P,\Q) \leq \inf_{r} \sparenth{ \cdim'  r^{\frac{d}{2}} \vareps + 2c\indicator(r\leq c')}
	\leq \vareps \cdot \cdim' (c')^{\frac d2}.
\end{talign}

\paragraph{Proof for \subgauss part}
Choosing $r = \sqrt{\frac1{c'} \log(1\vee \frac{8cc'}{d\cdim'\vareps})}$,  we obtain that
\begin{talign}
	\mmd_{\kernel}(\P,\Q) \leq \inf_{r} \bigparenth{ \cdim'  r^{\frac{d}{2}} \vareps + 2c e^{-c'r^2}} \leq \vareps \cdot \cdim' \bigbrackets{\bigparenth{\frac{\log(1\vee \frac{8cc'}{d\cdim'\vareps})}{c'} }^{\frac{d}{4}} + \frac{d}{4c'}}.
\end{talign}

\paragraph{Proof for \subexp part}
Choosing $r = \frac1{c'} \log(1\vee \frac{4cc'}{d\cdim'\vareps})$, we obtain that
\begin{talign}
	\mmd_{\kernel}(\P,\Q) \leq \inf_{r} \bigparenth{ \cdim' r^{\frac{d}{2}} \vareps + 2c e^{-c'r}} \leq \vareps \cdot \cdim' \bigbrackets{\bigparenth{\frac{\log(1\vee \frac{4cc'}{d\cdim'\vareps})}{c'} }^{\frac{d}{2}} + \frac{d}{2c'}}.
\end{talign}

\paragraph{Proof for \heavytail part}
Choosing $r = (\frac{4c\rho}{d\cdim'\vareps})^\frac{2}{d+2\rho}$, we obtain that
\begin{talign}
	\mmd_{\kernel}(\P,\Q) \leq \inf_{r} \bigparenth{\cdim' r^{\frac{d}{2}} \vareps + 2c r^{-\rho}} \leq (\vareps \cdot \cdim')^{\frac{2\rho}{d+2\rho}} \cdot (\frac{4c\rho}{d})^{\frac{d}{d+2\rho}} (1+\frac{2\rho}{d}).
\end{talign}

\newcommand{\vn}[1][n]{\braces{\invec[j]}_{j=1}^{#1}}
\newcommand{\sbonestepresultname}{Alternate representation of $\outvec[i]$}
\newcommand{\highprobsuccessresultname}{Self-balancing Hilbert walk success probability}
\section{Proof of \lowercase{\Cref{sbhw_properties}}: \sbhwpropname} %
\label{sec:proof_of_sbhw_properties}
We prove each property from \cref{sbhw_properties} one by one.
\subsection{Property~\ref{item:fun_subgauss}: Functional sub-Gaussianity}\label{sec:fun_subgauss}
We  prove the functional sub-Gaussianity claim~\cref{eq:subgaussian_bound_on_w}
 by induction on the iteration $i \in \{0,\dots,n\}$.
Our proof uses 
the following lemma proved in \cref{sub:proof_of_cref_lemma_sb_one_step}, which supplies a convenient decomposition for the self-balancing Hilbert walk iterates.
\begin{lemma}[\sbonestepresultname]
	\label{lemma:sb_one_step}
    For each $i\in[n]$, the iterate $\outvec[i]$ of the self-balancing Hilbert walk (\cref{algo:self_balancing_walk}) satisfies 
	\begin{talign}
	\label{eq:temprv_defn}
		\doth{\outvec[i], \genvec}
			=  \doth{\outvec[i-1], \genvec - \invec[i] \textfrac{\doth{\invec[i], \genvec}
			}{\cnew[i]}}
		+ \temprv
		[i]\doth{\invec[i], \genvec}
		\qtext{for all} \genvec \in \rkhs
	\end{talign}
	for the random variable $\temprv \defeq \indic{\abss{\wvprod[i]} \leq \cnew[i]}(\eta_i + \wvprod[i]/\cnew)$ which satisfies
	\begin{talign}
	\label{eq:temprv_constraints}
		\E[\temprv[i]\vert\outvec[i-1]] = 0,
		\quad
		\temprv[i] \in [-2, 2], 
		\qtext{and}
		\E[e^{t\vareps_i}\vert \outvec[i-1]] \leq e^{t^2/2} \qtext{for all} t \in \R.
	\end{talign}
\end{lemma}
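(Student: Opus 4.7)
The plan is to proceed by splitting on the two branches of the algorithm and, in each branch, unfold the update rule for $\outvec[i]$ via bilinearity of the Hilbert inner product to read off the correct definition of $\temprv[i]$.

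In the branch $|\wvprod[i]| > \cnew[i]$, the update is $\outvec[i] = \outvec[i-1] - \invec[i] \wvprod[i]/\cnew[i]$, so
\begin{talign}
\doth{\outvec[i], \genvec}
 = \doth{\outvec[i-1], \genvec} - \frac{\wvprod[i]}{\cnew[i]}\doth{\invec[i], \genvec}
 = \doth{\outvec[i-1], \genvec} - \frac{\doth{\outvec[i-1], \invec[i]}}{\cnew[i]}\doth{\invec[i], \genvec}
 = \doth{\outvec[i-1], \genvec - \invec[i]\textfrac{\doth{\invec[i],\genvec}}{\cnew[i]}},
\end{talign}
so \cref{eq:temprv_defn} holds with $\temprv[i] = 0$, trivially satisfying all of \cref{eq:temprv_constraints}. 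In the branch $|\wvprod[i]| \leq \cnew[i]$, the update is $\outvec[i] = \outvec[i-1] + \eta_i \invec[i]$, giving $\doth{\outvec[i], \genvec} = \doth{\outvec[i-1], \genvec} + \eta_i \doth{\invec[i], \genvec}$. Adding and subtracting $\frac{\wvprod[i]}{\cnew[i]}\doth{\invec[i], \genvec}$ and using $\wvprod[i] = \doth{\outvec[i-1],\invec[i]}$ recovers the claimed decomposition~\cref{eq:temprv_defn} with
\begin{talign}
\temprv[i] \defeq \eta_i + \wvprod[i]/\cnew[i].
\end{talign}

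It then remains to verify the three properties in \cref{eq:temprv_constraints} for this second branch. The mean-zero claim follows from the algorithm's definition of the distribution of $\eta_i$: conditionally on $\outvec[i-1]$ and $\vseq[i]$, we have $\E[\eta_i] = \half(1 - \wvprod[i]/\cnew[i]) - \half(1 + \wvprod[i]/\cnew[i]) = -\wvprod[i]/\cnew[i]$, so $\E[\temprv[i]\mid\outvec[i-1],\vseq[i]] = 0$. Since $\eta_i \in \{-1,+1\}$ and $|\wvprod[i]/\cnew[i]| \leq 1$ in this branch, $\temprv[i] \in [-2,2]$. Finally, conditional on $\outvec[i-1]$ and $\vseq[i]$, $\temprv[i]$ is a mean-zero random variable taking only two values, namely $\pm 1 + \wvprod[i]/\cnew[i]$, whose range has length exactly $2$; Hoeffding's lemma for bounded random variables then yields $\E[e^{t\temprv[i]}\mid\outvec[i-1],\vseq[i]] \leq e^{t^2 \cdot 2^2/8} = e^{t^2/2}$, as required.

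There is no real obstacle here: the lemma is essentially an algebraic identity together with a one-step Hoeffding bound. The only point that requires care is ensuring the sub-Gaussian constant is $1$ and not $2$; this is why one must exploit the fact that $\temprv[i]$ takes only two values spanning a length-$2$ interval (rather than applying Hoeffding on the worst-case bound $[-2,2]$, which would produce a constant of $2$). Obliviousness~\cref{eq:oblivious_implication} is used implicitly so that conditioning on $\vseq[i]$ fixes $(\invec[i],\cnew[i])$ without altering the distribution of $\eta_i$ supplied by the algorithm.
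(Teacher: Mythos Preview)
Your proof is correct and follows essentially the same approach as the paper: define $\temprv[i]$ branch-by-branch (the paper packages this into a single piecewise formula with an interval $[c_{\min},c_{\max}]$) and then invoke Hoeffding's lemma using the fact that the support of $\temprv[i]$ has length at most $2$. Your closing remark about obliviousness is unnecessary here---the lemma conditions on both $\outvec[i-1]$ and $\vseq[i]$, so the conditional distribution of $\eta_i$ is already fully specified by the algorithm---but this does no harm.
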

Now we proceed with our induction argument.

\paragraph*{Base case} For $i=1$, noting that  $\outvec[0]=0$, we have
  \begin{talign}
      \E[\exp\parenth{\doth{\outvec[1], \genvec}} ] \seq{\cref{eq:temprv_defn}} \E[\exp\parenth{\vareps_1 \doth{\invec[1], \genvec}}\vert]
      \sless{\cref{eq:temprv_constraints}}
       \exp\parenth{\half\doth{\invec[1], \genvec}^2} 
       \leq \exp\parenth{\half\hnorm{\invec[1]}^2\hnorm{\genvec}^2},
  \end{talign}
  where the last step follows from Cauchy-Schwarz, and thus $\outvec[1]$ is sub-Gaussian with parameter $\sgparam[1]=\hnorm{\invec[1]}$ as desired.

\paragraph*{Inductive step} 
Fix any $i \in [n]$ with $i \geq 2$ and assume that the functional sub-Gaussianity claim~\cref{eq:subgaussian_bound_on_w} holds
for $\outvec[i-1]$ with $\sgparam[i-1]$. We have
\begin{talign}
    \E[\exp\parenth{\doth{\outvec, \genvec}} ] &= 
    \E[\E[\exp\parenth{\doth{\outvec, \genvec}}\vert \outvec[i-1]] ] \\
    &\seq{\cref{eq:temprv_defn}}\E\brackets{
    \exp\parenth{\doth{\outvec[i-1], \genvec-\invec\frac{\doth{\invec, \genvec}}
	{\cnew[i]}}} \cdot
    \E[\exp\parenth{\vareps_i \doth{\invec, \genvec}}\vert \outvec[i-1]]} \\
    &\sless{\cref{eq:temprv_constraints}}
    \E\brackets{
    \exp\parenth{\doth{\outvec[i-1], \genvec-\invec\frac{\doth{\invec, \genvec}}
	{\cnew[i]}}} \cdot
    \exp\parenth{\half\doth{\invec, \genvec}^2} }\\
    &=
    \exp\parenth{\half\doth{\invec, \genvec}^2}  \cdot
    \E\brackets{
    \exp\parenth{\doth{\outvec[i-1], \genvec-\invec\frac{\doth{\invec, \genvec}}
	{\cnew[i]}}}}
	\\
	&\sless{(i)} 
    \exp\parenth{\half\doth{\invec, \genvec}^2+ \frac{\sgparam[i-1]^2}{2} \norm{\genvec-\invec\frac{
    \doth{\invec, \genvec}}{\cnew[i]}}_{\rkhs}^2},
    \label{eq:subgauss_proof_step}
\end{talign}
where step~(i) follows from 
 the induction hypothesis.
Simplifying the exponent in the display~\cref{eq:subgauss_proof_step} using Cauchy-Schwarz and the definition~\cref{eq:subgauss_const_def} of  $\sgparam$, we have
\begin{talign}
\half\doth{\invec, \genvec}^2 + 
\frac{\sgparam[i-1]^2}{2} \norm{\genvec-\invec\frac{\doth{\invec, \genvec}}
	{\cnew[i]}}_{\rkhs}^2 
&=
    \half\doth{\invec, \genvec}^2 + 
    \frac{\sgparam[i-1]^2}{2}\parenth{\norm{\genvec}_{\rkhs}^2
	+ \doth{\invec, \genvec}^2 \parenth{\frac{\norm{\invec}^2_{\rkhs}}{\cnew[i]^2}
	-\frac{2}{\cnew[i]}}} \\
&=
    \frac{\sgparam[i-1]^2}{2} \norm{\genvec}_{\rkhs}^2
    + \doth{\invec, \genvec}^2 \cdot 
    \parenth{\half+\frac{\sgparam[i-1]^2\norm{\invec}_{\rkhs}^2}{2\cnew[i]^2}
	-\frac{\sgparam[i-1]^2}{\cnew[i]}}\\
&\leq
    \frac{\sgparam[i-1]^2}{2} \norm{\genvec}_{\rkhs}^2
    + \doth{\invec, \genvec}^2 \cdot 
    \parenth{\half+\frac{\sgparam[i-1]^2\norm{\invec}_{\rkhs}^2}{2\cnew[i]^2}
	-\frac{\sgparam[i-1]^2}{\cnew[i]}}_+\\
&\leq
    \frac{\hnorm{\genvec}^2}{2}\left( \sgparam[i-1]^2 + \hnorm{\invec[i]}^2\Big(1 + \frac{\sgparam[i-1]^2}{\cnew[i]^2}(\hnorm{\invec[i]}^2 - 2\cnew[i])\Big)_+\right)\\
&=
    \frac{\sgparam[i]^2}{2}\norm{\genvec}_{\rkhs}^2.
\end{talign}

\subsection{Property~\ref{item:signed_sum}: Signed sum representation}
Since \cref{algo:self_balancing_walk} adds $\pm \invec$ to $\outvec[i-1]$ whenever 
$|\wvprod| = |\dotrkhs{\outvec[i-1], \invec[i]}{\rkhs}| 
\leq \cnew$, by the union bound, it suffices to lower bound the probability of this event by $1-\delta_i$ for each $i$.
The following lemma 
establishes this bound using the functional sub-Gaussianity \cref{eq:subgaussian_bound_on_w} of each $\outvec[i-1]$.
\begin{lemma}[\highprobsuccessresultname]
	\label{lemma:high_prob_event}
	The self-balancing Hilbert walk (\cref{algo:self_balancing_walk}) with
	threshold 
    $\cnew[i]
        \geq \sgparam[i-1]\hnorm{\invec[i]}\sqrt{2\log(2/\delta_i)}$
    for  $\delta_i \in (0,1]$ 
    satisfies 
	\begin{talign}
	\label{eq:prob_event_algo}
		\Prob(\esuccess[i]) \geq 1-\delta_i
		\qtext{for}
	\esuccess[i] = \braces{\abss{\dotrkhs{\outvec[i-1], \invec[i]}{\rkhs}}  \leq \cnew}.%
	\end{talign}
\end{lemma}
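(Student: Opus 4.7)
\textbf{Proof plan for \cref{lemma:high_prob_event}.} My plan is to reduce the RKHS-valued concentration statement to a one-dimensional sub-Gaussian tail bound by ``testing'' the functional sub-Gaussianity of $\outvec[i-1]$ (Property~\ref{item:fun_subgauss}) against the direction $\genvec = \invec[i]$.

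First, I would verify that, conditional on $\vseq[i]$, the iterate $\outvec[i-1]$ remains $\sgparam[i-1]$-sub-Gaussian in the functional sense of \cref{eq:subgaussian_bound_on_w}. This is where the obliviousness assumption on $\vseq[n]$ comes in: the distribution of $\outvec[i-1]$ given $\vseq[i-1]$ coincides with its distribution given $\vseq[i]$ (and indeed given the full $\vseq[n]$), because the extra entries $(\invec[j],\cnew[j])_{j\geq i}$ are independent of the algorithm's randomness through step $i-1$. Hence Property~\ref{item:fun_subgauss} upgrades to
\begin{talign}
\E\!\left[\exp\!\left(\doth{\outvec[i-1], \genvec}\right) \,\big|\, \vseq[n]\right] \leq \exp\!\left(\tfrac{1}{2}\sgparam[i-1]^2 \hnorm{\genvec}^2\right) \qquad \text{for all } \genvec \in \rkhs.
\end{talign}

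Second, since $\invec[i]$ is deterministic given $\vseq[n]$, I would plug in $\genvec = t\invec[i]$ for arbitrary $t\in\reals$ to conclude that the scalar $\wvprod[i] = \doth{\outvec[i-1],\invec[i]}$ is mean-zero and $\sgparam[i-1]\hnorm{\invec[i]}$-sub-Gaussian conditional on $\vseq[n]$, and then apply the textbook Chernoff/Markov bound:
\begin{talign}
\P\!\left(|\wvprod[i]| \geq u \,\big|\, \vseq[n]\right) \leq 2\exp\!\left(-\tfrac{u^2}{2\sgparam[i-1]^2\hnorm{\invec[i]}^2}\right) \qquad \text{for all } u \geq 0.
\end{talign}

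Third, I would set $u = \cnew[i]$ and invoke the hypothesis $\cnew[i] \geq \sgparam[i-1]\hnorm{\invec[i]}\sqrt{2\log(2/\delta_i)}$, which makes the right-hand side at most $\delta_i$. This yields $\P(\esuccess[i] \mid \vseq[n]) \geq 1-\delta_i$, as required. Two harmless edge cases should be noted: if $\hnorm{\invec[i]} = 0$ then $\wvprod[i] = 0$ and the event $\esuccess[i]$ holds trivially, and if $\sgparam[i-1] = 0$ (which happens, e.g., when $i=1$) the sub-Gaussian MGF collapses to the indicator that $\wvprod[i] = 0$, so the bound again holds vacuously.

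The main (and only subtle) obstacle is the first step: carefully using obliviousness to argue that the functional sub-Gaussianity proved relative to $\vseq[i-1]$ (or $\vseq[i]$) persists when one conditions on the full oblivious sequence $\vseq[n]$, so that the scalar Chernoff step can be applied with $\invec[i]$ treated as a fixed direction. Once this is in place, the remainder is a direct specialization of Property~\ref{item:fun_subgauss} plus a standard sub-Gaussian tail inequality.
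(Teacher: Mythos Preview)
Your proposal is correct and follows essentially the same approach as the paper: use obliviousness to pass from conditioning on $\vseq[i-1]$ (or $\vseq[i]$) to conditioning on $\vseq[n]$, specialize the functional sub-Gaussianity of Property~\ref{item:fun_subgauss} to the direction $\invec[i]$, and apply the sub-Gaussian Hoeffding tail bound with $u=\cnew[i]$. The paper compresses this into three sentences and does not separately discuss the edge cases $\hnorm{\invec[i]}=0$ or $\sgparam[i-1]=0$, but your treatment of those is a harmless elaboration rather than a departure.
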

\begin{proof}
Instantiate the notation of \cref{sbhw_properties}. 
The sub-Gaussian Hoeffding inequality \citep[Prop.~2.5]{wainwright2019high}, the functional  sub-Gaussianity of $\outvec[i-1]$ \cref{eq:subgaussian_bound_on_w}, and the choice of $\cnew$ imply that
\begin{talign}
\Pr(\esuccess[i]^c)
    =
\Pr(\abss{\dotrkhs{\outvec[i-1],\invec[i]}{\rkhs}} > \cnew )
    &\leq 2\exp(-\cnew^2/(2\sgparam[i-1]^2\hnorm{\invec[i]}^2)) 
    \leq 2\exp(-\log(2/\delta_i)) 
    = \delta_i.
\end{talign}
\end{proof}

\subsection{Property~\ref{item:exact_two_thin}: Exact halving via symmetrization}
Whenever the signed sum representation \cref{eq:signed_sum} holds, we have
\begin{talign}
    \outvec[n] 
        = \sum_{i = 1}^{n} \eta_i f_i 
        = \sum_{i = 1}^{n} (\eta_i g_{2i-1} - \eta_i g_{2i})
        = \sum_{i = 1}^{2n} g_{i} - 2 \sum_{i \in \indices} g_i
\end{talign}
where the last step follows from the definition of $\indices$.

\subsection{Property~\ref{item:pointwise_subgauss}: Pointwise sub-Gaussianity in RKHS}
The reproducing property of the kernel $\kernel$ and the established functional sub-Gaussianity \cref{eq:subgaussian_bound_on_w} yield
\begin{talign}
	\E[\exp(\outvec[i](x)) ]
	    = \E[\exp(\doth{\outvec[i], \kernel(\x,\cdot)}) ]
			\leq
			\exp\parenth{\frac{\sigma_{i}^2\hnorm{\kernel(\x,\cdot)}^2}{2}}
			=
			\exp\parenth{\frac{\sigma_{i}^2\kernel(\x,\x)}{2}},
			\quad\forall\x \in \X.
\end{talign}
\subsection{Property~\ref{item:subgauss_bound}: Sub-Gaussian constant bound}
We establish the bound  \cref{eq:sbhw_subgauss_bound}
for all $i \in \{0,\dots,n\}$ by induction on the iteration $i$.

\paragraph*{Base case} The claim \cref{eq:sbhw_subgauss_bound} holds for the base case, $i=0$, since $\sgparam[0]=0$.

\paragraph*{Inductive step} 
Fix any $i \in [n]$ and assume that the claim~\cref{eq:sbhw_subgauss_bound} holds
for $\sgparam[i-1]$. 

If either $\hnorm{\invec[i]} = 0$ or $\sgparam[i-1]^2 \geq \frac{\cnew^2}{2\cnew-\hnorm{\invec[i]}^2}$,
then $\sgparam[i]^2 = \sgparam[i-1]^2$ by the definition \cref{eq:subgauss_const_def} of $\sgparam$ and the assumption that $\frac{\hnorm{\invec[i]}^2}{2} \leq \cnew$, completing the inductive step.

If, alternatively, $\sgparam[i-1]^2 < \frac{\cnew^2}{2\cnew-\hnorm{\invec[i]}^2}$ and $\hnorm{\invec[i]} > 0$, then our assumptions $\frac{\hnorm{\invec[i]}^2}{1+q} \leq \cnew \leq \frac{\hnorm{\invec[i]}^2}{1-q}$ 
imply that $(\frac{\hnorm{\invec[i]}^2}{\cnew} - 1)^2 \leq q^2$.
Hence, by the definition \cref{eq:subgauss_const_def} of $\sgparam$
and the inductive hypothesis, 
\begin{talign}
\sgparam[i]^2
    &= 
        \sgparam[i-1]^2 + \hnorm{\invec[i]}^2\Big(1 + \frac{\sgparam[i-1]^2}{\cnew[i]^2}(\hnorm{\invec[i]}^2 - 2\cnew[i])\Big) \\
    &= 
        \hnorm{\invec[i]}^2
    + \sgparam[i-1]^2
    (\frac{\hnorm{\invec[i]}^2}{\cnew} - 1)^2 \\
    &\leq 
        (1-q^2)\frac{\hnorm{\invec[i]}^2}{1-q^2}
    + q^2\frac{\max_{j\in[i-1]} \hnorm{\invec[j]}^2}{1-q^2}
    \leq
        \frac{\max_{j\in[i]} \hnorm{\invec[j]}^2}{1-q^2},
\end{talign}
completing the inductive step.

\subsection{Property~\ref{item:adaptive_thresh}: Adaptive thresholding}
\newcommand{\cstar}{c^{\star}}
Define $\cstar_1 = \max(\cstar,1)$ and
let 
$q = \frac{(\cstar_1)^2 - 1}{(\cstar_1)^2 + 1} \in [0,1)$
so that
\begin{talign}
\frac{1}{1-q} 
    = \frac{1+(\cstar_1)^2}{2}
\qtext{and}
\frac{1}{1-q^2} 
    = \frac{(\cstar_1+1/\cstar_1)^2}{4}
    \leq \frac{(\cstar+1/\cstar)^2}{4},
\end{talign}
since $1 = \argmin_{c \geq 0} c + 1/c$.
By assumption, 
$\cnew 
    \geq \hnorm{\invec[i]}^2 
    \geq \frac{\hnorm{\invec[i]}^2}{1+q}$
for all $i\in[n]$.

Now suppose that  	
$\sgparam[i-1]^2 
    < \frac{\cnew^2}{2\cnew-\hnorm{\invec[i]}^2}$
and
$\hnorm{\invec[i]} > 0$.
If $\cnew \leq \hnorm{\invec[i]}^2$, then $\cnew \leq \frac{\hnorm{\invec[i]}^2}{1-q}$. 
If, alternatively, $\cnew \leq c_i \sgparam[i-1] \hnorm{\invec[i]}$, then 
\begin{talign}
\cnew < \half\hnorm{\invec[i]}^2(1 + c_i^2)
    \leq \frac{\hnorm{\invec[i]}^2}{1-q}.
\end{talign}
The conclusion now follows from the sub-Gaussian constant bound \cref{eq:sbhw_subgauss_bound}.

\subsection{Proof of \lowercase{\Cref{lemma:sb_one_step}}: \sbonestepresultname} %
\label{sub:proof_of_cref_lemma_sb_one_step}
\cref{algo:self_balancing_walk} and our definition $\temprv \defeq \indic{\abss{\wvprod[i]} \leq \cnew[i]}(\eta_i + \wvprod[i]/\cnew)$ give
\begin{talign}
\outvec = \outvec[i-1] - \invec\, \wvprod[i]/\cnew + \indic{\abss{\wvprod[i]} \leq \cnew[i]}( \invec \,\wvprod[i]/\cnew + \eta_i \invec)
    = \outvec[i-1] - \invec\,\frac{\dotrkhs{\invec, \outvec[i-1]}{\rkhs}}{\cnew} + \temprv\invec.
\end{talign}
Taking an inner product with $u\in\rkhs$ now yields the equality \cref{eq:temprv_defn}.
By construction, $\temprv[i]  \in [c_{min},c_{max}] \subseteq [-2,2]$ for
\begin{talign}
c_{min} 
    = \max(-2, \min(0, -1+\wvprod[i]/\cnew[i]))
\qtext{and}
c_{max}
    = \min(2, \max(0, 1+\wvprod[i]/\cnew[i]))
\end{talign}
by construction.
Moreover, 
\begin{talign}
    \E[\temprv[i]\mid\outvec[i-1],\abss{\wvprod[i]} > \cnew[i]] 
        &= 0 \\
	\E[\temprv[i]\mid\outvec[i-1],\abss{\wvprod[i]} \leq \cnew[i]] 
	    &= 
    \parenth{1+\frac{\wvprod[i]}{ \cnew[i]}} \cdot \frac{1}{2}\parenth{1-\frac{\wvprod[i]}{ \cnew[i]}} + 
    \parenth{-1+\frac{\wvprod[i]}{ \cnew[i]}} \cdot \frac{1}{2}\parenth{1+
    \frac{\wvprod[i]}{ \cnew[i]}} = 0,
\end{talign}
so that $\E[\temprv[i]\mid\outvec[i-1]] = 0$ as claimed.
The conditional sub-Gaussianity claim 
\begin{talign}
\E[e^{t\vareps_i}\mid \outvec[i-1]] \leq e^{t^2/2} \qtext{for all} t \in \R,
\end{talign}
now follows from Hoeffding's lemma~\citep[(4.16)]{hoeffding1963probability} since $\temprv$ is bounded with 
$c_{max} - c_{min} \leq 2$
and mean-zero conditional on $\outvec[i-1]$.

\newcommand{\xseq}[1][n]{\mc X^{#1}}
\newcommand{\xseqnum}{\axi[1], \axi[2], \ldots}
\newcommand{\xseqnumn}[1][n]{\axi[1], \axi[2], \ldots, \axi[n]}
\newcommand{\dothrt}[1]{\angles{#1}_{\ksqrt}}

\newcommand{\orlicz}{\Psi_2}
\newcommand{\ometric}{\rho}
\newcommand{\diam}{D}
\newcommand{\tempone}{\alpha}
\newcommand{\temptwo}{\beta}
\newcommand{\einf}{\event[\infty]}
\newcommand{\einftwo}{\widetilde\eventnotag_{\infty}}
\newcommand{\esup}{\event[\textup{sup}]}
\newcommand{\ehalf}{\event[\textup{half}]}

\newcommand{\einfj}[1][\m]{\event[\infty]^{(#1)}}
\newcommand{\einfjtwo}[1][\m]{\widetilde\eventnotag_{\infty}^{(#1)}}
\newcommand{\esupj}[1][\m]{\event[\textup{sup}]^{(#1)}}
\newcommand{\ehalfj}[1][j]{\event[\textup{half}]^{(#1)}}
\newcommand{\sumpsi}[1][\m]{\mc{W}_{#1}}

\newcommand{\boundonsigmaresultname}{Bound on $\sigma_{n/2}$}
\newcommand{\outputbasicboundresultname}{A basic bound on $\sinfnorm{\outvec[n/2]}$}
\newcommand{\orliczboundresultname}{A high probability bound on supremum of $\outvec[n/2]$ differences}
\newcommand{\orliczprocessresultname}{$\outvec[n/2]$ is an Orlicz $\orlicz$-process}
\newcommand{\boundsoncoveringandjresultname}{Bounds on $\mc N_{\mbb{T},\ometric}$ and $\mc J_{\mbb{T},\ometric}$}
\newcommand{\multioutputbasicboundresultname}{A basic bound on $\sinfnorm{\sumpsi}$}
\newcommand{\multiorliczboundresultname}{A high probability bound on supremum of $\sumpsi$ differences}
\newcommand{\directcoverboundresultname}{A direct covering bound on $\sinfnorm{\outvec[n/2]}$}
\section{Proof of \lowercase{\Cref{kernel_halving_results}}:  \kernelhalvingresultsname} %
\label{sec:proof_of_kernel_halving_results}
We start by showing that after $m$ rounds of kernel halving the output size is $\nout = \floor{\frac{n}{2^\m}}$ (also see \cref{footnote:outputsize}). We prove this by induction. The base case of $m=1$ can be directly verified. Let $n_j\defeq |\coreset[j]|$ denote the output size after $j$ rounds of kernel halving and assume that the hypothesis is true for round $j$, i.e., $n_j = \floor{\frac{n}{2^j}}$. Then for round $j+1$, we have $n_{j+1} = \floor{n_j/2} = \floor{\frac{\floor{n/2^j}}{2}} \seq{(i)}  \floor{\frac{n}{2^{j+1}}}$, where step~(i) follows from the fact \citep[Eqn.~(3.11)]{graham94} that $\floor{\frac{\floor{\l}}{2}}=\floor{\frac{\l}{2}}$ for any integer $\l$.
Our desired claim follows.

Next, define $\nin=2^m\nout=2^m\floor{\frac n{2^m}}$. 
In \cref{sub:proof_one_round_kernel_halving,sub:proof_multiple_rounds_kernel_halving} we will show that the respective claims~\cref{eq:kernel_halving_bound_finite,eq:kernel_halving_bound_m_rounds} 
hold whenever $n=\nin$, that is, whenever $2^m$ divides $n$ evenly.
Now suppose that $n\neq\nin$.
Since the output of $m$ rounds of kernel halving depends only on the first $\nin$ points, the evenly divisible case of \cref{sub:proof_one_round_kernel_halving} implies that, 
for part \cref{item:kernel_halving_one_round}, 
    \begin{talign}
        \label{eq:kernel_halving_bound_finite_nin}
            \sinfnorm{\P_{\nin}\gkernel-{\Q}_{\mrm{KH}}^{(1)}\gkernel}
            \leq
             \frac{\sinfnorm{\gkernel}}{\nout} \cdot \err_{\gkernel}(\nin,\! 1, \!d, \!\delta^\star,  \!\delta', \!\rksmingen{\cset_{\nin}, \gkernel, \nin})
             \stext{for} \nin = 2\floor{\frac{n}{2}} = 2 \nout,
    \end{talign}
    with probability at least $1\!-\!\delta'\!-\!\sum_{i=1}^{\nin/2} \delta_i$, and \cref{sub:proof_one_round_kernel_halving} implies that, for  part  \cref{item:kernel_halving_multiple_rounds}, 
    \begin{talign}
        \sinfnorm{\P_{\nin}\gkernel\!-\!\Q^{(\m)}_{\mrm{KH}}\gkernel}
        &\leq \frac{\sinfnorm{\gkernel}}{\nout}\cdot\err_{\gkernel}(\nin, \!m, \!d, \!\delta^\star,\! \delta', \!\rksmingen{\cset_{\nin}, \gkernel, \nin})
        \stext{for}
        \nin \!=\! 2^m\floor{\frac{n}{2^m}} \!=\! 2^m \nout,
        \label{eq:kernel_halving_bound_m_rounds_nin}
    \end{talign}
    with probability at least $1-\delta'\!-\!\sum_{j=1}^{\m} \frac{2^{j-1}}{m} \sum_{i=1}^{\nin/2^j}\delta_i $. %
    Next, to recover the  bounds~\cref{eq:kernel_halving_bound_finite,eq:kernel_halving_bound_m_rounds}, we use the following deterministic inequalities:
    \begin{talign}
    \infnorm{\P_n\gkernel - \P_{\nin}\gkernel} &\leq \frac{2(n-\nin)\infnorm{\gkernel}}{n} \leq \frac{2(2^m-1)\infnorm{\gkernel}}{n} \leq \frac{2\infnorm{\gkernel}}{\nout},  
    \label{eq:perr_nin}
    \\
    \rminnew[\cset_{\nin}, \gkernel, \nin]&\leq\rminnew[\inputcoreset, \gkernel, n],
    \qtext{and} \label{eq:r_nin}\\
    \err_{\gkernel}(\nin, \!m, \!d, \!\delta, \!\delta', \!\rminnew[\cset_{\nin}, \gkernel, \nin]) \!+\! 2 
    &\sless{(i)} 
    \err_{\gkernel}(\nin, \!m, d,\! \delta,\!\delta', \!\rminnew[\inputcoreset, \gkernel, n]) \!+\! 2 \\
    &\sless{(ii)} \err_{\gkernel}(n, \!m, \!d,\! \delta,\!\delta', \!\rminnew[\inputcoreset, \gkernel, n]),
    \label{eq:err_err_nin}
    \end{talign}
    where \cref{eq:r_nin} follows directly from the definition~\cref{eq:rmin_P}, and step~(i) follows from \cref{eq:r_nin} and the fact that $\err_{\gkernel}(n,\! m,\! d,\! \delta,\! \delta',\! R)$~\cref{eq:err_simple_defn} is non-decreasing in $R$, and step~(ii) follows since $\err_{\gkernel}(n,\! m,\! d,\! \delta,\! \delta',\! R)$ is non-decreasing in $n$ and $\nin\neq n$.
    For part~\cref{item:kernel_halving_one_round} we conclude, by \cref{eq:kernel_halving_bound_finite_nin},
    \begin{talign}
        \sinfnorm{\P_{n}\gkernel\!-\!\Q^{(1)}_{\mrm{KH}}\gkernel}
        &\leq \infnorm{\P_n\gkernel - \P_{\nin}\gkernel}+ \sinfnorm{\P_{\nin}\gkernel\!-\!\Q^{(1)}_{\mrm{KH}}\gkernel} \\
        &\sless{\cref{eq:perr_nin}} \frac{2\infnorm{\gkernel}}{\nout} +  \frac{\sinfnorm{\gkernel}}{\nout} \cdot \err_{\gkernel}(\nin, 1, d, \delta^\star,  \delta', \rksmingen{\cset_{\nin}, \gkernel, \nin})  \\ 
        &\sless{\cref{eq:err_err_nin}} \frac{\sinfnorm{\gkernel}}{\nout} \cdot \err_{\gkernel}(n, 1, d, \delta, \delta', \rminnew[\inputcoreset, \gkernel, n]).
    \end{talign}
    For part~\cref{item:kernel_halving_multiple_rounds}, we conclude, by   \cref{eq:kernel_halving_bound_m_rounds_nin},
    \begin{talign}
        \sinfnorm{\P_{n}\gkernel\!-\!\Q^{(\m)}_{\mrm{KH}}\gkernel}
        &\leq \infnorm{\P_n\gkernel - \P_{\nin}\gkernel}+ \sinfnorm{\P_{\nin}\gkernel\!-\!\Q^{(\m)}_{\mrm{KH}}\gkernel} \\
        &\sless{\cref{eq:perr_nin}} \frac{2\infnorm{\gkernel}}{\nout} + \frac{\sinfnorm{\gkernel}}{\nout} \cdot \err_{\gkernel}(\nin, 1, d, \delta^\star,  \delta', \rksmingen{\cset_{\nin}, \gkernel, \nin})  \\ 
        &\sless{\cref{eq:err_err_nin}} \frac{\sinfnorm{\gkernel}}{\nout} \cdot \err_{\gkernel}(n, m, d, \delta,\delta', \rminnew[\inputcoreset, \gkernel, n]).
    \end{talign}

\subsection{Proof of part~\lowercase{\Cref{item:kernel_halving_one_round}}: \kernelhalvingoneroundname}
\label{sub:proof_one_round_kernel_halving}

As noted earlier, we prove this part assuming $n$ is even.
Consider a self-balancing Hilbert walk (\cref{algo:self_balancing_walk}) with inputs $(\invec)_{i=1}^{n/2}$ and $(\cnew[i])_{i=1}^{n/2}$, where $\invec = \gkernel(\axi[2i-1], \cdot) - \gkernel(\axi[2i], \cdot)$ and $\cnew[i] = \max(\knorm{\fun_i}\sgparam[i-1] \sqrt{2\log(2/\delta_i)}, \knorm{\fun_i}^2)$, and $\sgparam[i]$ was defined in \cref{eq:subgauss_const_def}. 
Property~\ref{item:exact_two_thin} of \cref{sbhw_properties} implies that for a self-balancing walk with the choices summarized above, the event
$\ehalf = \sbraces{\abss{\wvprod[i]} \leq \cnew[i] \stext{for all} i \in [{n/2}]}$ satisfies
\begin{talign}
\label{eq:ehalf}
    \Pr(\ehalf) \geq 1-\sum_{i=1}^{{n/2}} \delta_i.
\end{talign}
Consider kernel halving coupled with the above instantiation of self-balancing Hilbert walk.
Due to the equivalence with kernel halving on the event $\ehalf$, we conclude that the output $\outvec[n/2]$ of self-balancing Hilbert walk matches with that of the kernel halving, and satisfies
\begin{talign}
\label{outvec_as_kernel_diff}
\frac{1}{n}\outvec[n/2] 
    = \frac{1}{n}\sum_{i=1}^n \gkernel(x_i,\cdot)
    - \frac{1}{{n/2}}\sum_{x\in\coreset[1]}\gkernel(x,\cdot)
    = \P_n\gkernel-{\Q}_{\mrm{KH}}^{(1)}\gkernel,
\end{talign}
on the event $\ehalf$.
Furthermore, on the event $\ehalf$, we can also write that the kernel halving coreset satisfies $\coreset[1] = (x_i)_{i\in\indices}$ for $\indices = \{2i-\frac{\eta_i-1}{2} : i\in [n/2]\}$ and $\eta_i$ defined in \cref{algo:self_balancing_walk}.
Finally, applying property~\ref{item:adaptive_thresh} of \cref{sbhw_properties} for $\sigma_{n/2}$ with $c_i = \sqrt{2\log(2/\delta_i)}$, we obtain that
\begin{talign}
    	\label{eq:sigma_bound}
    \sigma^2_{n/2} \leq \frac{\max_{i\leq n/2} \knorm{\invec[i]}^2}{4}\cdot  2\log(\frac{2}{\delta^\star}) (1+\frac{1}{2\log(2/\delta^\star)})^2  \sless{(i)}4\infnorm{\gkernel} \log(\frac{2}{\delta^\star})
\end{talign}
where in step~(i), we use the fact that $\fun_i = \gkernel(\axi[2i-1], \cdot) - \gkernel(\axi[2i], \cdot)$ satisfies
\begin{talign}
\label{eq:bi_kernel_havling}
\knorm{\invec[i]}^2 
    = \gkernel(x_{2i-1},x_{2i-1}) + \gkernel(x_{2i},x_{2i}) - 2\gkernel(x_{2i-1},x_{2i})
    \leq 4\infnorm{\gkernel}.
\end{talign}

Next, we split the proof in two parts: \textbf{Case (I)} When $\rminnew[\inputcoreset, \gkernel, n] < \rminpn$, and \textbf{Case (II)} when $\rminnew[\inputcoreset, \gkernel, n] = \rminpn$, where $\rminnew[\inputcoreset,\gkernel, n]$ was defined in \cref{eq:rmin_P}.  We prove the results for these two cases in \cref{ssub:when_rmin_geq_n,ssub:when_rmin_leq_n} respectively.
In the sequel, we make use of the following tail quantity of the kernel:
\begin{talign}
\label{eq:ktailbar}
    \ktail[\gkernel](R')\defeq\sup\braces{\abss{\gkernel(\x,\y)} : \twonorm{\x-\y}\geq R'}.
\end{talign}

\newcommand{\ecover}{\event[\textup{cover}]}
\subsubsection{Proof for case (I): When $\rminnew[\inputcoreset, \gkernel, n] < \rminpn$}
\label{ssub:when_rmin_geq_n}
By definition~\cref{eq:rmin_P}, for this case, 
\begin{talign}
\label{eq:rmin_new_case_a}
    \rminnew[\inputcoreset, \gkernel, n] = 
    n^{1+\frac1d}\rmin_{\gkernel, n}+ n^{\frac1d} \frac{\sinfnorm{\gkernel}}{\klip[\gkernel]}.
\end{talign}
On the event $\ehalf$, the following lemma provides a high probability bound on  $\sinfnorm{\outvec[n/2]}$ in terms of the kernel parameters, the sub-Gaussianity parameter $\sgparam[n/2]$, and the size of the \emph{cover} \citep[Def.~5.1]{wainwright2019high} of a neighborhood of the input points $(x_i)_{i=1}^n$. 
\begin{lemma}[\directcoverboundresultname]
    \label{lemma:direct_cover_bound}
    Fix $R \geq r > 0$ and $\delta'>0$, and suppose $\pcover(r, R)$ is a set of minimum cardinality satisfying
    \begin{talign}
    \label{eq:sample_cover_set}
    \bigcup_{i=1}^n \ball(x_i, R)
        \subseteq 
        \bigcup_{z\in\pcover(r,R)} \ball(z,r).
    \end{talign}
    If $\gkernel$ satisfies \cref{asmp:bounded_measurable,assum:lipkernel}, then, conditional on the event $\ehalf$~\cref{outvec_as_kernel_diff}, the event 
    \begin{talign}
    \label{eq:einf_direct_cover}
        \einf
        \defeq
        \braces{\sinfnorm{\outvec[n/2]}
        \leq
        \max\parenth{
        n \ktail[\gkernel](R), \ \ 
        n \klip[\gkernel]r +
        \sgparam[n/2]\sqrt{2\infnorm{\gkernel} \log(2|\pcover(r, R)|/\delta')}
        }},
    \end{talign}
    occurs with probability at least $1\!-\!\delta'$, i.e., $\Pr(\einf \vert \ehalf) \geq 1\!-\!\delta'$, where $\ktail[\gkernel]$ was defined in \cref{eq:ktailbar}.
\end{lemma}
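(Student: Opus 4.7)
The plan is to split the supremum $\sinfnorm{\outvec[n/2]} = \sup_{x \in \Rd} |\outvec[n/2](x)|$ into two regions: (i) points $x$ far from all sample points, i.e., $x \notin \bigcup_{i=1}^n \ball(x_i, R)$; and (ii) points $x$ inside the union, which we will further control by a chaining/covering argument over $\pcover(r,R)$.

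For region (i), I would use the representation $\outvec[n/2] = \sum_{i=1}^{n/2} \eta_i (\kernel(x_{2i-1},\cdot) - \kernel(x_{2i},\cdot))$ coming from the self-balancing Hilbert walk (valid on the event $\ehalf$ where every threshold check succeeds). For any $x$ with $\ltwonorm{x - x_i} \geq R$ for every $i \in [n]$, each term $|\kernel(x_i, x)| \leq \ktail[\kernel](R)$ by definition \cref{eq:ktailbar}, so the triangle inequality gives the deterministic bound $|\outvec[n/2](x)| \leq n\,\ktail[\kernel](R)$, which is the first argument of the $\max$.

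For region (ii), I would combine Lipschitz continuity with a union bound over $\pcover(r,R)$. By the triangle inequality and the $\klip[\kernel]$-Lipschitz property of each $\kernel(x_i, \cdot)$, the function $\outvec[n/2]$ is $(n\klip[\kernel])$-Lipschitz (each of the $n/2$ difference terms $\invec[i]$ is $2\klip[\kernel]$-Lipschitz). Hence for any $x \in \bigcup_i \ball(x_i, R)$, choosing a nearest $z \in \pcover(r,R)$ with $\ltwonorm{x-z} \leq r$ yields $|\outvec[n/2](x)| \leq |\outvec[n/2](z)| + n\klip[\kernel] r$. It remains to control $\max_{z \in \pcover(r,R)} |\outvec[n/2](z)|$. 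Here I would invoke property~\ref{item:pointwise_subgauss} of \cref{sbhw_properties}: conditional on $\vseq[n/2]$, the scalar $\outvec[n/2](z)$ is $\sgparam[n/2]\sqrt{\kernel(z,z)} \leq \sgparam[n/2]\sqrt{\infnorm{\kernel}}$ sub-Gaussian. The standard sub-Gaussian tail bound combined with a union bound over the at most $|\pcover(r,R)|$ cover centers then yields, with probability at least $1-\delta'$ given $\vseq[n/2]$,
\begin{talign}
\max_{z \in \pcover(r,R)} |\outvec[n/2](z)|
    \leq \sgparam[n/2]\sqrt{2\infnorm{\kernel}\log(2|\pcover(r,R)|/\delta')}.
\end{talign}

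Combining the two regions via the $\max$ gives the event $\einf$ in \cref{eq:einf_direct_cover}. The main obstacle is merely bookkeeping: making sure that the pointwise sub-Gaussianity from property~\ref{item:pointwise_subgauss} is applied conditionally on $\vseq[n/2]$ (so that the obliviousness assumption on the inputs is respected), and that the Lipschitz constant of $\outvec[n/2]$ is correctly computed from the $n/2$ difference functions $\invec[i]$. Note also that this argument does not use the event $\ehalf$ for the sub-Gaussian bound (which holds unconditionally from \cref{sbhw_properties}), only for the Lipschitz and tail bounds in regions (i) and (ii).
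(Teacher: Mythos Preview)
Your proposal is correct and matches the paper's proof essentially step for step: the same two-region split (far from all $x_i$ versus inside the union of balls), the same use of the signed-sum representation on $\ehalf$ for the deterministic tail and Lipschitz bounds, and the same union bound over $\pcover(r,R)$ via pointwise sub-Gaussianity (property~\ref{item:pointwise_subgauss}) for the cover centers. The only cosmetic difference is that the paper writes $\outvec[n/2]=\sum_{i=1}^n \eta_i \kernel(x_i,\cdot)$ with $n$ signed terms rather than $n/2$ difference terms, but this yields the identical $n\klip[\kernel]$ Lipschitz constant you obtained.
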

\cref{lemma:direct_cover_bound} succeeds in controlling $\outvec[n/2](x)$ for all $x\in\reals^d$ since either $x$ lies far from every input point $x_i$ so that each $\gkernel(x_i, x)$ in the expansion \cref{outvec_as_kernel_diff} is small or $x$ lies near some $x_i$, in which case $\outvec[n/2](x)$ is well approximated by $\outvec[n/2](z)$ for $z \in  \pcover(r, R)$.
The proof inspired by the covering argument of \citet[Lem.~2.1]{phillips2020near} and using the pointwise sub-Gaussianity property of \cref{sbhw_properties} over the finite cover $\pcover$ can be found in \cref{ssub:proof_of_lemma:direct_cover_bound}.

Now we put together the pieces to prove \cref{kernel_halving_results}.

First, \citep[Lem.~5.7]{wainwright2019high} implies that 
$|\pcover[1](r, R)| \leq (1+2R/r)^d$ (i.e., any ball of radius $R$ in $\Rd$ can be covered by $(1+2R/r)^d$ balls of radius $r$). Thus for an arbitrary $R$, we can conclude that
\begin{talign}
\label{eq:cover_size}
    |\pcover[n](r, R)| \leq n(1+2R/r)^d = (n^{1/d} + 2n^{1/d}R/r)^d.
\end{talign}

Second we fix $R$ and $r$ such that $n \ktail[\gkernel](R)=\infnorm{\gkernel}$ and $n \klip[\gkernel]r=\sinfnorm{\gkernel}$, so that $\frac{R}{r}\gets n\rmin_{\gkernel, n}\frac{\klip[\gkernel]}{\sinfnorm{\gkernel}}$ (c.f. \cref{eq:ktailbar,eq:rmin_k}). Substituting these choices of radii in the bound~\cref{eq:einf_direct_cover} of \cref{lemma:direct_cover_bound}, we find that conditional to $\ehalf\cap \einf$, we have
\begin{talign}
\label{eq:err_equal_to_max_defn}
\sinfnorm{\outvec[n/2]} &\leq
 \max\parenth{
        n \ktail[\gkernel](R), \ \ 
        n \klip[\gkernel]r +
        \sgparam[n/2]\sqrt{2\infnorm{\gkernel} \log(2|\pcover(r, R)|/\delta')}} \\
    &\sless{\cref{eq:sigma_bound}} \sinfnorm{\gkernel} + 
    2\sqrt{2} \sinfnorm{\gkernel} \sqrt{ \log(\frac{2}{\delta^\star}) \brackets{\log(\frac{2}{\delta'})
    + d \log\parenth{n^{\frac1d}+\frac{2\klip[\gkernel]}{\sinfnorm{\gkernel}} \cdot n^{1+\frac1d}\rmin_{\gkernel, n} } } } 
    \label{eq:direct_cover_argument_1}\\
    &\sless{\cref{eq:rmin_new_case_a}} \sinfnorm{\gkernel} + 
    2\sqrt{2} \sinfnorm{\gkernel} \sqrt{ \log(\frac{2}{\delta^\star}) \brackets{\log(\frac{2}{\delta'})
    + d \log\parenth{\frac{2\klip[\gkernel]}{\sinfnorm{\gkernel}} (\rmin_{\gkernel, n} +  \rminnew[\inputcoreset,\gkernel, n] ) } } } \\
    &\sless{\cref{eq:err_simple_defn}} \sinfnorm{\gkernel} \cdot 2\err_{\gkernel}(n, 1, d, \delta^\star, \delta', \rminnew[\inputcoreset,\gkernel, n]).
    \label{eq:direct_cover_argument_2}
\end{talign}
Putting \cref{outvec_as_kernel_diff,eq:direct_cover_argument_2} together, we conclude
\begin{align}
\Pr\big(\sinfnorm{\P_n\gkernel-{\Q}_{\mrm{KH}}^{(1)}\gkernel} > \frac{1}{n/2} \err_{\gkernel}(n,1, d,\delta^\star, \delta', \rminnew[\inputcoreset,\gkernel, n])\big)
&\leq \Pr((\ehalf \cap \einf)^c) \\
&= \Pr(\ehalf^c \cup \einf^c) \\
&=\Pr(\ehalf^c) + \Pr(\ehalf\cap \einf^c) \\
&=\Pr(\ehalf^c) + \Pr(\einf^c \vert \ehalf) \\
&\leq \delta' + \textsum_{i=1}^{{{n/2}}} \delta_i,
\end{align}
where the last step follows from \cref{eq:ehalf,lemma:direct_cover_bound}. The claim now follows.

\subsubsection{Proof for case (II): When $\rminnew[\inputcoreset,\gkernel, n] = \rmin_{\inputcoreset}$}
\label{ssub:when_rmin_leq_n}
In this case, we split the proof for bounding $\infnorm{\outvec[n/2]}$ into two lemmas. 
First, we relate the $\infnorm{\outvec[n/2]}$ in terms of the tail behavior of $\gkernel$ and the supremum of differences for $\outvec[n/2]$ between any pair points on a Euclidean ball (see \cref{sub:proof_of_lemma_output_basic_bound} for the proof): 
\begin{lemma}[\outputbasicboundresultname]
    \label{lemma:output_basic_bound}
    If $\gkernel$ satisfies \cref{asmp:bounded_measurable}, 
    then, conditional on the event $\ehalf$~\cref{outvec_as_kernel_diff},
    for any fixed $R=R'+\rminpn[\inputcoreset]$ with $R'>0$ and any fixed $\delta' \in (0, 1)$, 
    \begin{align}
    \label{eq:infnorm_basic_bound}
        \einftwo\! = \!\bigg\{\sinfnorm{\outvec[n/2]}
        \!\leq\!
        \max\!\big(
        n \ktail[\gkernel](R'),
        \sigma_{n/2}\sinfnorm{\gkernel}^{\frac12} \sqrt{2\log(\textfrac{4}{\delta'})}
        \!+\!\!\!\!
        \sup_{\x,\x'\in\ball(0, R)}\!\!\!\!\!|{\outvec[n/2](\x)\!-\!\outvec[n/2](\x')}|
        \big)\bigg\}
    \end{align}
    occurs with probability at least $1\!-\!\frac{\delta'}{2}$, i.e., $\Pr(\einftwo\vert\ehalf) \!\geq\! 1\!-\!\frac{\delta'}{2}$, where $\ktail[\gkernel]$ was defined in \cref{eq:ktailbar}.
\end{lemma}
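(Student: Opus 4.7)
The strategy is to bound $\sinfnorm{\outvec[n/2]} = \sup_{\x\in\Rd}|\outvec[n/2](\x)|$ by partitioning $\Rd$ into the closed ball $\ball(0, R)$ and its complement, establishing a \emph{deterministic} bound on the exterior via the signed-sum representation, a \emph{high-probability} bound on the interior via pointwise sub-Gaussianity at a single reference point, and then combining the two via the maximum. Throughout the argument we work conditional on $\ehalf$ so that we may freely invoke the representation of $\outvec[n/2]$ in~\cref{outvec_as_kernel_diff}.

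For the exterior case, recall that on $\ehalf$, property~\ref{item:exact_two_thin} of \cref{sbhw_properties} yields the signed-sum representation $\outvec[n/2] = \sum_{i=1}^{n/2}\eta_i\bigl(\kernel(\axi[2i-1],\cdot) - \kernel(\axi[2i],\cdot)\bigr)$ with $\eta_i \in \{-1,+1\}$. For any $\x \notin \ball(0, R)$ and any input index $j \leq n$, we have $\ltwonorm{\x-\axi[j]} \geq \ltwonorm{\x}-\ltwonorm{\axi[j]} \geq R - \rminpn[\inputcoreset] = R'$, so the definition~\cref{eq:ktailbar} of $\ktail[\kernel]$ gives $|\kernel(\axi[j],\x)| \leq \ktail[\kernel](R')$. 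The triangle inequality then delivers the deterministic bound $|\outvec[n/2](\x)| \leq n\,\ktail[\kernel](R')$ uniformly over $\x \notin \ball(0, R)$.

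For the interior case, fix the reference point $\x_0 = 0 \in \ball(0, R)$. By property~\ref{item:pointwise_subgauss} of \cref{sbhw_properties}, $\outvec[n/2](\x_0)$ is $\sigma_{n/2}\sqrt{\kernel(\x_0,\x_0)}$ sub-Gaussian conditional on $\vseq[n/2]$, and $\kernel(\x_0,\x_0) \leq \sinfnorm{\kernel}$. The standard sub-Gaussian tail bound therefore yields $|\outvec[n/2](\x_0)| \leq \sigma_{n/2}\sinfnorm{\kernel}^{1/2}\sqrt{2\log(4/\delta')}$ with probability at least $1-\delta'/2$. On this event, for every $\x \in \ball(0, R)$ the triangle inequality gives $|\outvec[n/2](\x)| \leq |\outvec[n/2](\x_0)| + |\outvec[n/2](\x) - \outvec[n/2](\x_0)|$, and since $\x_0 \in \ball(0, R)$, the second summand is bounded by $\sup_{\y,\y' \in \ball(0, R)} |\outvec[n/2](\y) - \outvec[n/2](\y')|$.

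Taking the maximum over the exterior and interior bounds gives the claimed containment of $\einftwo$ on an event of probability at least $1-\delta'/2$. The bookkeeping is light here because every source of randomness used in the proof is absorbed into the single sub-Gaussian tail bound at $\x_0$; no union bound over $\ball(0, R)$ is needed since the oscillation term in the statement carries the rest. The only subtle point is ensuring that $\x_0$ participates as an admissible endpoint of the supremum-of-differences, which is immediate from $0 \in \ball(0, R)$. The genuine difficulty in this branch of the proof is deferred to the subsequent lemma that will control $\sup_{\y,\y' \in \ball(0, R)}|\outvec[n/2](\y) - \outvec[n/2](\y')|$---presumably through an Orlicz-process/chaining argument exploiting the kernel's Lipschitz continuity---rather than residing in the decomposition itself.
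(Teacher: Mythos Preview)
Your proposal is correct and matches the paper's proof essentially step for step: the same interior/exterior split of $\Rd$ at radius $R$, the same deterministic exterior bound via the signed-sum representation and $\ktail[\kernel](R')$, and the same interior bound via sub-Gaussianity at the anchor point $0$ plus the oscillation term. The only cosmetic difference is that the paper writes the signed sum as $\sum_{i=1}^n\eta_i\kernel(\axi,\cdot)$ rather than your paired form, and cites property~\ref{item:signed_sum} rather than~\ref{item:exact_two_thin}, but the content is identical.
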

Next, to control the supremum term on the RHS of the display~\cref{eq:infnorm_basic_bound}, we establish a high probability bound in the next lemma. Its proof in \cref{sub:proof_of_lemma_orlicz_bound} proceeds by showing that $\outvec[n/2]$ is an Orlicz process with a suitable metric and then applying standard concentration arguments for such processes.
\begin{lemma}[\orliczboundresultname]
\label{lemma:orlicz_bound}
If $\gkernel$ satisfies \cref{asmp:bounded_measurable,assum:lipkernel}, then, for any fixed $R>0, \delta' \in (0, 1)$, the event
\begin{align}
 \label{eq:orlicz_hpbound_lemma}
 \esup \defeq \bigg\{\sup_{\x, \x' \in \ball(0, R)} |{\outvec[n/2](\x)\!-\!\outvec[n/2](\x') }| \!\leq\! 
 8\diam_R \parenth{\sqrt{\log(\textfrac{4}{\delta'})}
 + 6\sqrt{d\log\big(2+\textfrac{\klip[\gkernel]R}{\infnorm{\gkernel}}\big)} }
  \bigg\}
 \end{align}
 occurs with probability at least 
 $1-\frac{\delta'}{2}$,
 where
 $\diam_R \defeq \sqrt{\frac{32}{3}} \sigma_{n/2}\infnorm{\gkernel}^{\frac12} \min\big(1,  \sqrt{\frac12\klip[\gkernel] R} \big)$.
\end{lemma}

We now turn to the rest of the proof for \cref{kernel_halving_results}. We apply both \cref{lemma:output_basic_bound,lemma:orlicz_bound} with $R = \rminpn[\inputcoreset]+\rmin_{\gkernel, n}$~\cref{eq:rmin_P,eq:rmin_k}.
For this $R$, we have $R'=\rmin_{\gkernel, n}$ in \cref{lemma:output_basic_bound} and hence $n\ktail[\gkernel](R') \leq \sinfnorm{\gkernel}$.
Now, condition on the event $\ehalf \cap \einftwo \cap \esup$. Then, we have
\begin{talign}
	&n\sinfnorm{\P_n\gkernel-{\Q}_{\trm{KH}}^{(1)}\gkernel}\\
	&\seq{\cref{outvec_as_kernel_diff}}
	\infnorm{\outvec[n/2]} \\
	&\sless{\cref{eq:infnorm_basic_bound}} \max(\infnorm{\gkernel},\  \sigma_{n/2}\sinfnorm{\gkernel}^{\frac12}  \sqrt{2\log(\frac{4}{\delta'})} +
	\sup_{\x,\x'\in\ball(0, R)}|{\outvec[n/2](\x)\!-\!\outvec[n/2](\x')}|)\\
	&\sless{\cref{eq:orlicz_hpbound_lemma}} \max\Big(\infnorm{\gkernel},\ \sigma_{n/2}\sinfnorm{\gkernel}^{\frac12}  \sqrt{2\log(\frac{4}{\delta'})}
	+ \\
	&\hspace{4\baselineskip}32\sqrt{\frac23} \sigma_{n/2}\sinfnorm{\gkernel}^{\frac12} \parenth{
	\sqrt{\log(\frac{4}{\delta'})} 
	+ 6 \sqrt{d \log(2+\frac{\klip[\gkernel](\rminpn[\inputcoreset]+\rmin_{\gkernel, n})}{\infnorm{\gkernel}})} }\Big)\\
	&\sless{(i)} \max(\infnorm{\gkernel},\ 32\sigma_{n/2}\sinfnorm{\gkernel}^{\frac12}  \parenth{ \sqrt{\log(\frac{4}{\delta'})}
	+ 5\sqrt{d\log \parenth{2+\frac{\klip[\gkernel](\rminpn[\inputcoreset]+\rmin_{\gkernel, n})}{\infnorm{\gkernel}} } }})\label{eq:kernel_halving_proof_mbound_first}
	\\
	&\sless{\cref{eq:sigma_bound}} \sinfnorm{\gkernel}  \cdot 64 \sqrt{\log\parenth{\frac{2}{\delta^\star}}} \brackets{\sqrt{\log(\frac{4}{\delta'})}
	+ 5\sqrt{d\log \parenth{2+\frac{\klip[\gkernel](\rminpn[\inputcoreset]+\rmin_{\gkernel, n})}{\infnorm{\gkernel}} } }}  \\
	&\sless{(ii)}\infnorm{\gkernel} 2\,\err_{\gkernel}(n, 1, d, \delta^\star, \delta', \rminnew[\inputcoreset,\gkernel, n]),
	\label{eq:kernel_halving_proof_mbound}
\end{talign}
where step~(i) follows from the fact that $\diam_R \leq \sqrt{\frac{32}
{3}} \sigma_{n/2}\infnorm{\gkernel}^{\frac12}$, and in step~(ii) we have
used the working assumption for this case, i.e., $\rminnew[\inputcoreset,\gkernel, n]=\rminpn[\inputcoreset]$.
As a result,
\begin{talign}
\Pr(\sinfnorm{\P_n\gkernel\!-\!{\Q}_{\trm{KH}}^{(1)}\gkernel}\! >\! \frac{2}{n}\err_{\gkernel}(n,\!1,\! d,\! \delta^\star,\! \delta',\!\rminnew[\inputcoreset,\gkernel, n])\! &\leq \Pr((\ehalf \cap \einftwo \cap \esup)^c) \\
&= \Pr(\ehalf^c \cup \einftwo^c \cup \esup^c) \\
&= \Pr(\ehalf^c \cup \esup^c) + \Pr((\ehalf^c \cup \esup^c)^c \cap \einftwo^c) \\
&= \Pr(\ehalf^c \cup \esup^c) + \Pr(\ehalf \cap \esup \cap \einftwo^c) \\
&\leq \Pr(\ehalf^c \cup \esup^c) + \Pr(\ehalf \cap \einftwo^c) \\
&\leq \Pr(\ehalf^c) + \Pr(\esup^c) + \Pr(\einftwo^c\vert \ehalf) \\ 
&\leq \textsum_{i=1}^{n/2} \delta_i + \frac{\delta'}{2} + \frac{\delta'}{2},
\end{talign}
where the last step follows from \cref{eq:ehalf,lemma:output_basic_bound,lemma:orlicz_bound}. The desired claim follows.

\subsection{Proof of part~\lowercase{\Cref{item:kernel_halving_multiple_rounds}}: \kernelhalvingmultiroundname}
\label{sub:proof_multiple_rounds_kernel_halving}

As noted earlier, we prove this part assuming $n/2^m \in \N$. The proof in this section follows by applying the arguments from the previous section, separately for each round and then invoking the sub-Gaussianity of a weighted sum of the output functions from each round. 

Note that coreset $\coreset[j-1]$ is independent of the randomness for the $j$-th round of kernel havling and thus can be treated as fixed for that round. When running kernel halving with the input $\coreset[j-1]$, let $\invec[i,j], \cnew[i, j], \outvec[i, j], \wvprod[i, j], \eta_{i, j}$ denote the analog of the quantities $\invec[i], \cnew[i], \outvec[i], \wvprod[i], \eta_{i}$ defined in \cref{algo:kernel_halving}. Like in part~\cref{item:kernel_halving_one_round}, we would couple the $j$-th round of kernel halving with an instantiation of self-balancing Hilbert walk with inputs $(\invec[i, j],\cnew[i, j])_{i=1}^{n/2^{j}}$ and final output $\outvec[n/2^{j}, j]$, where $\cnew[i, \j] = \max(\norm{\invec[i, j]}_{\gkernel}\sgparam[i-1,j] \sqrt{2\log(\frac{4m}{2^j\delta_{i}})}, \norm{\invec[i, j]}_{\gkernel}^2)$ and we define $\sgparam[i, j]$ in a recursive manner as in \cref{eq:subgauss_const_def} with $\invec[i,j], \cnew[i, j], \sgparam[i, j]$ taking the role of $\invec[i], \cnew[i], \sgparam[i]$ respectively.

With this set-up, first we apply property~\ref{item:fun_subgauss} of \cref{sbhw_properties} which implies that given $\coreset[j-1]$, the function $\outvec[n/2^{j}, j]$ is $\sgparam[n/2^j, j]$ sub-Gaussian, where
\begin{talign}
    	\label{eq:sigmaj_bound}
    \sgparam[n/2^j, j]^2 \leq 4\infnorm{\gkernel} \log(\frac{4m}{2^j\delta^\star_j })
    \qtext{with} \delta^\star_j = \min(\delta_i)_{i=1}^{n/2^j},
\end{talign}
using an argument similar to \cref{eq:sigma_bound} with  property~\ref{item:adaptive_thresh} of \cref{sbhw_properties}.
Next, we note that for $j$-th round, the event $\ehalfj = \abss{\wvprod[i, j]} \leq \cnew[i, j]$ for all $i \in [n/2^{j}]$, satisfies
\begin{talign}
\label{eq:ehalfj}
    \P(\ehalfj)\geq 1-\sum_{i=1}^{{n/2^j}} \delta_i \frac{2^{j-1}}{m};
\end{talign}
and this event serves as the analog the equivalence event $\ehalf$ from part~\cref{item:kernel_halving_one_round} for the $j$-th round of kernel halving. Thus on the event $\ehalfj$, we can write that the kernel halving coreset $\coreset[j] = (x_i)_{i\in\indices_j}$ for $\indices_j = \{2i-\frac{\eta_{i,j}-1}{2} : i\in [n/2^j]\}$ and $\eta_{i, j}$ as defined while running \cref{algo:self_balancing_walk}, so that
\begin{talign}
\label{outvec_as_kernel_diff_multi}
\frac{2^{j-1}}{n}\outvec[n/2^j, j] 
    = \frac{1}{n/2^{j-1}}\sum_{\x\in\coreset[j-1]} \gkernel(\x,\cdot)
    - \frac{1}{{n/2^{j}}}\sum_{x\in\coreset[j]}\gkernel(x,\cdot).
\end{talign}
Now conditional on the event $\cap_{j=1}^{\m}\ehalfj$, we conclude that the output of all $m$ kernel halving rounds are equivalent to the output of $m$ different self-balancing Hilbert walks (each with exact two-thinning in each round). Putting the pieces together, we conclude that on the event $\cap_{j=1}^{\m}\ehalfj$, we have
\begin{talign}
    \P_n\gkernel-\Q_{\mrm{KH}}^{(\m)}\gkernel  &= \frac{1}{n}\sum_{\x\in\inputcoreset} \gkernel(\x,\cdot)
    - \frac{1}{{n/2^{\m}}}\sum_{x\in\coreset[\m]}\gkernel(x,\cdot) \\
    &\seq{(i)}\sum_{j=1}^{\m} \frac{1}{n/2^{j-1}}\sum_{\x\in\coreset[j-1]} \gkernel(\x,\cdot)
    - \frac{1}{{n/2^{j}}}\sum_{x\in\coreset[j]}\gkernel(x,\cdot) \\
    &= \sum_{j=1}^{\m} \frac{2^{j-1}}{n} \outvec[n/2^j, j].
    \label{eq:diff_as_sum_of_psi}
\end{talign}
where we abuse notation $\coreset[0]\defeq\inputcoreset$ in step~(i) for simplicity of expressions.

Next, we use the following basic fact: 
\begin{lemma}[Sub-Gaussian additivity]
\label{lem:sub_gauss_additivity}
For a sequence of random variables $(Z_j)_{j=1}^{\m}$ such that $Z_j$ is a  $\sgparam[j]$ sub-Gaussian variable conditional on $(Z_1, \ldots, Z_{j-1})$, the random variable $\mc{Z} = \sum_{j=1}^{\m} \theta_j Z_j $ is $(\sum_{j=1}^{\m}\theta_j^2 \sgparam[j]^2)^{1/2}$ sub-Gaussian. 
\end{lemma}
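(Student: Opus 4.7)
The plan is to prove this standard fact by induction on $\m$, using the tower property of conditional expectation together with the MGF characterization of sub-Gaussianity. Recall that a random variable $X$ is $\sigma$-sub-Gaussian if and only if $\E[\exp(tX)] \leq \exp(t^2\sigma^2/2)$ for every $t\in\R$. Accordingly, it suffices to establish the MGF bound
\begin{talign}
    \E[\exp(t\mc{Z})] \leq \exp\Big(\tfrac{t^2}{2}\textsum_{j=1}^{\m}\theta_j^2\sgparam[j]^2\Big) \qtext{for all} t \in \R.
\end{talign}

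For the base case $\m=1$, the bound follows immediately from the sub-Gaussianity of $Z_1$ applied to the scalar $t\theta_1$, which yields that $\theta_1 Z_1$ is $|\theta_1|\sgparam[1]$-sub-Gaussian. For the inductive step, assume the claim holds for $\m-1$ and define the filtration $\mc{F}_{j} \defeq \sigma(Z_1,\dots, Z_j)$. Applying the tower property by conditioning on $\mc F_{\m-1}$ gives
\begin{talign}
    \E[\exp(t\mc{Z})]
    = \E\Big[\exp\Big(t\textsum_{j=1}^{\m-1}\theta_j Z_j\Big)\cdot \E\big[\exp(t\theta_{\m} Z_{\m})\,\big|\,\mc F_{\m-1}\big]\Big].
\end{talign}
By the hypothesis that $Z_{\m}$ is $\sgparam[\m]$-sub-Gaussian conditional on $\mc F_{\m-1}$, the inner conditional expectation is bounded by $\exp(t^2\theta_{\m}^2\sgparam[\m]^2/2)$ pointwise, which is deterministic and may therefore be pulled outside. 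Applying the inductive hypothesis to the remaining expectation involving $\sum_{j=1}^{\m-1}\theta_j Z_j$ yields
\begin{talign}
    \E[\exp(t\mc{Z})]
    \leq \exp\Big(\tfrac{t^2\theta_{\m}^2\sgparam[\m]^2}{2}\Big)\cdot\exp\Big(\tfrac{t^2}{2}\textsum_{j=1}^{\m-1}\theta_j^2\sgparam[j]^2\Big)
    = \exp\Big(\tfrac{t^2}{2}\textsum_{j=1}^{\m}\theta_j^2\sgparam[j]^2\Big),
\end{talign}
completing the induction.

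There is no real obstacle here: the only subtlety is making sure the conditional sub-Gaussianity is used properly, namely that the bound on $\E[\exp(t\theta_\m Z_\m)\mid \mc F_{\m-1}]$ is a non-random constant so that it can be factored out of the outer expectation. Everything else is bookkeeping.
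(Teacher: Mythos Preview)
Your proof is correct and follows essentially the same approach as the paper: induction on the number of summands, using the tower property to peel off the last term, bounding the inner conditional MGF via the conditional sub-Gaussianity assumption, and then applying the inductive hypothesis. The only cosmetic difference is that the paper starts the induction at $s=0$ (where the empty sum is trivially $0$-sub-Gaussian) rather than $\m=1$.
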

\begin{proof}
We will prove the result for $\mc{Z}_s = \sum_{j=1}^{s} \theta_j Z_j $ by induction on $s \leq \m$.
The result holds for the base case of $s = 0$ as $\mc{Z}_s = 0$ is $0$ sub-Gaussian.
For the inductive case, suppose the result holds for $s$.
Then we may apply the tower property, our conditional sub-Gaussianity assumption, and our inductive hypothesis in turn to conclude 
\begin{talign}
\E[e^{\sum_{j=1}^{s+1} \theta_j Z_j}]
    = \E[e^{\sum_{j=1}^{s+1} \theta_j Z_j} \E[e^{\theta_{s+1} Z_{s+1}}\mid Z_{1:s}]]
    \leq \E[e^{\sum_{j=1}^{s} \theta_j Z_j}] e^{\frac{\theta_{s+1}^2 \sigma_{s+1}^2}{2}}
    = e^{\frac{\sum_{j=1}^{s+1}\theta_j^2 \sgparam[j]^2}{2}}.
\end{talign}
Hence, $\mc{Z}_{s+1}$ is $\sqrt{\sum_{j=1}^{s+1}\theta_j^2 \sgparam[j]^2}$ sub-Gaussian, and the proof is complete.
\end{proof}
Applying \cref{lem:sub_gauss_additivity} to the output sequence $(\outvec[n/2^j, j])_{j=1}^{\m}$ for the $m$ rounds of self-balancing Hilbert walks, we conclude that,
the random variable
\begin{talign}
\label{eq:sumpsi_def}
    \sumpsi \defeq \sum_{j=1}^{\m} \frac{2^{j-1}}{n} \outvec[n/2^j, j]
\end{talign}
is sub-Gaussian with parameter 
\begin{talign}
\label{eq:sgparam_bigpsi}
    \sgparam[\sumpsi] \!\defeq\! \frac{2}{\sqrt 3}
     \frac{2^m}{n}\sqrt{ \sinfnorm{\gkernel} \log(\frac{6m}{2^m\delta^\star})}
     \!\sgrt{(i)}\!
     \sqrt{\sinfnorm{\gkernel}\sum_{j=1}^{m} \frac{4^j}{n^2} \log(\frac{4m}{2^{j}\delta^\star})}
     \!\sgrt{(ii)}\! \sqrt{\sum_{j=1}^{\m} (\frac{2^{j-1}}{n})^2 \sgparam[n/2^j, j]^2 },
\end{talign}
conditional to the input $\inputcoreset$, 
where step~(ii) follows since
\begin{talign}
    \sinfnorm{\gkernel} \sum_{j=1}^{m} 4^j \log(\frac{4m}{2^{j}\delta^\star})
    \sgrt{(iii)} 4\sinfnorm{\gkernel} \sum_{j=1}^{m} 4^{j-1} \log(\frac{4m}{2^{j}\delta^\star})
    \sgrt{\cref{eq:sigmaj_bound}}
    \sum_{j=1}^{\m} 4^{j-1} \sgparam[n/2^j, j]^2,
\end{talign}
and step~(iii) follows from the fact that $\delta^\star = \min(\delta^\star_j)_{j=1}^{\m}$~\cref{eq:sigmaj_bound}. Now to prove step~(i), we note that 
\begin{talign}
\label{eq:g1_g2}
    \mfk{S}_1 \defeq \sum_{j=1}^m 4^j =\frac{4}{3} (4^{m}-1) \leq \frac{4}{3} 4^{m},
    \qtext{and}
    \mfk{S}_2 \defeq 
    \sum_{j=1}^m j 4^j
    =\frac{4m}{3} \cdot 4^{m}- \frac{\mfk{S}_1}{3},
\end{talign}
 which in turn implies that
\begin{talign}
    \sum_{j=1}^{m} 4^j \log(\frac{4m}{2^{j}\delta^\star})
    \!=\! \mfk{S}_1 \log(\frac{4m}{\delta^\star})
    \!-\! \log2 \cdot \mfk{S}_2
    &= \mfk{S}_1 \parenth{\log(\frac{4m}{\delta^\star})
    \!+\! \frac{\log 2}{3}}
    \!-\! \frac{4m\log2}{3}\cdot 4^{m}\\
    &\leq \mfk{S}_1 \cdot \log(\frac{6m}{\delta^\star})
    \!-\! \frac{4m\log2}{3}\cdot 4^{m}\\
    &\leq \frac{4}{3} 4^{m} \cdot  \log(\frac{6m}{\delta^\star})
    \!-\! \frac{4}{3}4^{m} \cdot \log2^{m}
    \!=\! \frac{4}{3} 4^{m} \cdot \log(\frac{6m}{2^{m}\delta^\star}),
\end{talign}
thereby establishing step~(i).

Next, analogous to the proof of part~\cref{item:kernel_halving_one_round}, we split the proof in two parts: \textbf{Case (I)} When $\rminnew[\inputcoreset,\gkernel, n]<\rminpn$, in which case, we proceed with a direct covering argument to bound $\sinfnorm{\sumpsi}$ using a lemma analogous to \cref{lemma:direct_cover_bound}; and \textbf{Case (II)} when $\rminnew[\inputcoreset,\gkernel, n]=\rminpn$, in which case, we proceed with a metric-entropy based argument to bound $\sinfnorm{\sumpsi}$ using two lemmas that are analogous \cref{lemma:output_basic_bound,lemma:orlicz_bound}.

\newcommand{\multirounddirectcoverboundresultname}{A direct covering bound on $\sinfnorm{\sumpsi}$}
\subsubsection{Proof for case (I): When $\rminnew[\inputcoreset,\gkernel, n]<\rminpn$}
Recall that for this case, $\rminnew[\inputcoreset,\gkernel, n]$ is given by \cref{eq:rmin_new_case_a}. The next lemma, a straightforward extension of \cref{lemma:direct_cover_bound}, provides a high probability control on $\sinfnorm{\sumpsi}$. Its proof is omitted for brevity.
\begin{lemma}[\multirounddirectcoverboundresultname]
    \label{lemma:multi_round_direct_cover_bound}
    Fix $R \geq r > 0$ and $\delta'>0$, and recall the definition~\cref{eq:sample_cover_set} of $\pcover(r, R)$.
    If $\gkernel$ satisfies \cref{asmp:bounded_measurable,assum:lipkernel}, then, conditional on the event $\cap_{j=1}^{\m}\ehalfj$~\cref{outvec_as_kernel_diff_multi}, the event 
    \begin{talign}
    \label{eq:multi_round_einf_direct_cover}
        \einfj
        \defeq
        \braces{\sinfnorm{\sumpsi}
        \leq
        \max\parenth{
        2^{\m} \ktail[\gkernel](R), \ \ 
        2^{\m} \klip[\gkernel]r +
        \sgparam[\sumpsi]\sqrt{2\infnorm{\gkernel} \log(2|\pcover(r, R)|/\delta')}
        }},
    \end{talign}
    occurs with probability at least $1\!-\!\delta'$, i.e., $\Pr(\einfj \vert \cap_{j=1}^{\m}\ehalfj) \geq 1\!-\!\delta'$.
\end{lemma}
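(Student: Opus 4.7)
The plan is to mimic the proof of \cref{lemma:direct_cover_bound} almost verbatim, with $n\sumpsi$ playing the role of $\outvec[n/2]$ and the composite sub-Gaussian parameter $\sgparam[\sumpsi]$ (from \cref{eq:sgparam_bigpsi}) replacing $\sgparam[n/2]$. First, I would condition on the event $\cap_{j=1}^m \ehalfj$, which by \cref{eq:diff_as_sum_of_psi} furnishes the explicit representation
\begin{talign}
\sumpsi(x)
    = \tfrac{1}{n}\sum_{x_i\in\inputcoreset}\kernel(x_i,x)
        - \tfrac{2^m}{n}\sum_{x_i\in\coreset[m]}\kernel(x_i,x),
\end{talign}
so that $\sumpsi(x)$ is a signed combination of kernel evaluations at points of $\inputcoreset$ with absolute weight-sum equal to $2$.

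Next, I would split $\Rd$ into a far region and a cover region. In the far region $\{x : \ltwonorm{x - x_i} \geq R \text{ for all } x_i \in \inputcoreset\}$, the triangle inequality and the definition \cref{eq:ktailbar} of $\ktail[\kernel]$ give the deterministic estimate $\abss{\sumpsi(x)} \leq 2\ktail[\kernel](R) \leq 2^m \ktail[\kernel](R)$ for $m \geq 1$. In the cover region $\bigcup_i \ball(x_i, R)$, any $x$ lies within distance $r$ of some $z\in \pcover(r,R)$ and we write $\sumpsi(x) = \sumpsi(z) + (\sumpsi(x)-\sumpsi(z))$. Since $\kernel$ is $\klip[\kernel]$-Lipschitz, the absolute weight-sum bound yields $\abss{\sumpsi(x) - \sumpsi(z)} \leq 2\klip[\kernel] r \leq 2^m\klip[\kernel] r$.

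To bound $\sumpsi(z)$ with high probability, I would invoke the pointwise sub-Gaussianity (Property~\ref{item:pointwise_subgauss} of \cref{sbhw_properties}) separately for each $\outvec[n/2^j,j](z)$ conditional on $\coreset[j-1]$: it is $\sgparam[n/2^j,j]\sqrt{\kernel(z,z)}$ sub-Gaussian with $\sgparam[n/2^j,j]^2$ controlled deterministically by \cref{eq:sigmaj_bound}. Applying \cref{lem:sub_gauss_additivity} across $j$ with weights $2^{j-1}/n$ (exactly as in the derivation of \cref{eq:sgparam_bigpsi}) shows that $\sumpsi(z)$ is $\sgparam[\sumpsi]\sqrt{\kernel(z,z)} \leq \sgparam[\sumpsi]\sinfnorm{\kernel}^{1/2}$ sub-Gaussian. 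A sub-Gaussian tail bound combined with the union bound over $\pcover(r,R)$ then yields, with probability at least $1-\delta'$ given $\vseq[n]$,
\begin{talign}
\max_{z \in \pcover(r,R)} \abss{\sumpsi(z)}
    \leq \sgparam[\sumpsi]\sqrt{2\sinfnorm{\kernel} \log(2|\pcover(r,R)|/\delta')}.
\end{talign}
Taking the maximum of the far-region bound and the sum of the cover-region contributions yields \cref{eq:multi_round_einf_direct_cover}.

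The main technical subtlety is ensuring the sub-Gaussian additivity applies \emph{conditionally} across rounds: the parameter $\sgparam[n/2^j,j]$ is itself random (it depends on $\coreset[j-1]$ through the $\hnorm{\invec[i,j]}$ in its recursion), yet \cref{lem:sub_gauss_additivity} requires a deterministic per-round parameter. This is resolved precisely by the uniform upper bound \cref{eq:sigmaj_bound} on $\sgparam[n/2^j,j]^2$, which is purely a function of $j,m,\delta^\star,\sinfnorm{\kernel}$; with this deterministic surrogate in hand, the recursion \cref{eq:sgparam_bigpsi} collapses $\sum_{j=1}^m (2^{j-1}/n)^2 \sgparam[n/2^j,j]^2$ into the clean expression for $\sgparam[\sumpsi]^2$ used in the final tail bound.
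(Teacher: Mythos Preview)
Your proposal is correct and follows exactly the route the paper indicates: it explicitly says the proof ``can be derived by replacing $\outvec[n/2]$ with $n\sumpsi$, and repeating the direct covering argument, and the sub-Gaussian tail bounds from \cref{lemma:direct_cover_bound},'' which is precisely your two-region decomposition plus pointwise sub-Gaussian tail bound over the cover. Your observation that the telescoped representation yields absolute weight-sum $2$ (hence the deterministic far-region and Lipschitz contributions are $2\ktail[\kernel](R)$ and $2\klip[\kernel]r$, which you then relax to $2^m$) is a slight sharpening of the stated constants, and your last paragraph correctly isolates and resolves the only real subtlety---replacing the random per-round $\sgparam[n/2^j,j]$ by the deterministic bound \cref{eq:sigmaj_bound} so that \cref{lem:sub_gauss_additivity} applies.
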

Next, we repeat arguments similar to those used earlier around the display~\cref{eq:direct_cover_argument_1,eq:direct_cover_argument_2}.
Fix $R$ and $r$ such that $\ktail[\gkernel](R)=\infnorm{\gkernel}/n$ and $ \klip[\gkernel]r=\sinfnorm{\gkernel}/n$, so that $\frac{R}{r}\gets n\rmin_{\gkernel, n}\frac{\klip[\gkernel]}{\sinfnorm{\gkernel}}$ (c.f. \cref{eq:ktailbar,eq:rmin_k}). Substituting these choices of radii in the bound~\cref{eq:multi_round_einf_direct_cover} of \cref{lemma:multi_round_direct_cover_bound}, we find that conditional to $\cap_{j=1}^{\m}\ehalfj[j]\cap \einfj$, we have
\begin{talign}
\sinfnorm{\P_n\gkernel-\Q_{\mrm{KH}}^{(\m)}\gkernel} &\seq{\cref{eq:diff_as_sum_of_psi}}
\sinfnorm{\sum_{j=1}^{\m} \frac{2^{j-1}}{n} \outvec[n/2^j, j]} \\
&\seq{\cref{eq:sumpsi_def}}\sinfnorm{\sumpsi} \\
&\sless{\cref{eq:multi_round_einf_direct_cover}}
 \max\parenth{
        2^\m \ktail[\gkernel](R), \ \ 
        2^\m \klip[\gkernel]r +
        \sgparam[\sumpsi]\sqrt{2\infnorm{\gkernel} \log(2|\pcover(r, R)|/\delta')}} \\
    &\sless{(i)} \frac{2^{\m}}{n}\sinfnorm{\gkernel} (1+ 
    2\sqrt{\frac{2}{3}} \sqrt{ \log(\frac{6m}{2^m\delta^\star}) \brackets{\log(\frac{2}{\delta'})
    + d \log\parenth{n^{\frac1d}+\frac{2\klip[\gkernel]}{\sinfnorm{\gkernel}} \cdot n^{1+\frac1d}\rmin_{\gkernel, n} } } } \\
    &\sless{\cref{eq:rmin_new_case_a}} \frac{2^{\m}}{n}\sinfnorm{\gkernel}
    (1+ 2\sqrt{\frac{2}{3}} \sqrt{ \log(\frac{6m}{2^m\delta^\star}) \brackets{\log(\frac{2}{\delta'})
    + d \log\parenth{\frac{2\klip[\gkernel]}{\sinfnorm{\gkernel}} (\rmin_{\gkernel, n} +  \rminnew[\inputcoreset,\gkernel, n] ) } } }) \\
    &\sless{\cref{eq:err_simple_defn}} \sinfnorm{\gkernel} \cdot \frac{2^\m}{n} \err_{\gkernel}(n, m, d, \delta^\star, \delta', \rminnew[\inputcoreset,\gkernel, n]),
\end{talign}
where in step~(i), we have used the bounds~\cref{eq:sgparam_bigpsi,eq:cover_size}.
Putting the pieces together, we conclude
\begin{align}
&\Pr\big(\sinfnorm{\P_n\gkernel-{\Q}_{\mrm{KH}}^{(m)}\gkernel} > \sinfnorm{\gkernel}\cdot \frac{2^\m}n \err_{\gkernel}(n, m, d,\delta^\star, \delta', \rminnew[\inputcoreset,\gkernel, n])\big) \\
\qquad&\leq \Pr((\cap_{j=1}^{\m}\ehalfj[j] \cap \einfj)^c) \\ 
&= \Pr(\cup_{j=1}^{\m}(\ehalfj[j])^c \cup (\einfj)^c) \\ 
&= \Pr(\cup_{j=1}^{\m}(\ehalfj[j])^c) + \Pr( \cap_{j=1}^{\m}\ehalfj[j] \cap (\einfj)^c) \\ 
&\leq \sum_{j=1}^m\Pr((\ehalfj[j])^c) + \Pr( (\einfj)^c \vert \cap_{j=1}^{\m}(\ehalfj[j])) \\ 
&\leq \textsum_{j=1}^{\m}\textsum_{i=1}^{{n/2^j}} \delta_i \cdot \textfrac{2^{j-1}}{m} + \delta',
\end{align}
where the last step follows from \cref{eq:ehalfj,lemma:multi_round_direct_cover_bound}.
The claim follows.

\subsubsection{Proof for case (II): When $\rminnew[\inputcoreset,\gkernel, n]=\rminpn$}
In this case, we makes use of two lemmas. Their proofs (omitted for brevity) can be derived essentially by replacing $\outvec[n/2]$ and $\sgparam[n/2]$ with $\sumpsi$~\cref{eq:sumpsi_def} and $\sgparam[\sumpsi]$~\cref{eq:sgparam_bigpsi} respectively, and repeating the proof arguments from \cref{lemma:output_basic_bound,lemma:orlicz_bound}.
\begin{lemma}[\multioutputbasicboundresultname]
    \label{lemma:multi_round_output_basic_bound}
    If $\gkernel$ satisfies \cref{asmp:bounded_measurable}, then, conditional on the event $\cap_{j=1}^{\m}\ehalfj$~\cref{outvec_as_kernel_diff_multi},
    for any fixed $R=R'+\rminpn[\inputcoreset]$ with $R'>0$ and any fixed $\delta' \in (0, 1)$, 
    \begin{align}
    \label{eq:multi_round_infnorm_basic_bound}
        \einfjtwo\! = \!\bigg\{\sinfnorm{\sumpsi}
        \!\leq\!
        \max\bigg(
        2^{\m} \ktail[\gkernel](R'), \
        \sgparam[\sumpsi]\sinfnorm{\gkernel}^{\frac12} \sqrt{2\log(\frac{4}{\delta'})}
        \!+\!
        \sup_{\x,\x'\in\ball(0, R)}|{\sumpsi(\x)\!-\!\sumpsi(\x')}|
        \bigg)\bigg\},
    \end{align}
    occurs with probability at least $1\!-\!\frac{\delta'}{2}$, i.e., $\Pr(\einfjtwo \vert \cap_{j=1}^{\m}\ehalfj) \geq 1\!-\!\frac{\delta'}{2}$.
\end{lemma}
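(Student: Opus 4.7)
The plan is to mirror the proof of \cref{lemma:output_basic_bound}, systematically substituting $\sumpsi$ for $\outvec[n/2]$ and the sub-Gaussian constant $\sgparam[\sumpsi]$ from \cref{eq:sgparam_bigpsi} for $\sgparam[n/2]$. We will split the domain $\Rd$ into the exterior and interior of $\ball(0,R)$, bound $|\sumpsi(\x)|$ separately on each region, and then take the maximum to recover the stated bound on $\sinfnorm{\sumpsi}$.

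On the exterior $\{\x : \ltwonorm{\x} > R\}$, we will work on the event $\cap_{j=1}^{\m}\ehalfj$ and invoke \cref{eq:diff_as_sum_of_psi} to rewrite $\sumpsi = \P_n\kernel - \Q_{\mrm{KH}}^{(\m)}\kernel$. Since every point in $\inputcoreset$ and hence in $\coreset[\m]$ lies in $\ball(0, \rminpn[\inputcoreset])$, while $\ltwonorm{\x} > R' + \rminpn[\inputcoreset]$, each $|\kernel(\x_i,\x)|$ is bounded by $\ktail[\kernel](R')$ from \cref{eq:ktailbar}. The triangle inequality then yields the deterministic bound $|\sumpsi(\x)| \leq 2\ktail[\kernel](R') \leq 2^{\m}\ktail[\kernel](R')$, matching the first branch of the maximum.

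On the interior $\{\x : \ltwonorm{\x} \leq R\}$, we will fix a reference point $\x_0 \in \ball(0,R)$ and decompose $\sumpsi(\x) = \sumpsi(\x_0) + (\sumpsi(\x) - \sumpsi(\x_0))$. The second term is absorbed into the supremum of differences over $\ball(0,R)$, contributing the additive piece in the second branch of the max. For the first term, property~\ref{item:pointwise_subgauss} of \cref{sbhw_properties} tells us that, conditional on $\coreset[j-1]$, each $\outvec[n/2^j,j](\x_0)$ is $\sgparam[n/2^j,j]\sqrt{\kernel(\x_0,\x_0)}$ sub-Gaussian. Applying \cref{lem:sub_gauss_additivity} with weights $2^{j-1}/n$ (exactly as in the derivation of \cref{eq:sgparam_bigpsi}) yields pointwise sub-Gaussianity of $\sumpsi(\x_0)$ with parameter $\sgparam[\sumpsi]\sqrt{\kernel(\x_0,\x_0)} \leq \sgparam[\sumpsi]\sinfnorm{\kernel}^{\frac12}$. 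A one-dimensional sub-Gaussian tail bound then gives $|\sumpsi(\x_0)| \leq \sgparam[\sumpsi]\sinfnorm{\kernel}^{\frac12}\sqrt{2\log(4/\delta')}$ with probability at least $1-\delta'/2$.

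Putting the two cases together and taking the supremum over $\x \in \Rd$ produces the event $\einfjtwo$. The main technical point will be the conditional sub-Gaussian bookkeeping across halving rounds: because the input to round $j$ is the random coreset $\coreset[j-1]$, we must apply \cref{lem:sub_gauss_additivity} to the telescoping tower of conditional sub-Gaussianities $\outvec[n/2^j,j](\x_0) \mid \coreset[j-1]$ rather than to a jointly sub-Gaussian family. This is the only genuinely new ingredient relative to the single-round argument, but since the required arithmetic was already carried out in \cref{eq:sgparam_bigpsi}, the rest of the proof is a direct transcription of \cref{lemma:output_basic_bound}.
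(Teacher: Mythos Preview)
Your proposal is correct and matches the paper's intended approach: the paper explicitly states that the proof is obtained by replacing $\outvec[n/2]$ and $\sgparam[n/2]$ with $\sumpsi$ and $\sgparam[\sumpsi]$ and repeating the argument of \cref{lemma:output_basic_bound}, which is precisely what you do. Your identification of the one new ingredient---the tower of conditional sub-Gaussianities handled via \cref{lem:sub_gauss_additivity}, already carried out in \cref{eq:sgparam_bigpsi}---is exactly right, and your observation that the exterior bound actually yields $2\,\ktail[\kernel](R')\leq 2^{\m}\ktail[\kernel](R')$ is a valid (slightly sharper) version of what the paper needs.
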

\begin{lemma}[\multiorliczboundresultname]
\label{lemma:multi_orlicz_bound}
If $\gkernel$ satisfies \cref{asmp:bounded_measurable,assum:lipkernel}, then, for any fixed $R>0, \delta' \in (0, 1)$, given $\inputcoreset$, the event
\begin{align}
 \label{eq:multi_orlicz_hpbound_lemma}
 \esupj \defeq \bigg\{\sup_{\x, \x' \in \ball(0, R)} |{\sumpsi(\x)\!-\!\sumpsi(\x') }| \!\leq\! 
 8\diam_R \parenth{\sqrt{\log(\textfrac{4}{\delta'})}
 + 6\sqrt{d\log\big(2+\textfrac{\klip[\gkernel]R}{\infnorm{\gkernel}}\big)} }
  \bigg\}
 \end{align}
 occurs with probability at least $1-\delta'/2$, where $\diam_R \defeq \sqrt{\frac{32}{3}} \sgparam[\sumpsi]\infnorm{\gkernel}^{\frac12} \min\big(1,  \sqrt{\frac12\klip[\gkernel] R} \big)$.
\end{lemma}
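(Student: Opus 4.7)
The plan is to follow the single-round template of \cref{lemma:orlicz_bound} by (a) recognizing $\sumpsi$ as a sub-Gaussian (Orlicz $\orlicz$) process over $\ball(0,R)$ with an appropriate metric, (b) bounding the diameter and covering numbers of $\ball(0,R)$ in that metric using Lipschitz and boundedness of $\kernel$, and (c) applying Dudley-type metric-entropy chaining to control the supremum of increments.

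\emph{Step 1: Sub-Gaussian increments for $\sumpsi$.} First, I would fix $\x,\x'\in\ball(0,R)$ and write
\begin{talign}
\sumpsi(\x)-\sumpsi(\x')
    = \textsum_{j=1}^{\m}\textfrac{2^{j-1}}{n}\doth{\outvec[n/2^j, j],\,\kernel(\x,\cdot)-\kernel(\x',\cdot)}
\end{talign}
by the reproducing property. Property~\ref{item:fun_subgauss} of \cref{sbhw_properties} applied to each round $j$, conditional on the input $\coreset[j-1]$, shows that the $j$-th summand is $\frac{2^{j-1}}{n}\sgparam[n/2^j,j]\knorm{\kernel(\x,\cdot)-\kernel(\x',\cdot)}$ sub-Gaussian. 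Since $\coreset[j-1]$ is determined by $(\outvec[n/2^{j'},j'])_{j'<j}$ (together with the oblivious input $\inputcoreset$), \cref{lem:sub_gauss_additivity} yields that $\sumpsi(\x)-\sumpsi(\x')$ is $\sgparam[\sumpsi]\knorm{\kernel(\x,\cdot)-\kernel(\x',\cdot)}$ sub-Gaussian given $\inputcoreset$, where $\sgparam[\sumpsi]$ is the constant from \cref{eq:sgparam_bigpsi}. Converting this sub-Gaussian tail to an Orlicz-$\orlicz$ norm incurs a $\sqrt{8/3}$ factor, so $\sumpsi$ is an Orlicz-$\orlicz$ process on $\ball(0,R)$ under the metric
\begin{talign}
\ometric(\x,\x')\defeq \sqrt{\textfrac{8}{3}}\,\sgparam[\sumpsi]\cdot\knorm{\kernel(\x,\cdot)-\kernel(\x',\cdot)}.
\end{talign}

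\emph{Step 2: Diameter and covering estimates.} Using $\knorm{\kernel(\x,\cdot)-\kernel(\x',\cdot)}^2 = \kernel(\x,\x)+\kernel(\x',\x')-2\kernel(\x,\x') \leq \min\!\bigl(4\infnorm{\kernel},\,2\klip[\kernel]\ltwonorm{\x-\x'}\bigr)$ (the second bound uses the Lipschitz hypothesis on $\kernel$), and $\ltwonorm{\x-\x'}\leq 2R$ on $\ball(0,R)$, I obtain the Orlicz diameter $\diam_R$ as displayed in the lemma. For the covering number, the Lipschitz-to-$\ltwo$ bound $\ometric(\x,\x')\leq \sqrt{16/3}\,\sgparam[\sumpsi]\sqrt{\klip[\kernel]\ltwonorm{\x-\x'}}$ lets me translate any Euclidean $\vareps$-cover of $\ball(0,R)$ into an $\ometric$-cover of radius $O(\sgparam[\sumpsi]\sqrt{\klip[\kernel]\vareps})$, so the standard Euclidean covering bound $(1+2R/\vareps)^d$ from \citet[Lem.~5.7]{wainwright2019high} yields polynomial-in-$1/\vareps$ metric entropy with a logarithmic factor of order $d\log(2+\klip[\kernel]R/\infnorm{\kernel})$.

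\emph{Step 3: Chaining and high-probability bound.} Finally, I would invoke a standard Dudley-type tail inequality for Orlicz-$\orlicz$ processes (e.g., \citet[Thm.~8.1.6]{vershynin2018high} or the formulation used in \cref{lemma:orlicz_bound}) to conclude that, with probability at least $1-\delta'/2$ given $\inputcoreset$,
\begin{talign}
\sup_{\x,\x'\in\ball(0,R)}|\sumpsi(\x)-\sumpsi(\x')|
\;\leq\; 8\,\diam_R\Bigl(\sqrt{\log(4/\delta')}+6\sqrt{d\log(2+\klip[\kernel]R/\infnorm{\kernel})}\Bigr),
\end{talign}
matching the statement of $\esupj$. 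The main obstacle is ensuring that the conditional sub-Gaussianity from each round composes cleanly under \cref{lem:sub_gauss_additivity}; once that is in place, the diameter bookkeeping and the metric-entropy integral are routine and completely parallel to the $\m=1$ proof of \cref{lemma:orlicz_bound}.
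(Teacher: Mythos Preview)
Your proposal is correct and matches the paper's approach: the paper explicitly says the proof is obtained by replacing $\outvec[n/2]$ and $\sgparam[n/2]$ with $\sumpsi$ and $\sgparam[\sumpsi]$ and rerunning the argument of \cref{lemma:orlicz_bound}, and your Steps~1--3 do exactly that, with Step~1 supplying the needed functional sub-Gaussianity of $\sumpsi$ via \cref{lem:sub_gauss_additivity} (which the paper has already established in \cref{eq:sgparam_bigpsi}). The only cosmetic difference is that the paper cites \citet[Thm.~5.36]{wainwright2019high} for the chaining bound rather than Vershynin.
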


Mimicking the arguments like those in display~\cref{eq:kernel_halving_proof_mbound_first,eq:kernel_halving_proof_mbound}, with $R = \rminpn[\inputcoreset]+\rmin_{\gkernel, n}$, and $R'=\rmin_{\gkernel, n}$, we find that conditional on the event $\einfjtwo\cap\esupj\cap_{j=1}^{\m}\ehalfj[j]$,
\begin{talign}
\sinfnorm{\P_n\gkernel-\Q_{\mrm{KH}}^{(\m)}\gkernel} &\seq{\cref{eq:diff_as_sum_of_psi}}
\sinfnorm{\sum_{j=1}^{\m} \frac{2^{j-1}}{n} \outvec[n/2^j, j]} \\
&\seq{\cref{eq:sumpsi_def}}\sinfnorm{\sumpsi} \\
   &	\leq \max(\frac{2^{\m}}{n}\infnorm{\gkernel},\ 32\sgparam[\sumpsi]\sinfnorm{\gkernel}^{\frac12}  \parenth{ \sqrt{\log(\frac{4}{\delta'})}
	+ 5\sqrt{d\log \parenth{2+\frac{\klip[\gkernel](\rminpn[\inputcoreset]+\rmin_{\gkernel, n})}{\infnorm{\gkernel}} } }})
	\\
	&\sless{\cref{eq:sgparam_bigpsi}} \sinfnorm{\gkernel}  \cdot  \frac{2^\m}{n}  \cdot 37 \sqrt{\log\parenth{\frac{6m}{2^m\delta^\star}}} \brackets{\sqrt{\log(\frac{4}{\delta'})}
	+ 5\sqrt{d\log \parenth{2+\frac{\klip[\gkernel](\rminpn[\inputcoreset]+\rmin_{\gkernel, n})}{\infnorm{\gkernel}} } }} 
	\label{eq:err_actual_bound}\\
	&\seq{\cref{eq:err_simple_defn}}\sinfnorm{\gkernel} \cdot \frac{2^\m}{n} \err_{\gkernel}(n, m, d, \delta^\star, \delta', \rminpn[\inputcoreset]),
\end{talign}
which does not happed with probability at most
\begin{talign}
    \P((\cap_{j=1}^{\m}\ehalfj[\m] \cap \einfjtwo\cap\esupj)^c)
    & \leq  \sum_{j=1}^m\Pr((\ehalfj[j])^c)
    + \Pr((\esupj)^c)
    + \Pr( (\einfjtwo)^c \vert \cap_{j=1}^{\m}(\ehalfj[j])) \\
    &\leq\sum_{j=1}^{\m}\sum_{i=1}^{\floor{n/2^j}} \delta_i \frac{2^{j-1}}{m}
    + \frac{\delta'}{2} + \frac{\delta'}{2},
\end{talign}
where the last step follows from \cref{eq:ehalfj,lemma:multi_round_output_basic_bound,lemma:multi_orlicz_bound} as claimed. The proof is now complete.

\subsection{Proof of \lowercase{\Cref{lemma:direct_cover_bound}}: \directcoverboundresultname} %
\label{ssub:proof_of_lemma:direct_cover_bound}
We claim that conditional on the event $\ehalf$~\cref{outvec_as_kernel_diff},
we deterministically have
    \begin{talign}
    \label{eq:psi_deterministic_bound}
        \sinfnorm{\outvec[n/2]}
        \leq
        \max\braces{
        n \ktail[\gkernel](R), \ \ 
        n \klip[\gkernel]r +
        \max_{\z\in\pcover(r, R)}|\outvec[n/2](z)|
        },
    \end{talign}
and the event
\begin{talign}
       \braces{
        \max_{\z\in\pcover(r, R)}\abss{\outvec[n/2](\z)}
        \leq 
        \sgparam[n/2]\sqrt{2\infnorm{\gkernel} \log(2|\pcover(r, R)|/\delta')}
        }
        \label{eq:w_max_cover_bound}
\end{talign}
occurs with probability at least $1-\delta'$.
Putting these two claims together yields the lemma.
We now prove these two claims separately.

\subsubsection{Proof of \cref{eq:psi_deterministic_bound}}
Note that on the event $\ehalf$~\cref{outvec_as_kernel_diff} we have
\begin{talign}
\label{eq:psi_rep_ehalf}
\outvec[n/2] 
    = n(\P_n\gkernel-\wtil{\Q}_{n/2}\gkernel)
    = \sum_{i=1}^n \eta_i \gkernel(\x_i,\cdot)
\end{talign}
for some $\eta_i \in \{-1,1\}$.
Now fix any $\x\in\reals^d$, and introduce  the shorthand $\pcover = \pcover(r, R)$.
The result follows by considering two cases.

\paragraph*{(Case 1) $\mbi{x\not\in \bigcup_{i=1}^n \ball(\x_i, R)}$} 
In this case, we have $\twonorm{\x-\axi}
\geq R$ for all $i \in [n]$ and therefore, representation~\cref{eq:psi_rep_ehalf} yields that
\begin{talign}
    \abss{\outvec[n/2](\x)}
    = \abss{\sum_{i=1}^n\eta_i\gkernel(\axi, \x)}
    \leq \sum_{i=1}^n \abss{\eta_i}\abss{\gkernel(\axi, \x)}
    = \sum_{i=1}^n\abss{\gkernel(\axi, \x)}
    \leq n\ktail[\gkernel](R),
\end{talign}
by Cauchy-Schwarz's inequality and the definition~\cref{eq:ktailbar} of $\ktail[\gkernel]$.

\paragraph*{(Case 2) $\mbi{x\in \bigcup_{i=1}^n \ball(\x_i, R)}$}
By the definition \cref{eq:sample_cover_set} of our cover $\pcover$, there exists $z \in \pcover$ such that $\twonorm{\x-\z}\leq r$.
Therefore, on the event $\ehalf$, using representation~\cref{eq:psi_rep_ehalf}, we find that
\begin{talign}
    \abss{\outvec[n/2](\x)}
    \leq \abss{\sum_{i=1}^n\eta_i\gkernel(\axi, \x)}
    &= \abss{\sum_{i=1}^n\eta_i\parenth{\gkernel(\axi, \x)-\gkernel(\axi, \z)+\gkernel(\axi, \z)}} \\
    &\leq \abss{\sum_{i=1}^n\eta_i\parenth{\gkernel(\axi, \x)-\gkernel
    (\axi, \z)}}+\abss{\sum_{i=1}^n\eta_i\gkernel(\axi, \z)}
    \\
    &\leq \sum_{i=1}^n\abss{\gkernel(\axi, \x)-\gkernel(\axi, \z)}+
    \abss{\outvec[n/2](\z)} \\
    &\leq n\klip[\gkernel](r) +\sup_{\z'\in\pcover}\abss{\outvec[n/2](\z')}
\end{talign}
by Cauchy-Schwarz's inequality.

\subsubsection{Proof of \cref{eq:w_max_cover_bound}}
Introduce the shorthand $\pcover = \pcover(r, R)$. Then applying the union bound, the pointwise sub-Gaussian property~\ref{item:pointwise_subgauss} of \cref{sbhw_properties}, and the sub-Gaussian Hoeffding inequality \citep[Prop.~2.5]{wainwright2019high}, we find that
\begin{talign}
\Pr(\max_{\z\in\pcover}
|\outvec[n/2](z)| > t \mid)
    &\leq 
\sum_{\z \in \pcover} 2\exp(-t^2/(2\sgparam[n/2]^2\gkernel(\z,\z))) \\
    &\leq
2|\pcover|\exp(-t^2/(2\sgparam[n/2]^2\infnorm{\gkernel})) 
    = \delta' \\
    \qtext{for} 
t &\defeq \sgparam[n/2]\sqrt{2\infnorm{\gkernel} \log(2|\pcover|/\delta')},
\end{talign}
as claimed.

\subsection{Proof of \lowercase{\Cref{lemma:output_basic_bound}}: \outputbasicboundresultname}
\label{sub:proof_of_lemma_output_basic_bound}
For any $R>0$, we deterministically have
\begin{talign}
\label{eq:maxima_decomposition}
    \infnorm{\outvec[n/2]} \leq \max(\sup_{\x\in\ball(0, R)}\abss{\outvec[n/2](\x)}
    , \sup_{\x\in\ball^c(0, R)}\abss{\outvec[n/2](\x)})
\end{talign}
Since $R = R'+ \rminpn[\inputcoreset]$, for any $\x \in \ball^c(0, R)$, we have $\twonorm{\x-\axi}\geq R'$ for all $i \in [n]$. Thus conditional on the event $\ehalf$, applying property~\ref{item:signed_sum} from \cref{sbhw_properties}, we find that
\begin{talign}
    \abss{\outvec[n/2](\x)}
    = \abss{\sum_{i=1}^n\eta_i\gkernel(\axi, \x)}
    \leq \sum_{i=1}^n \abss{\eta_i}\abss{\gkernel(\axi, \x)}
    = \sum_{i=1}^n\abss{\gkernel(\axi, \x)}
    \leq n\ktail[\gkernel](R'),
\end{talign}
by Cauchy-Schwarz's inequality and the definition~\cref{eq:ktailbar} of $\ktail[\gkernel]$.

For the first term on the RHS of \cref{eq:maxima_decomposition}, we have
\begin{talign}
\sup_{\x\in\ball(0, R)}\abss{\outvec[n/2](\x)}&\leq \abss{\outvec[n/2](0)} + \sup_{\x\in\ball(0, R)}\abss{\outvec[n/2](\x)-\outvec[n/2](0)} \\
    &\leq \abss{\outvec[n/2](0)} +  \sup_{\x,\x'\in\ball(0, R)}\abss{\outvec[n/2](\x)-\outvec[n/2](\x')}.
\end{talign}
Now, the sub-Gaussianity of $\outvec[n/2]$ (property \ref{item:pointwise_subgauss} of \cref{sbhw_properties}) with $x=0$, and the sub-Gaussian Hoeffding inequality \citep[Prop.~2.5]{wainwright2019high} imply that
\begin{align}
    \abss{\outvec[n/2](0)} \leq \sigma_{n/2} \sqrt{2\gkernel(0, 0)\log(4/\delta)}
    \leq \sigma_{n/2} \sqrt{2\infnorm{\gkernel}\log(4/\delta')},
\end{align}
with probability at least $1-\delta'/2$. Putting the pieces together completes the proof.

\subsection{Proof of \lowercase{\Cref{lemma:orlicz_bound}}: \orliczboundresultname}
\label{sub:proof_of_lemma_orlicz_bound}

The proof proceeds by using concentration arguments for Orlicz processes \citep[Def.~5.5]{wainwright2019high}. Given a set $\mbb{T} \subseteq \Rd$, a random process $\braces{Z_{\x}, \x \in \mbb{T}}$ is called an Orlicz $\orlicz$-process with respect to the metric $\ometric$ if 
\begin{talign}
\label{eq:subgaussian_process_defn}
\norm{Z_{\x}-Z_{\x'}}_{\orlicz} \leq \ometric(\x, \x')
\qtext{for all} \x,\x'\in\mbb{T},
\end{talign}
where for any random variable $Z$, its Orlicz $\orlicz$-norm is defined as
 \begin{talign}
\label{eq:orlicz_norm}
\norm{Z}_{\orlicz} = \inf\braces{\lambda>0: \Exs[\exp(Z^2/\lambda^2)] \leq 2}.
\end{talign}
Our next result (see \cref{sub:proof_of_lemma_orlicz_process} for the proof) establishes that $\outvec[n/2]$ is an Orlicz process with respect to a suitable metric. (For clarity, we use $\ball_2$ to denote the Euclidean ball.)
\begin{lemma}[\orliczprocessresultname]
\label{lemma:orlicz_process}
If $\gkernel$ satisfies \cref{asmp:bounded_measurable,assum:lipkernel}, then, given any fixed $R>0$, the random process $\braces{\outvec[n/2](\x), \x \in \ball_2(0; R)}$ is an Orlicz $\orlicz$-process with respect to the metric $\ometric$ defined in \cref{eq:orlicz_metric}, i.e., 
\begin{talign}
\label{eq:subgaussian_process}
\norm{\outvec[n/2](\x)-\outvec[n/2](\x')}_{\orlicz} \leq \ometric(\x, \x')
\qtext{for all} \x,\x'\in\ball_2(0; R),
\end{talign}
where  the metric $\ometric$ is defined as
\begin{talign}
\label{eq:orlicz_metric}
\ometric(\x, \x')= \sqrt{\frac{8}{3}} \cdot \sigma_{n/2} \cdot \min\parenth{\sqrt{2\klip\twonorm{\x-\x'}}, 2\sqrt{\infnorm{\gkernel}}}.
\end{talign} 
\end{lemma}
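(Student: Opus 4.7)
The plan is to reduce the claim to three ingredients that are already available: (1) the reproducing property to express the increment as an RKHS inner product, (2) the functional sub-Gaussianity of $\outvec[n/2]$ established in Property~\ref{item:fun_subgauss} of \cref{sbhw_properties}, and (3) a sharp conversion from sub-Gaussianity of a random variable to its Orlicz $\orlicz$-norm. I would carry these out in the following order.

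First, for any $\x, \x' \in \ball_2(0; R)$, the reproducing property gives
$\outvec[n/2](\x) - \outvec[n/2](\x') = \doth{\outvec[n/2], \kernel(\x,\cdot) - \kernel(\x',\cdot)}$, so conditioning on $\vseq[n]$ (which determines $\sgparam[n/2]$ and the test function $\kernel(\x,\cdot)-\kernel(\x',\cdot)$), the bound~\eqref{eq:subgaussian_bound_on_w} yields conditional sub-Gaussianity of $\outvec[n/2](\x)-\outvec[n/2](\x')$ with parameter $\sgparam[n/2] \knorm{\kernel(\x,\cdot)-\kernel(\x',\cdot)}$.

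Second, I would control the RKHS norm of the difference using the identity
$\knorm{\kernel(\x,\cdot)-\kernel(\x',\cdot)}^2 = \kernel(\x,\x) + \kernel(\x',\x') - 2\kernel(\x,\x')$
and bound it two ways: by Lipschitzness of $\kernel$ applied to both $\kernel(\x,\x)-\kernel(\x,\x')$ and $\kernel(\x',\x')-\kernel(\x',\x)$, yielding $\leq 2\klip[\kernel]\twonorm{\x-\x'}$; and by boundedness, using $\kernel(\x,\x),\kernel(\x',\x') \leq \infnorm{\kernel}$ and $-\kernel(\x,\x') \leq \infnorm{\kernel}$ to obtain $\leq 4\infnorm{\kernel}$. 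Taking the square root of the minimum gives $\knorm{\kernel(\x,\cdot)-\kernel(\x',\cdot)} \leq \min(\sqrt{2\klip[\kernel]\twonorm{\x-\x'}}, 2\infnorm{\kernel}^{1/2})$.

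Third, and this is the step that determines the prefactor $\sqrt{8/3}$, I would convert sub-Gaussianity to the Orlicz $\orlicz$-norm via the Gaussian integration identity rather than a Chernoff-plus-integration argument (which would yield the looser constant $\sqrt{6}$). For any $\sigma$-sub-Gaussian $Z$ and independent $G \sim \Gsn(0,1)$, since $\E_G[\exp(\sqrt{2} t G)]=\exp(t^2)$, Fubini gives
\begin{talign*}
\E[\exp(Z^2/\lambda^2)] = \E_G\,\E_Z[\exp(\sqrt{2} GZ/\lambda)] \leq \E_G[\exp(\sigma^2 G^2/\lambda^2)] = (1-2\sigma^2/\lambda^2)^{-1/2}
\end{talign*}
whenever $\lambda^2 > 2\sigma^2$, and this quantity is at most $2$ iff $\lambda^2 \geq 8\sigma^2/3$. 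Hence $\norm{Z}_{\orlicz} \leq \sqrt{8/3}\,\sigma$. Combining this with steps one and two, conditional on $\vseq[n]$, yields
\begin{talign*}
\norm{\outvec[n/2](\x)-\outvec[n/2](\x')}_{\orlicz}
\leq \sqrt{\tfrac{8}{3}}\,\sgparam[n/2]\min(\sqrt{2\klip[\kernel]\twonorm{\x-\x'}}, 2\infnorm{\kernel}^{1/2})
= \ometric(\x,\x'),
\end{talign*}
which is the desired claim. There is no real obstacle here; the only subtlety is recognizing that the Gaussian integration trick, rather than the standard Chernoff-based conversion, is required to obtain the precise constant $\sqrt{8/3}$ stated in the definition of $\ometric$.
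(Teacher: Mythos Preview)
Your proof is correct and follows essentially the same three-step structure as the paper's: reproducing property to express the increment as an inner product, functional sub-Gaussianity (Property~\ref{item:fun_subgauss}) to obtain a sub-Gaussian parameter $\sgparam[n/2]\knorm{\kernel(\x,\cdot)-\kernel(\x',\cdot)}$, and then the two-way bound on $\knorm{\kernel(\x,\cdot)-\kernel(\x',\cdot)}^2$ via Lipschitzness and boundedness. The only difference is that the paper cites \citet[Thm.~2.6(iv)]{wainwright2019high} for the sub-Gaussian-to-$\orlicz$ conversion with constant $\sqrt{8/3}$, whereas you derive this constant explicitly via the Gaussian randomization identity; both routes yield the identical bound, so your argument is simply a more self-contained version of the same proof.
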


Given \cref{lemma:orlicz_process}, we can invoke high probability bounds
for Orlicz processes. For the Orlicz process $\braces{\outvec[n/2](\x),
\x \in \mbb{T}}$, given any fixed $\delta'\in (0, 1]$, \citet[Thm~5.36]{wainwright2019high} implies that
 \begin{talign}
 \label{eq:orlicz_hpbound}
 \sup_{\x, \x' \in \mbb{T}} \abss{\outvec[n/2](\x)-\outvec[n/2](\x')} \leq 8\parenth{\mc J_{\mbb{T}, \ometric}(\diam) + \diam \sqrt{\log(4/\delta')}},
 \end{talign}
 with probability at least $1-\delta'/2$, where
 $\diam \defeq \sup_{\x, \x' \in \mbb{T}} \ometric(\x, \x')$ denotes the $\ometric$-diameter of $\mbb{T}$, and the quantity $\mc J_{\mbb{T}, \ometric}(\diam)$ is defined as
 \begin{talign}
 \label{eq:def_J}
 \mc J_{\mbb{T}, \ometric}(\diam) \defeq \int_{0}^{\diam} \sqrt{\log(1+\mc{N}_{\mbb{T}, \ometric}(u))} du.
 \end{talign}
 Here $\mc{N}_{\mbb{T}, \ometric}(u)$ denotes the $u$-covering number of $\mbb{T}$ with respect to metric $\ometric$, namely the cardinality of the smallest cover $\mc{C}_{\mbb{T}, \ometric}(u) \subseteq\Rd$ such that 
\begin{talign}
\label{eq:rho_ball}
\mbb{T} \subseteq \cup_{\z \in \mc{C}_{\mbb{T}, \ometric}(u)} \ball_{\ometric}(\z; u)
\qtext{where} \ball_{\ometric}(\z; u) \defeq \braces{\x\in\Rd: \ometric(\z,\x)\leq u}.
\end{talign}
To avoid confusion, let $\ball_2$ denote the Euclidean ball (metric induced by $\twonorm{\cdot}$). Then in our setting, we have $ \mbb{T} = \ball_2(0; R) $ and hence
 \begin{talign}
 \label{eq:diam_beta_def}
 D_R \defeq \sup_{\x, \x' \in \ball_2(0; R)} \ometric(\x, \x') = \min\parenth{\sqrt{\frac{\temptwo_R}{2}}, \sigma_{n/2}\sqrt{\frac{32\infnorm{\gkernel}}{3}}},
 \ \text{with}\ \temptwo_R = \frac{32}{3} \sigma^2_{n/2} \klip R.
 \end{talign}
 Some algebra establishes that this definition of $\diam_R$ is identical to that specified in \cref{lemma:orlicz_bound}.
 Next, we derive bounds for $\mc N_{\mbb{T}, \ometric}$ and $\mc J_{\mbb{T}, \ometric}$ (see \cref{sub:proof_of_lemma_covering_and_j2} for the proof).
\begin{lemma}[\boundsoncoveringandjresultname]
 \label{lemma:bound_covering_alternative}
 If $\gkernel$ satisfies \cref{asmp:bounded_measurable,assum:lipkernel}, then, for $\mbb{T} = \ball_2(0; R)$ with $R>0$ and $\ometric$ defined in \cref{eq:orlicz_metric}, we have
 \begin{talign}
 \label{eq:rho_covering_bound}
 \mc N_{\mbb{T}, \ometric}(u) &\leq \parenth{1+\temptwo_R/{u^2}}^{d}, \qtext{and} \\
 \label{eq:bound_j2}
 \mc J_{\mbb{T}, \ometric}(\diam_R) &\leq \sqrt{d} D_R \brackets{ 3 + \sqrt{2\log({\temptwo_R}/{\diam_R^2})}},
 \end{talign}
 where $\diam_R$ and $\temptwo_R$ are defined in \cref{eq:diam_beta_def}.
\end{lemma}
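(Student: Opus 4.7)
\textbf{Proof plan for \cref{lemma:bound_covering_alternative}.}

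\emph{Step 1 (covering number).} The bound \cref{eq:rho_covering_bound} reduces to a Euclidean covering number bound via the metric domination $\ometric(\x,\x') \leq \sqrt{8/3}\,\sigma_{n/2}\sqrt{2\klip\twonorm{\x-\x'}}$ read off from \cref{eq:orlicz_metric}. Consequently, if $\twonorm{\x-\x'} \leq 3u^2/(16\sigma_{n/2}^2\klip)$, then $\ometric(\x,\x') \leq u$. So any Euclidean $r$-cover of $\ball_2(0;R)$ at scale $r = 3u^2/(16\sigma_{n/2}^2\klip)$ is also an $\ometric$-cover at scale $u$, and the classical bound $(1+2R/r)^d$ from \citet[Lem.~5.7]{wainwright2019high} yields $\mc N_{\mbb T,\ometric}(u)\leq (1+\temptwo_R/u^2)^d$ since $2R/r = \temptwo_R/u^2$.

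\emph{Step 2 (pointwise bound on the integrand).} Using $\log(1+(1+x)^d)\leq \log 2 + d\log(1+x)$ together with the subadditivity $\sqrt{a+b}\leq \sqrt a + \sqrt b$, the integrand in \cref{eq:def_J} satisfies
\begin{talign}
    \sqrt{\log(1+\mc N_{\mbb T,\ometric}(u))} \leq \sqrt d \bigl[\sqrt{\log 2} + \sqrt{\log(1+\temptwo_R/u^2)}\bigr].
\end{talign}
Since $\diam_R\leq \sqrt{\temptwo_R/2}$ by \cref{eq:diam_beta_def}, every $u\in(0,\diam_R]$ satisfies $\temptwo_R/u^2\geq 2$, so $1+\temptwo_R/u^2 \leq 2\temptwo_R/u^2$, which gives $\log(1+\temptwo_R/u^2)\leq \log 2 + 2\log(\sqrt{\temptwo_R}/u)$ and hence $\sqrt{\log(1+\temptwo_R/u^2)}\leq \sqrt{\log 2}+\sqrt 2\,\sqrt{\log(\sqrt{\temptwo_R}/u)}$.

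\emph{Step 3 (evaluating the integral).} After substituting $v=u/\sqrt{\temptwo_R}$, so that $a \defeq \diam_R/\sqrt{\temptwo_R}\leq 1/\sqrt 2$, the remaining task is to bound $\int_0^a \sqrt{\log(1/v)}\,dv$. Writing $\log(1/v)=\log(1/a)+\log(a/v)$ and applying $\sqrt{a+b}\leq \sqrt a+\sqrt b$ yields
\begin{talign}
\int_0^a \sqrt{\log(1/v)}\,dv \leq a\sqrt{\log(1/a)} + a\int_0^1 \sqrt{\log(1/w)}\,dw = a\bigl[\sqrt{\log(1/a)} + \sqrt{\pi}/2\bigr],
\end{talign}
where we used the standard Gaussian moment identity $\int_0^1\sqrt{\log(1/w)}\,dw=\sqrt\pi/2$. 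Undoing the substitution converts $\sqrt{\log(1/a)}$ into $\tfrac{1}{\sqrt 2}\sqrt{\log(\temptwo_R/\diam_R^2)}$ and multiplies by $\sqrt{\temptwo_R}$, producing $\diam_R$ as an overall scale.

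\emph{Step 4 (collecting constants).} Assembling the pieces from Steps~2 and~3, I obtain
\begin{talign}
    \mc J_{\mbb T,\ometric}(\diam_R) \leq \sqrt d\,\diam_R\bigl[\,2\sqrt{\log 2} + \sqrt{\pi/2} + \sqrt{\log(\temptwo_R/\diam_R^2)}\bigr],
\end{talign}
and the inequalities $2\sqrt{\log 2}+\sqrt{\pi/2}<3$ and $\sqrt{\log(\cdot)}\leq \sqrt{2\log(\cdot)}$ deliver \cref{eq:bound_j2}.

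\emph{Main obstacle.} The covering bound of Step~1 is routine, so the only real work is Step~3: controlling $\int_0^a\sqrt{\log(1/v)}\,dv$ with the right leading constant in front of $\sqrt{\log(\temptwo_R/\diam_R^2)}$. The decomposition $\log(1/v)=\log(1/a)+\log(a/v)$ plus the closed-form Gaussian integral is the cleanest route; a direct integration-by-parts bound loses constants and obscures the $\sqrt{2}$ slack that ultimately lets us absorb $\sqrt{\pi/2}$ and $2\sqrt{\log 2}$ into the stated constant~$3$.
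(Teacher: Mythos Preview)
Your proof is correct, and your approach to the integral bound genuinely differs from the paper's. Step~1 matches the paper exactly (Euclidean covering via \citet[Lem.~5.7]{wainwright2019high}). For the entropy integral, the paper substitutes $s\gets\temptwo_R/u^2$, bounds $\log(1+(1+s)^d)\leq d(\log s+2/s)$, and then evaluates $\int s^{-3/2}\sqrt{\log s}\,ds$ via an incomplete Gamma function estimate, arriving at $\sqrt d\,\diam_R\bigl[\diam_R/\sqrt{\temptwo_R}+\sqrt2\bigl(\sqrt{\log(\temptwo_R/\diam_R^2)}+1/\sqrt{\log(\temptwo_R/\diam_R^2)}\bigr)\bigr]$ before simplifying. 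Your route---substituting $v=u/\sqrt{\temptwo_R}$, splitting $\log(1/v)=\log(1/a)+\log(a/v)$, and invoking the closed-form Gaussian moment $\int_0^1\sqrt{\log(1/w)}\,dw=\sqrt\pi/2$---is more elementary: it avoids the incomplete Gamma tail bound entirely and in fact yields the sharper intermediate estimate $\sqrt d\,\diam_R\bigl[2\sqrt{\log 2}+\sqrt{\pi/2}+\sqrt{\log(\temptwo_R/\diam_R^2)}\bigr]$, which you then relax to match the stated lemma. Both approaches are valid; yours is cleaner and slightly tighter.
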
 
Doing some algebra, we find that
\begin{talign}
\label{eq:max_by_sum}
\frac{\temptwo_R}{\diam_R^2} = \max\parenth{2, \frac{\klip[\gkernel]R}{\infnorm{\gkernel}} } \leq 2+\frac{\klip[\gkernel]R}{\infnorm{\gkernel}}.
\end{talign}
Moreover, note that $3+ \sqrt{2\log(2+a)} \leq 6\sqrt{\log (2+a)}$ for all $a \geq 0$, so that the bound~\cref{eq:bound_j2} can be  simplified to
\begin{talign}
\label{eq:j2_simple_bound}
    \mc J_{\mbb{T}, \ometric}(\diam_R) \leq 6\sqrt{d}\diam_R \sqrt{\log(2+\frac{\klip[\gkernel]R}{\infnorm{\gkernel}})}.
\end{talign}
Substituting the bound~\cref{eq:j2_simple_bound} in \cref{eq:orlicz_hpbound} yields that the event $\esup$ defined in \cref{eq:orlicz_hpbound_lemma} holds with probability at least $1-\delta'/2$, as claimed.

\subsection{Proof of \lowercase{\Cref{lemma:orlicz_process}}: \orliczprocessresultname}
\label{sub:proof_of_lemma_orlicz_process}
The proof of this lemma follows from the sub-Gaussianity of $\outvec[n/2]$ established in \cref{sbhw_properties}. Introduce the shorthand $Y\defeq\outvec[n/2](\x)-\outvec[n/2](\x')$. Applying property~\ref{item:fun_subgauss} of \cref{sbhw_properties} with $u = \gkernel(\x, \cdot)-\gkernel(\x', \cdot)$ along with the reproducing property of the kernel $\gkernel$, we find that for any $t \in \real$,
\begin{talign}
\label{eq:subgaussian_Y}
	\Exs[\exp(tY) ] = 
	\Exs[\exp(t\dotk{\outvec[n/2], \gkernel(\x, \cdot)\!-\!\gkernel(\x', \cdot)})  ] \leq \exp(\frac12t^2\sigma_{n/2}^2 \knorm{\gkernel(\x, \cdot)\!-\!\gkernel(\x',\cdot)}^2 ).
\end{talign}
That is, the random variable $Y$ is sub-Gaussian with parameter $\sigma_Y \defeq \sigma_{n/2}\knorm{\gkernel(\x, \cdot)-\gkernel(\x',\cdot)}$. Next, \citet[ Thm~2.6 (iv)]{wainwright2019high} yields that
$\Exs[\exp(\frac{3Y^2}{8\sigma_Y^2}) ]\leq 2 $, which in turn implies that $\norm{Y}_{\orlicz} \leq \sqrt{\frac{8}{3}} \sigma_Y$.
Moreover, we have
 \begin{talign}
	 \knorm{\gkernel(\x, \cdot)-\gkernel(\x',\cdot)}^2 = \gkernel(\x, \x) - \gkernel(\x, \x') + \gkernel(\x', \x') - \gkernel(\x', \x)
	 \leq \min\parenth{2 \klip\twonorm{\x-\x'}, 4 \infnorm{\gkernel}},
 \end{talign}
 using the Lipschitz continuity of $\gkernel$ and the definition of $\infnorm{\gkernel}$~\cref{eq:klip}.
 Putting the pieces together along with the definition~\eqref{eq:orlicz_metric} of $\ometric$, we find that $\norm{Y}_{\orlicz} \leq \ometric(\x, \x')$ thereby yielding the claim~\eqref{eq:subgaussian_process}.

 \subsection{Proof of \lowercase{\Cref{lemma:bound_covering_alternative}}: \boundsoncoveringandjresultname}
 \label{sub:proof_of_lemma_covering_and_j2} 

 We use the relation of $\ometric$ to the Euclidean norm $\twonorm{\cdot}$ to establish the bound~\eqref{eq:rho_covering_bound}. The definitions~\cref{eq:orlicz_metric,eq:rho_ball} imply that
 \begin{talign}
	 \label{eq:ball_inclusion}
	 \ball_{2}(\z; \frac{u^2}{\tempone}) \subseteq  \ball_{\ometric}(\z; u)
	 \qtext{for} \tempone \defeq  \frac{16}{3}\sigma^2_{n/2}\klip,
 \end{talign}
 since $\ometric(\x, \x') \leq \sqrt{\tempone\twonorm{\x-\x'}}$. Consequently, any $u^2/\tempone$-cover $\mc C$ of $\mbb{T}$ in the Euclidean ($\twonorm{\cdot}$) metric automatically yields a $u$-cover of $\mbb{T}$ in $\ometric$ metric as we note that
 \begin{talign}
	 \ball_2(0; R) \subseteq \cup_{\z\in\mc C} \ball_2(\z; \frac{u^2}{\tempone}) 
	 \subseteq \cup_{\z\in\mc C} \ball_{\ometric}(\z; u).
 \end{talign}
 Consequently, the smallest cover $\mc C_{\mbb{T}, \ometric}(u)$ would not be larger than than the smallest cover $\mc C_{\mbb{T}, \twonorm{\cdot}}(u^2/\tempone)$, or equivalently:
 \begin{talign}
	 \mc N_{\mbb{T}, \ometric}(u) \leq \mc N_{\mbb{T}, \twonorm{\cdot}}(\frac{u^2}{\tempone})) 
	 \sless{(i)} \parenth{1+{2R\tempone}/{u^2}}^{d},
 \end{talign}
 where inequality~(i) follows from \citet[Lem~5.7]{wainwright2019high} using the fact that $\mbb{T} = \ball_2(0; R)$. Noting that $\temptwo_{R}=2R\tempone$ yields the bound~\eqref{eq:rho_covering_bound}.
 
 We now use the bound~\eqref{eq:rho_covering_bound} on the covering number to establish the bound~\cref{eq:bound_j2}.

 \paragraph*{Proof of bound on $\mc J_{\mbb{T}, \ometric}$}
 Applying the definition~\cref{eq:def_J} and the bound~\cref{eq:rho_covering_bound}, we find that
 \begin{talign}
	 \mc J_{\mbb{T}, \ometric}(\diam_R) 
	 &\leq \int_{0}^{\diam_R} \sqrt{\log(1+\parenth{1+{\temptwo_R}/{u^2}}^{d})} du \\
	 &\seq{(i)} \sqrt{\frac{\temptwo_R}{2}} \int_{{\temptwo_R}/{\diam_R^2}}^{\infty} s^{-3/2}\sqrt{\log(1+(1+s)^d)} ds,
	 \label{eq:j_first_bound}
 \end{talign}
 where step~(i) follows from a change of variable $s\gets \temptwo_R/u^2$.
 Note that $\log(1+a) \leq \log a + 1/a$, and $\sqrt{a+b}\leq \sqrt{a} + \sqrt{b}$ for any $a, b\geq0$. Applying these inequalities, we find that
 \begin{talign}
	 \label{eq:log_bound}
	 &\log(1+(1+s)^d) \leq 
	 d \log (2+s) \leq d(\log s + \frac{1}{s} + \frac{1}{1+s}) 
	 \leq d(\log s + \frac{2}{s}),
 \end{talign}
 for $s \in [\temptwo_R/\diam_R^2, \infty) \stackrel{\cref{eq:max_by_sum}}{\subseteq} [2, \infty)$. Consequently,
 \begin{talign}
	 \label{eq:integral_first_bound}
	 \int_{\temptwo_R/\diam_R^2}^{\infty} s^{-3/2}\sqrt{\log(1+(1+s)^d)} ds 
	 \leq \sqrt{d} \int_{\temptwo_R/\diam_R^2}^{\infty} \parenth{\frac{\sqrt{\log s}}{s^{3/2}} + \frac{\sqrt{2}}{s^2}} ds.
 \end{talign}
 Next, we note that
 \begin{talign}
 \label{eq:ssq_integral}
    \int_{\temptwo_R/\diam_R^2}^{\infty} \frac{\sqrt{2}}{s^2} ds
    &= \sqrt{2} \frac{\diam_R^2}{\temptwo_R}, \qtext{and}\\
	  \int_{\temptwo_R/\diam_R^2}^{\infty} \frac{\sqrt{\log s}}{s^{3/2}} ds 
	  &= -2\sqrt{\frac{\log s}{s}}\! -\! \sqrt{2} \Gamma(\frac12, \frac12\log s) \big\vert_{s=\temptwo_R/\diam_R^2}^{s=\infty} \\
	  &\sless{(i)} \frac{2\diam_R}{\sqrt{\temptwo_R}} \parenth{ \sqrt{\log(\temptwo_R/\diam_R^2)} 
	  +  \frac{1}{\sqrt{\log(\temptwo_R/\diam_R^2)}}},
  \label{eq:log_s_integral}
 \end{talign}
 where step~(i) follows from the following bound on the incomplete Gamma function:
 \begin{talign}
	 \Gamma(\frac12, a) = \int_{a}^\infty\frac{1}{\sqrt{t}} e^{-t} dt \leq 
	 \frac{1}{\sqrt{a}} \int_{a}^\infty e^{-t} dt = a^{-\frac12} e^{-a}
	 \qtext{for any} a >0.
 \end{talign}
 Putting the bounds~\cref{eq:j_first_bound,eq:integral_first_bound,eq:ssq_integral,eq:log_s_integral} together, we find that
 \begin{talign}
	 \mc J_{\mbb{T}, \ometric}(\diam_R)  \leq \sqrt{d} \diam_R \parenth{ \frac{\diam_R}{\sqrt{\temptwo_R}} + \sqrt{2} \parenth{\sqrt{\log(\temptwo_R/\diam_R^2)} +  1/{\sqrt{\log(\temptwo_R/\diam_R^2)}}} }.
 \end{talign}
 Note that by definition~\cref{eq:diam_beta_def} $\frac{\diam_R}{\sqrt{\temptwo_R}} \leq \frac{1}{\sqrt{2}}$. Using this observation and the fact that $\frac1{\sqrt{2}} + \sqrt{\frac2{\log2}} \leq 3$ yields the claimed bound~\cref{eq:bound_j2}.

\section{Proof of \cref{rem:linf-rate-examples}: \linfrateexamplename}
\label{sec:proof_of_rem:linf-rate-examples}
Note that when the kernel is restricted to  a compact domain, we have $\rk \leq \order(\sigma)$ so that in this case, $\frac{\klip(\rk\wedge\rminpn[\inputcoreset]) }{\sinfnorm{\gkernel}}  \leq \order(L)$.
Next, we establish the bounds on $L$ and $\kappa^\dagger(1/n)$ for each of the kernels, which immediately imply the respective claims both when supported on $\real^d$ and on a compact domain with $\sigma  = \sqrt{d}$.
\begin{itemize}
    \item For a Gaussian kernel, we  have $\kappa(r) = e^{-r^2/2}$, which when put together with \cref{eq:gauss_klip,eq:gauss_rmin} implies that $L=\sqrt{2/e}$, and $\kappa^\dagger(1/n) = \sqrt{2\log n}$.
    \item For a \Matern kernel, we have $\kappa(r) = c_b r^b K_b(r)$ for $b = \nu-\frac{d}{2}$, for suitable constant $c_b$, and function $K_b$; see \cref{table:kernel_sqrt_pair}. Moreover, \cref{eq:collect_matern_klip,eq:collect_matern_rmin} imply that $L = \frac{C_1}{\sqrt{C_2+b}} \leq \frac{C_1}{\sqrt{C_2+1}}$ for universal constants $C_1, C_2$, and $\kappa^{\dagger}(1/n)\leq \order(\max(b\log b, \log n)) = \order(\log n)$.
    \item For a IMQ kernels, we have $\kappa(r) = \frac{1}{(\mattwo^2+r^2)^{\matone}}$, so that
    \begin{talign}
            L \leq \sup_r \kappa'(r) = \sup_r \frac{2\matone r}{(\mattwo^2+r^2)^{\matone+1}} = 
            \frac{2\matone \mattwo}{\mattwo^{2(\matone+1)}} \frac{1}{\sqrt{1+2\matone}} (\frac{2\matone+1}{2\matone+2})^{\matone+1},
    \end{talign}
    and $\kappa^{\dagger}(1/n) = \sqrt{n^{1/\matone}-\mattwo^2} \leq n^{1/2\matone}$.
\end{itemize}

\section{Proof of \cref{corollary:kernel_thinning_coreset_bound}: \kernelthinningrecursiveresultsname}
\label{sec:proof_of_corollary:kernel_thinning_coreset_bound}
First applying \cref{kernel_halving_results}\cref{item:kernel_halving_multiple_rounds} with $\gkernel=\ksqrt$ yields that
\begin{talign}
\label{eq:ksqrt_linf_guarantee}
 	\sinfnorm{\P_n\ksqrt-\Q_{\mrm{KT}}^{(\m, 1)}\ksqrt}
 	\leq \frac{\sinfnorm{\ksqrt}}{\nout}
 	\cdot\err_{\ksqrt}(n, m, d, \delta^\star, \delta', \rminnew[\inputcoreset, \ksqrt, n])
\end{talign}
with probability at least $1\!-\!\delta'\!-\!\sum_{j=1}^{\m} \frac{2^{j-1}}{m} \sum_{i=1}^{2^{m-j}\floor{n/2^m}}\delta_i$. Call this event $\mc E$.

Let $r' = \rksmaxgen{\inputcoreset, \ksqrt, n_{\mrm{out}}}+\vareps$ denote a shorthand notation, where $\vareps>0$ is an arbitrary scalar.
Applying \cref{theorem:coreset_to_mmd} with %
$r=2 r'$ and $a=b=\frac12$, we find that on event $\mc E$,
\begin{talign}
    &\mmd_{\kernel}(\inputcoreset, \coreset[\m, 1]) \\
    &=
    \mmd_{\kernel}(\P_n, \Q_{\mrm{KT}}^{(\m, 1)}) \\
     &\leq 
     	\sinfnorm{\P_n\ksqrt-\Q_{\mrm{KT}}^{(\m, 1)}\ksqrt} \cdim r^{d/2}
		+ 2\tail[\ksqrt](r')
		+ 2\sinfnorm{\kernel}^{\frac12}\! \cdot \max({\tail[\P_n](r'), \tail[\Q_{\mrm{KT}}^{(\m, 1)}](r')})
		\\
		&\seq{(i)} \sinfnorm{\P_n\ksqrt-\Q_{\mrm{KT}}^{(\m, 1)}\ksqrt} \cdot \sqrt{\frac{(4\pi)^{d/2}}{\Gamma(\frac{d}{2}+1)}} (r')^{\frac{d}{2}}+ \frac{2\sinfnorm{\ksqrt}}{\nout}, \\
		&\sless{\cref{eq:ksqrt_linf_guarantee}}
		\frac{\sinfnorm{\ksqrt}}{\nout}
 	\cdot\err_{\ksqrt}(n, m, d, \delta^\star, \delta', \rminnew[\inputcoreset, \ksqrt, n]) \cdot \sqrt{\frac{(4\pi)^{d/2}}{\Gamma(\frac{d}{2}+1)}} (r')^{\frac{d}{2}}+ \frac{2\sinfnorm{\ksqrt}}{\nout} \\
 	&= \frac{\sinfnorm{\ksqrt}}{\nout} \brackets{2+\sqrt{\frac{(4\pi)^{d/2}}{\Gamma(\frac{d}{2}+1)}} (r')^{\frac{d}{2}} \err_{\ksqrt}(n, m, d, \delta^\star, \delta', \rminnew[\inputcoreset, \ksqrt, n])}
		\label{eq:thm1_proof_step2}
\end{talign}
where step~(i) uses the following arguments: (a) $\cdim r^{d/2} = \sqrt{\frac{(4\pi)^{d/2}}{\Gamma(\frac{d}{2}+1)}} (r')^{\frac{d}{2}}$; (b) $\Q_{\mrm{KT}}^{(\m, 1)}$ is supported on a subset of points from $\inputcoreset$ and hence 
\begin{talign}
     \max\{{\tail[\P_n](r'), \tail[\Q_{\mrm{KT}}^{(\m, 1)}](r')}\} = \tail[\P_n](r') = 0
     \qtext{for any }r' > \rminpn[\inputcoreset];
\end{talign}
and (c) $\tail[\ksqrt](r') \leq \frac{\sinfnorm{\ksqrt}}{\nout}$ for any $r' > \rktaugen[\ksqrt,\nout]$.
Noting that \cref{eq:thm1_proof_step2} applies simultaneously for all $\vareps>0$ under event $\mc E$, taking the limit  $\vareps\to0$ yields that
\begin{talign}
\mmd_{\kernel}(\inputcoreset, \coreset[\m, 1])
\leq \frac{\sinfnorm{\ksqrt}}{\nout} \brackets{2+\sqrt{\frac{(4\pi)^{d/2}}{\Gamma(\frac{d}{2}+1)}} (\rksmaxgen{\inputcoreset,\ksqrt,\nout})^{\frac{d}{2}} \err_{\ksqrt}(n, m, d, \delta^\star, \delta', \rminnew[\inputcoreset, \ksqrt, n])}
\label{eq:mmd_ktsplit_bound}
\end{talign}
with probability at least $1\!-\!\delta'\!-\!\sum_{j=1}^{\m} \frac{2^{j-1}}{m} \sum_{i=1}^{2^{m-j}\floor{n/2^m}}\delta_i$, as claimed.

\section{Derivation of \lowercase{\Cref{table:kernel_sqrt_pair}}: \sqrttablename} %
\label{sec:proof_for_tables}
In this appendix, we derive the results stated in \cref{table:kernel_sqrt_pair}.

\subsection{General proof strategy}
\label{sec:proof_of_sqrt_kernels}
Let $\fourier$ denote the Fourier operator~\cref{defn:fourier_transform}.
In \cref{table:kernel_sqrt_pair_proof}, we state the continuous $\kappa$ such that
$\kernel(\x,\y) = \kappa(\x-\y)$ in the first column, its  Fourier transform~\cref{defn:fourier_transform} $\widehat{\kappa}$ in the second column, the square-root Fourier transform in the third column, and the square-root kernel in the fourth column, given by
$\ksqrt(x,y) = \frac{1}{(2\pi)^{d/4}}\kappasqrt(\x-\y)$
with $\kappasqrt =\fourier(\sqrt{\widehat{\kappa}})$.
\cref{sqrt_translation_invariant} along with expressions in \cref{table:kernel_sqrt_pair_proof} directly establishes the validity of the square-root kernels for the Gaussian and (scaled) B-spline kernels. For completeness, we also illustrate the remaining calculus for the B-spline kernels in  \cref{ssub:derive_krt_spline}. We do a similar calculation in \cref{ssub:derive_krt_matern} for the \Matern kernel for better exposition of the involved expressions.

\newcommand{\fouriertransformtablecaption}{
\noindent\caption{\label{table:kernel_sqrt_pair_proof}\tbf{Fourier transforms of kernels $\kernel(\x,\y) = \kappa(\x-\y)$ and square-root kernels $\ksqrt(\x,\y) = \kappasqrt(\x-\y)$ from  \cref{table:kernel_sqrt_pair}.}
    Here  $\fourier$ denotes the Fourier operator~\eqref{defn:fourier_transform}, and
    $\circledast^{\l}$ denotes the convolution operator applied $\l-1$ times, with the convention $\circledast^{1}\fun \defeq \fun$ and $\circledast^{2} \fun = \fun \circledast \fun$ for $(\fun \circledast \funtwo) (x) = \int \fun(y) \funtwo(x-y)dy$.
    Each Fourier transform is derived from  \citet[Tab.~2]{sriperumbudur2010hilbert}.
    } 
}

\begin{table}[ht]
  \resizebox{1. \textwidth}{!}{
    {\renewcommand{\arraystretch}{2}
    \begin{tabular}{cccccc}
        \toprule
        \Centerstack{\bf Expression \\ \bf for $\kappa(\z)$} &
        \Centerstack{\bf Expression for Fourier \\ \bf transform of $\kappa$:
        $\widehat{\kappa}(\omega)$ } 
        & \Centerstack{\bf Square-root Fourier  \\ \bf transform: $\sqrt{\widehat{\kappa}(\omega)}$} 
        & \Centerstack{\bf Expression for\\ $\kappasqrt(\z) \defeq (\frac{1}{2\pi})^{\frac{d}{4}} \fourier
        (\sqrt{\widehat{\kappa}})(\z)$} 
        \\[2mm]
        \midrule 
          $\exp\parenth{-\displaystyle\frac{\twonorm{\z}^2}{2\gaussparam^2}}$
          & $\gaussparam^d\exp\parenth{-\displaystyle\frac{\gaussparam^2
          \twonorm{\omega}^2}{2}}$
           & $\gaussparam^{\frac{d}{2}}\exp\parenth{-\displaystyle\frac{\gaussparam^2 \twonorm{\omega}^2}{4}}$
          & $\parenth{\displaystyle\frac{2}{\pi\gaussparam^2}}^{\frac{d}{4}}\exp\parenth{-\displaystyle\frac{\twonorm{\z}^2}{\gaussparam^2}}$
          \\[6mm]

         $\displaystyle\prod_{\j=1}^d\circledast^{2\splineparam+2}\indicator_
         {[-\frac12, \frac12]}(\z_{\j}) $
         & $\displaystyle\parenth{\frac{4^{\splineparam+1}}{\sqrt{2\pi}}}^d
         \prod_{\j=1}^d\frac{\sin^{2\splineparam+2}(\omega_{\j}/2)}{\omega_{\j}^{2\splineparam+2}}$
          & $\displaystyle\parenth{\frac{4^{\splineparam+1}}{\sqrt{2\pi}}}^
          {\frac{d}{2}}
          \prod_{\j=1}^d\frac{\sin^{\splineparam+1}(\omega_{\j}/2)}{\omega_{\j}^{\splineparam+1}}$
         & $\displaystyle\prod_{\j=1}^d\circledast^{\splineparam+1}\indicator_
         {[-\frac12, \frac12]}(\z_{\j}) $
        \\[2ex] \bottomrule \hline
    \end{tabular}
    }
    }
    \opt{arxiv}{\fouriertransformtablecaption}
    \opt{jmlr}{\fouriertransformtablecaption}
\end{table}

\subsection{Deriving $\ksqrt$ for the \Matern kernel}
\label{ssub:derive_krt_matern}
For $\kernel\!=\!\textbf{\Matern}(\matone,\mattwo)$ from \cref{table:kernel_sqrt_pair}, we have
\begin{talign}
\label{eq:def_wendmatern}
 \kernel(\x, \y)&=\matscale[\matone]\cdot \wendmatern(\x-\y)
\qtext{where}
\wendmatern(\z) \defeq c_{\matone}\parenth{\frac{\twonorm{\z}}{\mattwo}}^{\matone-\frac{d}{2}}\bessel[\matone-\frac{d}{2}](\mattwo\twonorm{\z}),
\label{eq:matern_scaling}\\
\matscale[\matone] &\defeq  \frac{c_{\matone-d/2}}{c_{\matone}} \mattwo^{2\matone-d}
\qtext{and} c_{\matone} \defeq \frac{2^{1-\matone}}{\Gamma(\matone)},
\label{eq:matern_scaling_constants}
\end{talign}
and $\bessel[a]$ denotes the modified Bessel function of third kind of order $a$~\citep[Def.~5.10]{wendland2004scattered}. That is, for \Matern kernel with $\kernel(\x, \y)=\kappa(\x-\y)$, the function $\kappa$ is given by  $\kappa = \matscale \wendmatern$. Now applying \citep[Thm.~is 8.15]{wendland2004scattered}, we find that
\begin{talign}
\label{eq:fourier_wendmatern}
\fourier(\wendmatern) = \frac{1}{(\mattwo^2+\twonorm{\omega}^2)^{\matone}}
\quad {\Longrightarrow}\quad \fourier(\sqrt{\fourier(\wendmatern)}) = \frac{1}{(2\pi)^{d/4}}\wendmatern[\matone/2],
\end{talign}
where in the last step we also use the facts that $\wendmatern$ is an even function and $\sqrt{\fourier(\wendmatern)}\in\lone$ for all $\matone>d/2$.
Thus, by \cref{sqrt_translation_invariant}, a valid square-root kernel $\ksqrt(\x, \y)=\kappasqrt(\x-\y)$ is defined via
\begin{talign}
\kappasqrt = \sqrt{\matscale} \frac{1}{(2\pi)^{d/4}}\wendmatern[\matone/2],
\implies
\ksqrt 
&=  \matrtconst \cdot  \textbf{\Matern}(\matone/2, \mattwo),
\label{eq:matern_ksqrt} \\
\qtext{where} \matrtconst &\defeq \parenth{\frac{1}{4\pi}\mattwo^2}^{d/4} \sqrt{\frac{\Gamma(\matone)}{\Gamma(\matone-d/2)}} \cdot \frac{\Gamma((\matone-d)/2)}{\Gamma(\matone/2)}. 
\label{eq:matern_ksqrt_const}
\end{talign}

\newcommand{\tempsplineconst}[1][\splineparam+1]{\mfk{B}_{#1}}
\newcommand{\splinefun}[1][\splineparam]{\chi_{#1}}
\subsection{Deriving $\ksqrt$ for the B-Spline kernel} 
\label{ssub:derive_krt_spline}
For positive integers $\splineparam, d$, define the constants
\begin{talign}
\label{eq:spline_all_constants}
\tempsplineconst[\splineparam] \defeq
\frac{1}{(\splineparam-1)!} \sum_{j=0}^{\lfloor\splineparam/2\rfloor} (-1)^j {\splineparam \choose j} (\frac{\splineparam}{2} - j)^{\splineparam-1},
\qquad
\splineconst[\splineparam, d]  \defeq
\tempsplineconst[\splineparam]^{-d},
\qtext{and}
\wtil{S}_{\splineparam, d} \defeq \frac{\sqrt{\splineconst[2\splineparam+2, d]}}{\splineconst[\splineparam+1,d]}.
\end{talign}
Define the function $\splinefun:\Rd \to \real$ as follows:
\begin{talign}
\label{eq:splinefun}
\splinefun(\z) \defeq \splineconst[\splineparam, d] \prod_{i=1}^d \fun_{\splineparam}(\z_i).
\end{talign}
Then for kernel $\kernel = \textbf{B-spline}(2\splineparam+1)$, we have
\begin{talign}
\label{eq:splinefun_kernel}
\kernel(\x, \y) = \splinefun[2\splineparam+2](\x-\y).
\end{talign}
The second row of \cref{table:kernel_sqrt_pair_proof} implies that
\begin{talign}
(\frac{1}{2\pi})^{\frac{d}{4}}\fourier\parenth{\sqrt{\fourier(\splinefun[{2\splineparam+2}])}}
=  \frac{\sqrt{\splineconst[2\splineparam+2, d]}}{\splineconst[\splineparam+1,d]} \cdot\splinefun[{\splineparam+1}] \seq{\cref{eq:spline_all_constants}}
 \wtil{S}_{\splineparam, d}\cdot \splinefun[{\splineparam+1}],
\end{talign}
where we also use the fact that $\splinefun$ is an even function.
Putting the pieces together with \cref{sqrt_translation_invariant}, we conclude that
\begin{talign}
\label{eq:splinefun_krt}
\ksqrt = \wtil{S}_{\splineparam, d} \cdot \textbf{B-spline}(\splineparam),
\end{talign}
(with $\kappasqrt= \wtil{S}_{\splineparam, d} \splinefun[\splineparam+1]$)
is a valid square-root kernel for $\kernel =\textbf{B-spline}(2\splineparam+1)$.

\newcommand{\rtemp}{\zeta_{\ksqrt, \inputcoreset}}
\newcommand{\Qkt}{\Q_{\sqrt{n}}^{\trm{KT}}}
\newcommand{\mmdkt}{\mmd_{\kernel}(\inputcoreset, \ktcoreset)}

\newcommand{\psubgauss}{\P_{\mbf{sg}}}
\newcommand{\psubexp}{\P_{\mbf{se}}}
\newcommand{\pcompact}{\P_{\mbf{c}}}
\section{Derivation of \lowercase{\Cref{table:sqrtk_details}}: {\sqrtdetailstablename{\lowercase{\Cref{theorem:main_result_all_in_one}}}}}
\label{sec:proof_of_table_sqrtk_details}
We start by collecting some common tools in \cref{sub:common_tools_for_derivation_sqrt_details} that we later use for our derivations for the Gaussian, \Matern and B-spline kernels in \cref{sub:gaussian_kernel_quantities_proofs,sub:matern_kernel_quantities_proofs,sub:bspline_kernel_quantities_proofs} respectively. Finally, in \cref{sub:explicit_mmd_rates}, we put the pieces together to derive explicit MMD rates, as a function of $d, n, \delta$ and kernel parameters for the three kernels, and summarize the rates in \cref{table:mmd_k_explicit}. (This table is complementary to the generic results stated in \cref{table:mmd_rates}.)

\subsection{Common tools for our derivations}
\label{sub:common_tools_for_derivation_sqrt_details}
We collect some simplified expressions, and techniques that come handy in our derivations to follow.

\paragraph*{Simplified bounds on $\err_{\ksqrt}$}
From \cref{eq:err_simple_defn}, we have
\begin{talign}
\err_{\ksqrt}(n, \half\log_2n, d, \frac{\delta}{n}, \delta',  R) 
&\precsim  \sqrt{\log \frac{n}{\delta} \cdot \brackets{ \log(\frac{1}{\delta'}) + d\log \parenth{\frac{\klip[\ksqrt]}{\infnorm{\ksqrt}} \cdot (\rmin_{\ksqrt, n}+R) } }} \\
\implies 
\err_{\ksqrt}(n, \half\log_2n, d, \frac{\delta}{n}, \delta,  R) 
&\precsim_{\delta}  \sqrt{d\log n \cdot\log \parenth{\frac{\klip[\ksqrt]}{\infnorm{\ksqrt}} \cdot (\rmin_{\ksqrt, n}+R) } }.
\label{eq:scaling_of_a_M}
\end{talign} 
Thus, given \cref{eq:scaling_of_a_M}, to get a bound on $\err_{\ksqrt}$, we need to derive bounds on $(\klip[\ksqrt], \infnorm{\ksqrt},\rmin_{\ksqrt, n})$ for various kernels to obtain the desired guarantees on MMD and $\Linf$ coresets.

\paragraph*{Bounds on Gamma function} Our proofs make use of the bounds from \citet[Thm~2.2]{batir2017bounds} on the Gamma function:
\begin{talign} 
\label{eq:stirling_approx} 
\Gamma(b+1) \geq (b/e)^{b} \sqrt{2\pi b} 
\text{ for any } b \geq 1,
\text{ and }
\Gamma(b+1)  \leq (b/e)^{b} \sqrt{e^2 b}
\text{ for any } b \geq 1.1.
\end{talign}

\paragraph*{General tools for bounding the Lipschitz constant}
To bound the Lipschitz constant $\klip[\ksqrt]$, the following two observations come in handy. For a radial kernel $\ksqrt(\x, \y) = \wtil{\kappa}_{\rttag}(\twonorm{\x-\y})$ with $\wtil{\kappa}_{\rttag}:\real_{+} \to \real$, we note
\begin{talign}
    \sup_{\substack{\x, \y, \z: \\ \twonorm{\y-\z}\leq r}} \abss{\ksqrt(\x,\y)-\ksqrt (\x,\z)} &\leq \sup_{a>0, b \leq r} \abss{\wtil{\kappa}_{\rttag}(a) - \wtil{\kappa}_{\rttag}(a+b)} \leq \infnorm{\wtil{\kappa}_{\rttag}'} r \\
    \implies \klip[\ksqrt] &\leq \infnorm{\wtil{\kappa}_{\rttag}'}.
    \label{eq:derivative_relation}
\end{talign}
For a translation-invariant kernel $\ksqrt(\x, \y) = \kappasqrt(\x-\y)$ with $\kappasqrt:\Rd \to \real$, we use the bound
\begin{talign}
    \sup_{\substack{\x, \y, \z: \\ \twonorm{\y-\z}\leq r}} \abss{\ksqrt(\x,\y)-\ksqrt (\x,\z)} &= \sup_{\z', \z'', \twonorm{\z-\z'} \leq r} \abss{\kappasqrt(\z') - \kappasqrt(\z'')} \\ 
    &\leq \sup_{\z'\in\Rd}\twonorm{\nabla\kappasqrt(\z')} r, \\
    \implies \klip[\ksqrt] &\leq  \sup_{\z'\in\Rd} \twonorm{\nabla\kappasqrt(\z')} \sless{(i)} \sqrt{d} \sup_{\z'\in\Rd}\infnorm{\nabla\kappasqrt(\z')},
    \label{eq:derivative_relation_inf}
\end{talign}
where step~(i) follows from Cauchy-Schwarz's inequality (and is handy when coordinate wise control on $\nabla\kappasqrt$ is easier to derive).
We later apply the inequality~\cref{eq:derivative_relation} for the Gaussian and \Matern kernels, and \cref{eq:derivative_relation_inf} for the B-spline kernel.

\subsection{Proofs for Gaussian kernel}
\label{sub:gaussian_kernel_quantities_proofs}
For the kernel $\kernel=\textbf{Gauss}(\gaussparam)$ and its square-root kernel $\ksqrt= \parenth{\frac{2}{\pi\gaussparam^2}}^{\frac{d}{4}}\textbf{Gauss}(\frac{\gaussparam}{\sqrt 2})$, we claim the following bounds on various quantities
\begin{talign}
\label{eq:gauss_kinf}
\infnorm{\ksqrt} &= \parenth{\frac{2}{\pi\gaussparam^2}}^{\frac{d}{4}}
\qtext{and} \infnorm{\kernel}=1, \\
\klip[\ksqrt] &\leq \parenth{\frac{2}{\pi\gaussparam^2}}^{\frac{d}{4}} \frac{\sqrt{2/e}}{\gaussparam}
\implies {\klip[\ksqrt]}/{\infnorm{\ksqrt}} \seq{\cref{eq:gauss_kinf}} \frac{1}{\gaussparam} {\sqrt{2/e}}
\label{eq:gauss_klip}
\\
\label{eq:gauss_rmin}
\rmin_{\ksqrt, n} &= \gaussparam\sqrt{\log n}, \\
\label{eq:tail_int_gauss}
\rktaugen[\ksqrt, \sqrt{n}] &=\order(\gaussparam \sqrt{d+\log n }).
\end{talign}
Substituting these expressions in \cref{eq:scaling_of_a_M}, we find that
\begin{talign}
     \err_{\ksqrt}^{\textbf{Gauss}}(n,\half\log_2n, d,  \frac{\delta}{2n},\delta', R)
    &\precsim 
    \sqrt{ \log(\frac{n}{\delta}) \brackets{\log\frac1{\delta'}+ d \log \parenth{
    \sqrt{\log n} + \frac{R}{\sigma} } }},
    \label{eq:gauss_krt_err}
\end{talign}
as claimed in \cref{table:sqrtk_details}.

\paragraph*{Proof of claims~\cref{eq:gauss_kinf,eq:gauss_klip,eq:gauss_rmin,eq:tail_int_gauss}}
The claim~\cref{eq:gauss_kinf} follows directly from the definition of $\ksqrt$. The bound~\cref{eq:gauss_klip} on $\klip[\ksqrt]$ follows from the fact $\Vert{\frac{d}{dr}e^{-r^2/\gaussparam^2}}\Vert_{\infty} = \frac{\sqrt{2/e}}{\gaussparam}$ and invoking the relation~\cref{eq:derivative_relation}. Next, recalling the definition~\cref{eq:rmin_k} and noting that
\begin{talign}
\sup_{\substack{\x, \y: \\ \twonorm{\x-\y}\geq r}}\! \abss{\ksqrt(\x,\y)} \leq \parenth{\frac{2}{\pi\gaussparam^2}}^{\frac{d}{4}} e^{-r^2/\gaussparam^2}
\end{talign}
implies the bound~\cref{eq:gauss_rmin} on $\rmin_{\ksqrt, n}$. 

Next, we have 
\begin{talign}
\tail[\ksqrt]^2(R)& = \int_{\twonorm{z} \geq R} \parenth{\frac{2}{\pi\gaussparam^2}}^{\frac{d}{2}} \exp(-2\twonorm{z}^2/\gaussparam^2) dz
\\
&= \Pr_{X\sim\mc{N}(0, \gaussparam^2/4 \cdot \mbf{I}_d)}(\twonorm{X}\geq R) 
\sless{(i)}e^{-R^2/\gaussparam^2},
\label{eq:tail_gaussian}
\end{talign} 
for $R \geq \gaussparam \sqrt{2d}$, where step~(i) follows from the standard tail bound for a chi-squared random variable $Y$ with $k$ degree of freedom \citep[Lem.~1]{laurent2000adaptive}: $\Pr(Y-k>2\sqrt{kt} + 2t) \leq e^{-t}$, wherein we substitute $k=d$, $Y=\frac{4}{\gaussparam^2}\Vert X\Vert_2^2$, $t = d\alpha$ with $\alpha \geq 2$, and $R^2 = \gaussparam^2 t$. Using the tail bound \cref{eq:tail_gaussian}, the bound~\cref{eq:gauss_kinf} on $\infnorm{\ksqrt}$ and the definition~\cref{eq:rmin_k} of $\rktaugen[\ksqrt, \sqrt{n}]$, we find that
\begin{talign}
\rktaugen[\ksqrt, \sqrt{n}] =\order(\gaussparam \max(\sqrt{(\log n-\frac{d}{2}\log(\gaussparam{\sqrt{\frac{\pi}{2}}} ))_+}, \sqrt{d}))
=\order(\gaussparam \sqrt{d+\log n }).
\end{talign}
yielding the claim~\cref{eq:tail_int_gauss}.

\subsection{Proofs for \Matern kernel}
\label{sub:matern_kernel_quantities_proofs}
Recall the notations from~\cref{eq:def_wendmatern,eq:matern_ksqrt,eq:matern_ksqrt_const} for the \Matern kernel related quantities. Let $\bessel[b]$ denote the modified
Bessel function of the third kind with order $b$ \citep[Def.~5.10]{wendland2004scattered}.  Let $a \defeq \frac{\matone-d}{2}$. Then, the kernel $\kernel=\textbf{\Matern}(\matone,\mattwo)$ and its square-root kernel $\ksqrt=\matrtconst \cdot \textbf{\Matern}(a,\mattwo)$ satisfy
\begin{talign}
\label{eq:matern_orig_reparam}
\kernel(\x, \y) &= \wtil{\kappa}_{\matone-d/2}(\mattwo \twonorm{\x-\y}),
\qtext{and}
\\
\label{eq:matern_reparam}
\ksqrt(\x, \y) &= \matrtconst \wtil{\kappa}_{a}(\mattwo \twonorm{\x-\y}),
\\
\text{ where }
\wtil{\kappa}_b(r) &\defeq c_b r^{b} \bessel[b](r), \text{ and } 
c_{b} = \frac{2^{1-b}}{\Gamma(b)} \stext{for} b>0.
\label{eq:wtil_kappa_def}
\end{talign}
We claim the following bounds on various quantities assuming $a\geq2.2$, and $d\geq 2$:\footnote{When $a \in (1, 2.2)$, and $d\in [1, 2)$, analogous bounds with slightly different constants follow from employing the Gamma function upper bound of \citet[Thm~2.3]{batir2017bounds} in place of our upper bound \cref{eq:stirling_approx}. For brevity, we omit these derivations.}
\begin{talign}
  \infnorm{\ksqrt} &= \matrtconst 
  \qtext{and} \infnorm{\kernel} = 1,
  \label{eq:collect_matern_rt_kinf} \\
\matrtconst &\leq  5 \matone (\frac{\mattwo^2}{2\pi(a-1)})^{\frac{d}{4}},
\label{eq:collect_striling_matrtconst} \\
  \klip[\ksqrt] &\leq  \matrtconst  \frac{C_1 \mattwo}{\sqrt{a+C_2}} 
  \implies \frac{\klip[\ksqrt]}{\infnorm{\ksqrt}} \sless{\cref{eq:collect_matern_rt_kinf}}
  \frac{C_1\mattwo}{\sqrt{a+C_2}},
\label{eq:collect_matern_klip} \\
  \rk[\ksqrt] &= \frac{1}{\mattwo }  \order(\max(\log n - a\log
  (1+a), C_2 a\log (1+a))),
  \label{eq:collect_matern_rmin} \\
    \rktaugen[\ksqrt, \sqrt{n}] &= 
    \frac{1}{\mattwo}\cdot\order(a\!+\!\log n\!+\! d\log(\frac{\sqrt{2e\pi}}{\mattwo}) \!+\!  \log(\frac{(\matone-2)^{\matone-\frac{3}{2}}}{
    (2(a-1))^{2a-1}d^{\frac{d}{2}+1}}))
    \label{eq:collect_rdag_ksqrt_matern}
\end{talign}
We prove these claims in \cref{ssub:set_up_matern_proofs} through \cref{ssub:proof_of_the_bound_eq:collect_matern_rmin_on_}. Putting these bounds together with \cref{eq:scaling_of_a_M} yields that
\begin{talign}
    \err_{\ksqrt}^{\textbf{\Matern}}(n, \half\log_2n, d, \frac{\delta}{2n}, \delta',  R)
    &\precsim 
    \begin{cases}
    \sqrt{\log(\frac{n}{\delta}) \brackets{\log\frac1\delta + d\log \parenth{
    \frac{1}{\sqrt{1+a}} \cdot  (\log n\! +\! \mattwo R)\! } }}
    \ \text{ if } a = o(\log n)\\
    \sqrt{\log(\frac{n}{\delta}) \brackets{\log\frac1{\delta'} + d\log \parenth{\sqrt{a}\log(1+a) \!+\! \mattwo R \! } }}
    \ \text{ if } a =\Omega(\log n)
    \end{cases} \\
    &\precsim  
    \sqrt{\log(\frac{n}{\delta}) \brackets{\log\frac1{\delta'}+ d\log (\log n\! +\! B +\! \mattwo R)\!  }}
    \label{eq:matern_krt_err}
\end{talign}
with $B=a\log (1+a)$,
as claimed in \cref{table:sqrtk_details}.

\subsubsection{Set-up for proofs of \Matern kernel quantities}
\label{ssub:set_up_matern_proofs}
Before proceeding to the proofs of the claims~\cref{eq:collect_matern_rt_kinf,eq:collect_striling_matrtconst,eq:collect_matern_klip,eq:collect_matern_rmin,eq:collect_rdag_ksqrt_matern},
we collect some handy inequalities. Applying \citet[Lem.~5.13, 5.14]{wendland2004scattered},
we have
\begin{talign}
\label{eq:matern_basic_bound}
\wtil{\kappa}_{a}(\mattwo r) &\leq \min\parenth{1, \sqrt{2\pi} c_{a} (\mattwo r)^{a-\frac{1}{2}} e^{{-\mattwo r + \frac{a^2}{2\mattwo r}}}}
 \qtext{for} r>0.
 \end{talign}
 For a large enough $r$, we also establish the following bound:
 \begin{talign}
\label{eq:matern_alternate_bound}
\wtil{\kappa}_{a}(\mattwo r) &\leq  \min\parenth{1, 4 c_{a} (\mattwo r)^{a-1}e^{-\mattwo r/2}}
\qtext{for} \mattwo r \geq 2(a - 1), a\geq 1.
\end{talign}

\paragraph*{Proof of \cref{eq:matern_alternate_bound}}
Noting the definition~\cref{eq:wtil_kappa_def} of $\kappa_a$, it suffices to show the following bound:
\begin{talign}
\label{eq:bessel_bound}
\abss{\bessel[a](r)} \leq \frac{4}{r} e^{-r/2}
\qtext{for} r/2 \geq a-1,
\end{talign}
where $\bessel[a]$ is the modified Bessel function of the third kind.
Using \citet[Def.~5.10]{wendland2004scattered}, we have
\begin{talign}
\bessel[a](r) &= \frac12 \int_{0}^{\infty} e^{-r\cosh t} \cosh(at) dt
\sless{(i)} \int_{0}^{\infty} e^{-\frac{r}{2}e^t} e^{at} dt
\seq{(ii)} \int_{r/2}^{\infty} e^{-s} (\frac{2s}{r})^a \cdot  \frac1s ds
\sless{(iii)}  \frac{4}{r} e^{-r/2}
\end{talign}
where step~(i) uses the following inequalities: $\cosh t \defeq \frac12(e^t+e^{-t}) \geq\frac12 e^{t}$ and $\cosh(at) \leq e^{at}$ for $a>0, t>0$, step~(ii) follows from a change of variable $s\gets \frac{r}{2}e^t$, and finally step~(iii) uses the following bound on the incomplete Gamma function obtained by substituting $B=2$ and $A=a$ in \citet[Eq.~1.5]{borwein2009uniform}:
\begin{talign}
\label{eq:gamma_bound}
\int_{r}^{\infty} t^{a-1} e^{-t} dt \leq 2 r^{a-1} e^{-r} 
\qtext{for} r\geq a-1.
\end{talign}
The proof is now complete.

\subsubsection{Proof of the bound~\cref{eq:collect_matern_rt_kinf} on $
\infnorm{\ksqrt}$
and $\infnorm{\kernel}$}
\label{sub:proof_of_the_bound_eq:collect_matern_rt_kinf_on_}
We claim that
 \begin{talign}
 \label{eq:infnorm_wtil_kappa}
 \infnorm{\wtil{\kappa}_b} =1 \qtext{for all} b>0.
 \end{talign}
 where was defined in \cref{eq:wtil_kappa_def}. Putting the equality~\cref{eq:infnorm_wtil_kappa}
 with \cref{eq:matern_orig_reparam,eq:matern_reparam,eq:matern_basic_bound}
 immediately implies the bounds~\cref{eq:collect_matern_rt_kinf} on the
 $\infnorm{\ksqrt}$ and $\infnorm{\kernel}$.
 To prove \cref{eq:infnorm_wtil_kappa}, we follow the steps from the proof
 of \citet[Lem.~5.14]{wendland2004scattered}.
 Using \citet[Def.~5.10]{wendland2004scattered}, for $b>0$ we have
\begin{talign}
\bessel[b](r) &= \frac12 \int_{-\infty}^{\infty} e^{-r\cosh t} e^{bt}
dt
= \frac12\int_{-\infty}^{\infty} e^{-\frac{r}{2}(e^t+e^{-t})} e^{bt}
dt
= k^{-b} \frac12 \int_{0}^{\infty} e^{-r/2(u/k+k/u)} u^{b-1} du
\end{talign}
where the last step follows by substituting $u=ke^t$. Setting $k=r/2$, we
find that
\begin{talign}
  r^b \bessel[b](r) = 2^{b-1} \int_{0}^\infty e^{-u} e^{-r^2/(4u)} u^
  {b-1}
  du \leq 2^{b-1} \Gamma(b),
\end{talign}
where we achieve equality in the last step when we take the limit  $r \to 0$. Noting that
$\wtil{\kappa}_b(r) = \frac{2^{1-b}}{\Gamma(b)} r^b\bessel[b](r)$~\cref{eq:wtil_kappa_def} yields the claim~\cref{eq:infnorm_wtil_kappa}.

\subsubsection{Proof of the bound~\cref{eq:collect_striling_matrtconst}
 on $\matrtconst$} %
\label{ssub:proof_of_the_bound_eq:collect_striling_matrtconst_on_}
Using the definition~\cref{eq:matern_ksqrt_const} we have  $\matrtconst
= \parenth{\frac{1}{4\pi}\mattwo^2}^{\frac{d}{4}} \cdot \matrtconst'$,
where
\begin{talign}
\matrtconst' &= \sqrt{\frac{\Gamma(\matone)}{\Gamma(\matone-d/2)}} \cdot \frac{\Gamma((\matone-d)/2)}{\Gamma(\matone/2)}
\sless{\cref{eq:stirling_approx}} \sqrt{\frac{e}{\sqrt{2\pi}} \frac{(\frac{\matone-1}{e})^{\matone-1}
\sqrt{\matone-1}}{(\frac{\matone-\frac{d}{2}-1}{e})^{\matone-\frac{d}{2}-1}\sqrt{\matone-\frac{d}{2}-1}} }
\cdot\frac{e}{\sqrt{2\pi}} \frac{(\frac{\matone-d-2}{2e})^{\frac{\matone-d-2}{2}} \sqrt{\frac{\matone-d-2}{2}}}{(\frac{\matone-2}{2e})^{\frac{\matone-2}{2}}\sqrt{\frac{\matone-2}{2}}}  \\
&\leq \parenth{\frac{e^2}{2\pi}}^{\frac34} (2\sqrt{e})^{\frac{d}{2}}
\cdot \parenth{\frac{\matone-1}{\matone-2}}^{\frac{\matone-2}{2}} \cdot \sqrt{\frac{\matone-1}{\matone-2}} 
\cdot (\matone-1)^{\frac14} \cdot \parenth{\frac{\matone-d-2}{\matone-\frac{d}{2}-1}}^{\frac{\matone-\frac{d}{2}-2}{2}} 
\cdot \frac{\sqrt{(\matone-d-2)/\sqrt{(\matone-\frac{d}{2}-1)}}}
{\parenth{\matone-d-2}^{\frac12(\frac{d}{2}-1)}} \\
&\leq \parenth{\frac{e^2}{2\pi}}^{\frac34} (2\sqrt{e})^{\frac{d}{2}}
\cdot \sqrt{e} \cdot \sqrt{2} \cdot (\matone-1)^{\frac14} \cdot  (\sqrt{e})^{-\frac{d}{2}+1}
\cdot 
\frac{(\matone-d-2)/(\matone-\frac{d}{2}-1)^{\frac14}}
{\parenth{\matone-d-2}^{\frac{d}{4}}} \\
&\leq \frac{e^2\sqrt{2e}}{(2\pi)^{\frac34}}
(\matone-1)^{\frac14} \cdot (\matone-\frac{d}{2}-1)^{\frac34}
\parenth{\frac{\matone-d-2}{4}}^{-\frac{d}{4}}.
\end{talign}
As a result, we have
\begin{talign}
\matrtconst = \parenth{\frac{1}{4\pi}\mattwo^2}^{d/4} \cdot \matrtconst' \leq 5 \matone (\frac{\mattwo^2}{\pi(\matone-d-2)})^{\frac{d}{4}}
=  5 \matone (\frac{\mattwo^2}{2\pi(a-1)})^{\frac{d}{4}},
\end{talign}
as claimed.

\subsubsection{Proof of the bound~\cref{eq:collect_matern_klip} on $\klip[\ksqrt]$} %
\label{ssub:proof_of_the_bound_eq:collect_matern_klip_on_}
To derive a bound on the Lipschitz constant $\klip[\ksqrt]$, we bound the derivative of $\wtil{\kappa}_a$. Using $(r^a\bessel[a](r))' = -r^a\bessel[a-1](r)$ \citep[Eq.~5.2]{wendland2004scattered}, we find that
\begin{talign}
(\wtil{\kappa}_a(\mattwo r))'=
c_{a}\frac{d}{dr}((\mattwo r)^a\bessel[a](\mattwo r)) =-\frac{c_{a}}{c_{a-1}}\mattwo^2 r \wtil{\kappa}_{a-1}(\mattwo r).
\end{talign}
Using \cref{eq:matern_basic_bound}, we have
\begin{talign}
\frac{c_{a}}{c_{a-1}}\mattwo^2 r \wtil{\kappa}_{a-1}(\mattwo r) \leq\frac{c_{a} \mattwo}{c_{a-1}} \min(\mattwo r,
\sqrt{2\pi} c_{a-1} (\mattwo r)^{a-\frac12} e^{-\mattwo r + \frac{(a-1)^2}{2\mattwo r}})
\qtext{for} r>0.
\end{talign}
And \cref{eq:matern_alternate_bound} implies that
\begin{talign}
\frac{c_{a}}{c_{a-1}}\mattwo^2 r \wtil{\kappa}_{a-1}(\mattwo r) \leq\frac{c_{a} \mattwo}{c_{a-1}} \min\parenth{\mattwo r,
4 c_{a-1} (\mattwo r)^{a-1} e^{-\mattwo r/2} }
\qtext{for} r\geq 2(a-2),a\geq 2.
\end{talign}
Next, we make use of the following observations: The function $t^{b} e^{-t/2}$ achieves its maximum at $t=2b$. Then for any function $\fun:\R_+\to\R_+$ such that $\fun(t)\leq t$ for all $t\geq 0$ and $\fun(t)\leq \min(t, C t^{b} e^{-t/2})$ for $t>t^\dagger$ with $t^\dagger < 2b$, we have
\begin{talign}
\sup_{t\geq 0}\fun(t) \leq \min(2b, C (2b)^b e^{-b}).
\end{talign}
As a result, we conclude that
\begin{talign}
\label{eq:Lk_interim_bound}
\sup_{r\geq 0}\frac{c_{a}}{c_{a-1}}\mattwo^2 r \wtil{\kappa}_{a-1}(\mattwo r)
\leq \frac{c_{a} \mattwo}{c_{a-1}} \min\parenth{2(a-1),
4 c_{a-1} (2(a-1))^{a-1} e^{-a+1} }.
\end{talign}
Substituting the value of $c_{a-1}$ and the bound~\cref{eq:stirling_approx}, we can bound the second argument inside the min on the RHS of \cref{eq:Lk_interim_bound} (for $a\geq 3$) as follows:
\begin{talign}
\label{eq:Lk_matern_interim_bound_two}
4c_{a-1} (2(a-1))^{a-1} e^{-a+1}
&\leq 4 \cdot 2^{2-a} \frac{1}{\sqrt{2\pi(a-2)}} (\frac{e}{a-2})^{a-2} 2^{a-1} (a-1)^{a-1} e^{1-a} \\
&\leq \frac{8}{e\sqrt{2\pi}}  (1+\frac{1}{a-2})^{a-2} \parenth{\sqrt{a-2} + \frac{1}{\sqrt{a-2}}} 
\leq \frac{8}{\sqrt{\pi}} \sqrt{a-2}
\end{talign}
for $a \geq 3$. When $a\in [2, 3)$, one can directly show that 
$4c_{a-1} (2(a-1))^{a-1} e^{-a+1} \leq 8/e$. 
Putting the pieces together, and noting that $\frac{c_{a}}{c_{a-1}} = \frac{1}{\max(2(a-1), 1)}$, we find that
\begin{talign}
\sup_{r\geq 0}\frac{c_{a}}{c_{a-1}}\mattwo^2 r \wtil{\kappa}_{a-1}(\mattwo r)
&\leq \frac{\mattwo}{\max(2(a-1), 1)} \min\parenth{2(a-1),
\frac{8}{e} + \frac{8}{\sqrt{\pi}}\sqrt{a-2}
} 
\leq  \frac{C_1 \mattwo}{\sqrt{a+C_2}}
\end{talign}
for any $a \geq 2$. And hence
\begin{talign}
\klip[\ksqrt] \leq \matrtconst
\sup_{r\geq 0}|\kappa_a'(\mattwo r)| 
    \leq \matrtconst  \frac{C_1 \mattwo}{\sqrt{a+C_2}} 
\implies \frac{\klip[\ksqrt]}{\infnorm{\ksqrt}} \sless{\cref{eq:collect_matern_rt_kinf}} \frac{C_1\mattwo}
{\sqrt{a+C_2}},
\end{talign}
as claimed.

\subsubsection{Proof of the bound~\cref{eq:collect_matern_rmin} on $\rmin_{\ksqrt, n}$} %
\label{ssub:proof_of_the_bound_eq:collect_matern_rmin_on_}
Using arguments similar to those used to obtain~\cref{eq:Lk_matern_interim_bound_two}, we find that
\begin{talign}
\max_{\mattwo r\geq 2(a-1)}\wtil{\kappa}_{a}(\mattwo r)\leq  4c_a (2(a-1))^{a-1} e^{-(a-1)}
\leq \sqrt{\frac{4}{a-1}} = \sqrt{\frac{8}{\matone-d-2}}.
\end{talign}
Next, note that
\begin{talign}
\label{eq:matern_tail_bound}
(\mattwo r)^{a-\frac12} e^{-\mattwo r+\frac{a^2}{2\mattwo r}} &\precsim e^{-\mattwo r/2} \qtext{for} \mattwo r \succsim a\log (1+a), \\
c_a &= \frac{2^{1-a}}{\Gamma(a)} \sless{\cref{eq:stirling_approx}} (\frac{2e}{a-1})^{a-1} \frac{1}{\sqrt{2\pi(a-1)}},
\label{eq:ca_bound}
\end{talign}
where \cref{eq:matern_tail_bound} follows from standard algebra.
Thus, we have
\begin{talign}
c_{a}\wtil{\kappa}_{a}(\mattwo r) &\precsim c_{a} \exp(-\mattwo r/2) \qtext{for} \mattwo r\succsim   a\log (1+a) \\
\quad \stackrel{\cref{eq:stirling_approx}}{\Longrightarrow} \quad  \rmin_
{\ksqrt, n} &\precsim \frac{1}{\mattwo }  \max(\log n - a\log (1+a),  C_1
a\log (1+a)), 
\end{talign}
as claimed.

\subsubsection{Proof of the bound~\cref{eq:collect_rdag_ksqrt_matern} on
 $\rktaugen[\ksqrt, \sqrt{n}]$} %
\label{ssub:proof_of_the_bound_eq:collect_rdag_ksqrt_matern_on_}
Let $V_d = \pi^{d/2}/\Gamma(d/2+1)$ denote the volume of unit Euclidean ball in $\Rd$. Using \cref{eq:matern_reparam}, we have
\begin{talign}
   \frac{1}{\matrtconst^2}\tail[\ksqrt]^2(R) = \int_{\twonorm{z}\geq R} \wtil{\kappa}_a^2(\mattwo \twonorm{\z}) d\z 
   &= dV_d \int_{r > R} r^{d-1}\wtil{\kappa}_a^2(\mattwo r) dr \\
   &\sless{\cref{eq:matern_basic_bound}} 2\pi c_{a}^2 dV_d \int_{r > R} r^{d-1} (\mattwo r)^{2a-1} \exp({-2\mattwo r + \frac{a^2}{\mattwo r}})dr \\
   &= 2\pi c_{a}^2 dV_d \mattwo^{1-d} \int_{r > R} (\mattwo r)^{\matone-2} \exp({-2\mattwo r + \frac{a^2}{\mattwo r}})dr
   \label{eq:int_tail_matern_one}
\end{talign}
where we have also used the fact that $2a + d = \matone$. Next, we note that
\begin{talign}
\label{eq:exp_bound_simplified}
\exp(-2\mattwo r + \frac{a^2}{\mattwo r}) \leq \exp(-\frac32 \mattwo r)
\qtext{for} \mattwo r \geq \sqrt{2} a
\end{talign}
For any integer $b>1$, noticing that $f(t)= \frac1{\Gamma(b)} t^{b-1}e^{-t}$ is the density function of an Erlang distribution with shape parameter $b$ and rate parameter $1$, and using the expression for its complementary cumulative distribution function, we find that
\begin{talign}
\label{eq:IGF_formula}
\int_{r}^{\infty} t^{b-1} e^{-t} dt =  \Gamma(b) \sum_{i=0}^{b-1}\frac{r^i}{i!} e^{-r}
\end{talign}
Thus, for $R>\frac{\sqrt{2}}{\mattwo}a$, we have
\begin{talign}
\int_{r > R} (\mattwo r)^{\matone-2} \exp({-2\mattwo r + \frac{a^2}{\mattwo r}})dr
&\sless{\cref{eq:exp_bound_simplified}} \int_{r > R} (\mattwo r)^{\matone-2} \exp({-\frac32\mattwo r})dr \\
&= \frac{1}{\mattwo} \parenth{\frac23}^{\matone-1} \int_{3\mattwo R/2}^{\infty} t^{\matone-2} e^{-t}dt \\
&\seq{\cref{eq:IGF_formula}} \frac{1}{\mattwo} \parenth{\frac23}^{\matone-1} \Gamma(\matone-1) e^{-\frac32\mattwo R} \sum_{i=0}^{\matone-1} \frac{(3\mattwo R/2)^i}{i!} \\
&\leq \frac{1}{\mattwo} \Gamma(\matone-1) e^{-\frac32\mattwo R} e^{\mattwo R}
\leq \frac{1}{\mattwo} \Gamma(\matone-1) e^{-\frac12\mattwo R}.
\label{eq:int_tail_matern_two}
\end{talign}
Putting the bounds~\cref{eq:int_tail_matern_one,eq:int_tail_matern_two} together for $ R \geq \frac{\matone-d}{\sqrt2 \mattwo}$, we find that
\begin{talign}
\tail[\ksqrt]^2(R) &\leq \matrtconst^2  \cdot  \mattwo^{-d} \cdot 2\pi \frac{2^{2-2a}}{\Gamma^2(a)} \cdot  \frac{\pi^{\frac{d}{2}}}{\Gamma(\frac{d}{2}+1)} \cdot \Gamma(\matone-1) \exp(-\frac12\mattwo R).
\end{talign}
Using \cref{eq:stirling_approx}, we have
\begin{talign}
2\pi \frac{2^{2-2a}}{\Gamma^2(a)} \cdot  \frac{\pi^{\frac{d}{2}}}{\Gamma(\frac{d}{2}+1)} \cdot \Gamma(\matone-1)
&\leq 2\pi \cdot \frac{2^{2-2a} e^{2(a-1)}}{2\pi(a-1) (a-1)^{2(a-1)}}
\cdot \frac{\pi^{d/2} (2e)^{d/2}}{ \sqrt{\pi d} \cdot d^{d/2}}
\cdot e\sqrt{\matone-2} (\frac{\matone-2}{e})^{\matone-2} \\
&= \frac{2}{\sqrt \pi}  e^{2a-2+d/2+1-\matone} (2\pi)^{d/2} (2a-2)^{-(2a-1)} d^{-d/2-1} (\matone-2)^{\matone-3/2} \\
&= \frac{2}{e\sqrt \pi} (2e\pi)^{d/2} (\matone-d-2)^{-(\matone-d-1)} d^{-d/2-1} (\matone-2)^{\matone-3/2}\\
&= \frac{2}{e\sqrt \pi} (2e\pi)^{d/2} (2(a-1))^{-(2a-1)} d^{-d/2-1} (\matone-2)^{\matone-3/2}.
 \label{eq:int_matern_tail_three}
\end{talign}
Putting the pieces together, and solving for $\tail[\ksqrt](R) \leq \infnorm{\ksqrt}/\sqrt{n} \seq{\cref{eq:collect_matern_rt_kinf}} \matrtconst/\sqrt{n}$, we find that $\rktaugen[\ksqrt, \sqrt{n}]$~\cref{eq:rmin_k} can be bounded as
\begin{talign}
\label{eq:rdag_ksqrt_matern}
    \rktaugen[\ksqrt, \sqrt{n}] &\leq \frac{2}{\mattwo} \cdot\max (\frac{a}{\sqrt 2}, \log n\!+\!\log(\frac{2}{e\sqrt{\pi}}) \!+\! d\log(\frac{\sqrt{2e\pi}}{\mattwo}) \!+\!  \log(\frac{(\matone-2)^{\matone-\frac{3}{2}}}{
    (2(a-1))^{2a-1}d^{\frac{d}{2}+1}}))\\
    &\precsim\frac{1}{\mattwo}\cdot(a\!+\!\log n\!+\! d\log(\frac{\sqrt{2e\pi}}{\mattwo}) \!+\!  \log(\frac{(\matone-2)^{\matone-\frac{3}{2}}}{
    (2(a-1))^{2a-1}d^{\frac{d}{2}+1}}),
\end{talign}
as claimed.

\subsection{Proofs for B-spline kernel}
\label{sub:bspline_kernel_quantities_proofs}
Recall the notation from \cref{eq:spline_all_constants,eq:splinefun,eq:splinefun_kernel,eq:splinefun_krt}. Then, the kernel $\kernel=\textbf{B-spline}(2\splineparam+1)$ and its square-root kernel $\ksqrt=\wtil{S}_{\splineparam, d} \cdot \textbf{B-spline}(\splineparam)$ satisfy
\begin{talign}
\label{eq:spline_krt_inf}
\infnorm{\ksqrt} &= \wtil{S}_{\splineparam, d}
\seq{(i)} c_\splineparam^d
\qtext{where} c_\splineparam\defeq \frac{\tempsplineconst[\splineparam+1]}{\sqrt{\tempsplineconst[2\splineparam+2]}} \begin{cases}
\seq{(ii)} \frac{2}{\sqrt 3} \text{ when }\splineparam=1 \\
\sless{(iii)}1 \text{ when }\splineparam>1
\end{cases} 
, \\
\qquad \infnorm{\kernel} &= 1,  \label{eq:spline_k_inf}\\
\frac{\klip[\ksqrt]}{\infnorm{\ksqrt}} &\leq \frac43 \sqrt{d},
\label{eq:spline_klip}\\
\label{eq:rmin_rdag_spline}
\rmin_{\ksqrt, n} &\leq  \sqrt{d} (\splineparam+1)/2,
\qtext{and}
\rktaugen[\ksqrt, \sqrt{n}] \leq  \sqrt{d} (\splineparam+1)/2.
\end{talign}
While claims~(i) and (ii) follow directly from the definitions in the display~\cref{eq:spline_all_constants}, claim~(iii) can be verified numerically, e.g., using SciPy~\citep{2020SciPy-NMeth}.
From numerical verification, we also find that the constant $c_{\splineparam}$ in \cref{eq:spline_krt_inf} is decreasing with $\splineparam$.
See \cref{ssub:spline_param_proof} for the proofs of the remaining claims.
Finally, substituting various quantities from \cref{eq:spline_krt_inf,eq:spline_klip,eq:rmin_rdag_spline} in \cref{eq:scaling_of_a_M}, we find that
\begin{talign}
\err_{\ksqrt}^{\textbf{B-spline}}(n,\half\log_2n, d,  \frac{\delta}{2n}, \delta', R)
    &\precsim
    \sqrt{ \log(\frac{n}{\delta}) \parenth{ \log(\frac{1}{\delta})
    +d \log (d\splineparam+\sqrt{d}R)} }.
\end{talign}

\subsubsection{Proofs of the bounds on B-spline kernel quantities}
\label{ssub:spline_param_proof}
We start with some basic set-up.
Consider the (unnormalized) univariate B-splines 
\begin{talign}
\label{eq:unispline}
\fun_{\splineparam}:\real \to [0, 1]
\qtext{with}
\fun_{\splineparam}(a)=\circledast^{\splineparam}\indicator_{[-\frac12,\frac12]}(a)
\seq{(i)} \frac{1}{(\splineparam-1)!}\sum_{j=0}^{\splineparam} (-1)^{j} {\splineparam \choose j} (a+\frac{\splineparam}{2}-j)_+^{\splineparam-1}
\end{talign}
where step~(i) follows from \citet[Eqn 4.46, 4.47, 4.59, p.135, 138]{schumaker2007spline}. Noting that $\fun_{\splineparam}$ is an even function with a unique global maxima at 0 (see \citet[Ch 4.]{schumaker2007spline} for more details), we find that
\begin{talign}
\infnorm{\fun_{\splineparam}} = \fun_{\splineparam}(0) = 
\frac{1}{(\splineparam-1)!}  \sum_{j=0}^{\lfloor\splineparam/2\rfloor} (-1)^j {\splineparam \choose j} (\frac{\splineparam}{2} - j)^{\splineparam-1}
\seq{\cref{eq:spline_all_constants}} \tempsplineconst[\splineparam].
\label{eq:unisplineinf}
\end{talign}

\paragraph*{Bounds on $\infnorm{\kernel}$ and $\sinfnorm{\ksqrt}$}
Recalling \cref{eq:splinefun,eq:splinefun_kernel}, we find that
\begin{talign}
\infnorm{\kernel} = \infnorm{\splinefun[{2\splineparam+2}]} =
\splineconst[2\splineparam+2, d] \infnorm{\fun_{2\splineparam+2}}^d = \splineconst[2\splineparam+2, d]  \tempsplineconst[2\splineparam+2]^d =  1,
\end{talign}
thereby establishing~\cref{eq:spline_k_inf}.

\paragraph*{Bounds on $\klip[\ksqrt]$}
We have
\begin{talign}
\fun'_{\splineparam+1}(a) = \int \fun_{\splineparam}(b)   \frac{d}{da}\indicator_{[-\half, \half]} (a-b) d\y
= \fun_{\splineparam}(a-\half) -\fun_{\splineparam}(a+\half)
\end{talign}
and hence $\sinfnorm{\fun'_{\splineparam+1}}=\sup_{a}|\fun_{\splineparam}(a-\half) -\fun_{\splineparam}(a+\half) | \leq \sinfnorm{\fun_{\splineparam}}$ since $\fun_{\splineparam}$ is non-negative.
Putting the pieces together, we have
\begin{talign}
\frac{\klip[\ksqrt]}{\sinfnorm{\ksqrt}} = \frac{1}{ \wtil{S}_{\splineparam, d} }\klip[\ksqrt] \leq\sup_{\z} \twonorm{\nabla\splinefun[{\splineparam+1}](\z)}
&\sless{\cref{eq:splinefun}}\sqrt{d} \cdot  \splineconst[\splineparam+1,d]  \sinfnorm{\fun_{\splineparam+1}'} \cdot \sinfnorm{\fun_{\splineparam+1}}^{d-1}  \\
&\sless{(\ref{eq:spline_all_constants},\ref{eq:unisplineinf})} \sqrt{d} \cdot  \tempsplineconst[\splineparam+1]^{-d}
\cdot \tempsplineconst[\splineparam] \cdot \tempsplineconst[\splineparam+1]^{d-1}\\
&= \sqrt{d} \frac{\tempsplineconst[\splineparam]}{\tempsplineconst[\splineparam+1]}
\sless{(i)}\sqrt{d} \frac{4}{3},
\label{eq:lip_inf_ratio_spline}
\end{talign}
where step~(i) can be verified numerically.

\paragraph*{Bounds on $\rmin_{\ksqrt, n}$, and $\rktaugen[\ksqrt, \sqrt{n}]$}
Using the property of convolution, we find that $\fun_{\splineparam+1}(a)=0$ if $\abss{a} \geq \frac12(\splineparam+1)$.
Hence $\kappasqrt(\z) = 0$ for $\infnorm{\z} > (\splineparam+1)/2$ and applying the definitions~\cref{eq:rmin_k}, we find that
\begin{talign}
\rmin_{\ksqrt, n} \leq  \sqrt{d} (\splineparam+1)/2
\qtext{and}
\rktaugen[\ksqrt, \sqrt{n}] \leq  \sqrt{d} (\splineparam+1)/2
\end{talign}
as claimed in \cref{eq:rmin_rdag_spline}. 

\subsection{Explicit MMD rates for common kernels}
\label{sub:explicit_mmd_rates}
Putting the quantities from \cref{table:sqrtk_details} together with \cref{theorem:main_result_all_in_one} (with $\delta'=\delta$, and $\delta$ treated as a constant) yields the MMD rates summarized in \cref{table:mmd_k_explicit}.

For completeness, we illustrate a key simplification that can readily yield the results stated in \cref{table:mmd_k_explicit}.
Define $\rtemp$ as follows:
\begin{talign}
\label{eq:A_rdag_sqrt_def}
\rtemp \defeq \frac{1}{d}\max(\rktaugen[\ksqrt, \sqrt{n}], \rminpn[\inputcoreset])^2, 
\end{talign}
where $ \rminpn$, and $\rktaugen[\ksqrt, \sqrt{n}]$ were defined in \cref{eq:rmin_P,eq:rmin_k} respectively.
Then applying \cref{theorem:main_result_all_in_one}, and substituting the simplified bound~\cref{eq:scaling_of_a_M} in \cref{eq:mmd_thinning_bound_finite}, we find that
\begin{talign}
\mmdkt 
&\!\leq\! 2\frac{\infnorm{\ksqrt}}{\sqrt n}\!+\! \frac{\infnorm{\ksqrt}}{\sqrt n} (B \rtemp)^{\frac{d}{4}} \!\cdot\!  d^{-\frac14} \sqrt{\log \frac{n}{\delta} \!\cdot\! \brackets{ \log(\frac{1}{\delta}) \!+\! d\log \parenth{\frac{\klip[\ksqrt](\rmin_{\ksqrt, n}+R)}{\infnorm{\ksqrt}} } }} \\
&\stackrel{(i)}{\precsim_{\delta}} 2\frac{\infnorm{\ksqrt}}{\sqrt n} + \infnorm{\ksqrt}  (B \rtemp)^{\frac{d}{4}} \cdot  d^{\frac14} \sqrt{ \frac{\log n}{n}  \log \parenth{\frac{\klip[\ksqrt](\rmin_{\ksqrt, n}+ \rminpn[\inputcoreset])}{\infnorm{\ksqrt}}}  },
\label{eq:simplified_mmd_bound_general}
\end{talign}
with probability at least $1-2\delta$, where $B\defeq8e\pi$ is a universal constant, and
in step~(i) we use the following bound: For any $r = \sqrt{\alpha d}$, we have
\begin{talign} 
\label{eq:cd_simplified} 
c_d r^{\frac{d}{2}} = \frac{(4\pi)^{\frac{d}{4}}}{(\Gamma(\frac{d}{2}+1))^{\frac12}} (\alpha d)^{\frac{d}{4}} \sless{(ii)}  \frac{(8e\pi \alpha )^{\frac{d}{4}}}{(\pi d)^{\frac14}} \leq (B \alpha)^{\frac{d}{4}} d^{-\frac14} \qtext{where} B \defeq 8e\pi, 
\end{talign}
and step~(ii) follows (for $d\geq 2$) from the Gamma function bounds~\cref{eq:stirling_approx}. Now the results in \cref{table:mmd_k_explicit} follow by simply substituting the various quantities from \cref{table:sqrtk_details} in \cref{eq:simplified_mmd_bound_general}.

\newcommand{\explicitguaranteetablecaption}{
{\noindent\caption{
    \tbf{Explicit kernel thinning MMD guarantees for common kernels.}
    Here, $a\defeq\frac12(\matone\!-\!d)$, $G_{\matone,d}\defeq \log(\frac{(\matone-2)^{\matone-\frac{3}{2}}}{
    (2(a-1))^{2a-1}d^{\frac{d}{2}+1}})$ and each ${C_i}$ denotes a universal constant. 
    See \cref{sub:explicit_mmd_rates} for more details on deriving these bounds from \cref{theorem:main_result_all_in_one}.     \label{table:mmd_k_explicit}
}}}

\begin{table}[t!]
    \centering
  \resizebox{\textwidth}{!}
  {
\small
  {
    \renewcommand{\arraystretch}{1}
    \begin{tabular}{ccccc}
        \toprule
        \Centerstack{\bf Kernel $\kernel$}

        & \Centerstack{
        $\mmd_{\kernel}(\inputcoreset,\ktcoreset)$ 
        $\precsim$
        } 
        
        \\
        \midrule 
        
         \Centerstack{
      		$\textbf{Gaussian}(\gaussparam)$
      		}
    
    	& \Centerstack{
             $C_1^d \cdot \sqrt{\frac{\log n}{n}  \cdot [1\!+\!\frac1d (\log n\! +\!(\frac{\rminpn}{\gaussparam})^2)]^{\frac{d}{2}}  \cdot \log(\sqrt{\log n} \!+\! \frac{\rminpn}{\gaussparam} )  }$
      		}	
        
          \\[6mm]

        \Centerstack{\textbf{\Matern}$(\matone, \mattwo)$} 
        
          & \Centerstack{
            $C_2^d \cdot \sqrt{\frac{\matone^2\log n}{n} \brackets{ \frac{1}{d(a-1)}
 (a+\log^2n\! +\! d^2\log^2(\frac{\sqrt{2e\pi}}{\mattwo})\!+\!
 G^2_{\matone,d}\!+\! 
 \mattwo^2\rminpn^2)}^{\frac{d}{2}} \cdot \log (\log n\!+\! a \!+\! \mattwo \rminpn)}$
      		}
    
          \\[6mm]

          \Centerstack{
         $\textbf{B-spline}(2\splineparam+1)$}
         
         & \Centerstack{
      		$C_3^d \cdot \sqrt{\frac{\log n}{n} \cdot 
 [\splineparam^2+\frac{\rminpn^2}{d}]^{\frac{d}{2}} \cdot  \log (\splineparam +\frac{\rminpn}{\sqrt d}) }$
      		}

        \\[1ex] \bottomrule \hline
    \end{tabular}
    }
    }
     \opt{arxiv}{\explicitguaranteetablecaption}
     \opt{jmlr}{\explicitguaranteetablecaption}
\end{table}

    \section{Supplementary Details for Vignettes of \lowercase{\Cref{sec:vignettes}}}
\label{sec:vignettes_supplement}
This section provides supplementary details for the vignettes of \cref{sec:vignettes}.

\subsection{Mixture of Gaussians target}
\label{sec:mog_supplement}
Our mixture of Gaussians target is given by 
 $\P = \frac{1}{M}\sum_{j=1}^{M}\mc{N}(\mu_j, \mbf{I}_d)$ for $M \in \braces{4, 6, 8}$ where
    \begin{talign}
    \label{eq:mog_description}
    \mu_1 &= [-3, 3]\tp,  \quad \mu_2 = [-3, 3]\tp,  \quad  \mu_3 = [-3, -3]\tp,  \quad  \mu_4 = [3, -3]\tp,\\
    \mu_5 &= [0, 6]\tp, \qquad \mu_6 = [-6, 0]\tp,  \quad  \mu_7 = [6, 0]\tp,  \qquad  \mu_8 = [0, -6]\tp.
    \end{talign}

\subsection{Empirical decay rates}
\label{sub:decay_rates}
In \cref{fig:decay_rates}, we provide results for regressing the true error rate of $\sqrt{\frac{\log n}{n}}$ and $\frac{\log n}{\sqrt n}$ to the input size $n$ (on the log-log scale as in all of mean MMD plots) for various ranges of coreset sizes. Notably, the observed empirical rates are significantly slower than $n^{-\half}$ for the smallest coreset range (the same coreset range used in the panels in \cref{fig:mmd_gaussp_plots,fig:mmd_gaussp_plots}) but approach $n^{-\half}$ as the coreset size increases.
Hence, the observed empirical rates of \cref{fig:mmd_gaussp_plots} are consistent with $\frac{\log^c n}{\sqrt{n}}$ rates of decay.

\begin{figure}[th!]
    \centering
    \subfigure[True error scaling of $\sqrt{\frac{\log n}{n}}$]{\label{fig:sqrtlog_decay}%
    \resizebox{\linewidth}{!}{
    \begin{tabular}{c}
    \includegraphics[width=\linewidth]{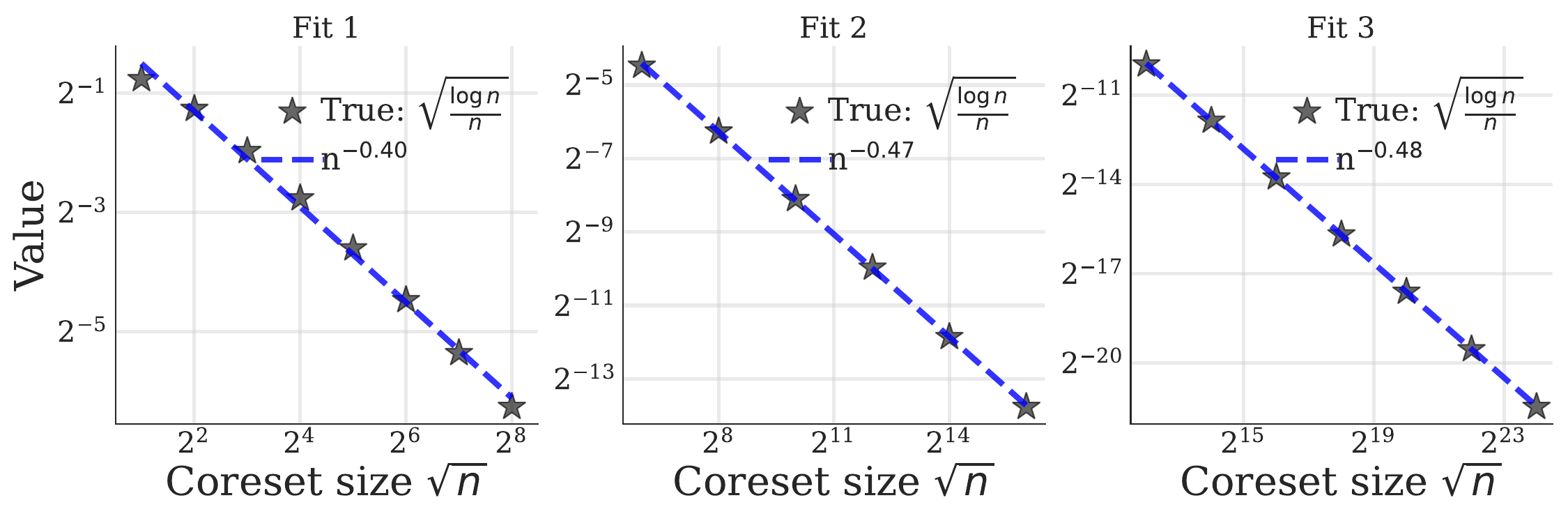}
    \end{tabular}
    }
    }
    \vspace{1mm}
    \hrule
    \vspace{3mm}
    \subfigure[True error scaling of $\frac{\log n}{\sqrt{n}}$ ]{\label{fig:log_decay}%
      \resizebox{\linewidth}{!}{
    \begin{tabular}{c}
    \includegraphics[width=\textwidth]{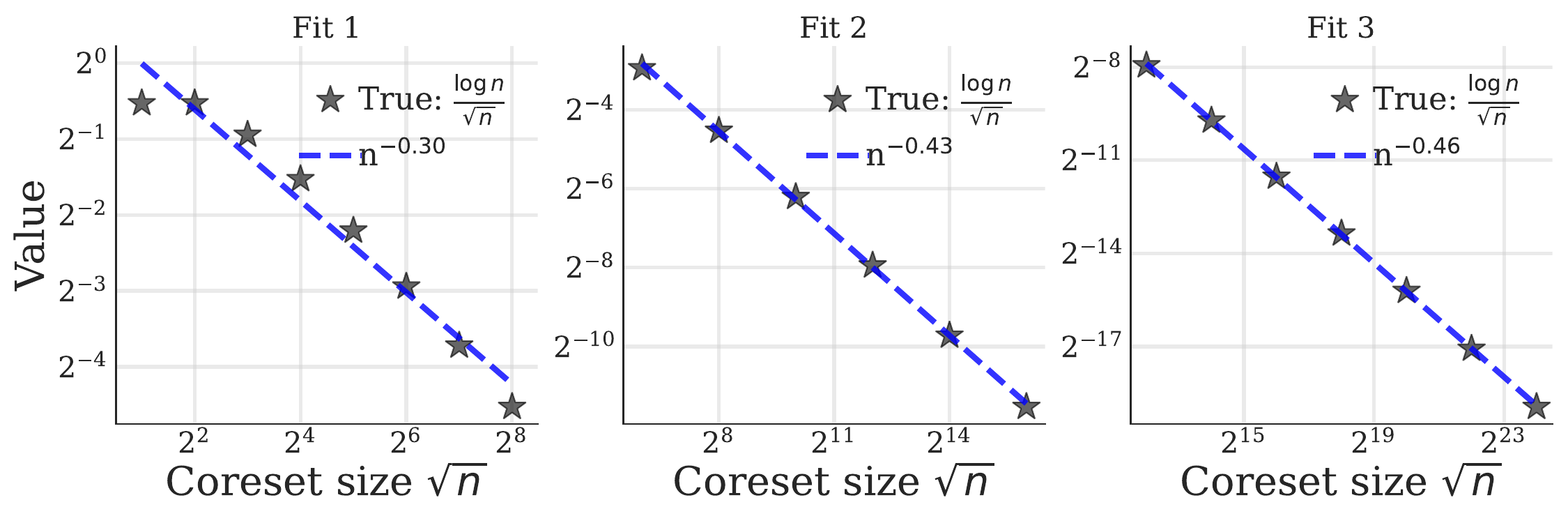}
    \end{tabular}
    }
    }
     \caption{\tbf{Empirical decay rates when true error $= \frac{\log^c n}{\sqrt n}$.} We display the ordinary least squares fits for regressing the log of true error onto the log of input size $n$ when the true error equals either (a) $\sqrt{\frac{\log n}{n}}$ or (b) $\frac{\log n}{\sqrt{n}}$. As the input range increases, the fitted decay rate tends towards $n^{-\half}$ despite appearing significantly slower for smaller input ranges.
    }
    \label{fig:decay_rates}
\end{figure}

\subsection{MCMC vignette details}
\label{sec:mcmc_supplement}
For complete details on the targets and sampling algorithms we refer the reader to \citet[Sec. 4]{riabiz2021optimal}. 
When applying standard thinning to any Markov chain output, we adopt the convention of keeping the final sample point. 
For all experiments, we used only the odd indices of the post burn-in sample points when thinning to form $\inputcoreset$.

The selected burn-in periods for the Goodwin task were 820,000 for RW; 824,000 for adaRW; 1,615,000 for MALA; and 1,475,000 for pMALA. The respective numbers for the Lotka-Volterra task were 1,512,000 for RW; 1,797,000 for adaRW; 1,573,000 for MALA; and 1,251,000 for pMALA.  
For the Hinch experiments, we discard the first $10^6$ points as burn-in following \citet[App.~S5.4]{riabiz2021optimal}. For all $n$, the parameter $\gaussparam$ for the Gaussian kernel is set to the median distance between all pairs of $4^7$ points obtained by standard thinning the post-burn-in odd indices. The resulting values for the Goodwin chains were 0.02 for RW; 0.0201 for adaRW;
0.0171 for MALA; and 0.0205 for pMALA. The respective numbers for Lotka-Volterra task were   0.0274 for RW; 0.0283 for adaRW;
0.023 for MALA; and 0.0288 for pMALA. Finally, the numbers for the Hinch task were 8.0676 for RW; 8.3189 for adaRW;
 8.621 for MALA; and 8.6649 for pMALA.

\section{Kernel Thinning with Square-root Dominating Kernels} %
\label{sub:guarantees_with_approximate_square_root_kernels}
As alluded to in \cref{sec:setup}, it is not necessary to identify an exact square-root kernel $\ksqrt$
to run kernel thinning. Rather, it suffices to identify any \emph{square-root dominating kernel} $\ksqrtdom$ defined as follows. 
\begin{definition}[Square-root dominating kernel]
\label{def:square_root_dom_kernel}
	We say a reproducing kernel $\ksqrtdom:\Rd\times \Rd\to\R$ is a \emph{square-root dominating kernel} for a reproducing kernel $\kernel:\Rd \times \Rd\to\R$ if $\ksqrtdom(x,\cdot)$ is square-integrable for all $x\in\Rd$ and either of the following equivalent conditions hold.
	\begin{enumerate}[label=(\alph*)]
	    \item\label{item:rkhs_dom_containment} The RKHS of $\kernel$ belongs
       to the RKHS of $\kdom(\x, \y) \defeq \int_{\Rd}\ksqrtdom(\x, \z)\ksqrtdom(\y, \z) d\z$.
    	\item\label{item:kernel_difference_psd} The function $\kdom - c \kernel$
       is positive definite for some $c > 0$.
	\end{enumerate}
\end{definition}
\begin{remark}[Controlling $\mmd_{\kernel}$]
\label{rem:square_root_dom_kernel}
A square-root dominating kernel $\ksqrtdom$ is a suitable surrogate for $\ksqrt$ as, by \citet[Lem.~2.2, Prop.~2.3]{zhang2013inclusion}, $\mmd_{\kernel}(\mu,\nu) \leq \sqrt{1/c} \cdot \mmd_{\kdom}(\mu,\nu)$ for $c$ the constant appearing in \cref{def:square_root_dom_kernel}\ref{item:kernel_difference_psd} and all distributions $\mu$ and $\nu$.
\end{remark}

\cref{def:square_root_dom_kernel,rem:square_root_dom_kernel} enable us to use convenient dominating surrogates whenever an exact square-root kernel is inconvenient to derive or deploy. 
For example, our next result, proved in \cref{sec:proof_of_matern_sqrt_dom}, shows that a standard \Matern kernel is a square-root dominating kernel for every sufficiently-smooth shift-invariant and absolutely integrable $\kernel$.
\newcommand{\maternsqrtdomname}{Dominating smooth kernels}
\begin{proposition}[Dominating smooth kernels]
\label{matern_sqrt_dom}
If $\kernel(\x,\y) = \kappa(\x-\y)$ and $\kappa \in \lone \cap C^{2\matone}$ for $\matone > d$, then, for any $\mattwo>0$, the \textbf{Mat\'ern}$(\frac{\matone}{2}, \mattwo)$ kernel of \cref{table:kernel_sqrt_pair} is a square-root dominating kernel for $\kernel$.
\end{proposition}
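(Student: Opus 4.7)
The plan is to verify condition~(b) of \cref{def:square_root_dom_kernel}: exhibit $c > 0$ for which $\kdom - c\kernel$ is positive definite, where $\kdom(\x,\y) = \int_{\Rd} \ksqrtdom(\x,\z)\ksqrtdom(\y,\z)\,d\z$. Since $\kernel$ and $\kdom$ are both continuous and shift-invariant, Bochner's theorem reduces this to the pointwise Fourier inequality $\widehat{\kdom}(\omega) \geq c\,\widehat{\kappa}(\omega)$ for every $\omega \in \Rd$. I would first identify $\widehat{\kdom}$ in closed form using the \Matern calculations of \cref{ssub:derive_krt_matern} --- specifically \cref{eq:matern_ksqrt} and \cref{eq:fourier_wendmatern} --- which show that $\ksqrtdom$, the $\textbf{Mat\'ern}(\matone/2,\mattwo)$ kernel, is a positive multiple of the exact square-root kernel of $\textbf{Mat\'ern}(\matone,\mattwo)$. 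Hence $\kdom$ is itself a positive multiple of $\textbf{Mat\'ern}(\matone,\mattwo)$ with $\widehat{\kdom}(\omega) \propto (\mattwo^2 + \twonorm{\omega}^2)^{-\matone}$. (The condition $\matone > d$ is needed here to ensure that this square-root kernel exists, matching the parameter range in \cref{table:kernel_sqrt_pair}.) The task thus reduces to establishing the pointwise decay
\[
\widehat{\kappa}(\omega) \,\leq\, C\,(\mattwo^2 + \twonorm{\omega}^2)^{-\matone} \qquad \forall \omega \in \Rd.
\]

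To establish this bound I would exploit $\kappa \in \lone(\Rd) \cap C^{2\matone}(\Rd)$. Positive definiteness of $\kernel$ together with $\kappa\in\lone(\Rd)$ imply that $\widehat{\kappa}$ is continuous, nonnegative, and uniformly bounded by $(2\pi)^{-d/2}\|\kappa\|_{\lone}$, so the inequality is trivial for $\omega$ in any compact set. For the tail $\twonorm{\omega} \to \infty$ I would invoke the Fourier identity
\[
(\mattwo^2 + \twonorm{\omega}^2)^{\matone}\,\widehat{\kappa}(\omega) \,=\, \widehat{(\mattwo^2 I - \Delta)^{\matone}\kappa}(\omega),
\]
valid classically because $\kappa\in C^{2\matone}(\Rd)$. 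Writing $g \defeq (\mattwo^2 I - \Delta)^{\matone}\kappa$, the elementary Riemann--Lebesgue bound $|\widehat{g}(\omega)| \leq (2\pi)^{-d/2}\|g\|_{\lone}$ then yields the required decay as soon as $g \in \lone(\Rd)$.

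The main obstacle is therefore justifying $g \in \lone(\Rd)$ from the stated hypotheses, and this is where $\matone > d$ enters. My plan is to approximate $\kappa$ by $\kappa \cdot \eta_R$ for a smooth cutoff $\eta_R$ supported on $B(0, 2R)$ and equal to $1$ on $B(0, R)$, apply the Fourier bound to the truncation to obtain an inequality uniform in $R$, and pass to the limit via monotone convergence on the Fourier side. An alternative route uses positive definiteness together with $\kappa\in C^{2\matone}$ at the origin to deduce that $\int \twonorm{\omega}^{2\matone}\,\widehat{\kappa}(\omega)\,d\omega < \infty$, and then upgrades this integrability to a pointwise bound using the continuity of $\widehat{\kappa}$ and the Bochner representation; in either case the margin $\matone > d$ is what converts integral control into pointwise control, closing the argument.
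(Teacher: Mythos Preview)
Your reduction to a Fourier inequality is sound, and you correctly identify $\kdom$ as a multiple of $\textbf{\Matern}(\matone,\mattwo)$ with Fourier weight $(\mattwo^2+\twonorm{\omega}^2)^{-\matone}$. The gap is in both routes to the \emph{pointwise} bound $\widehat{\kappa}(\omega)\leq C(\mattwo^2+\twonorm{\omega}^2)^{-\matone}$. The identity $(\mattwo^2+\twonorm{\omega}^2)^{\matone}\widehat{\kappa}=\widehat{(\mattwo^2 I-\Delta)^{\matone}\kappa}$ and its cutoff variant require $\lone$ control on the iterated Laplacians $\Delta^j\kappa$, which $\kappa\in\lone\cap C^{2\matone}$ alone does not supply: the Leibniz terms from differentiating $\kappa\cdot\eta_R$ contain factors $\partial^\alpha\kappa$ on the annulus $R\leq\twonorm{x}\leq 2R$ with no uniform bound as $R\to\infty$. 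And the ``upgrade'' route fails outright, because weighted integrability $\int\twonorm{\omega}^{2\matone}\widehat{\kappa}(\omega)\,d\omega<\infty$ does \emph{not} imply pointwise decay $\widehat{\kappa}(\omega)\leq C\twonorm{\omega}^{-2\matone}$: a nonnegative continuous $\widehat{\kappa}$ can carry arbitrarily tall, sufficiently narrow bumps at growing frequencies while keeping the weighted integral finite. The margin $\matone>d$ does not prevent this.

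The paper avoids the pointwise bound by verifying condition~\ref{item:rkhs_dom_containment} of \cref{def:square_root_dom_kernel} through an \emph{integral} criterion. From the same moment bound you invoke---positive definiteness plus $\kappa\in C^{2\matone}$ give $\int\twonorm{\omega}^{2\matone}\widehat{\kappa}(\omega)\,d\omega<\infty$ via \citet[Thm.~4.1]{sun1993conditionally}---and from $\infnorm{\widehat{\kappa}}<\infty$ (since $\kappa\in\lone$), one gets
\[
\int(\mattwo^2+\twonorm{\omega}^2)^{\matone}\,\widehat{\kappa}(\omega)^2\,d\omega
\;\leq\;\infnorm{\widehat{\kappa}}\int(\mattwo^2+\twonorm{\omega}^2)^{\matone}\,\widehat{\kappa}(\omega)\,d\omega
\;<\;\infty,
\]
which by \citet[Thm.~10.12]{wendland2004scattered} is precisely the condition for $\kappa\in\rkhs_{\kdom}$, and the paper concludes $\rkhs_{\kernel}\subseteq\rkhs_{\kdom}$ from there. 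So your alternative route started in the right place; the missing move is to multiply once more by the bounded factor $\widehat{\kappa}$ and invoke the native-space characterization, rather than seek a pointwise upgrade the hypotheses do not deliver.
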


Checking the square-root dominating condition is also particularly simple for any pair of continuous shift-invariant kernels as the next result, proved in \cref{proof_of_square_root_dom_kernel}, shows.
\newcommand{\domsqrtresultname}{Dominating shift-invariant kernels}
\begin{proposition}[\domsqrtresultname]
\label{square_root_dom_kernel}
Suppose that the real-valued kernels 
$\kernel(\x,\y) = \kappa(\x-\y)$
and
$\ksqrtdom(\x,\y) = \kappasqrtdom(\x-\y)$
have respective spectral densities (\cref{def:spectral_density})
$\hatkappa$ and $\hatkappasqrt$.
If   $\hatkappasqrt\in\ltwo$,  
then
$\ksqrtdom$ 
is a square-root dominating kernel for $\kernel$ if and only if 
\begin{align}\label{eq:inf_norm_fourier_ratio}
    \esssup_{\omega\in\Rd:\,\hatkappa(\omega) > 0} \textstyle{\frac{\hatkappa(\omega)}{\hatkappasqrt(\omega)^2}} < \infty.
\end{align}
\end{proposition}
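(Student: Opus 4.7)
The strategy is to reduce both directions to a purely Fourier-analytic comparison between $\hatkappa$ and $\hatkappasqrt^{2}$, using the positive-definiteness characterization (b) in \cref{def:square_root_dom_kernel}. The first step is to verify that the kernel $\kdom(\x,\y) = \int \ksqrtdom(\x,\z)\ksqrtdom(\y,\z) d\z$ is well-defined and shift-invariant. Since $\hatkappasqrt\in\ltwo(\Rd)$, Plancherel's identity gives $\kappasqrtdom\in\ltwo(\Rd)$, so the integral defining $\kdom$ is an $\ltwo$ inner product; symmetry of $\kappasqrtdom$ (inherited from $\ksqrtdom$ being a real symmetric kernel) yields
\baligns
\kdom(\x,\y)
    = \int \kappasqrtdom(\x-\z)\kappasqrtdom(\y-\z)d\z
    = (\kappasqrtdom \ast \kappasqrtdom)(\y-\x)
    \rdefn \kappadom(\y-\x),
\ealigns
which is continuous on $\Rd$ by the standard $\ltwo$ convolution continuity argument.

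Next I will identify the generalized Fourier transform $\hatkappadom$. Mirroring the Plancherel--Parseval calculation used in the proof of \cref{sqrt_translation_invariant}, the convolution theorem in the unitary-angular convention \cref{defn:fourier_transform} yields $\hatkappadom = (2\pi)^{d/2}\hatkappasqrt^{2}$ (as an $\lone$ function, since $\hatkappasqrt\in\ltwo$). In particular $\hatkappadom \geq 0$ everywhere and agrees with the generalized Fourier transform of $\kappadom$ in the sense of \citet[Def.~8.9]{wendland2004scattered}.

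Now I will apply the generalized Bochner-type criterion for shift-invariant kernels with generalized Fourier transforms \citep[Thm.~10.12]{wendland2004scattered}: for real continuous shift-invariant kernels with generalized Fourier transforms, $\kdom - c\,\kernel$ is positive definite if and only if $\hatkappadom(\omega) - c\,\hatkappa(\omega) \geq 0$ for almost every $\omega$. Some such $c>0$ exists if and only if $\hatkappa/\hatkappadom$ is essentially bounded on $\{\hatkappa>0\}$, which, after absorbing the $(2\pi)^{d/2}$ factor, is precisely condition \cref{eq:inf_norm_fourier_ratio}. Invoking condition \cref{item:kernel_difference_psd} of \cref{def:square_root_dom_kernel} then finishes both directions.

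\textbf{Main obstacle.} The delicate step is justifying the Bochner-type equivalence when $\kappa$ and $\kappadom$ need not lie in $\lone(\Rd)$, so the classical Bochner theorem does not directly apply. I expect to need to work in the generalized-Fourier-transform framework of \citet[Ch.~8,~10]{wendland2004scattered}, checking that $\kappadom-c\,\kappa$ is itself the difference of two continuous shift-invariant kernels with generalized Fourier transforms so that the positive-definiteness characterization is available; the essential-supremum condition on the set $\{\hatkappa>0\}$ (rather than all of $\Rd$) also needs to be handled carefully to avoid pathologies where $\hatkappa$ and $\hatkappasqrt^{2}$ simultaneously vanish.
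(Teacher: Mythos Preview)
Your approach is correct but takes a different route from the paper. You work via condition~\ref{item:kernel_difference_psd} of \cref{def:square_root_dom_kernel}, reducing positive definiteness of $\kdom - c\kernel$ to nonnegativity of $\hatkappadom - c\hatkappa$ and then to the ratio bound. The paper instead works via condition~\ref{item:rkhs_dom_containment}: after computing $\hatkappadom$ exactly as you do (via Plancherel--Parseval), it simply invokes Prop.~3.1 of \citet{zhang2013inclusion}, which directly characterizes the inclusion $\rkhs_{\kernel}\subseteq\rkhs_{\kdom}$ for shift-invariant kernels by the essential-boundedness of $\hatkappa/\hatkappadom$ on $\{\hatkappa>0\}$. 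This is a one-line finish and avoids having to argue about positive definiteness of differences.

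Two remarks on your version. First, your ``main obstacle'' largely dissolves: since $\kernel$ and $\ksqrtdom$ are real continuous shift-invariant kernels, Bochner's theorem \citep[Thm.~6.6]{wendland2004scattered} already forces $\hatkappa,\hatkappasqrt\in\lone(\Rd)$ and nonnegative, so $\hatkappadom=(2\pi)^{d/2}\hatkappasqrt^2\in\lone(\Rd)$ as well; you are therefore in the classical Bochner setting, and the generalized framework is not needed for the difference $\kdom-c\kernel$. Second, \citet[Thm.~10.12]{wendland2004scattered} is not a Bochner-type positive-definiteness criterion; it characterizes native-space membership. The reference you want for your route is the ordinary Bochner theorem applied to the $\lone$ function $\hatkappadom - c\hatkappa$. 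With those fixes your argument goes through and is slightly more self-contained than the paper's, at the cost of a few more lines.
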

In \cref{table:sqrt_dom_pair}, we use \cref{square_root_dom_kernel} to derive convenient tailored square-root dominating kernels $\ksqrtdom$ for standard inverse multiquadric kernels, hyberbolic secant (sech) kernels, and Wendland's compactly supported kernels.
In each case, we can identify a square-root dominating kernel from the same family.

\newcommand{\sqrtdomtablename}{Square-root dominating kernels $\ksqrtdom$ for common kernels $\kernel$}

\newcommand{\sqrtdomtablecaption}{
{\noindent\caption{
    \tbf{\sqrtdomtablename} (see \cref{def:square_root_dom_kernel}). Here $\natural_0$ denotes the non-negative integers.
    See \cref{proof_of_table:sqrt_dom_pair} for our derivation.
    \label{table:sqrt_dom_pair}} 
    }}

\begin{table}[ht]
\centering
\small{
  \resizebox{0.8\textwidth}{!}
  {
    {\renewcommand{\arraystretch}{2}
    \begin{tabular}{cccccc}
    	\toprule
        \Centerstack{\bf Name of kernel\\ $\kernel(\x,\y) = \kappa(\x-\y)$} & \Centerstack
        {\bf Expression for \\ $\kappa(\z)$%
        } 
        & \Centerstack{\bf Name of square-root dominating  \\ {\bf kernel
        $\ksqrtdom(\x,\y) = \kappasqrtdom(\x-\y)$}}
         \\[3mm]
        \midrule 
        \addlinespace[2mm]
         \Centerstack{\bf InverseMultiquadric$(\matone, \mattwo)$ \\
          $\matone>0, \mattwo>0$}
          & $(\mattwo^2+\twonorm{\z}^2)^{-\matone}$
          & \Centerstack{\bf InverseMultiquadric$(\matone', \mattwo')$ \\
         $(\matone', \mattwo') \in \mc C_{\matone,\mattwo,
          d}$; see \cref{eq:cgammaset}}
         \\[6mm]
         \Centerstack{\bf Sech$(\sechparam)$ \\ $\sechparam>0$ }
         & $ \displaystyle \prod_{j=1}^d \textstyle\sech\parenth{\sqrt{
         \frac{\pi}{2}}
         \sechparam \z_{\j}}$
         & \Centerstack{\bf Sech$(2\sechparam)$}
            \\[6mm]
        \Centerstack{{\bf Wendland$(\wendparam)$} \\
          $\wendparam \in \natural,\wendparam\geq \frac12 (d+1)$}
          &  \Centerstack{$\phi_{d, \wendparam}(\twonorm{z})$; see \cref{eq:wendland_kernel}}
          & \Centerstack{\bf Wendland$(\wendparam')$\\
          $\wendparam'\in \natural_0, \wendparam' \leq \quarter(2\wendparam\!-\!1\!-d)$ 
          }
        \\[2ex] \bottomrule \hline
    \end{tabular}
    }
    }
    }
    \opt{arxiv}{\sqrtdomtablecaption}
    \opt{jmlr}{\sqrtdomtablecaption}
\end{table}

\paragraph*{Expressions for Wendland kernels}  The \Wendland kernel is a compactly-supported radial kernel $\phi_{d, \wendparam}(\twonorm{\x-\y})$ on $\Rd$ where $\phi_{d, \wendparam}:\real_{+}\to \real$ is a truncated minimal-degree polynomial with $2\wendparam$ continuous derivatives.
We collect here the expressions for $\phi_{d, \wendparam}$ for $\wendparam=0, 1, 2$ and refer the readers to \citet[Ch.~9, Thm.~9.13, Tab.~9.1]{wendland2004scattered} for more general $\wendparam$. Let $(r)_+=\max(0, r)$ and $\l \defeq \floor{d/2}+3$, then we have
\begin{talign}
\label{eq:wendland_kernel}
\phi_{d, 0}(r) &= (1-r)_+^{\floor{d/2}+1},  \quad
\phi_{d, 1}(r) = (1-r)_+^{\floor{d/2}+2} [(\floor{d/2}+2)r+1] \\
\phi_{d, 2}(r) &= (1-r)_+^{\floor{d/2}+3} [(\l^2+4\l+3)r^2+(3\l+6)r +3].
\end{talign}

\subsection{Proof of \lowercase{\Cref{matern_sqrt_dom}}: \maternsqrtdomname}
\label{sec:proof_of_matern_sqrt_dom}
Since $\kappa \in \lone$, $\fourier(\kappa)$ is bounded by the Babenko-Beckner inequality \citep{beckner1975inequalities} and nonnegative by Bochner's theorem \citep[Thm.~6.6]{bochner1933monotone,wendland2004scattered}.
Moreover, since $\kappa \in C^{2\nu}$, \citet[Thm.~4.1]{sun1993conditionally} implies that $\int \twonorm{\omega}^{2\nu} \fourier(\kappa)(\omega) d\omega < \infty$.
By \citet[Theorem 8.15]{wendland2004scattered},
the \textbf{Mat\'ern}$(\matone, \mattwo)$ kernel $\kdom(\x, \y) \propto \wendmatern(\x-\y)$ for $\wendmatern$ continuous with $\fourier(\wendmatern)(\omega) = {}{(\mattwo^2+\twonorm{\omega}^2)^{-\matone}}$.
Since we have established that
\balignt
\int \frac{\fourier(\kappa)(\omega)^2}{\fourier(\wendmatern)(\omega)} d\omega
    \!=\!
\int (\mattwo^2+\twonorm{\omega}^2)^{\matone} \fourier(\kappa)(\omega)^2 d\omega
    \!\leq \!
\infnorm{\fourier(\kappa)} 
    \int (\mattwo^2+\twonorm{\omega}^2)^{\matone} \fourier(\kappa)(\omega) d\omega
    \!<\! \infty,
\ealignt
\citet[Thm.~10.12]{wendland2004scattered} implies that $\kappa$ belongs to $\rkhs_{\kdom}$ and hence that $\rkhs_{\kernel} \subseteq \rkhs_{\kdom}$.
Finally, by \cref{sec:proof_of_sqrt_kernels}, \textbf{Mat\'ern}$(\frac{\matone}{2}, \mattwo)$ is a valid square-root dominating kernel for $\kdom$ and therefore for~$\kernel$.

\subsection{Proof of \lowercase{\Cref{square_root_dom_kernel}}: \domsqrtresultname}
\label{proof_of_square_root_dom_kernel}
\cref{def:spectral_density} implies that
\balignt
\kernel(x,y) 
    = \frac{1}{(2\pi)^{d/2}}\int e^{-i\inner{\omega}{x-y}} \hatkappa(\omega) d\omega
\qtext{and}
\ksqrtdom(x,y) 
    = \frac{1}{(2\pi)^{d/2}}\int e^{-i\inner{\omega}{x-y}} \hatkappasqrt(\omega) d\omega,
\ealignt
Moreover, since 
$\ksqrt(\x, \cdot) = \fourier(e^{-i\inner{\cdot}{x}} \hatkappasqrt)$ for 
$e^{-i\inner{\cdot}{x}}\hatkappasqrt \in \lone\cap \ltwo$, the Plancherel-Parseval identity \citep[Proof of Thm.~5.23]{wendland2004scattered} implies that
\balignt
\kdom(x,y)
    \defeq
\int_{\Rd}\ksqrtdom(\x, \z)\ksqrtdom(\y, \z) d\z
    &= 
\int_{\Rd} 
    e^{-i\inner{\omega}{\x}} {\hatkappasqrt(\omega)}
    \ e^{i\inner{\omega}{\y}} {\hatkappasqrt(\omega)} d\omega \\
    &= 
\int_{\Rd} 
    e^{-i\inner{\omega}{\x-\y}} \hatkappasqrt(\omega)^2 d\omega,
\ealignt
and Bochner's theorem \citep[Thm.~6.6]{bochner1933monotone,wendland2004scattered} implies that $\kdom$ is a kernel.
Finally, Prop. 3.1 of \citet{zhang2013inclusion} now implies that the RKHS of $\kernel$ belongs to the RKHS of $\kdom$ if and only if the ratio condition \cref{eq:inf_norm_fourier_ratio} holds.

\subsection{Derivation of \cref{table:sqrt_dom_pair}: \sqrtdomtablename}
\label{proof_of_table:sqrt_dom_pair}
Thanks to \cref{square_root_dom_kernel}, to establish the validity of the square root dominating kernels stated in \cref{table:sqrt_dom_pair}, it suffices to verify that
\begin{talign}
\label{eq:fourier_G_relation}
    \hatkappasqrt \in \ltwo
    \qtext{and}
    \hatkappa \precsim_{\,d, \kappa, \kappasqrtdom} 
    \hatkappasqrt^2,
\end{talign}
where the functions $\kappa$ and $\hatkappasqrt$ are the spectral densities of $\kappa$ and $\kappasqrtdom$.

\paragraph*{Inverse multiquadric}
\newcommand{\scalarw}{\twonorm{\omega}}
Consider the positive definite function $\wendmatern$ \cref{eq:def_wendmatern} underlying the \Matern kernel, which is continuous on $\R^d \backslash \{\boldzero\}$.
When $\kappa(\z) = (\mattwo^2+\twonorm{\z}^2)^{-\matone}$ 
and 
$\kappasqrtdom(\z) = ({\mattwo'}^2+\twonorm{\z}^2)^{-\matone'}$, 
\citet[Theorem 8.15]{wendland2004scattered} implies that
\begin{talign}
\hatkappa(\omega) 
    = 
\wendmatern(\omega) 
    \qtext{and}
\Phi_{\matone',\mattwo'}^2(\omega)
    =
\hatkappasqrt(\omega)^2.
\end{talign}
Let $a(\omega) \asymp_{d, \matone,\mattwo} b(\omega)$ denote asymptotic equivalence up to a constant depending on $d, \matone,\mattwo$.
Then, by \citep[Eqs.~10.25.3 \& 10.30.2]{NIST:DLMF},
we have
\begin{talign}
\wendmatern(\omega) &\asymp_{d, \matone,\mattwo} \scalarw^{\matone-\frac{d}2-\half} e^{-\mattwo\scalarw}  \qtext{as} \scalarw \to \infty, \label{eq:matern_infty_bound} \\
\wendmatern(\omega)  &\asymp_{d, \matone,\mattwo} \scalarw^{-(d-2\matone)_+} \qtext{as} \scalarw \to 0.
\label{eq:matern_zero_bound}
\end{talign}
Hence, $\hatkappasqrt \in\ltwo$ whenever $\matone'>\frac d4$.

Moreover, applying \cref{eq:matern_infty_bound}, we find that for $ \scalarw \to \infty $,
\begin{talign}
\frac{\wendmatern(\omega)}{\Phi_{\matone',\mattwo'}^2(\omega)} &\asymp_{d, \matone,\mattwo,\matone', \mattwo'}\scalarw^{\matone-\frac{d}2-\half} e^{-\mattwo\scalarw} 
\cdot \scalarw^{-2\matone'+d+1} e^{2\mattwo'\scalarw} \\
&= \scalarw^{\matone+\frac{d}2+\half-2\matone'} e^{(2\mattwo'-\mattwo)\scalarw}.
\end{talign}
If $2\mattwo'-\mattwo < 0$, this expression is bounded for any value of $\matone'$.
If $2\mattwo'-\mattwo = 0$, this expression is bounded when $\matone' \geq \frac{\matone}{2} + \frac{d}{4} + \frac{1}{4}$.

Applying \cref{eq:matern_zero_bound}, we find that for $ \scalarw \to 0$,
\begin{talign}
\frac{\wendmatern(\omega)}{\Phi_{\matone',\mattwo'}^2(\omega)} &\asymp_{d, \matone,\mattwo,\matone', \mattwo'} \scalarw^{-(d-2\matone)_+}
\cdot \scalarw^{2(d-2\matone')_+} 
= \scalarw^{2(d-2\matone')_+-(d-2\matone)_+},
\end{talign}
If $\matone \geq  \frac{d}2$, this expression is finite for any value of $\matone'$.
If $\matone < \frac{d}2$, this expression is finite when $\matone' \leq \frac{\matone}{2} + \frac{d}{4}$.
Hence, our condition~\cref{eq:fourier_G_relation} is verified whenever $(\matone',\mattwo')$ belongs to the set
\balignt
\label{eq:cgammaset}
\mc C_{\matone,\mattwo, d} 
    \defeq
\big\{ (\matone', \mattwo') :\  
    & (1)\ \matone' > \frac{d}{4} \qtext{and} \mattwo' \leq \frac{\mattwo}{2}, \qtext{and}\\
    & (2)\ \matone' \leq \frac{d}{4} + \frac{\matone}{2} \qtext{if} \matone < \frac{d}{2}, \qtext{and}\\
    & (3)\ \matone' \geq \frac{d}{4} + \frac{\matone}{2} + \quarter \qtext{if} \mattwo' = \frac{\mattwo}{2}
\big\}.
\ealignt

\paragraph*{Sech}
Define $\kappa_{a}(\z) \defeq \prod_{j=1}^d \sech\parenth{\sqrt{\frac{\pi}{2}} \sechparam \z_{\j}}$, and suppose $\kappa = \kappa_{a}$ and $\kappasqrtdom=\kappa_{2a}$.
\citet[Ex.~3.2]{huggins2018random} yields that
\begin{talign}
\label{eq:sech_fourier}
 \hatkappa(\omega) 
 = \fourier(\kappa_a)(\omega) 
 &= \frac{1}{\sechparam^d}\prod_{j=1}^d \sech(\sqrt{\frac{\pi}{2}} \frac{\omega_j}{\sechparam}) = a^{-d} \cdot \kappa_{1/a}(\omega), \qtext{and}\\
 \hatkappasqrt(\omega) 
 = \fourier(\kappasqrtdom)(\omega) &=  (2a)^{-d} \kappa_{1/(2a)}(\omega).
 \label{eq:sech_fourier_sqrt}
\end{talign}
Since $\sech^2(b/2) = \frac{4}{e^b + e^{-b}+2}
	> \frac{4}{e^b + e^{-b}} = 2\sech(b)$, we have
\begin{talign}
\label{eq:sech_bound_kappa}
 \kappa_{1/a}(\omega) \leq 2^{-d} \cdot  \kappa_{1/(2a)}(\omega)^2.
\end{talign}
Putting the pieces together, we further have
\begin{talign}
 \hatkappa(\omega) 
 \seq{\cref{eq:sech_fourier}} a^{-d} \cdot \kappa_{1/a}(\omega)
 \sless{\cref{eq:sech_bound_kappa}} 
 (2a)^{-d} \cdot  \kappa_{1/(2a)}(\omega)^2
 \seq{\cref{eq:sech_fourier_sqrt}} (2a)^d\cdot  \hatkappasqrt(\omega)^2.
\end{talign}
Since $\kappa_{1/(2a)}\in\ltwo$, we have verified the condition~\cref{eq:fourier_G_relation}. 

\paragraph*{Wendland}
When $\kappa(\z) = \phi_{d, \wendparam}(\twonorm{\z})$ 
and 
$\kappasqrtdom(\z) = \phi_{d, \wendparam'}(\twonorm{\z})$, 
\citet[Thm.~10.35]{wendland2004scattered} implies that
\begin{talign}
\hatkappa(\omega) 
    \precsim_{d,\wendparam,\wendparam'}
\frac{1}{(1+\twonorm{\omega}^2)^{d+2\wendparam+1}}
    \leq
\frac{1}{(1+\twonorm{\omega}^2)^{2d+4\wendparam'+2}}
    \precsim_{d, \wendparam,\wendparam'}
\hatkappasqrt(\omega)^2
    \precsim_{d, \wendparam,\wendparam'}
\frac{1}{(1+\twonorm{\omega}^2)^{2d+4\wendparam'+2}},
\end{talign}
with $\wendparam'\in \natural_0, \wendparam' \leq \frac{(2\wendparam\!-\!1\!-d)}{4}$, thereby establishing the condition~\cref{eq:fourier_G_relation}.

\section{Online Vector Balancing in Euclidean Space}
\label{proof_of_euclidean_vector_balancing}
Using \cref{sbhw_properties}, we recover the online vector balancing result of \citet[Thm.~1.2]{alweiss2021discrepancy} with improved constants and a less conservative setting of the thresholds $\cnew$.
Note that we capture the usual obliviousness assumption by treating the sequence $(f_i)_{i=1}^n$ as a fixed, deterministic input to \cref{algo:self_balancing_walk}.
\begin{corollary}[Online vector balancing in Euclidean space]\label{euclidean_vector_balancing}
If $\rkhs = \reals^d$ equipped with the Euclidean dot product, each $\twonorm{\invec} \leq 1$, and each $\cnew = \half + \log(4n/\delta)$, then, 
with probability at least $1-\delta$,  
the self-balancing Hilbert walk (\cref{algo:self_balancing_walk}) returns $\outvec[n]$ satisfying
\begin{talign}
\label{eq:self_balancing_walk}
\infnorm{\outvec[n]}
    =
\infnorm{\sumn \eta_i\invec} 
    \leq 
    \sqrt{2\log(4d/\delta) \log(4n/\delta)}.
\end{talign}
\end{corollary}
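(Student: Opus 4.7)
The strategy is to combine the sub-Gaussian recursion \cref{eq:subgauss_const_def}, the signed-sum representation from Property~\ref{item:signed_sum}, and the coordinate-wise sub-Gaussianity from Property~\ref{item:pointwise_subgauss} of \cref{sbhw_properties}, applied to the linear kernel $\kernel(x,y) = x^\top y$ that realizes $\reals^d$ as an RKHS with $\kernel(e_j,e_j) = 1$.

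The central step---and the main obstacle---is an inductive bound
\[
\sgparam[i]^2 \leq \frac{c^2}{2c-1} \qtext{for all} i \in \{0,1,\dots,n\}, \qtext{where} c \defeq \textstyle\half + \log(4n/\delta).
\]
Writing $b_i \defeq \hnorm{\invec}^2 \in [0,1]$, the recursion \cref{eq:subgauss_const_def} either leaves $\sgparam[i-1]^2$ unchanged (when the positive part vanishes) or reduces to $\sgparam[i]^2 = \sgparam[i-1]^2(c-b_i)^2/c^2 + b_i$; substituting the inductive hypothesis into the latter and expanding the square gives $\sgparam[i]^2 \leq (c^2 - b_i(1-b_i))/(2c-1) \leq c^2/(2c-1)$, since $b_i(1-b_i)\geq 0$ on $[0,1]$. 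Property~\ref{item:subgauss_bound} of \cref{sbhw_properties} does not apply directly because its hypothesis $c\leq \hnorm{\invec}^2/(1-q)$ cannot hold uniformly over the input stream when $\hnorm{\invec}$ is small, so this direct recursion-level argument appears unavoidable.

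With this bound in hand, choosing $\delta_i = \delta/(2n)$ makes the threshold requirement of Property~\ref{item:signed_sum} tight: using $2c - 1 = 2\log(4n/\delta)$,
\[
\sgparam[i-1]\hnorm{\invec}\sqrt{2\log(2/\delta_i)} \leq \sqrt{\textstyle\frac{c^2}{2c-1}} \cdot 1 \cdot \sqrt{2\log(4n/\delta)} = c = \cnew.
\]
Thus Property~\ref{item:signed_sum} yields the exact signed-sum representation $\outvec[n] = \sum_{i=1}^n \eta_i \invec$ with probability at least $1 - n\delta_i = 1 - \delta/2$ given $\vseq[n]$.

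To convert this into an $\infnorm{\cdot}$ bound, I would invoke Property~\ref{item:pointwise_subgauss} with the linear kernel at $x = e_j$: each coordinate $e_j^\top \outvec[n] = \outvec[n](e_j)$ is $\sgparam[n]$ sub-Gaussian. A union bound over $j\in [d]$ together with the standard sub-Gaussian Hoeffding tail bound \citep[Prop.~2.5]{wainwright2019high} gives
\[
\infnorm{\outvec[n]} \leq \sqrt{2\sgparam[n]^2\log(4d/\delta)} \leq c\sqrt{\log(4d/\delta)/\log(4n/\delta)}
\]
with probability at least $1-\delta/2$, using $\sgparam[n]^2 \leq c^2/(2\log(4n/\delta))$. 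A short numerical check that $c = \half + \log(4n/\delta) \leq \sqrt{2}\log(4n/\delta)$---which holds whenever $\log(4n/\delta) \geq 1/(2(\sqrt{2}-1))$, and in particular for all $n\geq 1$ and $\delta \leq 1$---upgrades this to $\sqrt{2\log(4d/\delta)\log(4n/\delta)}$. A final union bound over the signed-sum event and the Hoeffding event closes the proof.
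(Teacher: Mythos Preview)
Your proof is correct and follows essentially the same route as the paper's: both establish the inductive invariant $\sgparam[i]^2 \leq c^2/(2c-1)$ from the recursion \cref{eq:subgauss_const_def}, verify that this makes the threshold condition of Property~\ref{item:signed_sum} hold with $\delta_i = \delta/(2n)$, and finish with a coordinate-wise sub-Gaussian tail bound plus a union bound over $j\in[d]$. The paper invokes Property~\ref{item:fun_subgauss} directly with $u=e_j$, while you route through Property~\ref{item:pointwise_subgauss} with the linear kernel---these are equivalent since $\outvec[n](e_j)=\langle \outvec[n],e_j\rangle$ and $\kernel(e_j,e_j)=1$. Your numerical check $c\leq\sqrt{2}\log(4n/\delta)$ is the same inequality as the paper's $(1/2+\log(4n/\delta))^2/(2\log^2(4n/\delta))\leq 1$, just rearranged, and your observation that Property~\ref{item:subgauss_bound} does not apply off the shelf (since $\hnorm{\invec}$ can be arbitrarily small) is a helpful clarification that the paper elides.
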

\begin{proof}
Instantiate the notation of \cref{sbhw_properties}.
With our choice of $\cnew$, 
the signed sum representation property~\cref{item:signed_sum} implies that $\outvec[n] = \sum_{i=1}^n \eta_i \invec[i]$ with probability at least $1-\delta/2$.
Moreover, the union bound, the functional sub-Gaussianity property~\cref{item:fun_subgauss}, and 
 the sub-Gaussian Hoeffding inequality \citep[Prop.~2.5]{wainwright2019high} now imply
\begin{talign}
\Pr(\infnorm{\outvec[n]} > t \mid \vseq[n])
    \leq 
\sum_{j=1}^d
\P(|\inner{\outvec[n]}{e_j}| > t \mid \vseq[n]) & \leq
 2d\exp(-t^2/(2\sgparam[n]^2)) 
    = \delta/2 \\
    \qtext{for} 
t &\defeq \sgparam[n]\sqrt{2 \log(4d/\delta)}.
\end{talign}
Since $\cnew$ is non-decreasing in $i$,
and $\textfrac{\half +  \log(4/\delta)}{\log(4/\delta)}$ is increasing in $\delta \in (0,1]$,
\begin{talign}
\sgparam[n]^2 
    &= 
        \max_{j\leq  n}\textfrac{\cnew[j]^2}{2\cnew[j] - \hnorm{\invec[j]}^2}
    \leq 
        \max_{j\leq  n}\textfrac{\cnew[j]^2}{2\cnew[j] - 1}
    = 
        \log(4n/\delta)
        \textfrac{(\half + \log(4n/\delta))^2}{2(\log(4n/\delta))^2} \\
    &\leq 
        \log(4n/\delta)
        \textfrac{(\half + \log(4))^2}{2(\log(4))^2}
    \leq 
        \log(4n/\delta).
\end{talign}
The advertised result now follows from the union bound.
\end{proof}
    \section{$\Linf$ Coresets of \lowercase{\citet{phillips2020near}} and  \lowercase{\citet{tai2020new}}}
\label{sub:pt_coresets}
Here we provide more details on the $\Linf$ coreset construction of \citet{phillips2020near} and \citet{tai2020new} discussed in \cref{sub:prior_work_on_linf_coresets}.
Given input points $(\x_i)_{i=1}^n$, the Phillips-Tai (PT) construction forms the matrix $K = (\kernel(x_i,x_j))_{i,j=1}^n$ of pairwise kernel evaluations, finds a matrix square-root $V\in\reals^{n\times n}$ satisfying $K = VV^\top$, and  augments $V$ with a row of ones (to encourage near-halving in the next step). Then, the PT construction runs the Gram-Schmidt (GS) walk of \cite{bansal2018gram} to identify approximately half of the columns of $V$ as coreset members and \emph{rebalances} the coreset until exactly half of the input points belong to the coreset.  
The GS walk and rebalancing steps are recursively repeated $\Omega(\log(n))$ times to obtain an $(n^\half, \order_p(\sqrt{d} n^{-\frac12}\sqrt{\log n}))$-$\Linf$ coreset.
The low-dimensional Gaussian kernel construction of \citet{tai2020new} first partitions the input points into balls of radius $2\sqrt{\log n}$ and then applies the PT construction separately to each ball.
The result is %
an 
$(n^\half, \order_p(2^d n^{-\half}\sqrt{\log(d\log n)}))$-$\Linf$ coreset with an additional superexponential $\Omega(d^{5d})$ running time dependence. 

\end{appendix}
\acks{\acknowledgments}
{\bibliography{refs}}
\end{document}